\documentclass{article} 
\usepackage{iclr2027_conference,times}

\usepackage{url}
\usepackage{upgreek}

\usepackage{amsmath}
\usepackage{amssymb}
\usepackage{amsthm} 
\usepackage{nicefrac}
\usepackage{comment}
\usepackage{algorithm, algorithmicx, algpseudocode}
\usepackage{subcaption}
\usepackage{enumitem}
\usepackage{xcolor}
\usepackage{tikz}
\usepackage{wrapfig}
\usepackage{booktabs}
\usepackage{colortbl}
\usepackage{mathabx}

\usetikzlibrary{arrows.meta,positioning,shapes.geometric,calc}

\usepackage{titletoc}

\usepackage[normalem]{ulem}
\usepackage{xcolor}
\usepackage{tikz}
\usetikzlibrary{shapes.geometric, shadows.blur, arrows.meta, positioning}
\usepackage{hyperref}
\hypersetup{
    colorlinks=true,
    linkcolor=blue,
    citecolor=blue,
    urlcolor=blue
}
\usepackage{multirow}

\theoremstyle{definition}

\theoremstyle{remark}

\usepackage{listings}
\definecolor{codegreen}{rgb}{0,0.6,0}
\definecolor{codegray}{rgb}{0.5,0.5,0.5}
\definecolor{codepurple}{rgb}{0.58,0,0.82}
\definecolor{backcolour}{rgb}{0.95,0.95,0.92}

\lstdefinestyle{mystyle}{
    backgroundcolor=\color{backcolour},   
    commentstyle=\color{codegreen},
    keywordstyle=\color{magenta},
    numberstyle=\tiny\color{codegray},
    stringstyle=\color{codepurple},
    basicstyle=\ttfamily\footnotesize,
    breakatwhitespace=false,         
    breaklines=true,                 
    captionpos=b,                    
    keepspaces=true,                 
    numbers=left,                    
    numbersep=5pt,                  
    showspaces=false,                
    showstringspaces=false,
    showtabs=false,                  
    tabsize=2
}
\usepackage{cleveref}
\usepackage{autonum}

\definecolor{codegreen}{rgb}{0,0.6,0}
\definecolor{codegray}{rgb}{0.5,0.5,0.5}
\definecolor{codepurple}{rgb}{0.58,0,0.82}
\definecolor{backcolour}{rgb}{0.97,0.97,0.97}
\lstdefinestyle{pythonstyle}{
    backgroundcolor=\color{backcolour},   
    commentstyle=\color{codegreen}\itshape,
    keywordstyle=\color{magenta}\bfseries,
    numberstyle=\tiny\color{codegray},
    stringstyle=\color{codepurple},
    basicstyle=\ttfamily\scriptsize,
    breakatwhitespace=false,         
    breaklines=true,                 
    captionpos=b,                    
    keepspaces=true,                 
    numbers=left,                    
    numbersep=5pt,                  
    showspaces=false,                
    showstringspaces=false,
    showtabs=false,                  
    tabsize=2,
    language=Python
}

\definecolor{colorP}{HTML}{0D47A1}
\definecolor{colorW}{HTML}{1B5E20}
\definecolor{colorV}{HTML}{B71C1C}

\newcommand{\colP}[1]{{\textcolor{colorP}{#1}}}
\newcommand{\colW}[1]{{\textcolor{colorW}{#1}}}
\newcommand{\colV}[1]{{\textcolor{colorV}{#1}}}

\usepackage{xcolor}

\makeatletter
\newcommand{\appendixpart}[2][]{%
  \refstepcounter{part}%
  \addtocontents{toc}{%
    \protect\vspace{1.2em}%
    \protect\noindent{\protect\large\protect\bfseries Part \Roman{part}: #2\protect\vspace{0.25em}\protect\newline}%
    \protect\hrule height 0.6pt%
    \protect\vspace{0.5em}%
  }%
  \vspace{2.0em}%
  \noindent
  \fboxrule=0.6pt\fboxsep=10pt%
  \colorbox{gray!10}{%
    \begin{minipage}{\dimexpr\textwidth-2\fboxsep-2\fboxrule\relax}
      {\LARGE\bfseries Part \Roman{part}: #2}\par
      \if\relax\detokenize{#1}\relax\else
        \vspace{0.4em}%
        {\normalsize\itshape\color{black!75}#1}\par
      \fi
    \end{minipage}%
  }%
  \vspace{1.5em}%
}
\makeatother

\crefname{theorem}{theorem}{theorems}
\Crefname{theorem}{Theorem}{Theorems}

\crefname{proposition}{proposition}{propositions}
\Crefname{proposition}{Proposition}{Propositions}

\crefname{lemma}{lemma}{lemmas}
\Crefname{lemma}{Lemma}{Lemmas}

\crefname{corollary}{corollary}{corollaries}
\Crefname{corollary}{Corollary}{Corollaries}

\crefname{definition}{definition}{definitions}
\Crefname{definition}{Definition}{Definitions}

\crefname{assumption}{assumption}{assumptions}
\Crefname{assumption}{Assumption}{Assumptions}

\crefname{remark}{remark}{remarks}
\Crefname{remark}{Remark}{Remarks}

\newcommand{\mcx}{\mathcal{X}}
\newcommand{\nset}{\mathbb{N}}

\newcommand{\Id}{\mathrm{Id}}

\newcommand{\argmax}{\mathrm{argmax}}

\newcommand{\rset}{\mathbb{R}}
\newcommand{\Dir}{\mathrm{Dir}}

\newcommand{\rmd}{\mathrm{d}}
\newcommand{\Beta}{\mathrm{Beta}}

\newcommand{\Var}{\mathrm{Var}}

\newcommand{\vareps}{\varepsilon}

\newcommand{\Ber}{\mathrm{Ber}}
\newcommand{\Jax}{\textsc{Jax}}

\newcommand{\Pbb}{\mathbb{P}}
\newcommand{\rmc}{\mathrm{C}}
\newcommand{\mcl}{\mathcal{L}}
\newcommand{\KL}{\mathrm{KL}}
\newcommand{\Qbb}{\mathbb{Q}}
\newcommand{\rmD}{\mathrm{D}}
\newcommand{\mcp}{\mathcal{P}}
\newcommand{\stopgrad}{\mathrm{sg}}
\newcommand{\Ebb}{\mathbb{E}}

\newcommand{\Cov}{\operatorname{Cov}}

\newcommand{\diag}{\operatorname{diag}}
\newcommand{\masktoken}{[\textsc{mask}]~}
\newcommand{\bostoken}{\text{[\textsc{bos}]}}

\newcommand{\Categorical}{\mathrm{Cat}}
\newcommand{\GammaDist}{\mathrm{Gamma}}

\newcommand{\1}{\mathbf{1}}

\newcommand{\Cat}{\mathrm{Cat}}

\newcommand{\pdata}[1]{p_{\text{data} #1}}
\newcommand{\Tvar}{\mathrm{TVar}}

\newcommand{\topp}{\mathrm{TOP}}
\newcommand{\temp}{\mathrm{TEMP}}
\newcommand{\startt}{\mathrm{start}}
\newcommand{\finish}{\mathrm{end}}
\newcommand{\freq}{\mathrm{FREQ}}
\newcommand{\freqloc}{\mathrm{FREQLOC}}
\newcommand{\figdir}{figs}

\title{Simplex Diffusion Models}

\usepackage{xcolor}
\usepackage[most]{tcolorbox}
\tcbuselibrary{theorems}
\definecolor{brightgrey}{HTML}{f5f5f6}
\definecolor{softblue}{HTML}{f0f5f8}

\newtcbtheorem[auto counter, number within=section]{propositionbeaut}{Proposition}{
    enhanced,
    theorem style=plain,
    fonttitle=\bfseries\upshape,
    fontupper=\itshape,
    colback=softblue,
    colframe=softblue,
    coltitle=black,
    arc=0mm,
    boxrule=0pt,
    top=0.6em, bottom=0.6em, left=0.8em, right=0.8em,
    before skip=0.8em, after skip=0.8em,
}{propbeaut}

\newtcbtheorem[auto counter, number within=section]{lemmabeaut}{Lemma}{
    enhanced,
    theorem style=plain,
    fonttitle=\bfseries\upshape,
    fontupper=\itshape,
    colback=softblue,
    colframe=softblue,
    coltitle=black,
    arc=0mm,
    boxrule=0pt,
    top=0.6em, bottom=0.6em, left=0.8em, right=0.8em,
    before skip=0.8em, after skip=0.8em,
}{lemmabeaut}

\newtcbtheorem[auto counter, number within=section]{theorembeaut}{Theorem}{
    enhanced,
    theorem style=plain,
    fonttitle=\bfseries\upshape,
    fontupper=\itshape,
    colback=softblue,
    colframe=softblue,
    coltitle=black,
    arc=0mm,
    boxrule=0pt,
    top=0.6em, bottom=0.6em, left=0.8em, right=0.8em,
    before skip=0.8em, after skip=0.8em,
}{thmbeaut}

\newtcbtheorem[auto counter, number within=section]{corollarybeaut}{Corollary}{
    enhanced,
    theorem style=plain,
    fonttitle=\bfseries\upshape,
    fontupper=\itshape,
    colback=softblue,
    colframe=softblue,
    coltitle=black,
    arc=0mm,
    boxrule=0pt,
    top=0.6em, bottom=0.6em, left=0.8em, right=0.8em,
    before skip=0.8em, after skip=0.8em,
}{corobeaut}

\newtcbtheorem[auto counter, number within=section]{definitionbeaut}{Definition}{
    enhanced,
    theorem style=plain,
    fonttitle=\bfseries\upshape,
    fontupper=\itshape,
    colback=softblue,
    colframe=softblue,
    coltitle=black,
    arc=0mm,
    boxrule=0pt,
    top=0.6em, bottom=0.6em, left=0.8em, right=0.8em,
    before skip=0.8em, after skip=0.8em,
}{definitionbeaut}

\makeatletter
\crefname{tcb@cnt@propositionbeaut}{Proposition}{Propositions}
\Crefname{tcb@cnt@propositionbeaut}{Proposition}{Propositions}
\crefname{tcb@cnt@lemmabeaut}{Lemma}{Lemmas}
\Crefname{tcb@cnt@lemmabeaut}{Lemma}{Lemmas}
\crefname{tcb@cnt@theorembeaut}{Theorem}{Theorems}
\Crefname{tcb@cnt@theorembeaut}{Theorem}{Theorems}
\crefname{tcb@cnt@corollarybeaut}{Corollary}{Corollaries}
\Crefname{tcb@cnt@corollarybeaut}{Corollary}{Corollaries}
\makeatother
\crefname{tcb@cnt@definitionbeaut}{Definition}{Definitions}
\Crefname{tcb@cnt@definitionbeaut}{Definition}{Definitions}
\makeatother

\author{Justin Deschenaux \\
Google DeepMind \\
EPFL \\
\And
Alexandre Galashov \\
Google DeepMind \\
UCL Gatsby \\
\And
Andrew Campbell \\
Google DeepMind \\
\And
Li Kevin Wenliang \\
Google DeepMind \\
\And
James Thornton \\
Google DeepMind \\
\And
Arnaud Doucet \\
Google DeepMind \\
\And
Valentin De Bortoli \\
Google DeepMind \\
}

\iclrfinalcopy 
\begin{document}

\maketitle

\begin{abstract}
Diffusion models have revolutionized generative modeling for continuous data through the gradual refinement of a belief state. This iterative refinement has not yet carried over to \emph{discrete} diffusion models, which discard uncertainty at intermediate steps through categorical sampling (\emph{information collapse}).
We propose \emph{Simplex Diffusion Models} (SDMs), a framework that lifts the diffusion process to the probability simplex to represent beliefs over categories. SDMs admit probability paths with closed-form reverse transitions and can be trained with a simple cross-entropy loss. Contrary to earlier proposals such as Dirichlet Flow Matching which requires integrating an ordinary differential equation, we introduce a DDIM-like sampler with a tunable level of stochasticity. Because SDMs operate on samples on the simplex, they can carry uncertainty across denoising steps, which mitigates information collapse.
On OpenWebText, SDMs are competitive with strong  Discrete Diffusion baselines, achieving $17.0$ GenPPL at $5.46$ unigram entropy in 64 sampling steps, close to real validation data.  
Even without Self-Conditioning (SC), SDMs outperform masked and uniform diffusion (with SC or predictor-corrector sampling) on code generation (TinyGSM, $T=0.1$; $49.0\%$ vs.\ $45.8\%$).
Distilled down to 8 steps, SDMs solve $32.1\%$ of GSM8K problems, more than distilled Discrete Diffusion models with 128 steps ($21.4\%$).  
\end{abstract}
\begin{figure}[h!]
    \centering
    \includegraphics[width=\linewidth]{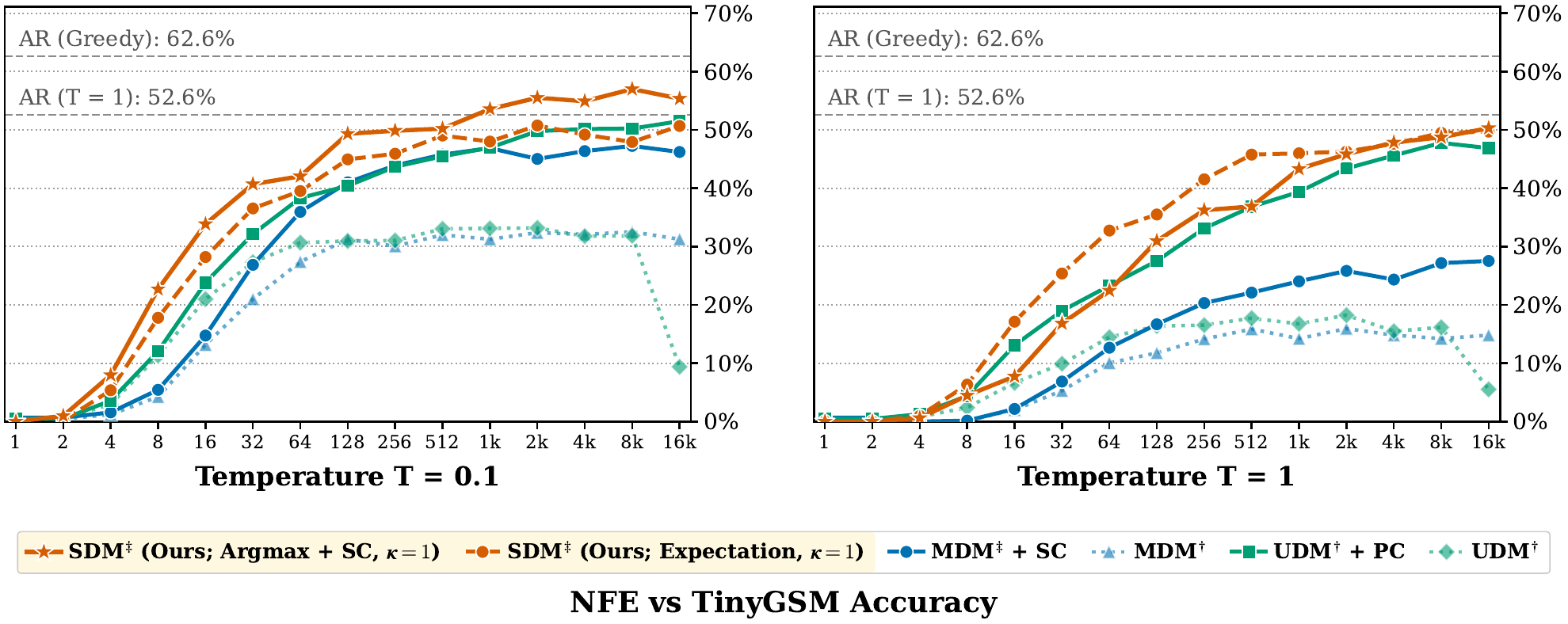}
    \caption{
    \textbf{SDMs outperform Discrete Diffusion on TinyGSM.}
    \emph{Left:} Accuracy as a function of the number of function evaluation (NFE) at low temperature ($T=0.1$). \emph{Right:} Accuracy against NFEs at $T=1$. 
    Without Self-Conditioning (SC), SDMs with the expected embedding outperform masked diffusion with SC (MDM + SC) and uniform diffusion with predictor-corrector sampling (UDM + PC). At 512 NFE, SDMs reach $49.0\%$ at $T = 0.1$, vs.\ $45.8\%$ for MDM$^\ddagger$ + SC, and $45.8\%$ at $T = 1$, vs.\ $36.8\%$ for UDM$^\dagger$ + PC.
    With SC, SDM (Argmax + SC) is the best diffusion variant at $T=0.1$ from 32 NFE onwards, and solves $57\%$ of problems with 8k NFEs. Autoregressive models with greedy decoding is best (62.6\%). All SDMs use churn $\kappa = 1$ and the adaptive sampling time grid (\Cref{sec:adaptive_time_schedule}), and we compare with the best baseline configurations (\Cref{sec:exp_tinygsm}). 
    $^\dagger$/$^\ddagger$: trained with the uniform/adaptive time sampler.
    The accuracy drop for UDM at 16k is consistent with typical floating-point rounding errors, which can accumulate at very small step sizes \citep{karras2022elucidating}.
    }
    \label{fig:tinygsm-nfe-vs-accuracy}
\end{figure}
\section{Introduction}

Denoising diffusion models \citep{ho2020denoising, song2020score} are state-of-the-art generative models in continuous domains  \citep{ho2022imagen, rombach2022highresolutionimagesynthesislatent, ramesh2022hierarchical,watson2023novo}. However, a vast portion of real-world data---including text, biological sequences, and graph structures---is discrete, and extending diffusion to those modalities remains challenging \citep{austin2021structured, hoogeboom2021argmaxflowsmultinomialdiffusion, campbell2022continuous}.

Existing diffusion models for categorical data generally fall into three categories. First, \emph{categorical diffusion models} such as masked or uniform diffusion models \citep{sahoo2024simple,shi2024simplified, ou2024absorbingdiscretediffusionsecretly} define the corruption process directly on sequences of discrete tokens. While theoretically sound, they suffer from \emph{information collapse}. Indeed, as intermediate states are discrete, the denoiser's uncertainty is discarded at each update. Hence, these models  rely on Self-Conditioning \citep{chen2022analog} or loopholing \citep{jo2025loopholing} to carry beliefs across sampling steps.

Second, \emph{continuous relaxations} diffuse one-hot vectors \citep{lee2026one, roos2026categoricalflowmaps, potaptchik2026discreteflowmaps}, learned embeddings \citep{dieleman2022continuous,gulrajani2023likelihood,deschenaux2026languagemodelinghypersphericalflows,chemseddine2026sphericalflowssamplingcategorical, yang2026continuousdiffusionscalescompetitively}, or frozen embeddings \citep{hu2026elfembeddedlanguageflows, shen2026codarcontinuousdiffusionlanguage}. However, the identity of the clean category tends to be destroyed abruptly, within a short window during the forward process, and this worsens as the dimension of the encoding grows \citep{pynadath2025candi,shabalin2026gaussian}.

Finally, a third line of work diffuses directly on the probability simplex, whose elements are distributions over categories \citep{richemond2022categorical,cheng2024categorical,davis2024fisher,stark2024dirichlet,cheng2025alpha,williams2025simplex,boget2026unrestrainedsimplexdenoisingdiscrete,chandra2025unification}. However, they typically sample by numerically integrating an ODE or SDE: Dirichlet Flow Matching (DFM; \citealp{stark2024dirichlet}), the closest to our work, scales poorly to large vocabularies (\Cref{tab:tinygsm_dfm_apdx}). While the concurrent Simplax model \citep{sakurai2026simplex} does not rely on numerical integration, their  sampling procedure operates on categorical samples rather than the continuous representation itself.

\paragraph{Contributions.}
We propose Simplex Diffusion Models (SDMs), whose intermediate states are distributions over tokens.
\begin{enumerate}[leftmargin=*,itemsep=1pt]
\item Similar to DFM, SDMs use a Dirichlet forward process, but parameterized so that its mean and concentration are set separately. This path admits closed-form reverse transitions, inducing a DDIM-style sampler \citep{song2020denoising}, with a  \emph{churn} parameter that controls the sampling stochasticity. SDMs are trained with a simple cross-entropy objective. Because SDMs operate on samples on the simplex, they can carry uncertainty across denoising steps, mitigating information collapse. 
\item We show that SDMs unify discrete and continuous diffusions. A category sampled from the diffused simplex state has the same marginal distribution as if it were corrupted by Discrete Diffusion (\Cref{propbeaut:induced-forward}). Viewing the inverse Dirichlet concentration as a \emph{temperature}, SDMs reduce to Discrete Diffusion at high temperature, and follow a deterministic path with Gaussian fluctuations at low temperature (\Cref{propbeaut:temperature-limits,} and \Cref{sec:gaussian_limit_linear}).
\item Empirically, even without Self-Conditioning (SC), \textbf{SDMs outperform masked and uniform diffusion} (using either SC or predictor-corrector sampling) on TinyGSM ($T=0.1$; $49.0\%$ vs.\ $45.8\%$; \Cref{fig:tinygsm-nfe-vs-accuracy}). Distilled down to 8 steps, SDMs solve $32.1\%$ of the GSM8K problems, more than distilled Discrete Diffusion models with 128 steps ($21.4\%$; \Cref{tab:tinygsm_distill_idlm}). On Sudoku, SDMs with SC match the best Discrete Diffusion model ($99.1\%$ vs.\ $99.3\%$). On OpenWebText, SDMs are competitive with UDM and MDMs with all those methods achieving a GenPPL/Entropy frontier that matches real data after logit shaping interventions. This result calls into question the validity of OpenWebText in assessing the unconditional text generation capabilities of small scale models. We also validate the flexibility of our approach by achieving competitive performance on quality and diversity metrics for molecule generation, see \Cref{sec:appendix-genmol}.
\end{enumerate}

\section{Background}
\label{sec:background}
\paragraph{Notation.}
Let $\mcx = \{1, \dots, N\}$ be the finite categorical vocabulary. Let $\Delta_N$ denote the probability simplex on $\mathbb{R}^N$, and $\mathbf{1} = (1, \dots, 1)^\top$ be the all-ones vector. For $i \in \mcx$, let $e_i \in \Delta_N$ be the standard basis (one-hot) vector corresponding to state $i$, and let $\odot$ denote the Hadamard product. Let $t \mapsto \alpha_t$ be non-increasing on $[0,1]$ with $\alpha_0=1$, $\alpha_1=0$, and $0<\alpha_t<1$,  for $t\in (0,1)$. We denote by $\pi = (\pi_1, \dots, \pi_N) \in \Delta_N$ a reference prior distribution. We denote by $\Dir$, $\Beta$, $\GammaDist$, $\Ber$ and $\Categorical$ the Dirichlet, Beta, Gamma, Bernoulli and Categorical distributions respectively. We recall basic facts about these distributions in \Cref{app:basics}.
\paragraph{Discrete Diffusion Models.}
Discrete Diffusion models (DDMs; \citealp{sohl2015deep, austin2021structured, campbell2022continuous, sahoo2024simple,  shi2024simplified, ou2024absorbingdiscretediffusionsecretly}) are generative models over discrete spaces. DDMs define a conditional corruption process 
\begin{equation}
\label{eq:discrete_forward}
    p_{t|0}(x_t | x_0) = \mathrm{Cat}\left(x_t ; \alpha_t e_{x_0} + (1-\alpha_t) \pi\right) , 
\end{equation}
that induces marginal distributions $\left(p_t\right)_{t \in [0, 1]}$, given by $p_t(x_t)=\sum_{x_0} p_{t|0}(x_t|x_0)p_0(x_0)$, bridging $p_0 = p_\text{data}$ to $p_1 = \pi$. The generative process is defined in terms of transitions $p_{s|t}$ with $s \leq t$. The transition $p_{s|t}$ is \emph{compatible} if \mbox{$p_s(x_s) = \sum_{x_t}p_{s|t}(x_s|x_t) p_t(x_t)$}. Thus, one can generate $x_0 \sim \pdata{}$ by applying $p_{s|t}$ for $n$ iterations, starting from $x_1 \sim p_1 $. For a time grid $0=t_0 < t_1 < ... < t_n = 1$,
\begin{equation}
\label{eq:discrete_ancestral_sampling}
    \textstyle p_0(x_0) = \sum\nolimits_{\left(x_{t_1}, ..., x_{t_{n-1}},x_1\right)} p_{1} (x_1) \prod_{i=1}^n p_{t_{i-1}|t_i}(x_{t_{i-1}}|x_{t_i}).
\end{equation}

While using \emph{marginal} transitions $p_{s|t}(x_s|x_t) = \sum_{x_0} p_{s|0, t}(x_s | x_0, x_t)p_{0|t}(x_0|x_t)$ is possible, recent work instead defines \emph{bridge} (approximate) transitions $\hat{p}_{s|t}^\theta(x_s|x_t) :=  p_{s|0, t}(\cdot | \mathbf{x}_\theta(t, x_t), x_t)$, where \mbox{$\mathbf{x}_\theta(t, x_t): [0,1] \times \mcx \rightarrow \Delta_N$} is learned
\citep{gourevitch2026uniformdiffusionmodelsrevisited}. 
One obtains $\theta$ by maximizing an expected Evidence Lower Bound (ELBO). 

\paragraph{Dirichlet Flow Matching.}
Dirichlet llow matching (DFM; \citealp{stark2024dirichlet})  extends Flow Matching (FM; \citealp{lipman2022flow})  to the simplex $\Delta_N$ to model categorical data. The data distribution $\pdata{}=(\pdata{, 1},\cdots,\pdata{, N})$ on $\mcx$ is first lifted to  an atomic measure on the simplex which we also denote as $ p_\text{data}$ in a slight abuse of notation for $P \in \Delta_N$
\begin{equation}
   \label{eq:form_pi_0}
\textstyle     p_\text{data} \left( P \right) = \sum_{i=1}^{N} \pdata{, i} \updelta_{e_i}(P) .
\end{equation}
\citet{stark2024dirichlet} define a probability path $(p_t)_{t\in[0,1]}$ between the distribution $p_0= p_\text{data}$ (at time $t=0$) and the uniform Dirichlet prior $p_1=\Dir(\cdot; \mathbf{1})$ (at $t=1$) using
\begin{equation}
\label{eq:dfm_path}
     p_t(P_t) = \sum\nolimits_{P_0} p_{t|0}(P_t \mid P_0) p_\text{data}(P_0)\quad~\text{where}~\quad p_{t|0}(P_t | P_0) = \Dir \big(P_t; \mathbf{1} + h_t P_0 \big),
\end{equation}
with $t \mapsto h_t$ a decreasing function with $h_1 = 0$ and $\lim_{t \to 0} h_t = \infty$. 
A velocity field $u_t(P_t)$ ensuring (approximately) that $\rmd P_t =u_t(P_t) \rmd t $ satisfies $P_t \sim p_t$ for $P_1 \sim p_1$, hence $P_0 \sim p_\text{data}$, is then learned through cross-entropy.

\section{Simplex Diffusion}\label{sec:simplicialdiffusion}
\subsection{Probability Path and Bridge}
As in Dirichlet Flow Matching \citep{stark2024dirichlet}, we build a probability path on the simplex by considering $P_0 \sim  \pdata{}$, see \eqref{eq:form_pi_0}, and Dirichlet distributions to define $p_{t|0}(P_t|P_0)$. 

\paragraph{Probability Path.}
Let $t \mapsto c_t$ be a positive function and $t \mapsto \alpha_t$ a non-increasing function such that $\alpha_0=1$ and $\alpha_1=0$. We consider the corruption mechanism defined by 
\begin{equation}
\label{eq:forward_process_simplicial}
    p_{t|0}(P_t|P_0) = \Dir(P_t;\beta_t(P_0,\pi)) , \qquad \beta_t(P_0,\pi) = c_t (\alpha_t P_0 + (1-\alpha_t) \pi) , 
\end{equation}
where we recall that $\pi \in \Delta_N$ is a reference prior distribution on $\mcx$ chosen by the user\footnote{In practice, we will use a distribution $\pi$ with full support on $\mcx$. Otherwise, this would mean that for data tokens $x_0$ not in the support of $\pi$, any sample $P_t$ from $p_{t|0}(P_t|P_0=e_{x_0})$ would reveal $x_0$ almost surely for all $t<1$. The same argument rules out using a simplex version of masked diffusions.}. 
In particular $p_{1|0}(P_1|P_0)= \Dir(P_1;c_1 \pi)$ is independent of $P_0$. We have $\mathbb{E}[P_t|P_0]=(1-\alpha_t)\pi+\alpha_t P_0:=\bar{P}_t$ and $\Cov[P_t|P_0]=(\diag(\bar{P}_t)-\bar{P}_t\bar{P}^\top_t)/(c_t+1)$. We refer to $c_t$ as the concentration parameter.

This forward model on the simplex induces a forward model on the discrete state-space commonly used in Discrete Diffusions/Flow Matching \citep{campbell2024generativeflowsdiscretestatespaces,sahoo2024simple,shi2024simplified}.

\begin{propositionbeaut}{Induced forward}{induced-forward}
    Let $t \in [0,1]$ and $P_t$ satisfying  \eqref{eq:forward_process_simplicial}.
    In addition, let $x_t$ be such that $p_{t|0}(x_t| P_t, P_0) =P_{t,x_t}$. Then, recalling that $P_0 =e_{x_0}$, we have that 
    \begin{equation}
        p_{t|0}(x_t | P_0) = \alpha_t \updelta_{x_0}(x_t) + (1-\alpha_t) \pi_{x_t}.
    \end{equation}
\end{propositionbeaut}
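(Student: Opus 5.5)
The plan is to marginalize over the latent simplex state $P_t$ using the tower property. Since $x_t$ is drawn from $\Cat(P_t)$ given $(P_t,P_0)$, the law of total probability gives
\begin{equation}
p_{t|0}(x_t \mid P_0) = \int_{\Delta_N} P_{t,x_t}\, p_{t|0}(P_t\mid P_0)\, \rmd P_t = \Ebb\left[P_{t,x_t} \mid P_0\right].
\end{equation}
The claim therefore reduces to computing the conditional mean of the Dirichlet distribution in \eqref{eq:forward_process_simplicial}, evaluated at coordinate $x_t$.

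Next, I will use the standard fact, recalled in \Cref{app:basics} and already stated after \eqref{eq:forward_process_simplicial}, that $\Dir(\beta)$ has mean $\beta/\sum_j \beta_j$. With $\beta_t(P_0,\pi) = c_t(\alpha_t P_0 + (1-\alpha_t)\pi)$ and $P_0, \pi \in \Delta_N$, the total concentration is $\sum_j \beta_{t,j} = c_t(\alpha_t + 1 - \alpha_t) = c_t$. This yields $\Ebb[P_t\mid P_0] = \alpha_t P_0 + (1-\alpha_t)\pi = \bar P_t$. Substituting $P_0 = e_{x_0}$ and reading off coordinate $x_t$ gives $\alpha_t \updelta_{x_0}(x_t) + (1-\alpha_t)\pi_{x_t}$, which is the statement. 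It also matches the discrete forward kernel \eqref{eq:discrete_forward}.

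The only delicate point is the boundary behaviour, where the Dirichlet parameters may not all be positive. At $t=0$ we have $\alpha_0 = 1$, so $\beta_0 = c_0 e_{x_0}$ is degenerate and $P_0 = e_{x_0}$ almost surely. If $\pi$ lacks full support, some coordinates of $\beta_t$ vanish. In both cases I will interpret the Dirichlet with zero parameters as supported on the face indexed by the positive coordinates, where the point mass is the limiting case. The mean formula still holds there, either by continuity or by applying it directly on that face, so the identity is unaffected.

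The intended argument implicitly assumes $c_t > 0$, as stated. Beyond these boundary cases, I expect no real obstacle: the result is a one-line tower-property computation combined with the Dirichlet mean.
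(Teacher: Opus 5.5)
Your proposal is correct and is essentially the paper's proof: marginalize over $P_t$ to get $\Ebb[P_{t,x_t}\mid P_0]$, then read off coordinate $x_t$ of the Dirichlet mean $\alpha_t P_0+(1-\alpha_t)\pi$. Your handling of degenerate parameters (at $t=0$, or when $\pi$ lacks full support) matches the boundary convention the paper states in its appendix on Beta and Dirichlet distributions.
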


We now propose an alternative representation of \eqref{eq:forward_process_simplicial} enabling us to sample easily from $p_{t|0}(P_t \mid P_0)$.

\begin{propositionbeaut}{Interpolation path}{interpolatingpath}
    Let  $t \in [0,1]$. Let $W_t \sim \Beta\left(c_t \alpha_t, c_t(1-\alpha_t)\right)$ and let $V_t \sim \Dir(c_t(1-\alpha_t)\pi)$ be independent of $W_t$. Define
   \begin{equation}
   \label{eq:interpolation_w}
        P_t = W_t P_0 + (1-W_t)V_t. 
    \end{equation}
    Then the conditional density of $P_t$ given $P_0$ is given by $p_{t|0}(P_t \mid P_0) = \Dir\left(P_t;\beta_t(P_0,\pi)\right)$.
\end{propositionbeaut}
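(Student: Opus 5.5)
The plan is to use the Gamma representation of the Dirichlet distribution together with the aggregation property. Since $P_0 = e_{x_0}$ for some $x_0 \in \mcx$, the target parameter is $\beta_t(e_{x_0},\pi) = c_t\alpha_t e_{x_0} + c_t(1-\alpha_t)\pi$.

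Introduce independent random variables
\begin{equation}
G_0 \sim \GammaDist(c_t\alpha_t,1), \qquad G_i \sim \GammaDist(c_t(1-\alpha_t)\pi_i,1), \quad i \in \mcx ,
\end{equation}
and set $S = \sum_i G_i$, $W = G_0/(G_0+S)$ and $V = (G_1,\dots,G_N)/S$.

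Two standard facts recalled in \Cref{app:basics} then apply. First, $(W, V)$ has the right joint law: $V \sim \Dir(c_t(1-\alpha_t)\pi)$ is independent of $S \sim \GammaDist(c_t(1-\alpha_t),1)$, hence independent of $G_0$. The Beta–Gamma relation gives $W \sim \Beta(c_t\alpha_t, c_t(1-\alpha_t))$, and $W$ is a function of $(G_0,S)$ only, so it is independent of $V$. Therefore $(W,V)$ has the same joint law as $(W_t,V_t)$ in the statement, and it suffices to compute the law of $W e_{x_0} + (1-W)V$.

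Second, compute this combination coordinate-wise. With $T = G_0 + S$, we have $1-W = S/T$. For $k \in \mcx$,
\begin{equation}
\bigl(W e_{x_0} + (1-W)V\bigr)_k = \frac{G_0\,\updelta_{x_0}(k) + G_k}{T}.
\end{equation}
Define $H_k = G_k + G_0\,\updelta_{x_0}(k)$. The $H_k$ are independent Gammas with shapes $c_t(1-\alpha_t)\pi_k + c_t\alpha_t\updelta_{x_0}(k) = \beta_t(e_{x_0},\pi)_k$, using additivity of independent Gammas with a common scale. Also $\sum_k H_k = T$. The Gamma-normalization characterization of the Dirichlet then gives $P_t \sim \Dir(\beta_t(P_0,\pi))$.

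The main subtlety is the independence of $W$ and $V$ when they are built from Gammas. This requires the Lukacs-type fact that the normalized vector $G/S$ is independent of the total $S$. Degenerate endpoints also need care. At $t=0$ ($\alpha_0=1$) and $t=1$ ($\alpha_1=0$), a Gamma shape vanishes, so one interprets $\GammaDist(0,1)=\updelta_0$ and $\Beta(a,0)=\updelta_1$, $\Beta(0,b)=\updelta_0$. Under this convention the identity holds trivially: at $t=1$ it gives $P_1=V_1\sim\Dir(c_1\pi)$, and at $t=0$ it gives $P_0$. Full support of $\pi$ ensures every interior shape is positive.

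As an alternative check, one can verify the density directly. Condition on $W=w$ and change variables $P = w e_{x_0} + (1-w)V$; integrating over $w$ produces a Beta integral that recovers the Dirichlet normalizing constant. The Gamma route is cleaner, so I would use it.
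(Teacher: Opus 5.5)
Your proposal is correct and follows essentially the paper's route. The paper views $P_0=e_{x_0}$ as a degenerate $\Dir(c_t\alpha_t e_{x_0})$ variable and applies its general ``sum of Dirichlet random variables'' lemma with $\alpha = c_t\alpha_t e_{x_0}$ and $\gamma = c_t(1-\alpha_t)\pi$; that lemma's proof is exactly your Gamma-additivity plus Lukacs-independence computation, which you inline for this special case, additionally spelling out the $t\in\{0,1\}$ endpoints that the paper leaves to its degenerate-parameter convention.
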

\eqref{eq:interpolation_w} shows that, for $t\in (0,1)$, $P_t$ is a random convex combination (with interpolation weight $W_t \in (0,1)$) of a data sample $P_0$ and a noise sample $V_t$.

The concentration parameter $c_t$ acts inversely to temperature: larger values concentrate probability mass around the mean, whereas smaller values push it toward the simplex vertices. Specializing momentarily to $c_t = \varepsilon / (1 - \alpha_t)$ for $t > 0$, we interpret $\varepsilon$ as an \emph{inverse temperature}. In \Cref{sec:theoreticalproperties}, we formally prove that as $\varepsilon \to 0$ (high temperature), the conditional distribution of $P_t \mid P_0$ concentrates entirely on the vertices. Hence \emph{Discrete Diffusion models emerge as a limiting case of Simplex Diffusion Models}. Conversely, as $\varepsilon \to \infty$, the distribution concentrates on the mean, $\alpha_t P_0 + (1 - \alpha_t)\pi$. \Cref{fig:sdm-forward} illustrates these regimes  across varying $\varepsilon > 0$ and $t \in [0, 1]$.

\paragraph{Backward Bridges.} We now return to a general concentration schedule $(c_t)_{t\in [0,1]}$. In order to obtain a generative model, in the spirit of DDIM \citep{song2020denoising}, we need to identify a conditional distribution $p_{s|0,t}$ satisfying the following backward compatibility condition for any $s, t \in [0,1]$ with $s < t$ and $P_s \in \Delta_N$
\begin{equation}
\label{eq:backward_compatibility_simplicial}
    p_{s|0}(P_s|P_0) = \int_{\Delta_N} p_{s|0,t}(P_s|P_0, P_t) p_{t|0}(P_t|P_0) \rmd P_t . 
\end{equation}
Leveraging classical properties of Dirichlet distributions, we propose such a backward bridge model. For $t\in(0,1]$, write $\beta_t(P_0,\pi)=a_t P_0+b_t\pi$ so that $a_t=c_t\alpha_t$ and $b_t=c_t(1-\alpha_t)$. The quantity $b_t$ can be thought of as the concentration associated to $\pi$ at time $t$. Let $r_{s,t}=\min\left\{1,b_s/b_t\right\}$.

\begin{propositionbeaut}{Simplex Transition}{simplicialtransition}
Let $\kappa \in [0,1)$ and $s, t \in (0,1]$ with $s < t$. Denote $\rho^\kappa_{s,t}=(1-\kappa)r_{s,t}$. For any $P_0, P_s, P_t \in \Delta_N$, consider $\colP{P_{s,t}^\kappa}=(\colP{P_{s,t,1}^\kappa},\cdots,\colP{P_{s,t,N}^\kappa})$, where
\begin{equation}\label{eq:beta_shrinkage_Pt}
    \colP{P_{s,t,i}^\kappa} =\frac{B_i P_{t,i}}{\sum_{j=1}^N B_j P_{t,j}}
\end{equation}
with independent $B_i \sim \Beta\left(\rho_{s,t}^{\kappa}\beta_t(P_0,\pi)_i,(1-\rho_{s,t}^{\kappa})\beta_t(P_0,\pi)_i\right)$,  for $i=1,\cdots,N$. Moreover, let $\colW{W_{s,t}^{\kappa}}$ and $\colV{V_{s,t}^{\kappa}}$ be given by
\begin{equation}\label{eq:intermediate_posterior_simplicial}
    \colW{W_{s,t}^{\kappa}} \sim \Beta\left(\rho_{s,t}^{\kappa}c_t,\;c_s-\rho_{s,t}^{\kappa}c_t\right),\qquad
    \colV{V_{s,t}^{\kappa}} \sim \Dir\left(\beta_s(P_0,\pi) - \rho_{s,t}^{\kappa}\beta_t(P_0,\pi)\right),
\end{equation}
the random variables $(B_i,\colW{W_{s,t}^{\kappa}},\colV{V_{s,t}^{\kappa}})$ being all independent. Finally let
\begin{equation}\label{eq:backwardtransitionPs|P_0,P_t}
    P_s =\colW{W_{s,t}^{\kappa}} \colP{P_{s,t}^\kappa} + (1-\colW{W_{s,t}^{\kappa}}) \colV{V_{s,t}^{\kappa}}.
\end{equation}
Then the induced transition kernel denoted $p_{s|0,t}$ satisfies the compatibility condition~\eqref{eq:backward_compatibility_simplicial}. For the limiting case $\kappa=1$, we define $p_{s|0,t}(P_s|P_0,P_t)=p_{s|0}(P_s|P_0)$.
\end{propositionbeaut}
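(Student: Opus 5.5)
The plan is to work entirely with Gamma representations of Dirichlet variables. Throughout, write $\beta_t=\beta_t(P_0,\pi)$ and $\rho=\rho^\kappa_{s,t}$.

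\textbf{Step 1: represent $P_t$ through Gammas.} Given $P_0$, draw independent $G_i\sim\GammaDist(\beta_{t,i},1)$ and set $P_t=G/\sum_j G_j$. Then $P_t\sim\Dir(\beta_t)$, and $P_t$ is independent of the total $\sum_j G_j\sim\GammaDist(c_t,1)$.

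\textbf{Step 2: thin each coordinate with a Beta variable.} Take the independent $B_i\sim\Beta(\rho\beta_{t,i},(1-\rho)\beta_{t,i})$ and set $H_i=B_iG_i$. By the Beta–Gamma algebra, $H_i\sim\GammaDist(\rho\beta_{t,i},1)$ independently across $i$. Now
\[
\frac{H_i}{\sum_j H_j}=\frac{B_iP_{t,i}}{\sum_j B_jP_{t,j}}=P^\kappa_{s,t,i},
\]
since the common factor $\sum_j G_j$ cancels. So, marginalizing over $P_t\sim p_{t|0}$, we get $P^\kappa_{s,t}\sim\Dir(\rho\beta_t)$. It is independent of $S_H=\sum_j H_j\sim\GammaDist(\rho c_t,1)$.

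The one subtlety is that the coupling in the kernel is through $P_t$ only, not through $G$. This is fine: the kernel uses the $B_i$ and $P_t$, and $P_t$ has the correct law and is independent of the scale. So realizing it via $G$ is legitimate, and the law of $P^\kappa_{s,t}$ depends only on the law of $P_t$.

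\textbf{Step 3: combine with an independent Dirichlet.} Take independent $K_i\sim\GammaDist(\beta_{s,i}-\rho\beta_{t,i},1)$. These parameters are nonnegative because
\[
\rho\beta_{t,i}=\rho(a_tP_{0,i}+b_t\pi_i)\le a_sP_{0,i}+b_s\pi_i=\beta_{s,i}.
\]
This inequality needs $\rho a_t\le a_s$ and $\rho b_t\le b_s$. The second follows from $r_{s,t}\le b_s/b_t$. The first needs $a_s\ge r_{s,t}a_t$, i.e.\ $\alpha_sc_s\ge \min(1,b_s/b_t)\,\alpha_tc_t$. If $b_s\le b_t$ this reads $\alpha_sb_t\ge\alpha_tb_s$, i.e.\ $\alpha_s(1-\alpha_t)\ge\alpha_t(1-\alpha_s)$, which holds because $\alpha_s\ge\alpha_t$. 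If $b_s>b_t$ it reads $\alpha_sc_s\ge\alpha_tc_t$; this follows from $c_s(1-\alpha_s)>c_t(1-\alpha_t)$ together with $\alpha_s/(1-\alpha_s)\ge\alpha_t/(1-\alpha_t)$. Verifying this parameter admissibility is the main obstacle, and it is exactly where the choice of $r_{s,t}$ enters. It also gives $c_s-\rho c_t\ge0$, so the law of $W^\kappa_{s,t}$ is well defined; strict positivity requires $\kappa>0$ or $\alpha_s>\alpha_t$, with degenerate cases read as point masses.

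Now the total vector $H+K$ has independent Gamma coordinates with parameters $\beta_{s,i}$, so $(H+K)/\sum_j(H_j+K_j)\sim\Dir(\beta_s)=p_{s|0}$. Writing $S_K=\sum_jK_j$, the standard aggregation property gives
\[
\frac{H+K}{S_H+S_K}=W\,\frac{H}{S_H}+(1-W)\,\frac{K}{S_K},\qquad W=\frac{S_H}{S_H+S_K}.
\]
Here $W\sim\Beta(\rho c_t,c_s-\rho c_t)$, $K/S_K\sim\Dir(\beta_s-\rho\beta_t)$, and $W$, $H/S_H$, $K/S_K$ are mutually independent. These are precisely the laws of $W^\kappa_{s,t}$, $P^\kappa_{s,t}$ and $V^\kappa_{s,t}$ in \eqref{eq:backwardtransitionPs|P_0,P_t}. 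Hence $P_s$ constructed by the proposition has law $\Dir(\beta_s)$ after integrating out $P_t$, which is \eqref{eq:backward_compatibility_simplicial}.

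The case $\kappa=1$ is trivial by definition. The remaining facts are routine Beta–Gamma calculus (recalled in \Cref{app:basics}): the product of independent $\Beta(a,b)$ and $\GammaDist(a+b)$ variables is $\GammaDist(a)$, and Dirichlet vectors are independent of their Gamma totals.
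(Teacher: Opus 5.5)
Your proposal is correct and is the paper's argument with its two lemmas inlined: the paper obtains $P^\kappa_{s,t}\mid P_0\sim\Dir(\rho\beta_t)$ from its Dirichlet-thinning proposition and then applies its sum-of-Dirichlets proposition, and both of those are proved by exactly your Gamma computations ($B_iG_i\sim\GammaDist(\rho\beta_{t,i},1)$, then normalizing $H+K$ with Lukacs independence). Your explicit check that $\beta_s-\rho\beta_t\ge 0$ coordinatewise (in particular $c_s\alpha_s\ge c_t\alpha_t$ when $b_s>b_t$) spells out a step the paper only asserts; the one inaccuracy is your aside that non-degeneracy of $W^\kappa_{s,t}$ \emph{requires} $\kappa>0$ or $\alpha_s>\alpha_t$, since that condition is only sufficient (for $\kappa=0$ and $b_s>b_t$ one already has $c_s-c_t>0$).
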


\begin{figure}[h]
\resizebox{1.00\linewidth}{!}{%
\includegraphics[width=0.99\linewidth]{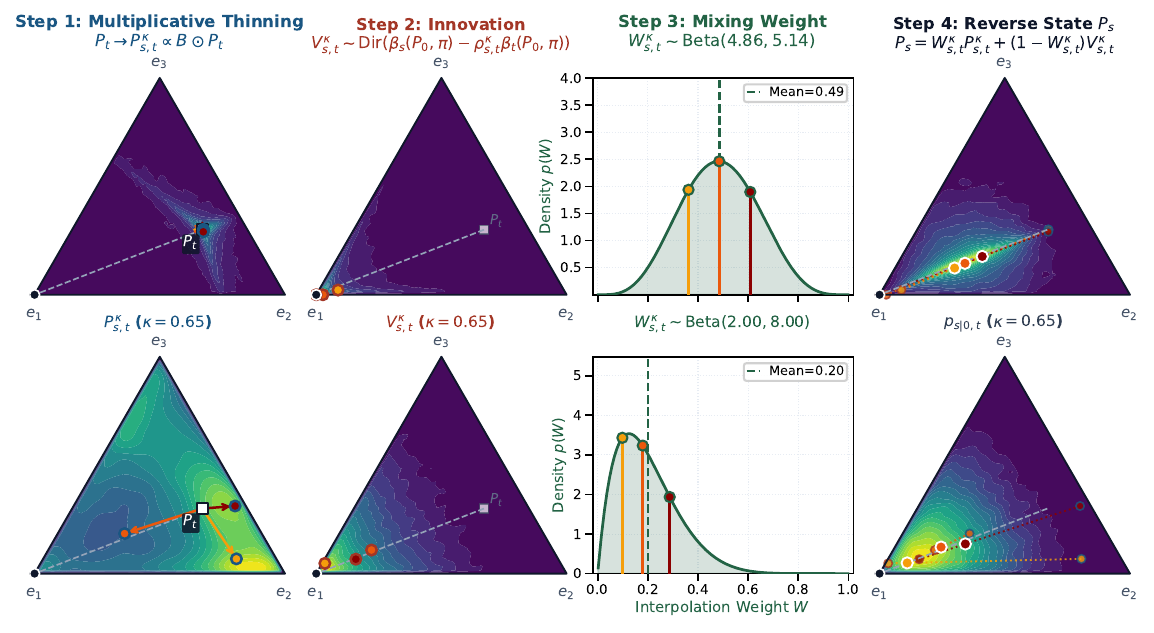}
}
\caption{\textbf{Left to right: different stages of a backward step during inference}. We assume that at the current step, $P_0=e_{x_0}$ where $x_0 \sim \hat{P}_\theta(t, P_t)$ is such that $P_0 = e_1$. The top row corresponds to a low churn $\kappa=0.15$ and the bottom row corresponds to a high churn of $\kappa=0.65$. We assume an interpolation schedule $\alpha_t = 1-t$ and concentration schedule $c_t = \vareps / (1-\alpha_t)$ with $\vareps=4.0$. Additionally, $s=0.40$ and $t=0.70$. The dots with colors yellow, orange, red correspond to three different samples from the current step. In the \uline{first step}, using the churn $\kappa \in [0,1]$, we sample $\colP{P_{s,t}^\kappa}$. This corresponds to a \emph{multiplicative} noising step around $P_t$, with a higher churn corresponding to a higher noise. In the \uline{second step}, we sample the \emph{innovation} variable $\colV{V_{s,t}^\kappa}$, a Dirichlet random variable concentrated around the prediction $P_0$. In the \uline{third step} we sample from a $\Beta$ random variable $\colW{W_{s,t}^\kappa}$ corresponding to the mixing weight in the interpolation between the thinned variable $\colP{P_{s,t}^\kappa}$ and the innovation $\colV{V_{s,t}^\kappa}$. For high churns, the mixing weight $\colW{W_{s,t}^\kappa}$ is close to $0$, meaning that we favor the innovation random variable $\colV{V_{s,t}^\kappa}$. Finally in the \uline{fourth step}, we visualize the interpolation between the innovation $\colV{V_{s,t}^\kappa}$ and the thinned variable $\colP{P_{s,t}^\kappa}$ using the sampled $\Beta$ mixing weight $\colW{W_{s,t}^\kappa}$.}
    \label{fig:sdm-evol}
\end{figure}

The simplex transition outputs a new state $P_s$ in \eqref{eq:backwardtransitionPs|P_0,P_t} which is a random convex combination of an innovation term $\colV{V^\kappa_{s,t}}$ and a thinned version $\colP{P^\kappa_{s,t}}$ of the current state $P_t$, with interpolation weight $\colW{W_{s,t}^\kappa}$. Indeed for $P_t \sim \Dir(\beta_t(P_0,\pi))$, one can show that $\colP{P^\kappa_{s,t}} \sim \Dir(\rho_{s,t}^{\kappa}\beta_t(P_0,\pi))$. The \emph{churn} parameter $\kappa\in[0,1]$ controls how much additional randomness is introduced in the reverse transition at inference time. We present those different steps in \Cref{fig:sdm-evol}.

When $\kappa=0$, we retain the largest fraction of the current state that is compatible with the prescribed concentration schedule. In this case $\rho_{s,t}^0=r_{s,t}$,  $\colW{W_{s,t}^{0}}
    \sim
    \Beta\left(
        r_{s,t}c_t,\,
        c_s-r_{s,t}c_t
    \right)$ and $\colV{V_{s,t}^{0}}
    \sim
    \Dir\left(
        \beta_s(P_0,\pi)
        -
        r_{s,t}\beta_t(P_0,\pi)
    \right)$.
The corresponding update is
\begin{equation}
\label{eq:posterior_kappa_zero}
    P_s
    =
    \colW{W_{s,t}^{0}}\colP{P_{s,t}^{0}}
    +
    (1-\colW{W_{s,t}^{0}})\colV{V_{s,t}^{0}}.
\end{equation}
For the concentration $c_t=\vareps/(1-\alpha_t)$, we have
$b_t=c_t(1-\alpha_t)=\vareps$, hence $r_{s,t}=1$.
In that case $\colP{P_{s,t}^{0}}=P_t$ and
$\colV{V_{s,t}^{0}}=e_{x_0}$, so
\eqref{eq:posterior_kappa_zero} reduces to $P_s=\colW{W_{s,t}^{0}}P_t+ (1-\colW{W_{s,t}^{0}})e_{x_0}$.  
If $\vareps \to +\infty$, \eqref{eq:posterior_kappa_zero} becomes 
\begin{equation}
    P_s = \frac{1-\alpha_s}{1-\alpha_t} P_t + \frac{\alpha_s - \alpha_t}{1-\alpha_t} e_{x_0} , 
\end{equation}
which is the exact deterministic DDIM rule. We refer to \Cref{propbeaut:temperature-limits} for a theoretical analysis of those temperature limits.
Equipped with a compatible backward transition mechanism, we can now define our training and inference algorithms.

\subsection{Training and Inference}\label{sec:training_inference_simplicial}
Define $0 = t_0 < \dots < t_M= 1$. At inference time, we will follow a DDIM approach \citep{song2020denoising}: ideally we would generate data by starting from $P_{t_M}\sim p_1(\cdot)$ and $P_{t_k}\sim p_{t_k|t_{k+1}}(\cdot|P_{t_{k+1}})$ for $k=M-1,...,0$, where for $0\leq s<t\leq 1$
\begin{equation}\label{eq:backward_kernel}
    p_{s|t}(P_s|P_t) = \int p_{s|0,t}(P_s|P_0, P_t) p_{0|t}(P_0 |P_t) \rmd P_0,
\end{equation}
with $p_{0|t}(P_0 |P_t)$ the posterior distribution of $P_0$ given $P_t$. Here $p_1(P_1)=\int p_{1|0}(P_1\mid P_0) p(P_0)\rmd P_0=\Dir(P_1;c_1 \pi)$.
If we had access to the true backward transitions \eqref{eq:backward_kernel}, then this procedure would return samples from $P_0$, hence from the data distribution. Since $P_0$ is supported on the vertices of the simplex, then $p_{0|t}(P_0 \mid P_t)$ is a categorical distribution on those vertices which we approximate by a denoiser $\hat P_\theta(t,P_t)$ such that $\hat P_{\theta,i}(t,P_t) \approx \mathbb{P}(P_0=e_i\mid P_t)=\mathbb{P}(x_0=i\mid P_t)$. For each time step $k=M-1,\dots,0$, we then sample $P_{t_k}\sim p_{t_k \mid 0,t_{k+1}}(\cdot\mid P_0 ,P_{t_{k+1}})$ where $P_0=e_{\widetilde{x}_0}$ for $\widetilde{x}_0 \sim \hat{P}_\theta(t_{k+1},P_{t_{k+1}})$; i.e., we approximate $p_{0|t}(P_0 |P_t)$ by $p^\theta_{0|t}(P_0 |P_t)=\sum_{i=1}^{N}\hat P_{\theta,i}(t,P_t)\updelta_{e_i}(P_0)$.
To learn the denoiser $\hat{P}_\theta$ from $P_t$, we minimize the cross entropy loss
\begin{equation}
\mathcal{L}(\theta)=- \mathbb{E}_{t \sim w(t), x_0 \sim \pdata{}, P_0=e_{x_0}, P_t \sim \text{Dir}(\beta_t(P_0, \pi))} [ \log \big(\hat{P}_\theta(t,P_t)_{x_0}\big) ], \label{eq:xentropy}
\end{equation}
as $\sum_{i=1}^N P_{0,i} \log \big(\hat{P}_\theta(t,P_t)_i\big) =\log \big(\hat{P}_\theta(t,P_t)_{x_0}\big)$. 
In \Cref{app:ELBO}, we show that this loss corresponds to a negative evidence lower bound (ELBO) if $w(t)$ is selected as the uniform distribution on the set $\{t_1,...,t_M\}$.
Given a token embedding matrix $E \in \mathbb{R}^{N \times d}$, the denoiser $\hat{P}_\theta(t, P_t)$ embeds the simplex state $P_t$ either via its expected embedding $P_t^\top E$ or, to avoid this dense matrix product on large vocabularies, via the most likely token embedding $E_{\operatorname{argmax} P_t}$; see \Cref{sec:sdm_input_variants} for details.
We summarize the training of Simplex Diffusion Models in \Cref{alg:training} and inference in \Cref{alg:inference}. These algorithms are described for a single token for the sake of simplicity. The extension to multiple tokens is described in \Cref{sec:multiple_tokens}.

\section{Concentration properties}\label{sec:theoreticalproperties}

\subsection{Temperature Limits}
In \Cref{sec:simplicialdiffusion}, we briefly discussed how the distribution of $P_t$ behaves in both the high and low temperature settings  $c_t=\vareps/(1-\alpha_t)$. We show it here formally.

\begin{propositionbeaut}{Temperature limits}{temperature-limits}
Let $c_t = \vareps / (1-\alpha_t)$ and assume that $P_0=e_{x_0}$.

\medskip
\noindent\textbf{(i) High-temperature limit.}
Fix $t\in(0,1)$. As $\vareps\to0$,
\begin{equation}
    P_t
    \overset{d}{\longrightarrow}
    B_tP_0+(1-B_t)V,
    \label{eq:high_temperature_forward_limit}
\end{equation}
where $B_t\sim\Ber(\alpha_t)$ and $V \sim \sum_{i=1}^N \pi_i\,\updelta_{e_i}$, independently. Thus, $P_t$ is supported on the vertices
of $\Delta_N$. Moreover, fix $0<s<t<1$ and consider the $\kappa=0$ backward bridge
\eqref{eq:posterior_kappa_zero}. For every fixed $P_t\in\Delta_N$,
as $\vareps\to0$,
\begin{equation}
    P_s
    \overset{d}{\longrightarrow}
    W_{s,t}^{0}P_t
    +
    \left(1-W_{s,t}^{0}\right)e_{x_0},
    \qquad
    W_{s,t}^{0}
    \sim
    \Ber\!\left(
    \frac{1-\alpha_s}{1-\alpha_t}
    \right).
    \label{eq:posterior_kappa_zero_epsilon_zero}
\end{equation}
Thus, the limiting transition
remains supported on simplex vertices.

\medskip
\noindent\textbf{(ii) Low-temperature limit.}
Fix $t\in(0,1)$. As $\vareps\to\infty$,
\begin{equation}
    P_t
    \overset{\mathbb P}{\longrightarrow}
    \bar P_t
    :=
    \alpha_tP_0+(1-\alpha_t)\pi.
    \label{eq:low_temperature_deterministic_limit}
\end{equation}
\end{propositionbeaut}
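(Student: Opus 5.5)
The plan is to work entirely through the interpolation representation of \Cref{propbeaut:interpolatingpath} and the bridge construction of \Cref{propbeaut:simplicialtransition}. Both reduce the question to the behaviour of Beta and Dirichlet variables whose parameters scale linearly in $\vareps$. With $c_t=\vareps/(1-\alpha_t)$ we have
\[
a_t=\vareps\,\alpha_t/(1-\alpha_t), \qquad b_t=\vareps .
\]
Hence $W_t\sim\Beta(\vareps\alpha_t/(1-\alpha_t),\vareps)$ and $V_t\sim\Dir(\vareps\pi)$, and $P_t=W_tP_0+(1-W_t)V_t$ with $W_t$ and $V_t$ independent.

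\textbf{Two auxiliary limits.} The first is a lemma on small-parameter limits. If $X\sim\Dir(\vareps\gamma)$ with fixed $\gamma\in(0,\infty)^N$, then as $\vareps\to0$,
\[
X\overset{d}{\to}\sum_i \frac{\gamma_i}{\sum_j\gamma_j}\,\updelta_{e_i}.
\]
Take the Beta case $\Beta(\vareps a,\vareps b)$ as $N=2$. I would prove this via the Gamma representation $X_i=G_i/\sum_j G_j$ with $G_i\sim\GammaDist(\vareps\gamma_i,1)$ independent. The variable $G_i^{\vareps}$ converges in law to a $\mathrm{Uniform}(0,1)$ raised to the power $1/\gamma_i$. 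Indeed, $\mathbb P(G_i\le x)\sim x^{\vareps\gamma_i}/\Gamma(\vareps\gamma_i+1)$ for small $x$, and $\Gamma(\vareps\gamma_i+1)\to1$. So $\log G_i\approx \vareps^{-1}\log U_i^{1/\gamma_i}$, which means the $\log G_i$ separate at scale $1/\vareps$. Consequently the largest $G_i$ dominates the normalisation, and $X$ tends to the vertex $e_{i^*}$ with $i^*=\argmax_i U_i^{1/\gamma_i}$. The identity $\mathbb P(i^*=i)=\gamma_i/\sum_j\gamma_j$ is the standard exponential-race computation: $-\log U_i/\gamma_i\sim\mathrm{Exp}(\gamma_i)$. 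As a cross-check, one can instead use the moments $\mathbb E[X_i^k]\to\gamma_i/\sum_j\gamma_j$ for every $k\ge1$, which characterise the limit law on $[0,1]$.

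The second is a lemma on large-parameter limits. If $X\sim\Dir(\vareps\gamma)$, then $\mathbb E X=\gamma/\sum_j\gamma_j$ and $\Cov X=O(1/\vareps)$. By Chebyshev, $X$ converges in probability to its mean.

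\textbf{Part (i), forward.} By the first lemma, $W_t\to B_t\sim\Ber(\alpha_t)$, since
\[
\frac{\alpha_t/(1-\alpha_t)}{\alpha_t/(1-\alpha_t)+1}=\alpha_t .
\]
Likewise $V_t\to V\sim\sum_i\pi_i\updelta_{e_i}$, where I use that $\pi$ has full support, as in the footnote. Because $W_t$ and $V_t$ are independent, their joint law converges to the product of the limits. The map $(w,v)\mapsto wP_0+(1-w)v$ is continuous, so the continuous mapping theorem gives \eqref{eq:high_temperature_forward_limit}.

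\textbf{Part (i), backward bridge.} Here $b_s=b_t=\vareps$, so $r_{s,t}=1$, and the text after \eqref{eq:posterior_kappa_zero} gives $P_s=W^0_{s,t}P_t+(1-W^0_{s,t})e_{x_0}$ with $W^0_{s,t}\sim\Beta(c_t,c_s-c_t)$. We have $c_s-c_t>0$ because $\alpha_s>\alpha_t$. I note that $\alpha$ is only assumed non-increasing, so if $\alpha_s=\alpha_t$ the statement degenerates and the limit is trivially $P_s=P_t$. The Beta parameters are $\vareps/(1-\alpha_t)$ and $\vareps\bigl(1/(1-\alpha_s)-1/(1-\alpha_t)\bigr)$, so the first lemma gives a Bernoulli limit with success probability
\[
\frac{1/(1-\alpha_t)}{1/(1-\alpha_s)}=\frac{1-\alpha_s}{1-\alpha_t}.
\]
The continuous mapping theorem, with $P_t$ fixed, then yields \eqref{eq:posterior_kappa_zero_epsilon_zero}. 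Since the limit of $W^0_{s,t}$ lies in $\{0,1\}$, the limiting $P_s$ is either $P_t$ or $e_{x_0}$. When $P_t$ is itself a vertex, as in the forward limit, the limiting transition therefore stays on vertices, which is what the statement means.

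\textbf{Part (ii).} This is immediate from \eqref{eq:forward_process_simplicial} and the covariance formula
\[
\Cov[P_t\mid P_0]=\bigl(\diag(\bar P_t)-\bar P_t\bar P_t^\top\bigr)/(c_t+1).
\]
As $\vareps\to\infty$ we have $c_t\to\infty$, so Chebyshev applied coordinatewise gives $P_t\overset{\mathbb P}{\to}\bar P_t$.

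\textbf{Main obstacle.} The only non-routine step is the first lemma, specifically identifying the vertex probabilities rigorously. I would first try the moment argument, since it avoids the tail estimates needed in the Gamma approach. For a Dirichlet with total mass $\vareps\Gamma$, where $\Gamma=\sum_j\gamma_j$,
\[
\mathbb E[X_i^k]=\prod_{m=0}^{k-1}\frac{\vareps\gamma_i+m}{\vareps\Gamma+m}\to\frac{\gamma_i}{\Gamma}.
\]
Similarly, $\mathbb E[X_iX_j]\to0$ for $i\ne j$. These moments force the limit to be supported on the vertices with the stated weights, and moments determine laws on the compact set $\Delta_N$.
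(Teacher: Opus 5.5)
Your proposal is correct and follows the paper's proof. Both use the interpolation representation $P_t=W_tP_0+(1-W_t)V_t$ with $W_t\sim\Beta(\vareps\alpha_t/(1-\alpha_t),\vareps)$ and $V_t\sim\Dir(\vareps\pi)$, take small-concentration Bernoulli/categorical limits and apply continuous mapping for the forward part, and use the $\kappa=0$ bridge with $r_{s,t}=1$, whose weight $\Beta(c_t,c_s-c_t)$ tends to $\Ber(c_t/c_s)$. The differences are minor: you prove the small-concentration Dirichlet limit that the paper only cites as standard (your moment argument suffices), and for (ii) you apply Chebyshev to $P_t$ directly via the Dirichlet covariance, whereas the paper takes the in-probability limits of $W_t$ and $V_t$ separately.
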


\Cref{propbeaut:temperature-limits} shows that the inverse temperature
interpolates between two very different regimes.  At high
temperature, the simplex-valued state collapses onto the
vertices, and with $\kappa=0$ it reduces the reverse bridge to a transition between the current and clean vertices. Thus the discrete-diffusion
interpretation is recovered for both the probability path and the backward bridge. At low temperature, by contrast, the randomness disappears and the forward state interpolates deterministically between $P_0$ and $\pi$. However, the deterministic limit in
\eqref{eq:low_temperature_deterministic_limit} only describes the
first-order behavior as $\vareps\to\infty$.  In \Cref{sec:gaussian_limit_linear} we identify a Gaussian fluctuation limit result, drawing connections between the limit $\vareps \to +\infty$ and Gaussian diffusion models.
\subsection{Concentration Schedule via Normalized Variance}
\label{sec:concentration_schedule}
While \Cref{propbeaut:temperature-limits} establishes that SDMs interpolate between Discrete Diffusion (high temperature) and the mean interpolation (low temperature), it remains unclear how to select $t \mapsto c_t$ in practice. Here we focus on the setting where $\pi = \1 / N$, i.e., the uniform prior. Rather than tuning an unbounded $c_t \in (0, \infty)$, we define $c_t$ in terms of the fraction $t \mapsto \nu_t \in (0, 1)$ of the maximal total variance ($\Tvar$), defined by 
\begin{equation}
    \Tvar(t) := \operatorname{Tr}\left[ \Cov\left( P_t \mid P_0\right)\right] = \frac{1}{c_t + 1} \left[ 1 - \| \alpha_t P_0 + (1-\alpha_t) \pi \|_2^2 \right] = \frac{N-1}{N}\frac{1-\alpha_t^2}{c_t + 1}.
\end{equation}
Because $c_t \in (0, \infty)$, the total variance is bounded above by $\frac{N-1}{N} (1 - \alpha_t^2)$. Setting \mbox{$\nu_t := \frac{1}{c_t + 1}$} (equivalently \mbox{$c_t = \frac{1}{\nu_t} - 1$}), the total variance is exactly the fraction $\nu_t$ of this bound. In practice, we experiment with two variants (one constant, one piecewise linear):
\begin{equation}
    \nu^\text{cst.}_t = \nu_0, \qquad 
    \nu_t^\text{cst.-lin} = \begin{cases} 
    \nu_0, & \text{if } t < \ell, \\ 
    \nu_0 + \frac{t - \ell}{1-\ell}(\nu_1 - \nu_0), & \text{if } t \ge \ell.
    \end{cases}
    \label{eq:constant_linear_schedule}
\end{equation}
This parameterization is more interpretable than the raw value of $c_t$. At $t=1$, setting $c_1 = N$ induces a uniform distribution on the simplex, and a larger $c_1$ concentrates around $\pi$. However, for $t < 1$, the mean $\alpha_t P_0 + (1-\alpha_t)\pi$ moves towards the vertex $P_0$, and a large $c_t$ collapses $P_t$ onto the mean, which reveals the clean token almost deterministically. In contrast, $\Tvar$ operates on a bounded interval with clear extrema: it approaches its upper bound when $P_t$ collapses onto the vertices, and vanishes when $P_t$ collapses onto its mean ($c_t \to \infty$). 
\section{Experiments}
\begin{wraptable}{r}{0.50\textwidth}
    \vspace{-45pt}
    \caption{\textbf{Accuracy (\%)} on Sudoku in 180 steps, varying the sampling schedule. $^\dagger$trained with uniform time sampler, $^\ddagger$trained with adaptive time sampler. For each column, we \underline{underline} the best and \textbf{bold} the highest accuracy for continuous methods. We show SDM with normalized variance $\nu_t^\text{cst.-lin.}$, with \mbox{$\nu_0=0.4$}, \mbox{$\nu_1=0.75$}, \mbox{$\ell=0.8$}. Values are mean $\pm$ std over 5 seeds.}
    \label{tab:sudoku_hard_table_main}
    \centering
    \small
    \setlength{\tabcolsep}{6pt}
    \renewcommand{\arraystretch}{1.05}
    {%
        \newcommand{\tabrow}{\hspace*{0.8em}}
        \begin{tabular}{l cc}
        \toprule
        Model & Linear & Cosine \\
        \midrule
        \multicolumn{3}{@{}l@{}}{\textit{Discrete}} \\
        \tabrow AR (Greedy) & $\hphantom{0}3.1_{{\color{gray}\scriptscriptstyle\pm 0.5}}$ & -- \\
        \tabrow MDM$^\ddagger$ & $69.4_{{\color{gray}\scriptscriptstyle\pm 4.8}}$ & $69.7_{{\color{gray}\scriptscriptstyle\pm 4.0}}$ \\
        \tabrow MDM$^\ddagger$ (+SC) & $94.7_{{\color{gray}\scriptscriptstyle\pm 0.8}}$ & $\underline{99.3}_{{\color{gray}\scriptscriptstyle\pm 0.2}}$ \\
        \tabrow UDM$^\ddagger$ & $82.1_{{\color{gray}\scriptscriptstyle\pm 1.7}}$ & $82.4_{{\color{gray}\scriptscriptstyle\pm 1.7}}$ \\
        \tabrow UDM$^\dagger$ (+SC) & $97.8_{{\color{gray}\scriptscriptstyle\pm 1.7}}$ & $98.1_{{\color{gray}\scriptscriptstyle\pm 1.2}}$ \\
        \midrule
        \multicolumn{3}{@{}l@{}}{\textit{Continuous}} \\
        \tabrow FLM$^\ddagger$ & $75.3_{{\color{gray}\scriptscriptstyle\pm 3.4}}$ & $74.9_{{\color{gray}\scriptscriptstyle\pm 3.5}}$ \\
        \tabrow $\mathbb{S}$-FLM$^\ddagger$ & $87.3_{{\color{gray}\scriptscriptstyle\pm 1.5}}$ & $87.3_{{\color{gray}\scriptscriptstyle\pm 1.7}}$ \\
        \rowcolor{gray!15}
        \tabrow SDM$^\dagger$ ($\kappa = 1.0$) & $88.4_{{\color{gray}\scriptscriptstyle\pm 2.2}}$ & $88.9_{{\color{gray}\scriptscriptstyle\pm 2.4}}$ \\
        \rowcolor{gray!15}
        \tabrow SDM$^\dagger$ (+SC; $\kappa = 1.0$) & $\underline{\textbf{99.1}}_{{\color{gray}\scriptscriptstyle\pm 0.2}}$ & $\textbf{98.7}_{{\color{gray}\scriptscriptstyle\pm 0.3}}$ \\
        \bottomrule
        \end{tabular}%
    }
    \vspace{-20pt}
\end{wraptable}
We compare SDMs with autoregressive models (AR), masked and uniform diffusion models (MDMs, UDMs), and Flow Language Models (FLM) and Hyperspherical Flows ($\mathbb{S}$-FLM). We evaluate on Sudoku \citep{alp2024sudoku, benhamu2025acceleratedsamplingmaskeddiffusion, kim2026finetuningmaskeddiffusionprovable}, code generation on TinyGSM \citep{liu2023tinygsmachieving80gsm8k, kim2026stoptrainingworstprogressive}, language modeling on OpenWebText (OWT) \citep{Gokaslan2019OpenWeb}, language understanding; \Cref{sec:appendix-setup-lu}, and molecule generation following GenMol (\citealp{lee2025genmol}; \Cref{sec:appendix-setup-genmol}). Prior work mostly compares models by Generative Perplexity (Gen.\ PPL) on OWT, which does not always correlate with downstream performance such as functional correctness in code \citep{deschenaux2024promisesoutlookschallengesdiffusion, feng2025theoreticalbenefitlimitationdiffusion, franca2026hackinggenerativeperplexityunconditional, velickovic2026perplexitytellrightwrong}. Thus, we primarily compare models on TinyGSM.
%
\subsection{Reasoning on Sudoku}
\label{sec:exp_sudoku}
\paragraph{Experimental Setup.}
We train on 200k grids with 30/81 clues revealed and evaluate on 5k unseen examples. We train a modified Diffusion Transformer (DiT) \citep{peebles2023scalablediffusionmodelstransformers} as in \citet{lou2023discrete} with embedding dimension 512, 8 layers and 8 attention heads (28.6M parameters) for 50k steps. We use 180 sampling steps for the diffusion variants. See \Cref{sec:appendix-setup-sudoku} for further details.
\paragraph{Results.}
SDMs are the strongest continuous method on Sudoku and match the Discrete Diffusion models (\Cref{tab:sudoku_hard_table_main}). Without Self-Conditioning (SC), SDMs outperform MDMs and UDMs with ancestral sampling and FLMs, and match $\mathbb{S}$-FLMs. With SC, SDMs reach 99.1\%, outperforming the continuous baselines and within 0.2 points of the best result overall. SDMs also outperform Dirichlet Flow Matching (DFM) by a wide margin (76.7\%; \Cref{tab:sudoku_hard_dfm_apdx}).
Setting $\kappa = 1$ works best. Without SC, increasing $\kappa$ from $0$ to $1$ improves the accuracy from 73.9\% to 88.4\% (\Cref{tab:sudoku_sdm_churn_0_0,tab:sudoku_sdm_churn_0_2,tab:sudoku_sdm_churn_1_0}). With SC, SDMs perform similarly across churns.
Training with the adaptive time sampler (\Cref{sec:adaptive_time_sampler}) generally improves MDMs, UDMs, and SDMs at low churn.
%
%
\subsection{Code Generation on TinyGSM}
\label{sec:exp_tinygsm}
\paragraph{Experimental Setup.}
We train on TinyGSM, a dataset of 11.8M synthetic math word problems with executable Python solutions, and evaluate on the GSM8K test set by executing one generated solution per problem. We tokenize with the SmolLM tokenizer, and use a context length of 512. We train a 12-layer DiT with hidden dimension 768 and 12 attention heads (167.9M parameters) for 250k steps ($\approx 77\%$ for SC variants, to match the training FLOPs; \Cref{sec:ext_sc_loopholing}) with a batch size of 512. See \Cref{sec:appendix-setup-tinygsm} for further details. With ODE sampling, FLM, $\mathbb{S}$-FLM and DFM perform worse than the discrete baselines on TinyGSM, so we defer them to \Cref{app:additional-exp}.
\paragraph{Results.}
We compare SDMs against MDMs and UDMs with the ancestral sampler, with a Predictor-Corrector (PC) sampler (\Cref{sec:pc_sampler}) and with Self-Conditioning (SC; \Cref{sec:ext_sc_loopholing}). Since the expected embedding is costly with a large vocabulary, we also train SDMs that embed only the token $\argmax~P_t$, optionally combined with SC (Argmax + SC; \Cref{sec:sdm_input_variants}).
With the expected embedding and without SC, SDMs outperform all diffusion baselines at 512 steps (\Cref{fig:tinygsm-nfe-vs-accuracy}), reaching $45.8\%$ at $T = 1$ (vs.\ $36.8\%$ for UDM + PC) and $49.0\%$ at $T = 0.1$ (vs.\ $45.8\%$ for MDM + SC). With Argmax + SC, SDMs perform best at $T = 0.1$ from 32 function evaluations (NFE) onward and reach $57.0\%$ at 8k NFE ($50.2\%$ at 512 NFE).
\citet{chemseddine2026sphericalflowssamplingcategorical} train Spherical Flow (SF) with the same setup and find that PC sampling and SC are essential. With 512 NFE ($T = 1$), the accuracy increases from $6.4\%$ with ODE sampling to $41.7\%$ with PC and SC. In this setting, SDMs without SC outperform SFs without SC ($45.8\%$ vs.\ $32.4\%$) and perform similarly to SFs with SC (\Cref{tab:tinygsm_spherical_flows}).
\paragraph{Time Schedule, Churn and Blockwise Generation.}
We train with an adaptive time sampler that upweights noise levels where the model learns most, using a density fitted to the training loss (\Cref{sec:adaptive_time_sampler}), and derive the sampling time grid from the same density (\Cref{sec:adaptive_time_schedule}, \citealp{dieleman2022continuous, raya2026noiseschedulinginformationguidedallocation}).
%
With the expected embedding, this adaptive schedule improves SDMs ($\nu_0=0.2$, $\nu_1=0.5$, $\ell=0.2$, $\kappa=1$) over the linear grid by 9--11 points at 512 steps and 18--19 points at 64 steps; it does not help the Argmax inputs (e.g.\ Argmax + SC at $T=1$ and 512 steps: $40.3\%$ adaptive vs.\ $43.7\%$ linear for $\nu_0=0.2$, $\nu_1=0.75$, $\ell=0.2$) (\Cref{tab:tinygsm_sdm_sdm_t10_s512,tab:tinygsm_sdm_sdm_t01_s512,tab:tinygsm_sdm_sdm_t10_s64,tab:tinygsm_sdm_sdm_t01_s64}). The adaptive time sampler can improve MDM + SC but not UDM + PC, so we report MDM + SC trained with the adaptive time sampler and UDM + PC trained with the uniform time sampler (\Cref{app:additional-exp-tinygsm}).
As on Sudoku, SDMs perform best with $\kappa = 1$ ($45.8\%$ vs.\ $12.6\%$ at $\kappa = 0$, $T = 1$, 512 steps). SFs likewise needs stochastic sampling. Finally, full-sequence SDMs outperform a block-autoregressive variant \citep{han2023ssd, arriola2025block} with block size 32 at matched NFE, despite its stronger left-to-right bias (\Cref{fig:tinygsm_block_vs_fullseq_nfe}).
\paragraph{Distillation and Auto-Guidance.}
We distill the SDM with the expected embedding (no SC; $\nu_t^\text{cst.-lin.}$ with $\nu_0=0.2$, $\nu_1=0.5$, $\ell=0.2$) into a few-step generator (\Cref{app:distillation}). With 8 NFEs, the distilled SDM solves $32.1\%$ of the problems at $T = 1$, more than IDLM \citep{li2026idlm} with 128 NFEs ($21.4\%$; \Cref{tab:tinygsm_distill_idlm}) and the undistilled SDM with 8 NFEs ($\approx 6\%$; \Cref{fig:tinygsm_distilled_vs_sdm_nfe}). The accuracy improves up to $\approx 40\%$ at 128 NFEs and then plateaus. Distillation reduces the diversity, which we measure using the \emph{Abstract Syntax Trees} (AST) of the generated programs (\Cref{app:ast_diversity}). The AST diversity score decreases from $35.3$ for the undistilled SDM with 512 NFEs ($40.0$ with 8 NFEs) to $17.5$ for the distilled SDM with 8 NFEs at $T = 1$ (higher is more diverse; \Cref{tab:tinygsm_baselines_ast_diversity}). See \Cref{app:distillation_exp} for more details. Finally, guiding SDMs away from a weaker checkpoint (Auto-Guidance; \citealp{karras2024guiding}) improves accuracy in all settings we tested, mostly in the low churn and high temperature regime (\Cref{app:autoguidance}).
\subsection{Language Modeling on OpenWebText (OWT)}
\paragraph{Experimental Setup.}
We train on OWT \citep{Gokaslan2019OpenWeb} for unconditional language modeling, and evaluate the generation quality via Generative Perplexity (GenPPL) under a pretrained GPT-2-large model \citep{Radford2019LanguageMA} alongside unigram entropy ($H_1$). We tokenize with the GPT-2 tokenizer, and use a context length of 1024. We train a 12-layer DiT with hidden dimension 768 and 12 attention heads (167.9M parameters) for 1M steps with a batch size of 512. 
\paragraph{Results.} 
With a few sampling interventions that improve the diffusion variants (MDMs, UDMs and SDMs), SDMs reach the same Pareto frontier as MDMs and UDMs. Similar interventions also improve AR models, whose frontier reaches the real OWT validation data. These results highlight that heuristic optimization can surpass the empirical validation point of OWT through simple logit shaping, thereby illustrating the limitation of generative frontiers \citep{pynadath2026generative}.  We briefly describe the four \emph{logit shaping} interventions we introduce. First, we consider top-p sampling as  in \citet{holtzman2019curious}. Second we introduce a power-law logit temperature annealing during sampling similarly to \citet{team2026diffusiongemma,chang2022maskgitmaskedgenerativeimage}. The third intervention which is by far the most impactful is a \emph{sequence frequency penalty}, see \eqref{eq:count_mdm_udm} and \eqref{eq:count_simplex} for more details. Note that this intervention benefits all experimental setups including MDM, UDM, SDM and AR. Finally, SDM is the only model to benefit from a  \emph{local frequency penalty} which bridges the gap between SDMs and state-of-the-art MDM and UDM for text generation, achieving a GenPPL/Entropy trade-off that is close to real validation data (GenPPL/Entropy: $17.0$/$5.46$). More details on the interventions and additional results can be found in \Cref{app:additional-exp-owt}. 
Our main results including the Pareto frontiers for all models we consider are presented in  \Cref{fig:owt_pareto_h1_and_sdm_ablation}. Non-cherry picked samples are shown in \Cref{sec:additional_samples}. 
Finally, in \Cref{sec:language_understanding}, we present additional non-generative evaluation results on language understanding by evaluating our models on ARC-easy \citep{clark2018think}, PIQA \citep{bisk2020piqa} and HellaSwag \citep{zellers2019hellaswag}.
\begin{figure*}[t]
  \centering
  \includegraphics[width=0.49\textwidth]{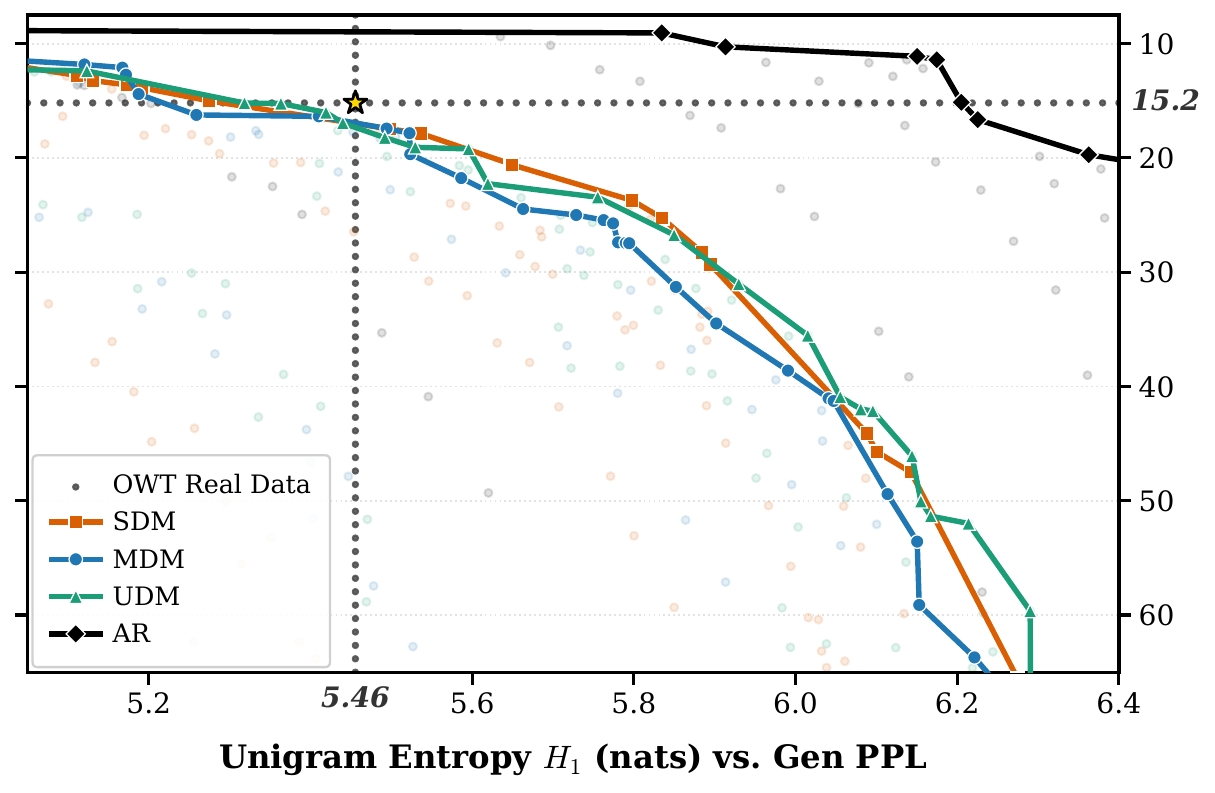}\hfill
  \includegraphics[width=0.49\textwidth]{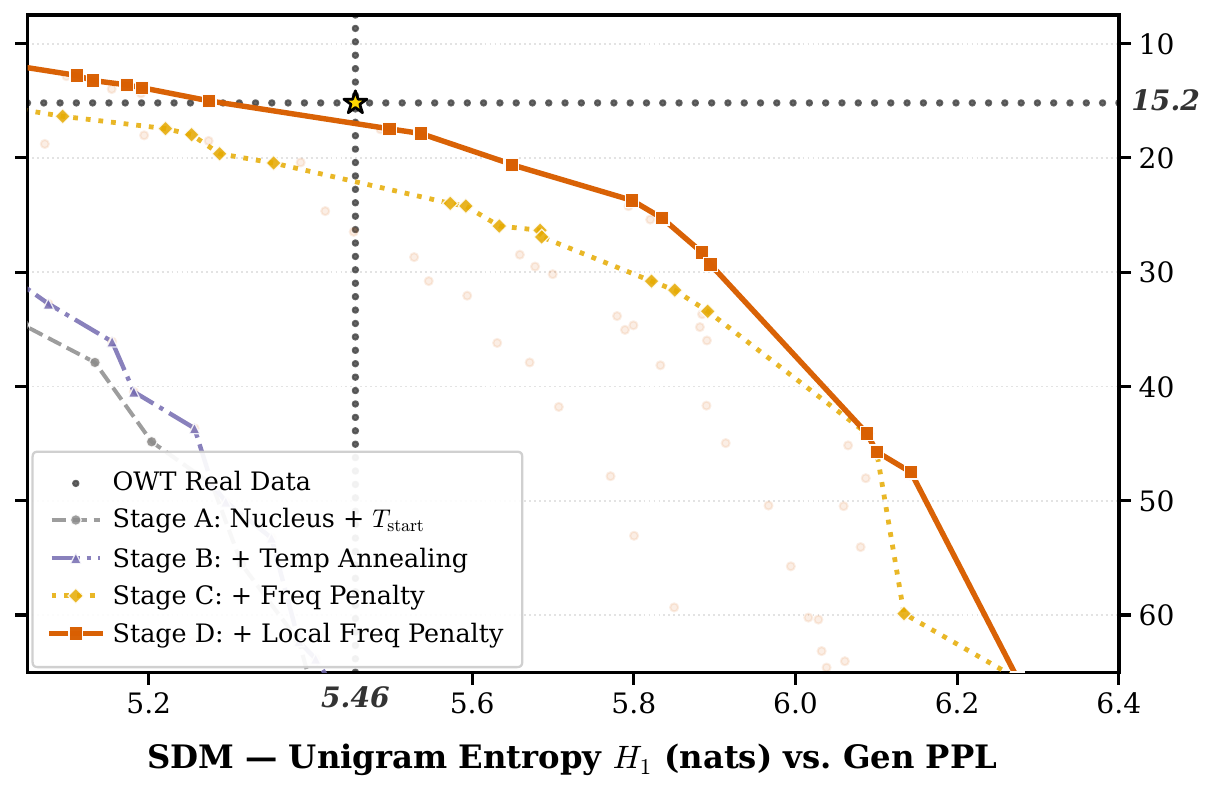}
  \vspace{-6pt}
  \caption{\textbf{OWT Pareto Frontiers of Entropy $H_1$ vs.\ GPT-2 GenPPL.}
Dotted crosshairs and (\textcolor{yellow!60!orange}{$\bigstar$}) denote measured OpenWebText validation data ($H_1 = 5.46\text{ nats}$, $\text{Gen PPL} = 15.2$).
\textbf{Left:} Head-to-head Pareto frontiers comparing 64-step \textbf{\textcolor{orange!85!black}{SDM}}, \textbf{\textcolor{cyan!60!blue}{MDM}}, and \textbf{\textcolor{teal!85!black}{UDM}} against 1024-step \textbf{AR} after sampler interventions.
\textbf{Right:} Progressive ablation on 64-step \textbf{\textcolor{orange!85!black}{SDM}} showing gains from \textbf{Stage A} (nucleus $\text{top-}p$) $\to$ \textbf{Stage B} (power-law logit temperature annealing) $\to$ \textbf{Stage C} (sequence frequency penalty) $\to$ \textbf{Stage D} (local frequency penalty).}
\label{fig:owt_pareto_h1_and_sdm_ablation}
\end{figure*}
\section{Conclusion}
This paper introduces Simplex Diffusion Models (SDMs) for discrete data. Unlike standard Discrete Diffusion models, SDMs can maintain a continuous distributional belief state throughout the generative process, circumventing information collapse even without Self-Conditioning, with which SDMs remain complementary. Our proposed generative model introduces a flexible DDIM-type sampler that does not require discretizing an ODE or SDE.

When compared to strong baselines, SDMs are competitive on unconditional text generation and molecular synthesis, and state-of-the-art on coding. Nevertheless, our experiments remain limited to small models, and whether these conclusions remain valid at scale is an open question. Further experimentation is needed to assess the viability of the approach on large-scale language generation tasks. 
More broadly, SDMs suggest that there is a false dichotomy between discrete and continuous diffusion language models. Discrete Diffusion Models emerge at the high temperature limit, and with the Argmax input (\Cref{sec:sdm_input_variants}), the denoiser processes a discrete sequence, while the generative process itself remains continuous on the simplex.

\section*{AI Use Statement}
We acknowledge the use of AI models during the preparation of this manuscript. AI tools were used to assist in the writing of the manuscript, the derivation of the proofs and the writing of the code. Experiments were run with the use of agentic tools. The main ideas were all introduced without relying on AI tools. Interpretation of the results and first writing of the manuscript were entirely done with no LLM assistance.  The TinyGSM dataset is fully generated by GPT-3.5 (as explained by \citealp{liu2023tinygsmachieving80gsm8k}). The authors take full responsibility for all content, scientific claims, citations, and conclusions presented in this work.

\section*{Reproducibility Statement}
To ensure the full reproducibility of our theoretical and empirical results, we provide comprehensive documentation across the paper and its supplementary material:
\begin{itemize}
    \item \textbf{Theoretical Claims and Proofs:} Complete mathematical proofs for all propositions (\Cref{propbeaut:induced-forward,propbeaut:interpolatingpath,propbeaut:simplicialtransition,propbeaut:temperature-limits}) and extended results, including Gaussian fluctuation limits (\Cref{sec:gaussian_limit_linear}),
    the variational lower bound (\Cref{app:ELBO}), 
    and Simplex DMD distillation (\Cref{app:distillation}), are provided in \Cref{app:proofs}, 
    \Cref{app:ELBO},
    \Cref{app:distillation},    
    with all mathematical assumptions explicitly stated.
    \item \textbf{Algorithms and Pseudocode:} Self-contained algorithmic pseudocode is detailed in \Cref{alg:training} (Training), \Cref{alg:inference} (Inference), \Cref{alg:gamma_sampling} (Accelerated Marsaglia--Tsang Gamma Sampling; see also \Cref{app:samplingdirichlet}), and \Cref{alg:simplicial_dmd} (Simplex Distribution Matching Distillation), along with multi-token and classifier-free guidance extensions in \Cref{sec:multiple_tokens,sec:cfg}.
    \item \textbf{Model Architectures and Hyperparameters:} Detailed neural network architectures (Diffusion Transformer configurations, layer counts, embedding dimensions, attention heads), optimizer parameters (Adam/AdamW learning rates, warmup, EMA decay, gradient clipping), adaptive time schedules (\Cref{sec:adaptive_time_sampler}), denoiser input parameterizations (\Cref{sec:sdm_input_variants}), and variance schedules ($\nu_t$; \Cref{sec:concentration_schedule}) are specified in  \Cref{sec:adaptive_time_sampler,sec:sdm_input_variants,sec:appendix-setup-sudoku,sec:appendix-setup-tinygsm,sec:appendix-setup-lu,sec:appendix-setup-genmol}.
    \item \textbf{Experimental Details and Full Sweeps:} Full experimental ablations and hyperparameter sweeps are documented in \Cref{app:additional-exp} and in the full result tables (\Cref{tab:sudoku_sdm_churn_0_0,tab:sudoku_sdm_churn_0_2,tab:sudoku_sdm_churn_1_0,tab:tinygsm_sdm_sdm_t10_s512,tab:tinygsm_sdm_sdm_t01_s512,tab:tinygsm_sdm_sdm_t10_s64,tab:tinygsm_sdm_sdm_t01_s64}), including exact mean and standard deviation values across random seeds.
\end{itemize}

\bibliography{iclr2027_conference}

\begin{thebibliography}{140}
\providecommand{\natexlab}[1]{#1}
\providecommand{\url}[1]{\texttt{#1}}
\expandafter\ifx\csname urlstyle\endcsname\relax
  \providecommand{\doi}[1]{doi: #1}\else
  \providecommand{\doi}{doi: \begingroup \urlstyle{rm}\Url}\fi

\bibitem[Alamdari et~al.(2023)Alamdari, Thakkar, van~den Berg, Lu, Fusi, Amini, and Yang]{Alamdari2023}
Sarah Alamdari, Nitya Thakkar, Rianne van~den Berg, Alex Lu, Nicolo Fusi, Ava Amini, and Kevin Yang.
\newblock Protein generation with evolutionary diffusion: Sequence is all you need.
\newblock In \emph{NeurIPS Generative AI and Biology (GenBio) Workshop}, 2023.

\bibitem[Allal et~al.(2025)Allal, Lozhkov, Bakouch, Blázquez, Penedo, Tunstall, Marafioti, Kydlíček, Lajarín, Srivastav, Lochner, Fahlgren, Nguyen, Fourrier, Burtenshaw, Larcher, Zhao, Zakka, Morlon, Raffel, von Werra, and Wolf]{allal2025smollm2smolgoesbig}
Loubna~Ben Allal, Anton Lozhkov, Elie Bakouch, Gabriel~Martín Blázquez, Guilherme Penedo, Lewis Tunstall, Andrés Marafioti, Hynek Kydlíček, Agustín~Piqueres Lajarín, Vaibhav Srivastav, Joshua Lochner, Caleb Fahlgren, Xuan-Son Nguyen, Clémentine Fourrier, Ben Burtenshaw, Hugo Larcher, Haojun Zhao, Cyril Zakka, Mathieu Morlon, Colin Raffel, Leandro von Werra, and Thomas Wolf.
\newblock Smollm2: When smol goes big -- data-centric training of a small language model.
\newblock \emph{arXiv preprint arXiv:2502.02737}, 2025.

\bibitem[Alp(2024)]{alp2024sudoku}
Ali Alp.
\newblock Sudoku puzzle generator.
\newblock \url{https://github.com/alicommit-malp/sudoku}, 2024.

\bibitem[Arriola et~al.(2025)Arriola, Gokaslan, Chiu, Yang, Qi, Han, Sahoo, and Kuleshov]{arriola2025block}
Marianne Arriola, Aaron Gokaslan, Justin~T. Chiu, Zhihan Yang, Zhixuan Qi, Jiaqi Han, Subham~Sekhar Sahoo, and Volodymyr Kuleshov.
\newblock Block diffusion: Interpolating between autoregressive and diffusion language models.
\newblock In \emph{International Conference on Learning Representations}, 2025.

\bibitem[Austin et~al.(2021)Austin, Johnson, Ho, Tarlow, and Van Den~Berg]{austin2021structured}
Jacob Austin, Daniel~D Johnson, Jonathan Ho, Daniel Tarlow, and Rianne Van Den~Berg.
\newblock Structured denoising diffusion models in discrete state-spaces.
\newblock In \emph{Advances in Neural Information Processing Systems}, 2021.

\bibitem[Avdeyev et~al.(2023)Avdeyev, Shi, Tan, Dudnyk, and Zhou]{avdeyev2023dirichlet}
Pavel Avdeyev, Chenlai Shi, Yuhao Tan, Kseniia Dudnyk, and Jian Zhou.
\newblock Dirichlet diffusion score model for biological sequence generation.
\newblock In \emph{International Conference on Machine Learning}, 2023.

\bibitem[Baker et~al.(2018)Baker, Fearnhead, Fox, and Nemeth]{baker2018large}
Jack Baker, Paul Fearnhead, Emily Fox, and Christopher Nemeth.
\newblock Large-scale stochastic sampling from the probability simplex.
\newblock \emph{Advances in Neural Information Processing Systems}, 31, 2018.

\bibitem[Batzolis et~al.(2026)Batzolis, Girolami, and Ambrogioni]{batzolis2026cobitlanguagemodelingbitstream}
Georgios Batzolis, Mark Girolami, and Luca Ambrogioni.
\newblock {CoBit}: Language modeling with bitstream diffusion.
\newblock \emph{arXiv preprint arXiv:2605.07013}, 2026.

\bibitem[Ben-Hamu et~al.(2025)Ben-Hamu, Gat, Severo, Nolte, and Karrer]{benhamu2025acceleratedsamplingmaskeddiffusion}
Heli Ben-Hamu, Itai Gat, Daniel Severo, Niklas Nolte, and Brian Karrer.
\newblock Accelerated sampling from masked diffusion models via entropy bounded unmasking.
\newblock In \emph{Advances in Neural Information Processing Systems}, 2025.

\bibitem[Benton et~al.(2024)Benton, Shi, De~Bortoli, Deligiannidis, and Doucet]{benton2024denoising}
Joe Benton, Yuyang Shi, Valentin De~Bortoli, George Deligiannidis, and Arnaud Doucet.
\newblock From denoising diffusions to denoising {M}arkov models.
\newblock \emph{Journal of the Royal Statistical Society Series \textup{B}: Statistical Methodology}, 86\penalty0 (2):\penalty0 286--301, 2024.

\bibitem[Bickerton et~al.(2012)Bickerton, Paolini, Besnard, Muresan, and Hopkins]{bickerton2012quantifying}
G~Richard Bickerton, Gaia~V Paolini, J{\'e}r{\'e}my Besnard, Sorel Muresan, and Andrew~L Hopkins.
\newblock Quantifying the chemical beauty of drugs.
\newblock \emph{Nature chemistry}, 4\penalty0 (2):\penalty0 90--98, 2012.

\bibitem[Bisk et~al.(2020)Bisk, Zellers, Gao, and Choi]{bisk2020piqa}
Yonatan Bisk, Rowan Zellers, Jianfeng Gao, and Yejin Choi.
\newblock Piqa: Reasoning about physical commonsense in natural language.
\newblock In \emph{Proceedings of the AAAI Conference on Artificial Intelligence}, volume~34, pp.\  7432--7439, 2020.

\bibitem[Boget \& Kalousis(2026)Boget and Kalousis]{boget2026unrestrainedsimplexdenoisingdiscrete}
Yoann Boget and Alexandros Kalousis.
\newblock Unrestrained simplex denoising for discrete data: A non-{M}arkovian approach applied to graph generation.
\newblock \emph{arXiv preprint arXiv:2603.28572}, 2026.

\bibitem[Boll et~al.(2024)Boll, Gonzalez-Alvarado, and Schn{\"o}rr]{boll2024generative}
Bastian Boll, Daniel Gonzalez-Alvarado, and Christoph Schn{\"o}rr.
\newblock Generative modeling of discrete joint distributions by e-geodesic flow matching on assignment manifolds.
\newblock \emph{arXiv preprint arXiv:2402.07846}, 2024.

\bibitem[Boll et~al.(2025)Boll, Gonzalez-Alvarado, Petra, and Schn{\"o}rr]{boll2025generative}
Bastian Boll, Daniel Gonzalez-Alvarado, Stefania Petra, and Christoph Schn{\"o}rr.
\newblock Generative assignment flows for representing and learning joint distributions of discrete data.
\newblock \emph{Journal of Mathematical Imaging and Vision}, 67\penalty0 (3):\penalty0 34, 2025.

\bibitem[Bradbury et~al.(2018)Bradbury, Frostig, Hawkins, Johnson, Leary, Maclaurin, Necula, Paszke, Vander{P}las, Wanderman-{M}ilne, and Zhang]{jax2018github}
James Bradbury, Roy Frostig, Peter Hawkins, Matthew~James Johnson, Chris Leary, Dougal Maclaurin, George Necula, Adam Paszke, Jake Vander{P}las, Skye Wanderman-{M}ilne, and Qiao Zhang.
\newblock {JAX}: composable transformations of {P}ython+{N}um{P}y programs, 2018.
\newblock URL \url{http://github.com/google/jax}.

\bibitem[Campbell et~al.(2022)Campbell, Benton, De~Bortoli, Rainforth, Deligiannidis, and Doucet]{campbell2022continuous}
Andrew Campbell, Joe Benton, Valentin De~Bortoli, Thomas Rainforth, George Deligiannidis, and Arnaud Doucet.
\newblock A continuous time framework for discrete denoising models.
\newblock In \emph{Advances in Neural Information Processing Systems}, 2022.

\bibitem[Campbell et~al.(2024)Campbell, Yim, Barzilay, Rainforth, and Jaakkola]{campbell2024generativeflowsdiscretestatespaces}
Andrew Campbell, Jason Yim, Regina Barzilay, Tom Rainforth, and Tommi Jaakkola.
\newblock Generative flows on discrete state-spaces: Enabling multimodal flows with applications to protein co-design.
\newblock In \emph{International Conference on Machine Learning}, 2024.

\bibitem[Chandra et~al.(2026)Chandra, Li, Amin, Ali, Rollins, Ober, Raghu, and Wilson]{chandra2025unification}
Nuria~Alina Chandra, Yucen~Lily Li, Alan~N Amin, Alex Ali, Joshua Rollins, Sebastian~W Ober, Aniruddh Raghu, and Andrew~Gordon Wilson.
\newblock A unification of discrete, {G}aussian, and simplicial diffusion.
\newblock In \emph{International Conference on Learning Representations}, 2026.

\bibitem[Chang et~al.(2022)Chang, Zhang, Jiang, Liu, and Freeman]{chang2022maskgitmaskedgenerativeimage}
Huiwen Chang, Han Zhang, Lu~Jiang, Ce~Liu, and William~T Freeman.
\newblock {MaskGIT}: Masked generative image transformer.
\newblock In \emph{IEEE/CVF Conference on Computer Vision and Pattern Recognition}, 2022.

\bibitem[Chemseddine et~al.(2026)Chemseddine, Kornhardt, and Steidl]{chemseddine2026sphericalflowssamplingcategorical}
Jannis Chemseddine, Gregor Kornhardt, and Gabriele Steidl.
\newblock Spherical flows for sampling categorical data.
\newblock \emph{arXiv preprint arXiv:2605.05629}, 2026.

\bibitem[Chen \& Lipman(2024)Chen and Lipman]{chen2023flow}
Ricky~TQ Chen and Yaron Lipman.
\newblock Flow matching on general geometries.
\newblock In \emph{International Conference on Learning Representations}, 2024.

\bibitem[Chen et~al.(2018)Chen, Rubanova, Bettencourt, and Duvenaud]{chen2018neural}
Ricky~TQ Chen, Yulia Rubanova, Jesse Bettencourt, and David~K Duvenaud.
\newblock Neural ordinary differential equations.
\newblock In \emph{Advances in Neural Information Processing Systems}, 2018.

\bibitem[Chen et~al.(2022)Chen, Zhang, and Hinton]{chen2022analog}
Ting Chen, Ruixiang Zhang, and Geoffrey Hinton.
\newblock Analog bits: Generating discrete data using diffusion models with self-conditioning.
\newblock In \emph{International Conference on Learning Representations}, 2022.

\bibitem[Chen et~al.(2026)Chen, Liang, Sui, Guo, Cheng, You, and Liu]{chen2026langflowcontinuousdiffusionrivals}
Yuxin Chen, Chumeng Liang, Hangke Sui, Ruihan Guo, Chaoran Cheng, Jiaxuan You, and Ge~Liu.
\newblock {LangFlow}: Continuous diffusion rivals discrete in language modeling.
\newblock \emph{arXiv preprint arXiv:2604.11748}, 2026.

\bibitem[Cheng et~al.(2024)Cheng, Li, Peng, and Liu]{cheng2024categorical}
Chaoran Cheng, Jiahan Li, Jian Peng, and Ge~Liu.
\newblock Categorical flow matching on statistical manifolds.
\newblock In \emph{Advances in Neural Information Processing Systems}, 2024.

\bibitem[Cheng et~al.(2025)Cheng, Li, Fan, and Liu]{cheng2025alpha}
Chaoran Cheng, Jiahan Li, Jiajun Fan, and Ge~Liu.
\newblock $\alpha$-flow: A unified framework for continuous-state discrete flow matching models.
\newblock \emph{arXiv preprint arXiv:2504.10283}, 2025.

\bibitem[Clark et~al.(2018)Clark, Cowhey, Etzioni, Khot, Sabharwal, Schoenick, and Tafjord]{clark2018think}
Peter Clark, Isaac Cowhey, Oren Etzioni, Tushar Khot, Ashish Sabharwal, Carissa Schoenick, and Oyvind Tafjord.
\newblock Think you have solved question answering? try arc, the ai2 reasoning challenge.
\newblock \emph{arXiv preprint arXiv:1803.05457}, 2018.

\bibitem[Cobbe et~al.(2021)Cobbe, Kosaraju, Bavarian, Chen, Jun, Kaiser, Plappert, Tworek, Hilton, Nakano, Hesse, and Schulman]{cobbe2021trainingverifierssolvemath}
Karl Cobbe, Vineet Kosaraju, Mohammad Bavarian, Mark Chen, Heewoo Jun, Lukasz Kaiser, Matthias Plappert, Jerry Tworek, Jacob Hilton, Reiichiro Nakano, Christopher Hesse, and John Schulman.
\newblock Training verifiers to solve math word problems.
\newblock \emph{arXiv preprint arXiv:2110.14168}, 2021.

\bibitem[Cox(1972)]{cox1972numerical}
Maurice~G Cox.
\newblock The numerical evaluation of {B}-splines.
\newblock \emph{IMA Journal of Applied Mathematics}, 10\penalty0 (2):\penalty0 134--149, 1972.

\bibitem[Davis et~al.(2024)Davis, Kessler, Petrache, Ceylan, Bronstein, and Bose]{davis2024fisher}
Oscar Davis, Samuel Kessler, Mircea Petrache, {\.I}smail~{\.I} Ceylan, Michael Bronstein, and Avishek~J Bose.
\newblock {Fisher} flow matching for generative modeling over discrete data.
\newblock In \emph{Advances in Neural Information Processing Systems}, 2024.

\bibitem[de~Boor(1972)]{deboor1972calculating}
Carl de~Boor.
\newblock On calculating with b-splines.
\newblock \emph{Journal of Approximation Theory}, 6\penalty0 (1):\penalty0 50--62, 1972.

\bibitem[De~Bortoli et~al.(2022)De~Bortoli, Mathieu, Hutchinson, Thornton, Teh, and Doucet]{de2022riemannian}
Valentin De~Bortoli, Emile Mathieu, Michael Hutchinson, James Thornton, Yee~Whye Teh, and Arnaud Doucet.
\newblock Riemannian score-based generative modelling.
\newblock In \emph{Advances in Neural Information Processing Systems}, 2022.

\bibitem[Deschenaux \& Gulcehre(2024)Deschenaux and Gulcehre]{deschenaux2024promisesoutlookschallengesdiffusion}
Justin Deschenaux and Caglar Gulcehre.
\newblock Promises, outlooks and challenges of diffusion language modeling.
\newblock \emph{arXiv preprint arXiv:2406.11473}, 2024.

\bibitem[Deschenaux \& Gulcehre(2025)Deschenaux and Gulcehre]{deschenaux2025beyond}
Justin Deschenaux and Caglar Gulcehre.
\newblock Beyond autoregression: Fast {LLM}s via self-distillation through time.
\newblock In \emph{International Conference on Learning Representations}, volume 2025, pp.\  5007--5045, 2025.

\bibitem[Deschenaux \& Gulcehre(2026)Deschenaux and Gulcehre]{deschenaux2026languagemodelinghypersphericalflows}
Justin Deschenaux and Caglar Gulcehre.
\newblock Language modeling with hyperspherical flows.
\newblock \emph{arXiv preprint arXiv:2605.11125}, 2026.

\bibitem[Deschenaux et~al.(2026{\natexlab{a}})Deschenaux, Gulcehre, and Sahoo]{deschenaux2026diffusiondualitychapterii}
Justin Deschenaux, Caglar Gulcehre, and Subham~Sekhar Sahoo.
\newblock The diffusion duality, chapter ii: $\psi$-samplers.
\newblock In \emph{International Conference on Learning Representations}, 2026{\natexlab{a}}.

\bibitem[Deschenaux et~al.(2026{\natexlab{b}})Deschenaux, Tran, and Gulcehre]{deschenaux2026partition}
Justin Deschenaux, Lan Tran, and Caglar Gulcehre.
\newblock Partition generative modeling: Masked modeling without masks.
\newblock In \emph{International Conference on Learning Representations}, volume 2026, pp.\  132963--132988, 2026{\natexlab{b}}.

\bibitem[Devlin et~al.(2019)Devlin, Chang, Lee, and Toutanova]{devlin-etal-2019-bert}
Jacob Devlin, Ming-Wei Chang, Kenton Lee, and Kristina Toutanova.
\newblock {BERT}: Pre-training of deep bidirectional transformers for language understanding.
\newblock In \emph{Proceedings of the Conference of the North {A}merican Chapter of the Association for Computational Linguistics: Human Language Technologies, Volume 1 (Long and Short Papers)}, pp.\  4171--4186, 2019.

\bibitem[Devroye(1996)]{devroye1996random}
Luc Devroye.
\newblock Random variate generation in one line of code.
\newblock In \emph{Proceedings of the 28th Conference on Winter Simulation}, pp.\  265--272, 1996.

\bibitem[Dieleman et~al.(2022)Dieleman, Sartran, Roshannai, Savinov, Ganin, Richemond, Doucet, Strudel, Dyer, Durkan, Hawthorne, Leblond, Grathwohl, and Adler]{dieleman2022continuous}
Sander Dieleman, Laurent Sartran, Arman Roshannai, Nikolay Savinov, Yaroslav Ganin, Pierre~H Richemond, Arnaud Doucet, Robin Strudel, Chris Dyer, Conor Durkan, Curtis Hawthorne, R\'emi Leblond, Will Grathwohl, and Jonas Adler.
\newblock Continuous diffusion for categorical data.
\newblock \emph{arXiv preprint arXiv:2211.15089}, 2022.

\bibitem[{DiffusionGemma Team} et~al.(2026){DiffusionGemma Team}, Ta{\"\i}ga, Assiene, Calandriello, Chaabouni, Gante, von Glehn, Keating, Knutsen, Kukla, et~al.]{team2026diffusiongemma}
{DiffusionGemma Team}, Adrien~Ali Ta{\"\i}ga, James Assiene, Daniele Calandriello, Rahma Chaabouni, Jo{\~a}o Gante, Tamara von Glehn, Nate Keating, Chris Knutsen, Martin Kukla, et~al.
\newblock Diffusion{G}emma technical report.
\newblock \emph{arXiv preprint arXiv:2608.00146}, 2026.

\bibitem[Dunn \& Koes(2024)Dunn and Koes]{dunn2024mixed}
Ian Dunn and David~Ryan Koes.
\newblock Mixed continuous and categorical flow matching for 3{D} de novo molecule generation.
\newblock \emph{arXiv preprint arXiv:2404.19739}, 2024.

\bibitem[Ertl \& Schuffenhauer(2009)Ertl and Schuffenhauer]{ertl2009estimation}
Peter Ertl and Ansgar Schuffenhauer.
\newblock Estimation of synthetic accessibility score of drug-like molecules based on molecular complexity and fragment contributions.
\newblock \emph{Journal of Cheminformatics}, 1:\penalty0 1--11, 2009.

\bibitem[Fathi et~al.(2025)Fathi, Scholak, and Noel]{fathi2025unifyingautoregressivediffusionbasedsequence}
Nima Fathi, Torsten Scholak, and Pierre-Andre Noel.
\newblock Unifying autoregressive and diffusion-based sequence generation.
\newblock In \emph{Conference on Language Modeling}, 2025.

\bibitem[Feng et~al.(2025)Feng, Geng, Guan, Wu, Wang, and He]{feng2025theoreticalbenefitlimitationdiffusion}
Guhao Feng, Yihan Geng, Jian Guan, Wei Wu, Liwei Wang, and Di~He.
\newblock Theoretical benefit and limitation of diffusion language model.
\newblock In \emph{Advances in Neural Information Processing Systems}, 2025.

\bibitem[Fishman et~al.(2023)Fishman, Klarner, Mathieu, Hutchinson, and De~Bortoli]{fishman2023metropolis}
Nic Fishman, Leo Klarner, Emile Mathieu, Michael Hutchinson, and Valentin De~Bortoli.
\newblock Metropolis sampling for constrained diffusion models.
\newblock In \emph{Advances in Neural Information Processing Systems}, 2023.

\bibitem[Fishman et~al.(2024)Fishman, Klarner, De~Bortoli, Mathieu, and Hutchinson]{fishman2023diffusion}
Nic Fishman, Leo Klarner, Valentin De~Bortoli, Emile Mathieu, and Michael Hutchinson.
\newblock Diffusion models for constrained domains.
\newblock \emph{Transactions on Machine Learning Research}, 2024.

\bibitem[Floto et~al.(2023)Floto, Jonsson, Nica, Sanner, and Zhu]{floto2023diffusion}
Griffin Floto, Thorsteinn Jonsson, Mihai Nica, Scott Sanner, and Eric~Zhengyu Zhu.
\newblock Diffusion on the probability simplex.
\newblock In \emph{International Conference on Machine Learning Worshop on Sampling and Optimization in Discrete Space}, 2023.

\bibitem[Franca \& Tong(2026)Franca and Tong]{franca2026hackinggenerativeperplexityunconditional}
Antonio Franca and Alexander Tong.
\newblock Hacking generative perplexity: Why unconditional text evaluation needs distributional metrics.
\newblock \emph{arXiv preprint arXiv:2606.08417}, 2026.

\bibitem[Gokaslan \& Cohen(2019)Gokaslan and Cohen]{Gokaslan2019OpenWeb}
Aaron Gokaslan and Vanya Cohen.
\newblock Openwebtext corpus.
\newblock \url{http://Skylion007.github.io/OpenWebTextCorpus}, 2019.

\bibitem[Gourevitch et~al.(2026)Gourevitch, Janati, Shariatian, Simsekli, Moulines, Xing, and Durmus]{gourevitch2026uniformdiffusionmodelsrevisited}
Samson Gourevitch, Yazid Janati, Dario Shariatian, Umut Simsekli, Eric Moulines, Eric~P. Xing, and Alain Durmus.
\newblock Uniform diffusion models revisited: Leave-one-out denoiser and absorbing state reformulation.
\newblock \emph{arXiv preprint arXiv:2605.22765}, 2026.

\bibitem[Grathwohl et~al.(2018)Grathwohl, Chen, Bettencourt, Sutskever, and Duvenaud]{grathwohl2018ffjord}
Will Grathwohl, Ricky~TQ Chen, Jesse Bettencourt, Ilya Sutskever, and David Duvenaud.
\newblock Ffjord: Free-form continuous dynamics for scalable reversible generative models.
\newblock In \emph{International Conference on Learning Representations}, 2018.

\bibitem[Grathwohl et~al.(2021)Grathwohl, Swersky, Hashemi, Duvenaud, and Maddison]{grathwohl2021oopsitookgradient}
Will Grathwohl, Kevin Swersky, Milad Hashemi, David Duvenaud, and Chris Maddison.
\newblock Oops i took a gradient: Scalable sampling for discrete distributions.
\newblock In \emph{International Conference on Machine Learning}, 2021.

\bibitem[Greaves(2026)]{greaves2026extended}
Dylan Greaves.
\newblock Extended one-liners for the {B}eta, {G}amma, and {D}irichlet distributions with shape parameters below one.
\newblock \emph{arXiv preprint arXiv:2604.11199}, 2026.

\bibitem[Gulrajani \& Hashimoto(2023)Gulrajani and Hashimoto]{gulrajani2023likelihood}
Ishaan Gulrajani and Tatsunori~B Hashimoto.
\newblock Likelihood-based diffusion language models.
\newblock In \emph{Advances in Neural Information Processing Systems}, 2023.

\bibitem[Han et~al.(2023)Han, Kumar, and Tsvetkov]{han2023ssd}
Xiaochuang Han, Sachin Kumar, and Yulia Tsvetkov.
\newblock Ssd-{LM}: Semi-autoregressive simplex-based diffusion language model for text generation and modular control.
\newblock In \emph{Proceedings of the 61st Annual Meeting of the Association for Computational Linguistics (Volume 1: Long Papers)}, pp.\  11575--11596, 2023.

\bibitem[Haussmann \& Pardoux(1986)Haussmann and Pardoux]{haussmann1986time}
Ulrich~G Haussmann and Etienne Pardoux.
\newblock Time reversal of diffusions.
\newblock \emph{The Annals of Probability}, 14\penalty0 (3):\penalty0 1188--1205, 1986.

\bibitem[Haviv et~al.(2025)Haviv, Pooladian, Pe'er, and Amos]{haviv2024wasserstein}
Doron Haviv, Aram-Alexandre Pooladian, Dana Pe'er, and Brandon Amos.
\newblock Wasserstein flow matching: Generative modeling over families of distributions.
\newblock In \emph{International Conference on Machine Learning}, 2025.

\bibitem[Ho \& Salimans(2022)Ho and Salimans]{ho2022classifier}
Jonathan Ho and Tim Salimans.
\newblock Classifier-free diffusion guidance.
\newblock \emph{arXiv preprint arXiv:2207.12598}, 2022.

\bibitem[Ho et~al.(2020)Ho, Jain, and Abbeel]{ho2020denoising}
Jonathan Ho, Ajay Jain, and Pieter Abbeel.
\newblock Denoising diffusion probabilistic models.
\newblock In \emph{Advances in Neural Information Processing Systems}, 2020.

\bibitem[Ho et~al.(2022)Ho, Chan, Saharia, Whang, Gao, Gritsenko, Kingma, Poole, Norouzi, Fleet, and Salimans]{ho2022imagen}
Jonathan Ho, William Chan, Chitwan Saharia, Jay Whang, Ruiqi Gao, Alexey Gritsenko, Diederik~P Kingma, Ben Poole, Mohammad Norouzi, David~J Fleet, and Tim Salimans.
\newblock Imagen video: High definition video generation with diffusion models.
\newblock \emph{arXiv preprint arXiv:2210.02303}, 2022.

\bibitem[Holtzman et~al.(2019)Holtzman, Buys, Du, Forbes, and Choi]{holtzman2019curious}
Ari Holtzman, Jan Buys, Li~Du, Maxwell Forbes, and Yejin Choi.
\newblock The curious case of neural text degeneration.
\newblock \emph{arXiv preprint arXiv:1904.09751}, 2019.

\bibitem[Holtzman et~al.(2021)Holtzman, West, Shwartz, Choi, and Zettlemoyer]{holtzman2021surface}
Ari Holtzman, Peter West, Vered Shwartz, Yejin Choi, and Luke Zettlemoyer.
\newblock Surface form competition: Why the highest probability answer isn’t always right.
\newblock In \emph{Proceedings of the 2021 conference on empirical methods in natural language processing}, pp.\  7038--7051, 2021.

\bibitem[Hoogeboom et~al.(2021)Hoogeboom, Nielsen, Jaini, Forr{\'e}, and Welling]{hoogeboom2021argmaxflowsmultinomialdiffusion}
Emiel Hoogeboom, Didrik Nielsen, Priyank Jaini, Patrick Forr{\'e}, and Max Welling.
\newblock Argmax flows and multinomial diffusion: Learning categorical distributions.
\newblock In \emph{Advances in Neural Information Processing Systems}, volume~34, pp.\  12454--12465, 2021.

\bibitem[Hoogeboom et~al.(2026)Hoogeboom, Ruhe, Heek, Mensink, and Salimans]{hoogeboom2026beyond}
Emiel Hoogeboom, David Ruhe, Jonathan Heek, Thomas Mensink, and Tim Salimans.
\newblock Beyond single tokens: Distilling discrete diffusion models via discrete {MMD}.
\newblock \emph{arXiv preprint arXiv:2603.20155}, 2026.

\bibitem[Hu et~al.(2026)Hu, Qiu, Lu, Zhao, Li, Kim, Andreas, and He]{hu2026elfembeddedlanguageflows}
Keya Hu, Linlu Qiu, Yiyang Lu, Hanhong Zhao, Tianhong Li, Yoon Kim, Jacob Andreas, and Kaiming He.
\newblock {ELF}: Embedded language flows.
\newblock \emph{arXiv preprint arXiv:2605.10938}, 2026.

\bibitem[Huang et~al.(2022)Huang, Aghajohari, Bose, Panangaden, and Courville]{huang2022riemannian}
Chin-Wei Huang, Milad Aghajohari, Joey Bose, Prakash Panangaden, and Aaron~C Courville.
\newblock Riemannian diffusion models.
\newblock In \emph{Advances in Neural Information Processing Systems}, 2022.

\bibitem[Jo \& Hwang(2025)Jo and Hwang]{jo2025continuous}
Jaehyeong Jo and Sung~Ju Hwang.
\newblock Continuous diffusion model for language modeling.
\newblock In \emph{Advances in Neural Information Processing Systems}, 2025.

\bibitem[Jo et~al.(2026)Jo, Yoon, Deschenaux, Gulcehre, and Ahn]{jo2025loopholing}
Mingyu Jo, Jaesik Yoon, Justin Deschenaux, Caglar Gulcehre, and Sungjin Ahn.
\newblock Loopholing discrete diffusion: Deterministic bypass of the sampling wall.
\newblock In \emph{International Conference on Learning Representations}, 2026.

\bibitem[Karras et~al.(2022)Karras, Aittala, Aila, and Laine]{karras2022elucidating}
Tero Karras, Miika Aittala, Timo Aila, and Samuli Laine.
\newblock Elucidating the design space of diffusion-based generative models.
\newblock In \emph{Advances in Neural Information Processing Systems}, 2022.

\bibitem[Karras et~al.(2024)Karras, Aittala, Kynk{\"a}{\"a}nniemi, Lehtinen, Aila, and Laine]{karras2024guiding}
Tero Karras, Miika Aittala, Tuomas Kynk{\"a}{\"a}nniemi, Jaakko Lehtinen, Timo Aila, and Samuli Laine.
\newblock Guiding a diffusion model with a bad version of itself.
\newblock In \emph{Advances in Neural Information Processing Systems}, 2024.

\bibitem[Kim et~al.(2026{\natexlab{a}})Kim, Geuter, Alvarez-Melis, Kakade, and Chen]{kim2026stoptrainingworstprogressive}
Jaeyeon Kim, Jonathan Geuter, David Alvarez-Melis, Sham~M. Kakade, and Sitan Chen.
\newblock Stop training for the worst: Progressive unmasking accelerates masked diffusion training.
\newblock In \emph{International Conference on Machine Learning}, 2026{\natexlab{a}}.

\bibitem[Kim et~al.(2026{\natexlab{b}})Kim, Kim, Lee, Pan, Kim, Kakade, and Chen]{kim2026finetuningmaskeddiffusionprovable}
Jaeyeon Kim, Seunggeun Kim, Taekyun Lee, David~Z. Pan, Hyeji Kim, Sham Kakade, and Sitan Chen.
\newblock Fine-tuning masked diffusion for provable self-correction.
\newblock In \emph{International Conference on Machine Learning}, 2026{\natexlab{b}}.

\bibitem[Kingma \& Gao(2023)Kingma and Gao]{kingma2023understandingdiffusionobjectiveselbo}
Diederik Kingma and Ruiqi Gao.
\newblock Understanding diffusion objectives as the {ELBO} with simple data augmentation.
\newblock In \emph{Advances in Neural Information Processing Systems}, 2023.

\bibitem[Kingma et~al.(2021)Kingma, Salimans, Poole, and Ho]{kingma2023variationaldiffusionmodels}
Diederik Kingma, Tim Salimans, Ben Poole, and Jonathan Ho.
\newblock Variational diffusion models.
\newblock In \emph{Advances in Neural Information Processing Systems}, 2021.

\bibitem[Kingma \& Ba(2014)Kingma and Ba]{kingma2014adam}
Diederik~P Kingma and Jimmy Ba.
\newblock Adam: A method for stochastic optimization.
\newblock \emph{arXiv preprint arXiv:1412.6980}, 2014.

\bibitem[Kornilov et~al.(2026)Kornilov, Li, Mavrin, Leonov, Gushchin, Burnaev, Koshelev, and Korotin]{kornilov2025universal}
Nikita Kornilov, David Li, Tikhon Mavrin, Aleksei Leonov, Nikita Gushchin, Evgeny Burnaev, Iaroslav Koshelev, and Alexander Korotin.
\newblock Universal inverse distillation for matching models with real-data supervision (no gans).
\newblock In \emph{International Conference on Learning Representations}, 2026.

\bibitem[Lee et~al.(2026)Lee, Yoo, Agarwal, Shah, Huang, Raghunathan, Hong, Boffi, and Kim]{lee2026one}
Chanhyuk Lee, Jaehoon Yoo, Manan Agarwal, Sheel Shah, Jerry Huang, Aditi Raghunathan, Seunghoon Hong, Nicholas~M Boffi, and Jinwoo Kim.
\newblock One-step language modeling via continuous denoising.
\newblock \emph{arXiv preprint arXiv:2602.16813}, 2026.

\bibitem[Lee et~al.(2025)Lee, Kreis, Veccham, Liu, Reidenbach, Paliwal, Nie, and Vahdat]{lee2025genmol}
Seul Lee, Karsten Kreis, Srimukh~Prasad Veccham, Meng Liu, Danny Reidenbach, Saee Paliwal, Weili Nie, and Arash Vahdat.
\newblock Genmol: A drug discovery generalist with discrete diffusion.
\newblock In \emph{International Conference on Machine Learning}, 2025.

\bibitem[Lezama et~al.(2023)Lezama, Salimans, Jiang, Chang, Ho, and Essa]{lezama2023discrete}
Jose Lezama, Tim Salimans, Lu~Jiang, Huiwen Chang, Jonathan Ho, and Irfan Essa.
\newblock Discrete predictor-corrector diffusion models for image synthesis.
\newblock In \emph{International Conference on Learning Representations}, 2023.

\bibitem[Li et~al.(2026)Li, Gushchin, Abulkhanov, Moulines, Oseledets, Panov, and Korotin]{li2026idlm}
David Li, Nikita Gushchin, Dmitry Abulkhanov, Eric Moulines, Ivan Oseledets, Maxim Panov, and Alexander Korotin.
\newblock {IDLM}: Inverse-distilled diffusion language models.
\newblock In \emph{International Conference on Machine Learning}, 2026.

\bibitem[Lipman et~al.(2023)Lipman, Chen, Ben-Hamu, Nickel, and Le]{lipman2022flow}
Yaron Lipman, Ricky~TQ Chen, Heli Ben-Hamu, Maximilian Nickel, and Matt Le.
\newblock Flow matching for generative modeling.
\newblock In \emph{International Conference on Learning Representations}, 2023.

\bibitem[Liu et~al.(2023{\natexlab{a}})Liu, Bubeck, Eldan, Kulkarni, Li, Nguyen, Ward, and Zhang]{liu2023tinygsmachieving80gsm8k}
Bingbin Liu, Sebastien Bubeck, Ronen Eldan, Janardhan Kulkarni, Yuanzhi Li, Anh Nguyen, Rachel Ward, and Yi~Zhang.
\newblock {TinyGSM}: achieving {$>80$\%} on {GSM8k} with small language models.
\newblock \emph{arXiv preprint arXiv:2312.09241}, 2023{\natexlab{a}}.

\bibitem[Liu et~al.(2023{\natexlab{b}})Liu, Chen, Theodorou, and Tao]{liu2023mirror}
Guan-Horng Liu, Tianrong Chen, Evangelos Theodorou, and Molei Tao.
\newblock Mirror diffusion models for constrained and watermarked generation.
\newblock In \emph{Advances in Neural Information Processing Systems}, 2023{\natexlab{b}}.

\bibitem[Liu et~al.(2025)Liu, Nam, Campbell, St{\"a}rk, Xu, Jaakkola, and G{\'o}mez-Bombarelli]{liu2025thinkgeneratediscretediffusion}
Sulin Liu, Juno Nam, Andrew Campbell, Hannes St{\"a}rk, Yilun Xu, Tommi Jaakkola, and Rafael G{\'o}mez-Bombarelli.
\newblock Think while you generate: Discrete diffusion with planned denoising.
\newblock In \emph{International Conference on Learning Representations}, 2025.

\bibitem[Liu et~al.(2026)Liu, Zhao, Xie, Ye, Jiao, Hu, Cao, and Liu]{liu2026balancingunderstandinggenerationdiscrete}
Yue Liu, Yuzhong Zhao, Zheyong Xie, Qixiang Ye, Jianbin Jiao, Yao Hu, Shaosheng Cao, and Yunfan Liu.
\newblock Balancing understanding and generation in discrete diffusion models.
\newblock In \emph{International Conference on Machine Learning}, 2026.

\bibitem[Lou \& Ermon(2023)Lou and Ermon]{lou2023reflected}
Aaron Lou and Stefano Ermon.
\newblock Reflected diffusion models.
\newblock In \emph{International Conference on Machine Learning}, 2023.

\bibitem[Lou et~al.(2024)Lou, Meng, and Ermon]{lou2023discrete}
Aaron Lou, Chenlin Meng, and Stefano Ermon.
\newblock Discrete diffusion language modeling by estimating the ratios of the data distribution.
\newblock In \emph{International Conference on Machine Learning}, 2024.

\bibitem[Luo et~al.(2023)Luo, Hu, Zhang, Sun, Li, and Zhang]{luo2023diff}
Weijian Luo, Tianyang Hu, Shifeng Zhang, Jiacheng Sun, Zhenguo Li, and Zhihua Zhang.
\newblock {Diff-instruct}: A universal approach for transferring knowledge from pre-trained diffusion models.
\newblock In \emph{Advances in Neural Information Processing Systems}, 2023.

\bibitem[Mahabadi et~al.(2024)Mahabadi, Ivison, Tae, Henderson, Beltagy, Peters, and Cohan]{mahabadi2024tess}
Rabeeh~Karimi Mahabadi, Hamish Ivison, Jaesung Tae, James Henderson, Iz~Beltagy, Matthew~E Peters, and Arman Cohan.
\newblock Tess: Text-to-text self-conditioned simplex diffusion.
\newblock In \emph{Proceedings of the 18th Conference of the European Chapter of the Association for Computational Linguistics (Volume 1: Long Papers)}, pp.\  2347--2361, 2024.

\bibitem[Marsaglia \& Tsang(2000)Marsaglia and Tsang]{marsaglia2000simple}
George Marsaglia and Wai~Wan Tsang.
\newblock A simple method for generating {G}amma variables.
\newblock \emph{ACM Transactions on Mathematical Software}, 26\penalty0 (3):\penalty0 363--372, 2000.

\bibitem[Ng et~al.(2011)Ng, Tian, and Tang]{ng2011dirichlet}
Kai~Wang Ng, Guo-Liang Tian, and Man-Lai Tang.
\newblock \emph{Dirichlet and Related Distributions: Theory, Methods and Applications}.
\newblock John Wiley \& Sons, 2011.

\bibitem[Noutahi et~al.(2024)Noutahi, Gabellini, Craig, Lim, and Tossou]{noutahi2024safe}
Emmanuel Noutahi, Cristian Gabellini, Michael Craig, Jonathan S.~C. Lim, and Prudencio Tossou.
\newblock Gotta be safe: a new framework for molecular design.
\newblock \emph{Digital Discovery}, 3\penalty0 (4):\penalty0 796--804, 04 2024.

\bibitem[Okhotin et~al.(2023)Okhotin, Molchanov, Arkhipkin, Bartosh, Ohanesian, Alanov, and Vetrov]{okhotin2023star}
Andrey Okhotin, Dmitry Molchanov, Vladimir Arkhipkin, Grigory Bartosh, Viktor Ohanesian, Aibek Alanov, and Dmitry~P Vetrov.
\newblock Star-shaped denoising diffusion probabilistic models.
\newblock In \emph{Advances in Neural Information Processing Systems}, 2023.

\bibitem[Ou et~al.(2025)Ou, Nie, Xue, Zhu, Sun, Li, and Li]{ou2024absorbingdiscretediffusionsecretly}
Jingyang Ou, Shen Nie, Kaiwen Xue, Fengqi Zhu, Jiacheng Sun, Zhenguo Li, and Chongxuan Li.
\newblock Your absorbing discrete diffusion secretly models the conditional distributions of clean data.
\newblock In \emph{International Conference on Learning Representations}, 2025.

\bibitem[Peebles \& Xie(2023)Peebles and Xie]{peebles2023scalablediffusionmodelstransformers}
William Peebles and Saining Xie.
\newblock Scalable diffusion models with transformers.
\newblock In \emph{IEEE/CVF International Conference on Computer Vision (ICCV)}, 2023.

\bibitem[Potaptchik et~al.(2026)Potaptchik, Yim, Saravanan, Holderrieth, Vanden-Eijnden, and Albergo]{potaptchik2026discreteflowmaps}
Peter Potaptchik, Jason Yim, Adhi Saravanan, Peter Holderrieth, Eric Vanden-Eijnden, and Michael~S. Albergo.
\newblock Discrete flow maps.
\newblock \emph{arXiv preprint arXiv:2604.09784}, 2026.

\bibitem[Pynadath et~al.(2026{\natexlab{a}})Pynadath, Shi, and Zhang]{pynadath2025candi}
Patrick Pynadath, Jiaxin Shi, and Ruqi Zhang.
\newblock Candi: Hybrid discrete-continuous diffusion models.
\newblock In \emph{International Conference on Machine Learning}, 2026{\natexlab{a}}.

\bibitem[Pynadath et~al.(2026{\natexlab{b}})Pynadath, Shi, and Zhang]{pynadath2026generative}
Patrick Pynadath, Jiaxin Shi, and Ruqi Zhang.
\newblock Generative frontiers: Why evaluation matters for diffusion language models.
\newblock \emph{arXiv preprint arXiv:2604.02718}, 2026{\natexlab{b}}.

\bibitem[Qin et~al.(2025)Qin, Madeira, Thanou, and Frossard]{qin2025defogdiscreteflowmatching}
Yiming Qin, Manuel Madeira, Dorina Thanou, and Pascal Frossard.
\newblock Defog: Discrete flow matching for graph generation.
\newblock In \emph{International Conference on Machine Learning}, 2025.

\bibitem[Radford et~al.(2019)Radford, Wu, Child, Luan, Amodei, and Sutskever]{Radford2019LanguageMA}
Alec Radford, Jeff Wu, Rewon Child, David Luan, Dario Amodei, and Ilya Sutskever.
\newblock Language models are unsupervised multitask learners.
\newblock 2019.
\newblock URL \url{https://api.semanticscholar.org/CorpusID:160025533}.

\bibitem[Ramesh et~al.(2022)Ramesh, Dhariwal, Nichol, Chu, and Chen]{ramesh2022hierarchical}
Aditya Ramesh, Prafulla Dhariwal, Alex Nichol, Casey Chu, and Mark Chen.
\newblock Hierarchical text-conditional image generation with clip latents.
\newblock \emph{arXiv preprint arXiv:2204.06125}, 2022.

\bibitem[Raya et~al.(2026)Raya, Nguyen, Batzolis, Takida, Stancevic, Murata, Lai, Mitsufuji, and Ambrogioni]{raya2026noiseschedulinginformationguidedallocation}
Gabriel Raya, Bac Nguyen, Georgios Batzolis, Yuhta Takida, Dejan Stancevic, Naoki Murata, Chieh-Hsin Lai, Yuki Mitsufuji, and Luca Ambrogioni.
\newblock Noise scheduling as information-guided allocation in diffusion training.
\newblock \emph{arXiv preprint arXiv:2602.18647}, 2026.

\bibitem[Richemond et~al.(2022)Richemond, Dieleman, and Doucet]{richemond2022categorical}
Pierre~H Richemond, Sander Dieleman, and Arnaud Doucet.
\newblock Categorical {SDE}s with simplex diffusion.
\newblock \emph{arXiv preprint arXiv:2210.14784}, 2022.

\bibitem[Rombach et~al.(2022)Rombach, Blattmann, Lorenz, Esser, and Ommer]{rombach2022highresolutionimagesynthesislatent}
Robin Rombach, Andreas Blattmann, Dominik Lorenz, Patrick Esser, and Bj{\"o}rn Ommer.
\newblock High-resolution image synthesis with latent diffusion models.
\newblock In \emph{Proceedings of the IEEE/CVF Conference on Computer Vision and Pattern Recognition}, 2022.

\bibitem[Roos et~al.(2026)Roos, Davis, Eijkelboom, Bronstein, Welling, Ceylan, Ambrogioni, and van~de Meent]{roos2026categoricalflowmaps}
Daan Roos, Oscar Davis, Floor Eijkelboom, Michael~M. Bronstein, Max Welling, {\.I}smail~{\.I}lkan Ceylan, Luca Ambrogioni, and Jan-Willem van~de Meent.
\newblock Categorical flow maps.
\newblock In \emph{International Conference on Machine Learning}, 2026.

\bibitem[Sahoo et~al.(2024)Sahoo, Arriola, Schiff, Gokaslan, Marroquin, Chiu, Rush, and Kuleshov]{sahoo2024simple}
Subham~S Sahoo, Marianne Arriola, Yair Schiff, Aaron Gokaslan, Edgar Marroquin, Justin~T Chiu, Alexander Rush, and Volodymyr Kuleshov.
\newblock Simple and effective masked diffusion language models.
\newblock In \emph{Advances in Neural Information Processing Systems}, 2024.

\bibitem[Sahoo et~al.(2025)Sahoo, Deschenaux, Gokaslan, Wang, Chiu, and Kuleshov]{sahoo2025diffusionduality}
Subham~Sekhar Sahoo, Justin Deschenaux, Aaron Gokaslan, Guanghan Wang, Justin Chiu, and Volodymyr Kuleshov.
\newblock The diffusion duality.
\newblock In \emph{International Conference on Machine Learning}, 2025.

\bibitem[Sahoo et~al.(2026)Sahoo, Lemercier, Yang, Deschenaux, Liu, Thickstun, and Jukic]{sahoo2026scalingmaskeddiffusionlanguage}
Subham~Sekhar Sahoo, Jean-Marie Lemercier, Zhihan Yang, Justin Deschenaux, Jingyu Liu, John Thickstun, and Ante Jukic.
\newblock Scaling beyond masked diffusion language models.
\newblock In \emph{International Conference on Machine Learning}, 2026.

\bibitem[Sakurai et~al.(2026)Sakurai, Pynadath, Hayakawa, Yoon, Yang, Chen, and Xu]{sakurai2026simplex}
Jinya Sakurai, Patrick Pynadath, Satoshi Hayakawa, Jaehong Yoon, Xulei Yang, Nancy~F Chen, and Xun Xu.
\newblock Simplex relaxation for discrete diffusion.
\newblock \emph{arXiv preprint arXiv:2608.10615}, 2026.

\bibitem[Salimans et~al.(2024)Salimans, Mensink, Heek, and Hoogeboom]{salimans2024multistep}
Tim Salimans, Thomas Mensink, Jonathan Heek, and Emiel Hoogeboom.
\newblock Multistep distillation of diffusion models via moment matching.
\newblock In \emph{Advances in Neural Information Processing Systems}, 2024.

\bibitem[Schiff et~al.(2025)Schiff, Sahoo, Phung, Wang, Boshar, Dalla-Torre, Almeida, Rush, Pierrot, and Kuleshov]{schiff2025simpleguidancemechanismsdiscrete}
Yair Schiff, Subham Sahoo, Hao Phung, Guanghan Wang, Sam Boshar, Hugo Dalla-Torre, Bernardo Almeida, Alexander Rush, Thomas Pierrot, and Volodymyr Kuleshov.
\newblock Simple guidance mechanisms for discrete diffusion models.
\newblock In \emph{International Conference on Learning Representations}, 2025.

\bibitem[Shabalin et~al.(2026)Shabalin, Elistratov, Meshchaninov, Sadrtdinov, and Vetrov]{shabalin2026gaussian}
Alexander Shabalin, Simon Elistratov, Viacheslav Meshchaninov, Ildus Sadrtdinov, and Dmitry Vetrov.
\newblock Why {G}aussian diffusion models fail on discrete data?
\newblock In \emph{Third Conference on Language Modeling}, 2026.

\bibitem[Shen et~al.(2026)Shen, Zhao, He, and Lin]{shen2026codarcontinuousdiffusionlanguage}
Junzhe Shen, Jieru Zhao, Ziwei He, and Zhouhan Lin.
\newblock {CoDAR}: Continuous diffusion language models are more powerful than you think.
\newblock \emph{arXiv preprint arXiv:2603.02547}, 2026.

\bibitem[Shi et~al.(2024)Shi, Han, Wang, Doucet, and Titsias]{shi2024simplified}
Jiaxin Shi, Kehang Han, Zhe Wang, Arnaud Doucet, and Michalis Titsias.
\newblock Simplified and generalized masked diffusion for discrete data.
\newblock In \emph{Advances in Neural Information Processing Systems}, 2024.

\bibitem[Sohl-Dickstein et~al.(2015)Sohl-Dickstein, Weiss, Maheswaranathan, and Ganguli]{sohl2015deep}
Jascha Sohl-Dickstein, Eric Weiss, Niru Maheswaranathan, and Surya Ganguli.
\newblock Deep unsupervised learning using nonequilibrium thermodynamics.
\newblock In \emph{International Conference on Machine Learning}, 2015.

\bibitem[Song et~al.(2021{\natexlab{a}})Song, Meng, and Ermon]{song2020denoising}
Jiaming Song, Chenlin Meng, and Stefano Ermon.
\newblock Denoising diffusion implicit models.
\newblock In \emph{International Conference on Learning Representations}, 2021{\natexlab{a}}.

\bibitem[Song et~al.(2021{\natexlab{b}})Song, Sohl-Dickstein, Kingma, Kumar, Ermon, and Poole]{song2020score}
Yang Song, Jascha Sohl-Dickstein, Diederik~P Kingma, Abhishek Kumar, Stefano Ermon, and Ben Poole.
\newblock Score-based generative modeling through stochastic differential equations.
\newblock In \emph{International Conference on Learning Representations}, 2021{\natexlab{b}}.

\bibitem[Song et~al.(2025)Song, Zhang, Pei, Gong, Yu, Zhang, Wang, Zhou, Liu, and Ma]{song2025shortlisting}
Yuxuan Song, Zhe Zhang, Yu~Pei, Jingjing Gong, Qiying Yu, Zheng Zhang, Mingxuan Wang, Hao Zhou, Jingjing Liu, and Wei-Ying Ma.
\newblock Shortlisting model: A streamlined simplex diffusion for discrete variable generation.
\newblock In \emph{Advances in Neural Information Processing Systems}, 2025.

\bibitem[Stark et~al.(2024)Stark, Jing, Wang, Corso, Berger, Barzilay, and Jaakkola]{stark2024dirichlet}
Hannes Stark, Bowen Jing, Chenyu Wang, Gabriele Corso, Bonnie Berger, Regina Barzilay, and Tommi Jaakkola.
\newblock Dirichlet flow matching with applications to {DNA} sequence design.
\newblock In \emph{International Conference on Machine Learning}, 2024.

\bibitem[Su et~al.(2024)Su, Ahmed, Lu, Pan, Bo, and Liu]{su2023roformerenhancedtransformerrotary}
Jianlin Su, Murtadha Ahmed, Yu~Lu, Shengfeng Pan, Wen Bo, and Yunfeng Liu.
\newblock {RoFormer}: Enhanced transformer with rotary position embedding.
\newblock \emph{Neurocomputing}, 568\penalty0 (C), 2024.

\bibitem[Sun et~al.(2023)Sun, Dai, Dai, Zhou, and Schuurmans]{sun2023discretelangevinsamplerwasserstein}
Haoran Sun, Hanjun Dai, Bo~Dai, Haomin Zhou, and Dale Schuurmans.
\newblock Discrete {L}angevin samplers via {W}asserstein gradient flow.
\newblock In \emph{International Conference on Artificial Intelligence and Statistics}, 2023.

\bibitem[Touvron et~al.(2023)Touvron, Lavril, Izacard, Martinet, Lachaux, Lacroix, Rozi{\`e}re, Goyal, Hambro, Azhar, et~al.]{touvron2023llama}
Hugo Touvron, Thibaut Lavril, Gautier Izacard, Xavier Martinet, Marie-Anne Lachaux, Timoth{\'e}e Lacroix, Baptiste Rozi{\`e}re, Naman Goyal, Eric Hambro, Faisal Azhar, et~al.
\newblock Llama: Open and efficient foundation language models.
\newblock \emph{arXiv preprint arXiv:2302.13971}, 2023.

\bibitem[Veličković et~al.(2026)Veličković, Barbero, Perivolaropoulos, Osindero, and Pascanu]{velickovic2026perplexitytellrightwrong}
Petar Veličković, Federico Barbero, Christos Perivolaropoulos, Simon Osindero, and Razvan Pascanu.
\newblock Perplexity cannot always tell right from wrong.
\newblock \emph{arXiv preprint arXiv:2601.22950}, 2026.

\bibitem[Vignac et~al.(2023)Vignac, Krawczuk, Siraudin, Wang, Cevher, and Frossard]{vignac2023digressdiscretedenoisingdiffusion}
Cl\'ement Vignac, Igor Krawczuk, Antoine Siraudin, Bohan Wang, Volkan Cevher, and Pascal Frossard.
\newblock {DiGress}: Discrete denoising diffusion for graph generation.
\newblock In \emph{International Conference on Learning Representations}, 2023.

\bibitem[von Rütte et~al.(2025)von Rütte, Fluri, Ding, Orvieto, Schölkopf, and Hofmann]{vonrutte2025generalizedinterpolatingdiscretediffusion}
Dimitri von Rütte, Janis Fluri, Yuhui Ding, Antonio Orvieto, Bernhard Schölkopf, and Thomas Hofmann.
\newblock Generalized interpolating discrete diffusion.
\newblock In \emph{International Conference on Machine Learning}, 2025.

\bibitem[von Rütte et~al.(2026)von Rütte, Fluri, Pooladzandi, Schölkopf, Hofmann, and Orvieto]{vonrutte2026scalingbehaviordiscretediffusion}
Dimitri von Rütte, Janis Fluri, Omead Pooladzandi, Bernhard Schölkopf, Thomas Hofmann, and Antonio Orvieto.
\newblock Scaling behavior of discrete diffusion language models.
\newblock In \emph{International Conference on Learning Representations}, 2026.

\bibitem[Wang et~al.(2025)Wang, Schiff, Sahoo, and Kuleshov]{wang2026remaskingdiscretediffusionmodels}
Guanghan Wang, Yair Schiff, Subham Sahoo, and Volodymyr Kuleshov.
\newblock Remasking discrete diffusion models with inference-time scaling.
\newblock In \emph{Advances in Neural Information Processing Systems}, 2025.

\bibitem[Wang et~al.(2026)Wang, Wang, Bai, Deng, Lin, and Song]{wang2026generalizeddiscretediffusionselfcorrection}
Linxuan Wang, Ziyi Wang, Yikun Bai, Wei Deng, Guang Lin, and Qifan Song.
\newblock Generalized discrete diffusion with self-correction.
\newblock In \emph{International Conference on Machine Learning}, 2026.

\bibitem[Watson et~al.(2023)Watson, Juergens, Bennett, Trippe, Yim, Eisenach, Ahern, Borst, Ragotte, Milles, et~al.]{watson2023novo}
Joseph~L Watson, David Juergens, Nathaniel~R Bennett, Brian~L Trippe, Jason Yim, Helen~E Eisenach, Woody Ahern, Andrew~J Borst, Robert~J Ragotte, Lukas~F Milles, et~al.
\newblock De novo design of protein structure and function with {RF}diffusion.
\newblock \emph{Nature}, 620\penalty0 (7976):\penalty0 1089--1100, 2023.

\bibitem[Williams et~al.(2026)Williams, Yeom-Song, Hartmann, and Klami]{williams2025simplex}
Bernardo Williams, Victor~M Yeom-Song, Marcelo Hartmann, and Arto Klami.
\newblock Simplex-to-{E}uclidean bijections for categorical flow matching.
\newblock In \emph{Artificial Intelligence and Statistics}, 2026.

\bibitem[Xu et~al.(2022)Xu, Liu, Yan, Cai, Li, and Li]{xu2206learning}
Jin Xu, Xiaojiang Liu, Jianhao Yan, Deng Cai, Huayang Li, and Jian Li.
\newblock Learning to break the loop: Analyzing and mitigating repetitions for neural text generation, 2022.
\newblock \emph{arXiv preprint arXiv:2206.02369}, 2022.

\bibitem[Yang et~al.(2024)Yang, Yang, Hui, Zheng, Yu, Zhou, Li, Li, Liu, Huang, Dong, Wei, Lin, Tang, Wang, Yang, Tu, Zhang, Ma, Yang, Xu, Zhou, Bai, He, Lin, Dang, Lu, Chen, Yang, Li, Xue, Ni, Zhang, Wang, Peng, Men, Gao, Lin, Wang, Bai, Tan, Zhu, Li, Liu, Ge, Deng, Zhou, Ren, Zhang, Wei, Ren, Liu, Fan, Yao, Zhang, Wan, Chu, Liu, Cui, Zhang, Guo, and Fan]{qwen2}
An~Yang, Baosong Yang, Binyuan Hui, Bo~Zheng, Bowen Yu, Chang Zhou, Chengpeng Li, Chengyuan Li, Dayiheng Liu, Fei Huang, Guanting Dong, Haoran Wei, Huan Lin, Jialong Tang, Jialin Wang, Jian Yang, Jianhong Tu, Jianwei Zhang, Jianxin Ma, Jianxin Yang, Jin Xu, Jingren Zhou, Jinze Bai, Jinzheng He, Junyang Lin, Kai Dang, Keming Lu, Keqin Chen, Kexin Yang, Mei Li, Mingfeng Xue, Na~Ni, Pei Zhang, Peng Wang, Ru~Peng, Rui Men, Ruize Gao, Runji Lin, Shijie Wang, Shuai Bai, Sinan Tan, Tianhang Zhu, Tianhao Li, Tianyu Liu, Wenbin Ge, Xiaodong Deng, Xiaohuan Zhou, Xingzhang Ren, Xinyu Zhang, Xipin Wei, Xuancheng Ren, Xuejing Liu, Yang Fan, Yang Yao, Yichang Zhang, Yu~Wan, Yunfei Chu, Yuqiong Liu, Zeyu Cui, Zhenru Zhang, Zhifang Guo, and Zhihao Fan.
\newblock Qwen2 technical report.
\newblock \emph{arXiv preprint arXiv:2407.10671}, 2024.

\bibitem[Yang et~al.(2026)Yang, Guo, Zhang, Sahoo, Chen, Vahdat, Mardani, and Thickstun]{yang2026continuousdiffusionscalescompetitively}
Zhihan Yang, Wei Guo, Shuibai Zhang, Subham~Sekhar Sahoo, Yongxin Chen, Arash Vahdat, Morteza Mardani, and John Thickstun.
\newblock Continuous diffusion scales competitively with discrete diffusion for language.
\newblock \emph{arXiv preprint arXiv:2605.18530}, 2026.

\bibitem[Yin et~al.(2024)Yin, Gharbi, Zhang, Shechtman, Durand, Freeman, and Park]{yin2024one}
Tianwei Yin, Micha{\"e}l Gharbi, Richard Zhang, Eli Shechtman, Fredo Durand, William~T Freeman, and Taesung Park.
\newblock One-step diffusion with distribution matching distillation.
\newblock In \emph{Proceedings of the IEEE/CVF Conference on Computer Vision and Pattern Recognition}, 2024.

\bibitem[Zellers et~al.(2019)Zellers, Holtzman, Bisk, Farhadi, and Choi]{zellers2019hellaswag}
Rowan Zellers, Ari Holtzman, Yonatan Bisk, Ali Farhadi, and Yejin Choi.
\newblock Hellaswag: Can a machine really finish your sentence?
\newblock In \emph{Proceedings of the 57th Annual Meeting of the Association for Computational Linguistics}, pp.\  4791--4800, 2019.

\bibitem[Zhang et~al.(2026)Zhang, von R{\"u}tte, Ding, and Hofmann]{zhang2026denoising}
Jinwei Zhang, Dimitri von R{\"u}tte, Yuhui Ding, and Thomas Hofmann.
\newblock Denoising is not the end: Discrete diffusion language models with self-correction.
\newblock In \emph{Workshop on Latent {\&} Implicit Thinking {\textendash} Going Beyond CoT Reasoning}, 2026.

\bibitem[Zhang \& Shasha(1989)Zhang and Shasha]{zhang1989simple}
Kaizhong Zhang and Dennis Shasha.
\newblock Simple fast algorithms for the editing distance between trees and related problems.
\newblock \emph{SIAM Journal on Computing}, 18\penalty0 (6):\penalty0 1245--1262, 1989.

\bibitem[Zhao et~al.(2025)Zhao, Shi, Chen, Druckmann, Mackey, and Linderman]{zhao2025informedcorrectorsdiscretediffusion}
Yixiu Zhao, Jiaxin Shi, Feng Chen, Shaul Druckmann, Lester Mackey, and Scott Linderman.
\newblock Informed correctors for discrete diffusion models.
\newblock In \emph{Advances in Neural Information Processing Systems}, 2025.

\end{thebibliography}
\bibliographystyle{iclr2027_conference}

\newpage
\appendix

\section*{Organization of the Supplementary}
\startcontents[appendices]
\printcontents[appendices]{l}{1}{\setcounter{tocdepth}{2}}

\newpage

\appendixpart[]{Algorithms and Implementation Details}

\section{Beta, Dirichlet and Gamma Distributions}\label{app:basics}
\subsection{Basic Definitions}
For sake of completeness, we recall elementary facts about the Beta, Dirichlet and Gamma distributions. 
The Beta distribution is a distribution on $[0,1]$. Given $a,b>0$, the associated Beta density $\Beta(x;a,b)$ is given for any $x \in [0,1]$ by 
\begin{equation}
    \Beta(x;a,b) = \frac{\Gamma(a+b)}{\Gamma(a) \Gamma(b)} x^{a-1}(1-x)^{b-1} ,
\end{equation}
where $\Gamma(\cdot)$ is the Gamma function. 
The Dirichlet distribution is a distribution on the simplex $\Delta_N$. Given $\beta = (\beta_1, \dots, \beta_N)$,
the associated Dirichlet density $\Dir(P;\beta)$  is given for any $P \in \Delta_N$ by
\begin{equation}
    \Dir(P;\beta) = \frac{\Gamma(\sum_{i=1}^{N}\beta_i)}{\prod_{i=1}^{N}\Gamma(\beta_i)} \prod_{i=1}^{N} p_i^{\beta_i - 1} ,
\end{equation}
 We have
\begin{equation}\label{eq:meanvarianceDirichlet}
\mathbb{E}[P]=\bar{P}=\left(\frac{\beta_1}{C},\dots,\frac{\beta_N}{C} \right),\quad \Cov[P]=\frac{\text{diag}(\bar{P})-\bar{P}\bar{P}^\top}{C+1},
\end{equation}
where $C=\sum_{i=1}^N \beta_i$ the concentration parameter of the Dirichlet distribution. 
The Gamma distribution is a distribution on $[0,\infty)$ defined for parameters $\alpha,\beta>0$ by
\begin{equation}
    \GammaDist(x;\alpha,\beta) = \frac{\beta^\alpha}{\Gamma(\alpha)}x^{\alpha-1}\exp(-\beta x).
\end{equation}
We will use extensively in practice the following fact. For $N$ independently distributed Gamma random variables $X_1 \sim \GammaDist(\alpha_1,1),\cdots,X_N \sim \GammaDist(\alpha_N,1)$, we have 
\begin{equation}
P=\left(\frac{X_1}{\sum_{i=1}^N X_i},\cdots,\frac{X_N}{\sum_{i=1}^N X_i} \right) \sim \Dir(\alpha_1,\cdots,\alpha_N).
\end{equation}

We will use the following convention throughout. Whenever a Gamma/Beta/Dirichlet parameter lies on the boundary (for instance, some Dirichlet coordinates are zero, or a formula is obtained as the limit $t\downarrow 0$), the corresponding distribution is understood in the natural degenerate/weak-limit sense. In particular, Dirichlet distributions with some zero coordinates are supported on the corresponding face of the simplex, Beta distributions with a zero parameter are degenerate at $0$ or $1$, and formulas involving $t=0$ are interpreted as limits of the same formulas for $t>0$.

\subsection{Some Useful Properties of Dirichlet Distributions}
We now recall two simple results for Dirichlet distributions. To keep the paper self-contained, we provide proofs below without any claim of originality.

\begin{propositionbeaut}{Sum of Dirichlet random variables}{sumdirichlet}
    Let $\alpha, \beta \in \rset^N$ with $\beta_i \geq \alpha_i \geq 0$ for any $i \in \{1, \dots, N\}$ and assume that $X \sim \Dir(\alpha)$.
    Denote $\gamma = \beta - \alpha$ and let $W \sim \Beta(\sum_{i=1}^{N} \alpha_i, \sum_{i=1}^{N} \gamma_i)$ independent of $X$. Let $V \sim \Dir(\gamma)$, independent of $X$ and $W$. Finally, let $Y$ be given by
    \begin{equation}
        Y = W X + (1 - W) V .
    \end{equation}
    Then, we have that $Y \sim \Dir(\beta)$. 
\end{propositionbeaut}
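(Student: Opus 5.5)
We will prove the claim through the Gamma representation of Dirichlet vectors recalled just above. Write $A = \sum_i \alpha_i$ and $G = \sum_i \gamma_i$. Take independent Gamma variables $X'_i \sim \GammaDist(\alpha_i,1)$ and $V'_i \sim \GammaDist(\gamma_i,1)$ for $i=1,\dots,N$, with the degenerate convention that a Gamma variable with shape $0$ is identically $0$. Set $S_X=\sum_i X'_i$ and $S_V=\sum_i V'_i$, and define
\begin{equation}
    \tilde X = X'/S_X, \qquad \tilde V = V'/S_V, \qquad \tilde W = S_X/(S_X+S_V).
\end{equation}
Then
\begin{equation}
    \tilde W \tilde X + (1-\tilde W)\tilde V = \frac{X'+V'}{S_X+S_V}.
\end{equation}
By the additivity of independent Gamma variables with the same rate, $X'_i+V'_i \sim \GammaDist(\beta_i,1)$, and these are independent across $i$. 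The Gamma-normalisation fact therefore gives $(X'+V')/(S_X+S_V) \sim \Dir(\beta)$.

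It remains to show that $(\tilde W,\tilde X,\tilde V)$ has the same joint law as $(W,X,V)$. Once this holds, $Y$ and $\tilde W\tilde X+(1-\tilde W)\tilde V$ are the same measurable function of identically distributed triples, so $Y\sim\Dir(\beta)$.

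For the equality in law we use the Lukacs-type independence property of Gamma vectors. If $Z_1,\dots,Z_k$ are independent with $Z_i \sim \GammaDist(a_i,1)$, then the normalised vector $Z/\sum_i Z_i$ is independent of the sum $\sum_i Z_i$, and the sum is distributed as $\GammaDist(\sum_i a_i,1)$. Applied to $X'$, this makes $\tilde X \sim \Dir(\alpha)$ independent of $S_X \sim \GammaDist(A,1)$. Applied to $V'$, it makes $\tilde V \sim \Dir(\gamma)$ independent of $S_V\sim\GammaDist(G,1)$. The two blocks $X'$ and $V'$ are independent, so $\tilde X$, $\tilde V$ and the pair $(S_X,S_V)$ are mutually independent. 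Since $\tilde W$ is a function of $(S_X,S_V)$ only, and the two-component case of the Gamma-normalisation fact gives $\tilde W \sim \Beta(A,G)$, the triple $(\tilde W,\tilde X,\tilde V)$ consists of independent components with the prescribed marginals.

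\textbf{Main obstacle.} The step needing care is the independence of the normalised vector from its sum. We would prove it directly by the change of variables $(z_1,\dots,z_k)\mapsto(s,p_1,\dots,p_{k-1})$ with $s=\sum_i z_i$ and $p_i=z_i/s$. Its Jacobian is $s^{k-1}$, so the joint density becomes
\begin{equation}
    \prod_i \frac{(sp_i)^{a_i-1}e^{-sp_i}}{\Gamma(a_i)}\, s^{k-1} \;\propto\; s^{\sum_i a_i-1}e^{-s}\prod_i p_i^{a_i-1},
\end{equation}
which factorises into a function of $s$ times a function of $p$.

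Boundary cases need a separate word. If some $\alpha_i=0$ or $\gamma_i=0$, we restrict to the coordinates with positive shape, where the argument is unchanged. If $A=0$ or $G=0$, then $W$ is degenerate at $0$ or $1$ under the stated convention, and the claim holds trivially.
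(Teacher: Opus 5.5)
Your proposal is correct and follows the same route as the paper. You represent $X$ and $V$ through independent Gamma vectors, take $W$ as the ratio of the Gamma sums, and use the proportion--sum (Lukacs) independence to match the law of $(W,X,V)$. You then conclude from $WX+(1-W)V=(X'+V')/(S_X+S_V)$ and Gamma additivity. Your write-up is somewhat more careful than the paper's in three respects: you state the coupling explicitly as an equality of joint laws, you prove the sum--proportion independence by change of variables rather than citing it, and you handle the degenerate cases.
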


\begin{proof}
    Using the $\GammaDist$ characterization of Dirichlet random variables, there exist independent random variables $(Z_i)_{i=1}^{N}$ such that $Z_i \sim \GammaDist(\alpha_i, 1)$ and $X = \left(Z_i / \sum_{j=1}^{N} Z_j\right)_{i=1}^{N}$. Similarly, there exist independent random variables $(Z_i')_{i=1}^{N}$ such that $Z_i' \sim \GammaDist(\gamma_i, 1)$ and $V = \left(Z_i' / \sum_{j=1}^{N} Z_j'\right)_{i=1}^{N}$. We denote $\bar{Z} = \sum_{i=1}^{N} Z_i$, $\bar{Z}' = \sum_{i=1}^{N} Z_i'$, and $W = \bar{Z} / (\bar{Z} + \bar{Z}')$.
    Note that $\bar{Z}$ and $\bar{Z}'$ are independent of each other and independent of $V$ and $X$. The random variable $W$ is independent of $X$ and $V$ by Lukacs's Proportion-Sum Independence Theorem, see \cite{ng2011dirichlet}. In addition,  $\bar{Z} \sim \GammaDist(\sum_{i=1}^{N}\alpha_i,1)$ and $\bar{Z}' \sim \GammaDist(\sum_{i=1}^{N}\gamma_i,1)$ using the summation property of independent $\Gamma$ distributions. Now using the characterization of $\Beta$ distributions with $\GammaDist$ distributions, we get that $W \sim \Beta(\sum_{i=1}^{N} \alpha_i, \sum_{i=1}^{N} \gamma_i)$.  Finally, we have that
    \begin{equation}
        Y = WX + (1-W) V = \left( \frac{Z_i + Z_i'}{\sum_{j=1}^{N} (Z_j + Z_j')} \right)_{i=1}^N .
    \end{equation}
    We conclude the proof using the summation property of independent $\Gamma$ random variables and the $\Gamma$ characterization of Dirichlet random variables.
\end{proof}

\begin{propositionbeaut}{Dirichlet thinning}{thinningdirichlet}
    Let $X \sim \Dir(\alpha)$ with $\alpha \in \rset^N$ with $\alpha_i \geq 0$ for any $i \in \{1, \dots, N\}$.
    The construction has to be understood on the supporting face $I=\{i:\alpha_i>0\}$; coordinates outside $I$ are fixed at zero and omitted from the normalization. For $\rho \in (0,1]$ and $i \in I$, let $B_i \sim \Beta(\rho \alpha_i, (1-\rho) \alpha_i)$ be mutually independent random variables, also independent of $X$, and define $Y$ by
    \begin{equation}
      Y_i = \frac{B_i X_i}{\sum_{j\in I} B_j X_j}\quad\text{for }i\in I,\qquad Y_i=0\quad\text{for }i\notin I.
    \end{equation}
    Then $Y \sim \Dir(\rho \alpha)$.
\end{propositionbeaut}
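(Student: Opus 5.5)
We will use the Gamma representation of Dirichlet variables, as in the proof of \Cref{propbeaut:sumdirichlet}. Restrict throughout to the face $I=\{i:\alpha_i>0\}$; coordinates outside $I$ stay at zero and play no role. Write $X_i = Z_i/\bar Z$ with $Z_i\sim\GammaDist(\alpha_i,1)$ independent for $i\in I$ and $\bar Z=\sum_{j\in I}Z_j$. The normalization $\bar Z$ cancels in the ratio defining $Y$, so
\begin{equation}
Y_i=\frac{B_iZ_i}{\sum_{j\in I}B_jZ_j},\qquad i\in I .
\end{equation}
Thus we never need the joint law of $(X,\bar Z)$. We only need $(B_i,Z_i)_{i\in I}$ to be jointly independent with the stated marginals. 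This holds because $X$ is a function of $(Z_i)_{i \in I}$ and we may construct the $Z_i$ on a product space with the $B_i$.

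The key step is the classical Beta--Gamma algebra: if $B\sim\Beta(a,b)$ and $Z\sim\GammaDist(a+b,1)$ are independent with $a>0$, $b\ge0$, then $BZ\sim\GammaDist(a,1)$. Apply it with $a=\rho\alpha_i$ and $b=(1-\rho)\alpha_i$, so that $a+b=\alpha_i$. This gives $G_i:=B_iZ_i\sim\GammaDist(\rho\alpha_i,1)$, and the $G_i$ are independent across $i$ since the pairs $(B_i,Z_i)$ are. The Gamma characterization of the Dirichlet then yields $Y=(G_i/\sum_{j}G_j)_{i\in I}\sim\Dir(\rho\alpha)$.

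To justify the Beta--Gamma identity, take independent $G\sim\GammaDist(a,1)$ and $G'\sim\GammaDist(b,1)$. Set $Z=G+G'$ and $B=G/(G+G')$. By Lukacs's theorem, already invoked in \Cref{propbeaut:sumdirichlet}, $B$ is independent of $Z$, with $B\sim\Beta(a,b)$ and $Z\sim\GammaDist(a+b,1)$. Hence the law of $(B,Z)$ is the product law, and $BZ=G\sim\GammaDist(a,1)$. One could alternatively compute the Mellin transform $\Ebb[(BZ)^s]=\frac{\Gamma(a+s)\Gamma(a+b)}{\Gamma(a)\Gamma(a+b)}\cdot\frac{\Gamma(a+b+s)}{\Gamma(a+b+s)}$, but the coupling argument is cleaner.

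The main obstacle is only the degenerate edge cases. When $\rho=1$, $B_i\sim\Beta(\alpha_i,0)$ is the point mass at $1$ by the paper's convention, so $Y=X$ trivially. The restriction to $I$ handles $\alpha_i=0$, so every Beta parameter $\rho\alpha_i$ used is strictly positive and the Gamma identity applies with $b\ge0$. Finally, the denominator $\sum_{j\in I}G_j$ is almost surely positive, so $Y$ is well defined.
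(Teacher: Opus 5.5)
Your proof is correct and is essentially the paper's argument: the Gamma representation of $X$, cancellation of the normalizing sum, the Beta--Gamma identity giving independent $B_iZ_i\sim\GammaDist(\rho\alpha_i,1)$, and the Gamma characterization of the Dirichlet; your coupling argument just spells out the step the paper attributes to Lukacs. One cosmetic slip is in your optional Mellin aside, where the factors are mis-grouped: $\Ebb[B^s]=\frac{\Gamma(a+s)\Gamma(a+b)}{\Gamma(a)\Gamma(a+b+s)}$ and $\Ebb[Z^s]=\frac{\Gamma(a+b+s)}{\Gamma(a+b)}$, although the product $\Gamma(a+s)/\Gamma(a)$ you obtain is right.
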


\begin{proof}
    By restricting to the supporting face and relabelling its coordinates, it is enough to consider $\alpha_i>0$ for every $i\in\{1,\dots,N\}$. We sample mutually independent variables $Z_i\sim\Gamma(\alpha_i,1)$ and $B_i\sim\Beta(\rho\alpha_i,(1-\rho)\alpha_i)$, and define $X=(Z_i/\sum_{j=1}^{N}Z_j)_{i=1}^{N}$. This defines the joint distribution of $(X,(B_i)_{i=1}^N)$.
    For any $i \in \{1, \dots, N\}$, let $Z_i' = Z_i B_i$ and we have that 
    \begin{equation}
        Y = \left(\frac{B_i X_i}{\sum_{j=1}^{N} B_j X_j} \right)_{i=1}^{N} = \left( \frac{B_i Z_i}{\sum_{j=1}^{N} B_j Z_j} \right)_{i=1}^{N} = \left(\frac{Z_i'}{\sum_{j=1}^{N} Z_j'}\right)_{i=1}^{N} .
    \end{equation}
    We now use the fact that for independent random variables $A \sim \Beta(b, a-b)$ and $B \sim \GammaDist(a, 1)$ then $AB \sim \GammaDist(b, 1)$, using Lukacs's Proportion-Sum Independence Theorem, see \cite{ng2011dirichlet}. So for any $i \in \{1, \dots, N\}$, with $A = B_i$, $B = Z_i$, $a=\alpha_i$ and $b=\rho \alpha_i$.
    It follows that that $(Z_i')_{i=1}^{N}$ is a collection of independent random variables such that $Z_i' \sim \GammaDist(\rho \alpha_i,1)$. Therefore, using once again the $\GammaDist$ representation of Dirichlet random variables, $Y \sim \Dir(\rho \alpha)$.  
\end{proof}

\section{Training and Inference Algorithms}

\subsection{Training}\label{app:training}
The training algorithm is detailed in \Cref{alg:training}.

\begin{algorithm}
\caption{Training of Simplex Diffusion Model}
\label{alg:training}
\begin{algorithmic}[1]
\Require Data distribution $p_\text{data}$, Reference distribution $\pi \in \Delta_N$ (e.g., Uniform), Schedule $(\alpha_t)_{t\in[0,1]}$, Concentration $(c_t)_{t\in(0,1]}$, time-sampling density $w(t)$, and Model $\hat{P}_\theta: \Delta_N \times [0,1] \to \Delta_N$

\While{not converged}
    \State \textbf{1. Sample Data and Time}
    \State Sample $x_0 \sim p_\text{data}$ \Comment{$x_0 \in \{1, \dots, N\}$}
    \State Sample time $t\sim w(t)$ \Comment{Uniform by default}

    \State \textbf{2. Compute Corruption Parameters}
    \State $P_0 \gets e_{x_0}$ \Comment{$P_0 \in \{0,1\}^N$; i.e., $P_0=\text{OneHot}(x_0)$}
    \State $\beta_t \gets c_t\left(\alpha_t P_0+(1-\alpha_t)\pi\right)$ \Comment{Dirichlet parameters $\beta_t \in \rset_+^N$}
    
    \State \textbf{3. Sample Noisy Simplex State $P_t$}
    \State Sample $P_t \sim \Dir(\beta_t)$
    \State \textbf{4. Predict and Optimize}
    \State $\hat{P}_0 \gets \hat{P}_\theta(t,P_t)$ \Comment{Model prediction (Softmax output)}
    \State $\mathcal{L} \gets \text{CrossEntropy}(x_0, \hat{P}_0) = - \log (\hat{P}_{0,x_0})$
    
    \State $\theta \gets \theta - \eta \nabla_\theta \mathcal{L}$ \Comment{Gradient descent step}
\EndWhile
\end{algorithmic}
\end{algorithm}

\subsection{Inference}\label{app:inference}
The inference algorithm is described in \Cref{alg:inference}. For readability, this is presented in the case $0<\rho_{s,t}^\kappa<1$ with non-degenerate Beta and Dirichlet parameters (boundary cases are understood by continuity). 

\begin{algorithm}
\caption{Inference for Simplex Diffusion Model}
\label{alg:inference}
\begin{algorithmic}[1]

\Require Trained Model $\hat{P}_\theta$, Prior $\pi$,
Schedule $(\alpha_t)_{t\in[0,1]}$, Concentration $(c_t)_{t\in [0,1]}$, Churn $\kappa \in [0,1]$, Number of steps $M$, Time sequence $0=t_0<\cdots<t_M=1$.

\State \textbf{1. Initialize}
\State Sample $P_1\sim\Dir(c_1 \pi)$
\Comment{Sample from scaled prior}

\State \textbf{2. Reverse Generative Loop}

\For{$k=M,M-1,\dots,2$}

    \State $t\gets t_k,\qquad s\gets t_{k-1}$

    \State \textbf{a. Sample a Clean Vertex}
    \State Sample $\widetilde x_0\sim\Cat(\hat P_\theta(t,P_{t_k}))$
    \State $P_0\gets e_{\widetilde x_0}$

    \State \textbf{b. Compute
    $p_{s|0,t}(P_s\mid P_0,P_t)$ Parameters}

     \State
    $r_{s,t} \gets \min\left\{1,\dfrac{c_s(1-\alpha_s)}{c_t(1-\alpha_t)}\right\}$

    \State
    $\rho \gets (1-\kappa)r_{s,t}$

    \State
    $a_W \gets \rho c_t$

    \State
    $b_W \gets c_s-\rho c_t$

    \State
    $\beta_V \gets  \beta_s(P_0,\pi) - \rho\beta_t(P_0,\pi)$

    \State \textbf{c. Sample Variables}

    \State Sample
    $W\sim\Beta(a_W,b_W)$
    and
    $V\sim\Dir(\beta_V)$

    \State \textbf{d. Compute $P_{t_k}^{\kappa}$}

    \For{$i\in\{1,\dots,N\}$}
        \State Sample
        $B_i\sim
        \Beta\!\left(
            \rho\beta_t(P_0,\pi)_i,
            (1-\rho)\beta_t(P_0,\pi)_i
        \right)$
    \EndFor

    \State
    $P_{t_k}^{\kappa}
    \gets
    \dfrac{B\odot P_{t_k}}
    {\sum_{i=1}^N B_iP_{t_k,i}}$
    \Comment{Element-wise mult. \& normalize}

    \State \textbf{e. Update State}

    \State
    $P_{t_{k-1}}
    \gets
    WP_{t_k}^{\kappa}
    +(1-W)V$

\EndFor

\State \textbf{3. Endpoint}
\State Sample $x_{\mathrm{data}}\sim\Cat(\hat P_\theta(t_1,P_{t_1}))$
\State $P_{t_0}\gets e_{x_{\mathrm{data}}}$
\State \Return $x_{\mathrm{data}}$

\end{algorithmic}
\end{algorithm}

\section{Several Extensions}

\subsection{Extension to Multiple Tokens}
\label{sec:multiple_tokens}
So far, we have presented Simplex Diffusion Models for a single token taking values in \(\mcx=\{1,\dots,N\}\). We now discuss how to extend the
construction to sequences \(x_0=(x_0^1,\dots,x_0^L)\in \mcx^L\). To obtain a scalable solution, we consider a product of simplices and represent the state at time
\(t\) by
\begin{equation}
P_t = (P_t^1,\dots,P_t^L)\in (\Delta_N)^L,
\qquad
P_0^\ell = e_{x_0^\ell}, \quad \ell\in\{1,\dots,L\}.
\end{equation}
We then define the forward process independently across positions:
\begin{equation}
p_{t\mid 0}(P_t\mid P_0)
=
\prod_{\ell=1}^L
\Dir \bigl(P_t^\ell;\beta_t(P_0^\ell,\pi)\bigr),
\qquad
\beta_t(P_0^\ell,\pi)
=
c_t\bigl(\alpha_tP_0^\ell+(1-\alpha_t)\pi\bigr).
\end{equation}
The denoiser is now a joint sequence model
\begin{equation}
\hat P_\theta : (\Delta_N)^L\times [0,1]\to (\Delta_N)^L,
\qquad
\hat P_\theta(t,P_t)
=
(\hat P_0^1,\dots,\hat P_0^L),
\end{equation}
where each \(\hat P_0^\ell\in\Delta_N\) is the predicted clean distribution at
position \(\ell\). Importantly, although the forward and reverse kernels
factorize across positions, each
\(\hat P_0^\ell\) can depend on the full noisy sequence \(P_t\).

Training is performed with the sequence-level extension of the
cross-entropy objective. At inference time, \Cref{propbeaut:simplicialtransition} is applied independently at each position
after sampling a clean token from each predicted categorical distribution.
More precisely, for any $0<s<t\le 1$, we first draw, conditionally
independently across positions,
\begin{equation}
    \widetilde x_0^\ell\sim\Cat(\hat P_0^\ell),
    \qquad
    \widetilde P_0^\ell=e_{\widetilde x_0^\ell},
    \qquad \ell\in\{1,\dots,L\}.
\end{equation}
We then sample independently across positions
\begin{equation}
W^\kappa_{s,t,\ell}
\sim
\Beta\left(
\rho_{s,t}^\kappa c_t,\,
c_s-\rho_{s,t}^\kappa c_t
\right),
\qquad
V^\kappa_{s,t,\ell}
\sim
\Dir\left(
\beta_s(\widetilde P_0^\ell,\pi)
-
\rho_{s,t}^\kappa\beta_t(\widetilde P_0^\ell,\pi)
\right),
\end{equation}
where
\begin{equation}
\rho_{s,t}^\kappa
=
(1-\kappa)r_{s,t},
\qquad
r_{s,t}
=
\min\left\{
1,
\frac{c_s(1-\alpha_s)}
     {c_t(1-\alpha_t)}
\right\}.
\end{equation}
For each \(i\in\{1,\dots,N\}\), we also sample
\begin{equation}
B_i^\ell
\sim
\Beta\left(
\rho_{s,t}^\kappa\,
\beta_t(\widetilde P_0^\ell,\pi)_i,
(1-\rho_{s,t}^\kappa)\,
\beta_t(\widetilde P_0^\ell,\pi)_i
\right).
\end{equation}
We then define
\begin{equation}
(P_{s,t}^\ell)_i^\kappa
=
\frac{B_i^\ell P_{t,i}^\ell}
{\sum_{j=1}^N B_j^\ell P_{t,j}^\ell},
\qquad
P_s^\ell
=
W^\kappa_{s,t,\ell}(P_{s,t}^\ell)^\kappa
+
(1-W^\kappa_{s,t,\ell})V^\kappa_{s,t,\ell}.
\end{equation}
When \(\rho_{s,t}^\kappa=1\), we set
\((P_{s,t}^\ell)^\kappa=P_t^\ell\) directly and omit the
\(B_i^\ell\) variables.

This yields the factorized mixture reverse transition
\begin{equation}
p_{s\mid t}^\theta(P_s\mid P_t)
=
\prod_{\ell=1}^L
\left\{
\sum_{j=1}^N
\hat P_{0,j}^\ell
p_{s\mid 0,t}(P_s^\ell\mid P^\ell_0=e_j,P_t^\ell)
\right\}.
\end{equation}
After the final positive-time step, we compute
\((\hat P_0^1,\dots,\hat P_0^L)=\hat P_\theta(t_1,P_{t_1})\)
and sample
\(x_{\mathrm{data}}^\ell\sim\Categorical(\hat P_0^\ell)\)
independently across positions conditional on the prediction.

This factorized extension preserves the simplicity of the single-token
construction while allowing the neural predictor \(\hat P_\theta\) to exploit
full sequence context.

\subsection{Classifier-Free Guidance}
\label{sec:cfg}

We now describe how to incorporate classifier-free guidance into the generative
reverse process for the single-token case. The extension to multiple tokens is straightforward. Let \(c\in\mathcal C\) denote an external condition (e.g.,
class label, text prompt, side information). We replace the predictor
\(\hat P_\theta(t,P_t)\) by a conditional predictor $\hat P_\theta(t,c,P_t)$.

\paragraph{Training.}
During training, we sample a dropped condition \(\tilde c\) according to
\[
\tilde c
=
\begin{cases}
c, & \text{with probability } 1-p_{\mathrm{drop}},\\
\varnothing, & \text{with probability } p_{\mathrm{drop}},
\end{cases}
\]
where \(\varnothing\) denotes the null condition. We then optimize the same
objective as before, replacing \(c\) by \(\tilde c\). This gives
\[
L_{\mathrm{CFG}}(\theta)
=
\mathbb E_{t,x_0,P_t,\tilde c}
\left[
-
\log \hat P_{\theta}(t,\tilde c,P_t)_{x_0}
\right].
\]

\paragraph{Guided Plug-In Prediction.}
At inference time, given \(P_t\) and a target condition \(c\), we compute the
unconditional and conditional predictions
\[
\hat P_0^{\mathrm{u}}
=
\hat P_\theta(t,\varnothing,P_t),
\qquad
\hat P_0^{\mathrm{c}}
=
\hat P_\theta(t,c,P_t).
\]
For a guidance scale \(w\ge 0\), we define the guided predictor
\(\tilde P_0\in\Delta_N\) by 
\[
\tilde P_{0,i}
=
\frac{
(\hat P_{0,i}^{\mathrm{u}})^{1-w}
(\hat P_{0,i}^{\mathrm{c}})^w
}{
\sum_{j=1}^N
(\hat P_{0,j}^{\mathrm{u}})^{1-w}
(\hat P_{0,j}^{\mathrm{c}})^w
},
\qquad i\in\{1,\dots,N\}.
\]
The cases \(w=0\), \(w=1\), and
\(w>1\) correspond respectively to unconditional sampling, conditional
sampling, and classifier-free guidance.

\paragraph{Guided Reverse Transition.}
Classifier-free guidance is implemented by replacing the sample $P_0$ from 
\(\hat P_0\) in \Cref{propbeaut:simplicialtransition} and \Cref{alg:inference} by one from its guided version \(\tilde P_0\).

\section{Fast Sampling of Dirichlet Random Variables}\label{app:samplingdirichlet}

In order to sample from Dirichlet distribution efficiently one leverage the representation of Dirichlet distributions in terms of Gamma distributions. More precisely, for $Y \sim \Dir(\alpha)$ with $\alpha=(\alpha_1, \dots, \alpha_N)$ can be efficiently obtained by defining 
\begin{equation}
Y = \left( \frac{X_1}{\sum_{j=1}^N X_j}, \dots,  \frac{X_N}{\sum_{j=1}^N X_j}\right) ,
\end{equation}
with $X_i \sim \GammaDist(\alpha_i, 1)$. Hence in order to sample efficiently from a Dirichlet distribution one must efficiently sample from a $\GammaDist$ distribution. In order to do so, we leverage the Marsaglia--Tsang algorithm, \cite{marsaglia2000simple}; see also \Cref{alg:gamma_sampling}.
However, the current implementation of the Gamma sampler in \Jax \ is either slow or approximate \cite{jax2018github}, see \url{https://github.com/jax-ml/jax/issues/38141}. 

In order to generate a $\GammaDist$ random variable of shape $(d_1, \dots, d_n)$, the current official implementation splits an original key $d = \prod_{i=1}^n d_i$ times. Then it proceeds in using $\texttt{vmap}$ on those $d$ dimensions. The body of this parallelised function also contains key splitting. This incurs large memory overhead which makes the whole function slow. Instead, we propose an exact implementation which does not rely on $\texttt{vmap}$ and therefore do not require creating many keys which significantly reduce the memory overhead. We reproduce a minimal example of this overhead as well as our solution below and provide some performance benchmark on CPU hardware, see \Cref{fig:prng_patterns}. All benchmarks were conducted on an AMD EPYC 7B13 processor (32 physical cores, 2.45GHz base clock, 128MB L3 cache) with 117 GB of DDR4 RAM using \Jax{} and the XLA CPU backend in single precision (\texttt{float32}).

Finally, we highlight that a recent approach  by \cite{greaves2026extended} show that $\GammaDist$ distributions can be generated as extended one-liner, thereby resolving a conjecture of \citet{devroye1996random}. In practice, the approach leverages a Generalized Acceptance-Complement procedure with a Gaussian proposal and a triangular symmetric complement. In particular, the proposed sampler only requires three draws of independent random variables. As a consequence, this new sampler i) decreases drastically the number of calls to pseudo random number generators ii) bypasses the need of a while loop to sample from the distribution. Early benchmarks of this approach forecast a sampling speed-up of 1.5x over our fast implementation of the Marsaglia--Tsang algorithm.

\begin{algorithm}
\caption{Marsaglia \& Tsang Gamma Sampler}
\label{alg:gamma_sampling}
\begin{algorithmic}[1]
\Require PRNG Key, Shape parameter $\alpha > 0$
\Ensure Sample $x \sim \text{Gamma}(\alpha, 1)$

\State \textbf{1. Boosting for Small Alpha}
\If{$\alpha < 1$}
    \State $\alpha' \gets \alpha + 1$ \Comment{Boost variance for stability}
    \State $\text{is\_small} \gets \text{True}$
\Else
    \State $\alpha' \gets \alpha$
    \State $\text{is\_small} \gets \text{False}$
\EndIf

\State \textbf{2. Compute Constants}
\State $d \gets \alpha' - \frac{1}{3}$
\State $c \gets \frac{1}{\sqrt{9d}}$

\State \textbf{3. Rejection Sampling Loop}
\While{sample not accepted}
    \State \textbf{a. Generate Candidates}
    \State Sample $z \sim \mathcal{N}(0, 1)$ \Comment{Standard Normal}
    \State Sample $u \sim \mathcal{U}(0, 1)$ \Comment{Uniform}
    
    \State $v \gets 1 + c \cdot z$
    \If{$v \le 0$} \State \textbf{continue} \Comment{Reject negative support}
    \EndIf
    
    \State $v \gets v^3$
    \State $x_{\text{prop}} \gets d \cdot v$ \Comment{Proposed sample}
    
    \State \textbf{b. Acceptance Checks}
    \State \Comment{Squeeze Test (Fast Check)}
    \If{$u < 1 - 0.0331 \cdot z^4$}
        \State \textbf{break}
    \EndIf
    
    \State \Comment{Log-Likelihood Test (Exact Check)}
    \If{$\log(u) < 0.5 z^2 + d(1 - v + \log(v))$}
        \State \textbf{break}
    \EndIf
\EndWhile

\State \textbf{4. Final Correction}
\If{$\text{is\_small}$ is True}
    \State Sample $u_{\text{boost}} \sim \mathcal{U}(0, 1)$
    \State $x \gets x_{\text{prop}} \cdot u_{\text{boost}}^{1/\alpha}$ \Comment{Apply boosting correction}
\Else
    \State $x \gets x_{\text{prop}}$
\EndIf

\State \Return $x$
\end{algorithmic}
\end{algorithm}

\begin{figure*}[t]
\centering
\begin{minipage}[t]{0.48\textwidth}
\textbf{(a) JAX Native Pattern (\texttt{vmap} over scalar PRNG)}
\begin{lstlisting}[style=pythonstyle]
@partial(jax.jit, static_argnames=("N", "d"))
def jax_native_pattern(key, N: int, d: int):
  keys = jax.random.split(key, N)
  def _body(k):
    k1, k2 = jax.random.split(k, 2)
    z = jax.random.normal(k1, (d,))
    u = jax.random.uniform(k2, (d,))
    return z, u
  return jax.vmap(_body)(keys)
\end{lstlisting}
\end{minipage}
\hfill
\begin{minipage}[t]{0.48\textwidth}
\textbf{(b) Vectorized Pattern (Counter-based PRNG)}
\begin{lstlisting}[style=pythonstyle]
@partial(jax.jit, static_argnames=("N", "d"))
def vectorized_pattern(key, N: int, d: int):
  k1, k2 = jax.random.split(key, 2)
  z = jax.random.normal(k1, (N, d))
  u = jax.random.uniform(k2, (N, d))
  return z, u
\end{lstlisting}
\end{minipage}
\caption{Comparison of PRNG generation patterns in \textsc{Jax}. Pattern (a) derives per-element keys inside \texttt{vmap}, creating $120\,\text{MB}$ of DRAM traffic across Threefry rounds. Pattern (b) splits a single scalar key in CPU registers and generates $N \times d$ values via counter indexing, achieving a $3.8\times$ speedup with $N=10^6$ and $d=1$.}
\label{fig:prng_patterns}
\end{figure*}

\newpage

\appendixpart[]{Theory and Generalizations}

\section{Proofs and Additional Results}\label{app:proofs}

\subsection{Proof of \Cref{propbeaut:induced-forward}}
We have that for any $t \in [0,1]$
\begin{align}
    p_{t|0}(x_t|P_0) &= \int_{\Delta_N} p_{t|0}(x_t|P_t, P_0) p_{t|0}(P_t|P_0) \rmd P_t \\
    &= \int_{\Delta_N} P_{t,x_t} p_{t|0}(P_t|P_0) \rmd P_t = \mathbb{E}_{P_t|P_0}[P_{t,x_t}]. 
\end{align}
From \eqref{eq:meanvarianceDirichlet}, the mean of the Dirichlet distribution is given for any $x_t \in \mcx$ by
\begin{align}
    \mathbb{E}_{P_t|P_0}[P_{t,x_t}] = \alpha_t \updelta_{x_0}(x_t) + (1-\alpha_t) \pi_{x_t},
\end{align}
which concludes the proof.

\subsection{Proof of \Cref{propbeaut:interpolatingpath}}
Fix $t\in(0,1)$. Recall that $P_0=e_{x_0}$, which, under our boundary convention, may be viewed as the degenerate Dirichlet random
variable $P_0 \sim \Dir(c_t\alpha_t e_{x_0})$.
Let $W_t \sim \Beta\left(c_t\alpha_t,c_t(1-\alpha_t)\right)$ and $V_t\sim\Dir\left(c_t(1-\alpha_t)\pi\right)$ independently, and define
\begin{equation}
\label{eq:interpolation_w_appendix}
    P_t=W_tP_0+(1-W_t)V_t.
\end{equation}
The result is then an immediate application of
\Cref{propbeaut:sumdirichlet} with
\begin{equation}
    \alpha \leftarrow c_t\alpha_t e_{x_0},
    \qquad
    \beta \leftarrow
    c_t\left(\alpha_t e_{x_0}+(1-\alpha_t)\pi\right),
    \qquad
    \gamma=\beta-\alpha
    \leftarrow c_t(1-\alpha_t)\pi.
\end{equation}
Hence
\begin{equation}
    P_t
    \sim
    \Dir\left(
        c_t\left(\alpha_tP_0+(1-\alpha_t)\pi\right)
    \right)
    =
    \Dir\left(\beta_t(P_0,\pi)\right),
\end{equation}
which concludes the proof.

\subsection{Proof of \Cref{propbeaut:simplicialtransition}}
Since $\alpha_s\geq\alpha_t$, the definition of $r_{s,t}$ ensures that
\begin{equation}
    \gamma_{s,t}^{\kappa}(P_0,\pi)=\beta_s(P_0,\pi) - \rho_{s,t}^{\kappa}\beta_t(P_0,\pi) \geq0
\end{equation}
coordinatewise. We write $\rho=\rho_{s,t}^{\kappa}$.

We first apply \Cref{propbeaut:thinningdirichlet}. Since $P_t\mid P_0 \sim \Dir\bigl(\beta_t(P_0,\pi)\bigr)$ and $ B_i \sim \Beta\left(\rho\beta_t(P_0,\pi)_i, (1-\rho)\beta_t(P_0,\pi)_i \right)$ independently, the vector defined in
\eqref{eq:beta_shrinkage_Pt} satisfies
\begin{equation}
    P_{s,t}^{\kappa}\mid P_0 \sim \Dir\left(\rho\beta_t(P_0,\pi) \right).
\end{equation}
By construction, $V_{s,t}^{\kappa} \sim \Dir\left( \gamma_{s,t}^{\kappa}(P_0,\pi)\right)$ independently of $P_{s,t}^{\kappa}$, and $W_{s,t}^{\kappa} \sim \Beta\left( \sum_i \rho\beta_t(P_0,\pi)_i,  \sum_i \gamma_{s,t}^{\kappa}(P_0,\pi)_i \right)$.
Since $ \sum_i \rho\beta_t(P_0,\pi)_i=\rho c_t$ and $\sum_i \gamma_{s,t}^{\kappa}(P_0,\pi)_i=c_s-\rho c_t$, this is exactly the distribution of $W_{s,t}^{\kappa}$ in \eqref{eq:intermediate_posterior_simplicial}.

We can therefore apply \Cref{propbeaut:sumdirichlet} with $X\leftarrow P_{s,t}^{\kappa}$, $\alpha\leftarrow \rho\beta_t(P_0,\pi)$ and $\beta\leftarrow \beta_s(P_0,\pi)$ which gives
\begin{equation}
    P_s = W_{s,t}^{\kappa}P_{s,t}^{\kappa} + (1-W_{s,t}^{\kappa})V_{s,t}^{\kappa} \sim \Dir\bigl(\beta_s(P_0,\pi)\bigr).
\end{equation}
This is exactly the compatibility condition
\eqref{eq:backward_compatibility_simplicial}.

\subsection{A Representation of $P_t$}\label{sec:link_with_self_cond}
We provide here an explicit representation of $P_t$ under exact DDIM sampling.

\begin{propositionbeaut}{$P_t$ as a random convex combination}{selfconditioning}
Consider $c_t = \vareps / (1-\alpha_t)$. Let $\kappa=0$  and $(t_i)_{i=0}^{M}$ with $M \in \nset$ such that $0=t_0$ $< \dots < t_M=1$. Let $P_{t_M} \sim p_{t_M}$. For $i=M-1,\dots,0$, conditionally on $P_{t_{i+1}}$, sample $P_0^i\sim p_{0|t_{i+1}}(\cdot\mid P_{t_{i+1}})$, write $P_0^i=e_{x_0^i}$, and then sample $P_{t_i}$ from $p_{t_i|0,t_{i+1}}(\cdot\mid P_0^i,P_{t_{i+1}})$, equivalently using \eqref{eq:posterior_kappa_zero}. For any $i \in \{0, \dots, M\}$, we have $P_{t_i} \sim p_{t_i}$ and
\begin{equation}\label{eq:selfconditioningrep}
P_{t_i} = \sum_{j=i}^{M-1} L_{i,j} e_{x_0^j} + L_{i,M} P_{t_M},
\end{equation}
with $L_i = (L_{i,j})_{j=i}^M$ and $L_i \sim \Dir\left( c_{t_i} - c_{t_{i+1}}, \dots, c_{t_{M-1}} - c_{t_M}, c_{t_M} \right)$.
 \end{propositionbeaut}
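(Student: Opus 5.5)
The plan is to argue by backward induction on $i$, from $i=M$ down to $i=0$, proving the two claims separately: the marginal statement $P_{t_i}\sim p_{t_i}$, and the random convex combination \eqref{eq:selfconditioningrep}.

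\textbf{Step 1: simplify the transition.} With $c_t=\vareps/(1-\alpha_t)$ we have $b_t=\vareps$, hence $r_{s,t}=1$. Taking $\kappa=0$ gives $\rho^0_{s,t}=1$. The kernel of \Cref{propbeaut:simplicialtransition} then reduces to \eqref{eq:posterior_kappa_zero} in the form
\begin{equation}
P_{t_i}=W_i\,P_{t_{i+1}}+(1-W_i)\,e_{x_0^i},\qquad W_i\sim\Beta\bigl(c_{t_{i+1}},\,c_{t_i}-c_{t_{i+1}}\bigr).
\end{equation}
Here $W_i$ is drawn fresh and independently of everything sampled before. Since $\alpha$ is non-increasing, $c_t$ is non-increasing in $t$, so the Beta parameters are nonnegative. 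The law of $W_i$ depends neither on $P_{t_{i+1}}$ nor on $x_0^i$, which is the key structural fact. At $t_0=0$ we have $c_0=+\infty$; by the boundary convention $W_0=0$ and $P_{t_0}=e_{x_0^0}$.

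\textbf{Step 2: the marginal claim.} Assume $P_{t_{i+1}}\sim p_{t_{i+1}}$. Because $P_0^i$ is drawn from the exact posterior $p_{0|t_{i+1}}$, the pair $(P_0^i,P_{t_{i+1}})$ has the joint law $p_{\mathrm{data}}(P_0)\,p_{t_{i+1}|0}(P_{t_{i+1}}\mid P_0)$. Integrating the bridge against $p_{t_{i+1}|0}$ and using the compatibility condition \eqref{eq:backward_compatibility_simplicial} from \Cref{propbeaut:simplicialtransition} gives $P_{t_i}\mid P_0^i\sim p_{t_i|0}$. Averaging over $P_0^i\sim p_{\mathrm{data}}$ then yields $P_{t_i}\sim p_{t_i}$.

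\textbf{Step 3: the representation.}
\begin{itemize}
\item \emph{Base case.} For $i=M$, take $L_M=(1)$, which is trivially (degenerately) $\Dir(c_{t_M})$.
\item \emph{Unrolling.} Inserting the induction hypothesis into the recursion of Step 1 gives $L_{i,i}=1-W_i$ and $L_{i,j}=W_iL_{i+1,j}$ for $j>i$, so \eqref{eq:selfconditioningrep} holds at level $i$.
\item \emph{Identifying the law.} Pad $L_{i+1}$ with a zero coordinate in position $i$, and view it as $X\sim\Dir(\alpha)$ with $\alpha=(0,c_{t_{i+1}}-c_{t_{i+2}},\dots,c_{t_M})$. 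The entries of $\alpha$ telescope to $\sum\alpha=c_{t_{i+1}}$. Take $V=e_i$, the degenerate $\Dir(\gamma)$ with $\gamma=(c_{t_i}-c_{t_{i+1}})e_i$. Then $W_i\sim\Beta(\sum\alpha,\sum\gamma)$ exactly, and \Cref{propbeaut:sumdirichlet} gives $L_i\sim\Dir(\alpha+\gamma)$, which is the claimed law.
\end{itemize}

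\textbf{Main obstacle: independence.} \Cref{propbeaut:sumdirichlet} requires $W_i$ to be independent of $L_{i+1}$, and the argument must not be disturbed by dependence of the $x_0^j$ on the path. I would handle this by noting that every $L_{i,j}$ is a deterministic function of $(W_i,\dots,W_{M-1})$ alone. These are mutually independent Betas whose parameters are deterministic, because $c_t$ does not involve $P_0$. So the weight vector $L_i$ is independent of the fresh $W_{i-1}$, even though the vertices $e_{x_0^j}$ are correlated with the weights. The Dirichlet claim concerns only the law of $L_i$, so this suffices.

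The endpoint $t_0=0$, where the concentration is infinite, is handled by the boundary/limit convention: there $L_{0,0}=1$ and all other weights vanish.
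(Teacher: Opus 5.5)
Your proposal is correct and follows the paper's proof. It uses the same backward induction, the same argument for the marginal claim (exact posterior draws plus the compatibility condition), and the same weight recursion $L_{i,i}=1-W_i$, $L_{i,j}=W_iL_{i+1,j}$, whose law is identified through the stick-breaking consequence of the sum-of-Dirichlet proposition (your zero-padding trick is exactly that). Beyond the paper, you make explicit that $L_{i+1}$ depends only on the mutually independent $W_{i+1},\dots,W_{M-1}$ and so is independent of $W_i$ despite the path dependence of the $x_0^j$, and you handle the endpoint $t_0=0$ (where $c_{t_0}=\infty$) via the boundary convention; the paper leaves both points implicit.
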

 
Equation \eqref{eq:selfconditioningrep} shows that $P_{t_i}$ is a random convex combination of the successive posterior clean-state draws $(x^j_0)_{j=i}^{M-1}$ and the initial noise $P_{t_M}$. For the plug-in transitions, this exact representation need not hold, but it motivates the interpretation of $P_t$ as a belief-memory state. In \Cref{app:corollaryselfconditioning}, we investigate the high and low temperature limits of this representation. 

\begin{proof}
We first verify the marginal claim. It holds at $i=M$ by assumption. If $P_{t_{i+1}}\sim p_{t_{i+1}}$ and $P_0^i\sim p_{0|t_{i+1}}(\cdot\mid P_{t_{i+1}})$, then $(P_0^i,P_{t_{i+1}})$ has joint law $p(P_0)p_{t_{i+1}|0}(P_{t_{i+1}}\mid P_0)$. Integrating the exact kernel $p_{t_i|0,t_{i+1}}$ and using the compatibility condition \eqref{eq:backward_compatibility_simplicial} gives $P_{t_i}\sim p_{t_i}$. Backward induction therefore proves $P_{t_i}\sim p_{t_i}$ for every $i$.

The proof relies on the stick-breaking property of the Dirichlet distribution (a direct consequence of \Cref{propbeaut:sumdirichlet}). If a random vector $X \sim \Dir(\alpha_1, \dots, \alpha_K)$ and a random variable $W \sim \Beta(\sum_{k=1}^K \alpha_k, \alpha_0)$ are independent, then the augmented vector $(1-W, W X_1, \dots, W X_K)$ is distributed as $\Dir(\alpha_0, \alpha_1, \dots, \alpha_K)$. We proceed by backward induction on $i$, from $i=M-1$ down to $0$. To simplify notation, we define the parameter sequence $\gamma_i = \frac{\vareps}{1-\alpha_{t_i}} - \frac{\vareps}{1-\alpha_{t_{i+1}}}$ for $i \in \{0, \dots, M-1\}$, and $\gamma_M = \frac{\vareps}{1-\alpha_{t_M}}$. We start with $i = M-1$. By \eqref{eq:posterior_kappa_zero}, we have:
\begin{equation}
    P_{t_{M-1}} = W_{t_{M-1}, t_M}^{0} P_{t_M} + (1 - W_{t_{M-1}, t_M}^{0}) e_{x_0^{M-1}} ,
\end{equation}
where $W_{t_{M-1}, t_M}^{0} \sim \Beta\left(\gamma_M, \gamma_{M-1}\right)$. The vector $(1 - W_{t_{M-1}, t_M}^{0}, W_{t_{M-1}, t_M}^{0})$ follows a Dirichlet distribution with parameters $(\gamma_{M-1}, \gamma_M)$. Setting $L_{M-1, M-1} = 1 - W_{t_{M-1}, t_M}^{0}$ and $L_{M-1, M} = W_{t_{M-1}, t_M}^{0}$ satisfies the proposition for the base case.
Now assume the proposition holds for some $i+1 \leq M-1$; i.e.,
\begin{equation}
    P_{t_{i+1}} = \sum_{j=i+1}^{M-1} L_{i+1,j} e_{x_0^j} + L_{i+1,M} P_{t_M} ,
\end{equation}
where $L_{i+1}^\vareps \sim \Dir(\gamma_{i+1}, \dots, \gamma_M)$. Note that the sum of these concentration parameters is exactly $\sum_{k=i+1}^M \gamma_k = \frac{\vareps}{1-\alpha_{t_{i+1}}}$.
So using \Cref{propbeaut:simplicialtransition} and applying the backward transition to step $i$, we have:
\begin{equation}
    P_{t_i} = W_{t_i, t_{i+1}}^{0} P_{t_{i+1}} + (1 - W_{t_i, t_{i+1}}^{0}) e_{x_0^i} ,
\end{equation}
where $W_{t_i, t_{i+1}}^{0} \sim \Beta\left(\frac{\vareps}{1-\alpha_{t_{i+1}}}, \gamma_i\right) = \Beta\left(\sum_{k=i+1}^M \gamma_k, \gamma_i\right)$.
Therefore, we get 
\begin{equation}
    P_{t_i} = (1 - W_{t_i, t_{i+1}}^{0}) e_{x_0^i} + \sum_{j=i+1}^{M-1} \left( W_{t_i, t_{i+1}}^{0} L_{i+1,j} \right) e_{x_0^j} + \left( W_{t_i, t_{i+1}}^{0} L_{i+1,M} \right) P_{t_M} .
\end{equation}
We identify the new weights $L_i$ as follows:
$L_{i,i} = 1 - W_{t_i, t_{i+1}}^{0}$ and 
$L_{i,j} = W_{t_i, t_{i+1}}^{0} L_{i+1,j} \quad \text{for } j \in \{i+1, \dots, M\}$.
Because $L_{i+1} \sim \Dir(\gamma_{i+1}, \dots, \gamma_M)$ and $W_{t_i, t_{i+1}}^{0} \sim \Beta\left(\sum_{k=i+1}^M \gamma_k, \gamma_i\right)$, applying the stick-breaking property guarantees that the augmented vector $L_i$ is distributed as $\Dir(\gamma_i, \gamma_{i+1}, \dots, \gamma_M)$. This concludes the inductive step and the proof.
\end{proof}

\subsection{Illustration and Temperature Limits for \Cref{propbeaut:selfconditioning}}\label{app:corollaryselfconditioning}
We first present in \Cref{fig:weight_grid} the Self-Conditioning weights $L_{i,j}^{\varepsilon}$ appearing in \Cref{propbeaut:selfconditioning} for various $\varepsilon$ and index $i$.

\begin{figure}[t]
    \centering
    \includegraphics[width=\linewidth]{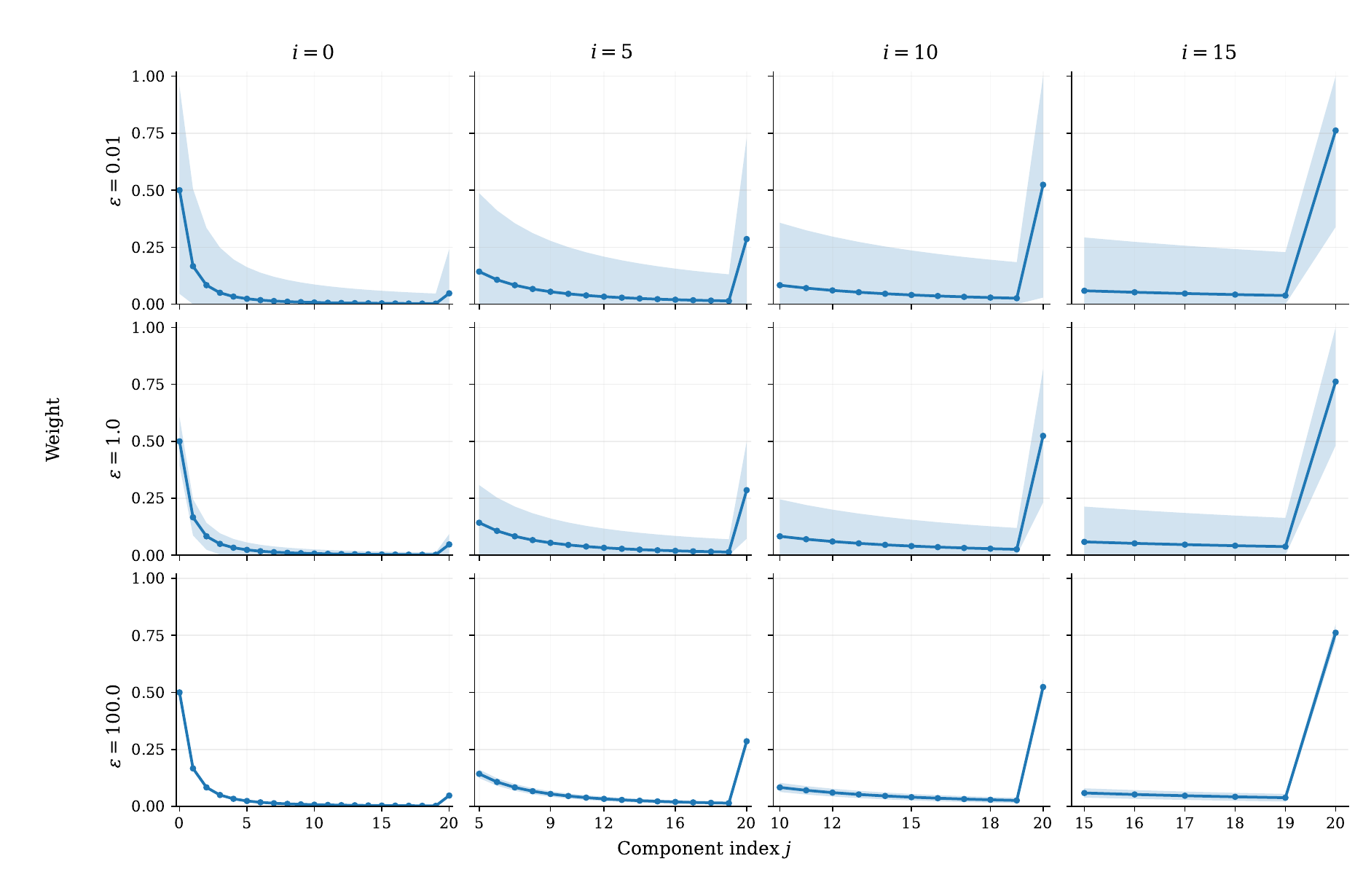}
    \caption{Visualization of the Self-Conditioning weights $L_{i,j}$ from \Cref{propbeaut:selfconditioning}. Each panel shows the exact mean profile $\mathbb{E}[L_{i,j}]$ (solid line) together with a $\pm 1$ standard-deviation band for the Dirichlet law in \Cref{propbeaut:selfconditioning}, for $\varepsilon \in \{0.01,1,100\}$ and $i \in \{0,5,10,15\}$. The mean curve is independent of $\varepsilon$.}
    \label{fig:weight_grid}
\end{figure}

Let us now investigate the high and low temperature limits of \Cref{propbeaut:selfconditioning}. 

\begin{corollarybeaut}{Limiting behavior of intrinsic Self-Conditioning}{coroselfconditioning}
Let $L_i= (L_{i,j})_{j=i}^M$ be the Dirichlet-distributed weight vector defined in \Cref{propbeaut:selfconditioning}. We define the unscaled parameters $c_j = \frac{1}{1-\alpha_{t_j}} - \frac{1}{1-\alpha_{t_{j+1}}}$ for $j \in \{i, \dots, M-1\}$ and $c_M = \frac{1}{1-\alpha_{t_M}}$. Let $C_i = \sum_{j=i}^M c_j = \frac{1}{1-\alpha_{t_i}}$. 
We have the following limiting cases. 
\begin{enumerate}\item \textbf{Low-temperature limit (Deterministic flow):} As $\vareps \to \infty$, $L_i$ converges in distribution to a deterministic vector:
\begin{equation}L_i \overset{d}{\to} \bar{L}_i , \qquad \text{where} \quad \bar{L}_{i,j} = \frac{c_j}{C_i} .\end{equation}
\item \textbf{High-temperature limit (Discrete jumps):} As $\vareps \to 0$, $L_i$ converges in distribution to a categorical distribution over the canonical basis $(u_j)_{j=i}^M$ of the simplex over indices $j \in \{i,...,M\}$:
\begin{equation}L_i \overset{d}{\to}  u_{J_i}, \qquad \text{where} \quad J_i \sim \textup{Cat}\left(\left(\frac{c_j}{C_i}\right)_{j=i}^M \right).
\end{equation}
\end{enumerate}
\end{corollarybeaut}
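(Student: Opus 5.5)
By \Cref{propbeaut:selfconditioning}, the weight vector $L_i$ has law $\Dir(\vareps c_i,\dots,\vareps c_M)$, where the $c_j$ are the unscaled parameters of the corollary. Indeed, $\gamma_j=\vareps c_j$ with total concentration $\vareps C_i$. Both limits are therefore statements about a Dirichlet family whose mean $(c_j/C_i)_j$ is fixed and whose total concentration $\vareps C_i$ is sent to $\infty$ or $0$. We assume $t_M=1$ and interpret $c_M=1/(1-\alpha_1)=1$ (with $\alpha_1=0$), and we use the fact that all $c_j>0$. This requires $\alpha$ to be strictly decreasing on the grid. If some $c_j=0$, that coordinate is identically zero and can be dropped.

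\textbf{Low temperature.} From \eqref{eq:meanvarianceDirichlet}, $\mathbb{E}[L_i]=\bar L_i$ exactly, for every $\vareps$. Each coordinate satisfies
\begin{equation}
\Var(L_{i,j})=\frac{\bar L_{i,j}(1-\bar L_{i,j})}{\vareps C_i+1}\to0 .
\end{equation}
Chebyshev's inequality then gives $L_i\to\bar L_i$ in probability as $\vareps\to\infty$, and hence in distribution. This part is routine.

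\textbf{High temperature.} We use the Gamma representation $L_{i,j}=Z_j/\sum_k Z_k$ with independent $Z_j\sim\GammaDist(\vareps c_j,1)$. Since the simplex is compact, it suffices to show two things:
\begin{enumerate}
\item the mass concentrates near the vertices, meaning $\mathbb{E}[\sum_{j\neq k}L_{i,j}L_{i,k}]\to0$;
\item $\mathbb{E}[L_{i,j}]=c_j/C_i$ for every $\vareps$.
\end{enumerate}
For the first point, the Dirichlet second moments give
\begin{equation}
\mathbb{E}[L_{i,j}L_{i,k}]=\frac{\vareps c_j\,\vareps c_k}{\vareps C_i(\vareps C_i+1)}=O(\vareps),\qquad j\neq k .
\end{equation}
Any subsequential weak limit $\mu$ (which exists by tightness) therefore satisfies $\int\sum_{j\neq k}x_jx_k\,\rmd\mu=0$, by continuity and boundedness of the integrand. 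Such a $\mu$ is supported on the vertices $\{u_j\}$. Because the coordinate map is continuous and bounded, the mean constraint passes to the limit and forces $\mu(\{u_j\})=c_j/C_i$. The limit is thus unique, which gives $L_i\overset{d}{\to}u_{J_i}$ with $J_i\sim\Cat((c_j/C_i)_j)$.

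\textbf{Main obstacle.} The step needing the most care is the high-temperature identification: we must pass from ``off-diagonal products vanish in mean'' to ``support on the vertices'' and then identify the atom weights. The argument above handles this through uniqueness of the subsequential limit, with no explicit Gamma asymptotics required. An alternative would be the Gamma route, using $Z_j^{\vareps}\approx U_j^{1/(\vareps c_j)}$ so that the largest coordinate dominates with probability $c_j/C_i$. That route is more cumbersome, and we would avoid it.
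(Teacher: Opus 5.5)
Your proposal is correct and follows the paper's proof: both rewrite the law as $L_i\sim\Dir\bigl((\vareps C_i)\bar L_i\bigr)$ with an $\vareps$-independent mean, and both obtain the low-temperature limit from the vanishing variance $\bar L_{i,j}(1-\bar L_{i,j})/(\vareps C_i+1)$. The one difference is at $\vareps\to0$: the paper only invokes the standard small-concentration Dirichlet limit, whereas your argument proves it (off-diagonal moments of order $O(\vareps)$ force vertex support, and the exact mean identifies the atom weights), and like the paper's it should be read for $i\ge1$, where $C_i<\infty$.
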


Therefore, we recover the Discrete Diffusion model behavior when $\vareps \to 0$ as in that case only one component $j^\star \in \{i, \dots, M\}$ is non-zero. This means that all contributions at steps other than $t_{j^\star}$ are neglected. This is not the case when $\vareps > 0$ 
and in the limit $\vareps \to +\infty$, the mixing of the clean-state draws is deterministic.

\begin{proof}
Recall that \Cref{propbeaut:selfconditioning} assumes the schedule
$c_t=\vareps/(1-\alpha_t)$, and gives
\begin{equation}
    L_i \sim \Dir\left( c_{t_i} - c_{t_{i+1}}, \dots, c_{t_{M-1}} - c_{t_M}, c_{t_M} \right).
\end{equation}
In terms of the unscaled parameters $\tilde c_j$ of the statement, namely
$\tilde c_j = \frac{1}{1-\alpha_{t_j}} - \frac{1}{1-\alpha_{t_{j+1}}}$ for
$j \in \{i,\dots,M-1\}$ and $\tilde c_M = \frac{1}{1-\alpha_{t_M}}$, we have
\begin{equation}
\label{eq:unscaled_identification}
    c_{t_j} - c_{t_{j+1}} = \vareps\, \tilde c_j
    \quad (j \leq M-1),
    \qquad
    c_{t_M} = \vareps\, \tilde c_M ,
\end{equation}
so that $L_i \sim \Dir\left(\vareps \tilde c_i, \dots, \vareps \tilde c_M\right)$.
Since $u \mapsto \alpha_u$ is non-increasing, each $\tilde c_j \geq 0$, and the
total concentration telescopes:
\begin{equation}
    \sum\nolimits_{j=i}^{M} \vareps\, \tilde c_j
    = \vareps\, C_i
    = \frac{\vareps}{1-\alpha_{t_i}}
    = c_{t_i} \in (0,\infty)
    \qquad \text{for } i \geq 1 .
\end{equation}
Writing $\bar L_i = \left(\tilde c_j / C_i\right)_{j=i}^{M} \in \Delta_{M-i+1}$, which
does not depend on $\vareps$, this reads
\begin{equation}
\label{eq:Li_scaled_form}
    L_i \sim \Dir\left(\left(\vareps C_i\right) \bar L_i\right) .
\end{equation}
So the following results follow.

\emph{(1) Low-temperature limit.} As $\vareps \to \infty$ we have $\vareps C_i \to \infty$,
so $L_i \to \bar L_i$ in probability,
hence in distribution. Explicitly, $\Ebb[L_{i,j}] = \tilde c_j / C_i$ for every $\vareps$,
while
\begin{equation}
    \Var\left[L_{i,j}\right]
    =
    \frac{\left(\tilde c_j/C_i\right)\left(1 - \tilde c_j/C_i\right)}{\vareps C_i + 1}
    \xrightarrow[\vareps \to \infty]{} 0 .
\end{equation}

\emph{(2) High-temperature limit.} As $\vareps \to 0$ we have $\vareps C_i \to 0$, so we obtain
\begin{equation}
    L_i \overset{d}{\longrightarrow}
    \sum\nolimits_{j=i}^{M} \frac{\tilde c_j}{C_i}\, \updelta_{u_j} ,
\end{equation}
where $(u_j)_{j=i}^M$ denotes the canonical basis of $\rset^{M-i+1}$. Equivalently,
$L_i \overset{d}{\to} u_{J_i}$ with
$J_i \sim \Cat\left(\left(\tilde c_j/C_i\right)_{j=i}^M\right)$. Indices $j$ with
$\tilde c_j = 0$, which occur when $\alpha$ is constant on $[t_j,t_{j+1}]$, satisfy
$L_{i,j}=0$ almost surely and receive zero mass in the limit, consistently with the
statement.
\end{proof}

\subsection{Proof of \Cref{propbeaut:temperature-limits}}

In this section, we investigate the temperature limits of SDMs. The complete design space is summarized in \Cref{fig:sdm-design-space}. We recall that we set $c_t = \vareps / (1-\alpha_t)$. 

\begin{figure}[t]
\centering
\resizebox{1\linewidth}{!}{%
\includegraphics[width=1.1\linewidth]{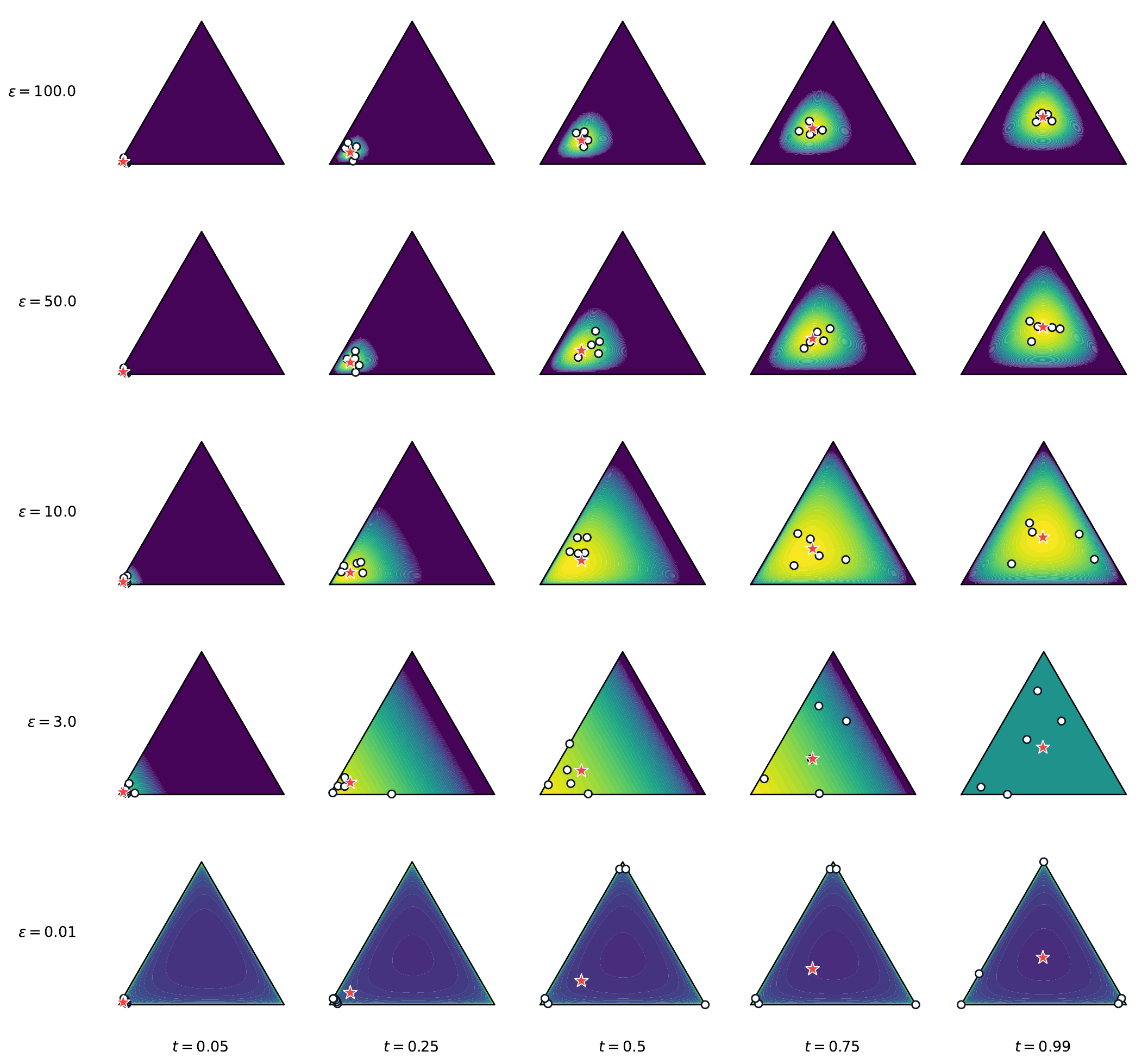}
}
\caption{Each figure corresponds to the density of $p_{t|0}(P_t|P_0) = \Dir(c_t(\alpha_t P_0 + (1-\alpha_t) \pi))$, with $\pi$ the uniform distribution and $c_t = \vareps / (1-\alpha_t)$ for different values of $t \in [0,1]$ and $\vareps >0$. We also display the mean (red star) of $p_{t|0}$, i.e., $\mathbb{E}[P_t|P_0] = \alpha_t P_0 + (1-\alpha_t) \pi$ and observe that this quantity is independent of $\vareps$. Finally, we also display 5 samples (white dots) of $p_{t|0}$. Note that as $\vareps \rightarrow 0$ the samples concentrate on the vertices of the simplex, thereby illustrating the convergence of SDMs to their Discrete Diffusion counterpart.}
    \label{fig:sdm-forward}
\end{figure}

\begin{figure}[t]
\centering
\resizebox{0.95\linewidth}{!}{%

\definecolor{sdmblue}{RGB}{18, 76, 138}
\definecolor{sdmsoftblue}{RGB}{242, 247, 254}
\definecolor{sdmborderblue}{RGB}{68, 138, 208}
\definecolor{cardbg}{RGB}{253, 254, 255}
\definecolor{bordergray}{RGB}{198, 206, 216}
\definecolor{darktext}{RGB}{28, 34, 42}
\definecolor{subtext}{RGB}{95, 105, 115}
\definecolor{accentred}{RGB}{220, 53, 69}
\definecolor{accentgreen}{RGB}{40, 167, 69}

\begin{tikzpicture}[
  font=\sffamily,
  >=Stealth,
  card/.style={
    draw=bordergray,
    fill=cardbg,
    rounded corners=6pt,
    line width=0.8pt,
    inner sep=8pt
  },
  herocard/.style={
    draw=sdmborderblue,
    fill=sdmsoftblue,
    rounded corners=7pt,
    line width=1.4pt,
    inner sep=9pt
  }
]
  \fill[blue!1!gray!3, rounded corners=8pt] (-0.3, -0.6) rectangle (16.4, 12.6);
  \coordinate (O) at (1.1, 0.9);
  \coordinate (Xmax) at (16.0, 0.9);
  \coordinate (Ymax) at (1.1, 12.2);
  \draw[dashed, draw=gray!25, line width=0.6pt] (1.1, 11.2) -- (15.7, 11.2);
  \draw[dashed, draw=gray!25, line width=0.6pt] (14.4, 0.9) -- (14.4, 12.0);
  \draw[->, line width=1.1pt, draw=gray!75] (O) -- (Xmax);
  \draw[->, line width=1.1pt, draw=gray!75] (O) -- (Ymax);
  \node[anchor=north east, font=\small, text=darktext] at ($(O) + (-0.05, -0.05)$) {$\varepsilon \to 0$};
  \node[anchor=north, font=\small, text=darktext] at ($(O) + (0, -0.35)$) {$\kappa = 0$};
  
  \node[anchor=east, font=\small, text=darktext] at ($(Ymax) + (-0.08, -0.25)$) {$\varepsilon \to \infty$};
  \node[anchor=north, font=\small, text=darktext] at (14.4, 0.55) {$\kappa = 1$};
  \node[rotate=90, anchor=center, font=\bfseries\normalsize, text=darktext] at (0.35, 6.5) {Inverse Temperature $\varepsilon$};
  \node[anchor=center, font=\bfseries\normalsize, text=darktext] at (8.0, -0.15) {Churn $\kappa$ \textnormal{\small(sampling only)}};
  \node[anchor=north west, font=\footnotesize, text=subtext] at ($(O) + (0.1, -0.05)$) {full trust in $P_t$};
  \node[anchor=north, font=\footnotesize\itshape, text=subtext] at (8.0, 0.8) {increasing $\kappa$: more churn, less trust in $P_t \longrightarrow$};
  \node[anchor=north east, font=\footnotesize, text=subtext] at (15.6, 0.8) {independent resampling};
  \node[card, anchor=north] (determ) at (5.3, 12.0) {
    \begin{minipage}{6.5cm}
      \begin{minipage}[c]{4.7cm}
        {\bfseries\normalsize Deterministic limit}\\[1pt]
        {\footnotesize\color{subtext}$\varepsilon \to \infty$}\\[4pt]
        {\footnotesize
        \textbullet~Mass at $\bar{P}_t = \alpha_t P_0 + (1-\alpha_t)\pi$\\[2pt]
        \textbullet~Smooth continuous interpolation}
      \end{minipage}%
      \begin{minipage}[c]{1.7cm}
        \centering
        \begin{tikzpicture}[scale=0.55, baseline=-0.3cm]
          \coordinate (A) at (0, 1.155);
          \coordinate (B) at (-1, -0.577);
          \coordinate (C) at (1, -0.577);
          \fill[fill=gray!12, draw=gray!60, line width=0.5pt] (A) -- (B) -- (C) -- cycle;
          \coordinate (PI) at (0, 0);
          \fill[gray!50] (PI) circle (1.4pt);
          \fill[darktext] (B) circle (2.2pt) node[below left=-2pt, font=\tiny] {$P_0$};
          \draw[->, line width=1.1pt, sdmblue] (B) -- ($(B)!0.65!(PI)$) coordinate (PT);
          \fill[sdmblue] (PT) circle (2.5pt);
        \end{tikzpicture}
      \end{minipage}
    \end{minipage}
  };
  \node[card, anchor=south] (discrete) at (5.3, 1.35) {
    \begin{minipage}{6.5cm}
      \begin{minipage}[c]{4.7cm}
        {\bfseries\normalsize Discrete diffusion limit}\\[1pt]
        {\footnotesize\color{subtext}$\varepsilon \to 0$}\\[4pt]
        {\footnotesize
        \textbullet~Mass concentrates on vertices\\[2pt]
        \textbullet~One-hot states \& discrete jumps}
      \end{minipage}%
      \begin{minipage}[c]{1.7cm}
        \centering
        \begin{tikzpicture}[scale=0.55, baseline=-0.3cm]
          \coordinate (A) at (0, 1.155);
          \coordinate (B) at (-1, -0.577);
          \coordinate (C) at (1, -0.577);
          \fill[fill=gray!12, draw=gray!60, line width=0.5pt] (A) -- (B) -- (C) -- cycle;
          \fill[accentred] (A) circle (3pt);
          \fill[sdmblue] (B) circle (3pt);
          \fill[accentgreen] (C) circle (3pt);
          \draw[->, line width=0.55pt, dashed, gray!80] ($(B)!0.25!(A)$) -- ($(B)!0.75!(A)$);
          \draw[->, line width=0.55pt, dashed, gray!80] ($(A)!0.25!(C)$) -- ($(A)!0.75!(C)$);
        \end{tikzpicture}
      \end{minipage}
    \end{minipage}
  };
  \node[card, anchor=west] (bridge) at (10.9, 6.7) {
    \begin{minipage}{5.0cm}
      \begin{minipage}[c]{3.4cm}
        {\bfseries\normalsize Independent bridge}\\[1pt]
        {\footnotesize\color{subtext}$\kappa = 1$}\\[2pt]
        {\scriptsize\color{darktext}$p_{s|0,t} = p_{s|0}$}\\[3pt]
        {\footnotesize
        \textbullet~$P_s$ ignores $P_t$\\[2pt]
        \textbullet~Independent resampling}
      \end{minipage}%
      \begin{minipage}[c]{1.5cm}
        \centering
        \begin{tikzpicture}[scale=0.5, baseline=-0.3cm]
          \coordinate (A) at (0, 1.155);
          \coordinate (B) at (-1, -0.577);
          \coordinate (C) at (1, -0.577);
          \fill[fill=gray!12, draw=gray!60, line width=0.5pt] (A) -- (B) -- (C) -- cycle;
          \foreach \x/\y in {
            -0.2/0.2, 0.1/0.4, 0.3/-0.1, -0.35/-0.2, 0.0/-0.3,
            -0.1/0.6, 0.4/0.1, -0.3/0.1, 0.2/-0.2, -0.1/-0.1,
            0.15/0.1, -0.25/-0.35, 0.35/-0.3
          }{
            \fill[gray!65] (\x, \y) circle (1.1pt);
          }
        \end{tikzpicture}
      \end{minipage}
    \end{minipage}
  };
  \node[herocard, anchor=center] (ours) at (5.8, 6.7) {
    \begin{minipage}{7.3cm}
      \begin{minipage}[c]{5.3cm}
        {\bfseries\large\color{sdmblue} Simplicial diffusion (ours)}\\[2pt]
        {\small\bfseries\color{darktext} $P_s = W P_t^\kappa + (1-W)V$}\\[4pt]
        {\footnotesize
        \color{sdmblue}\textbullet~\textbf{Continuous belief state} on $\Delta_N$\\[2pt]
        \color{sdmblue}\textbullet~\textbf{Intrinsic self-conditioning}\\[2pt]
        \color{sdmblue}\textbullet~\textbf{Tractable, smooth gradients}}\\[5pt]
        {\scriptsize\color{sdmblue}\bfseries\itshape Intermediate regime: $0 < \varepsilon < \infty,\; 0 \le \kappa < 1$}
      \end{minipage}%
      \begin{minipage}[c]{1.9cm}
        \centering
        \begin{tikzpicture}[scale=0.6, baseline=-0.3cm]
          \coordinate (A) at (0, 1.155);
          \coordinate (B) at (-1, -0.577);
          \coordinate (C) at (1, -0.577);
          \fill[fill=gray!12, draw=gray!60, line width=0.55pt] (A) -- (B) -- (C) -- cycle;
          \fill[sdmblue!18] (0, 0) circle (9.5pt);
          \fill[sdmblue!38] (0, 0) circle (5.5pt);
          \fill[sdmblue] (0, 0) circle (2.5pt);
          \node[above right=-2pt, font=\tiny\bfseries, text=sdmblue] at (0,0) {$P_t$};
        \end{tikzpicture}
      \end{minipage}
    \end{minipage}
  };
  \draw[dotted, line width=1.0pt, draw=gray!60] (ours.east) -- (bridge.west)
    node[midway, above=1pt, font=\scriptsize, text=subtext] {$\kappa \to 1$};
\end{tikzpicture}%
}
\caption{The design space of Simplex Diffusion Models (SDMs). The forward process is controlled by the inverse temperature $\varepsilon$: as $\varepsilon \to 0$, the simplex mass concentrates on vertices and recovers discrete-diffusion behavior; as $\varepsilon \to \infty$, the forward process concentrates around the deterministic interpolation $\bar P_t = \alpha_t P_0 + (1-\alpha_t)\pi$. The reverse process is controlled by the churn parameter $\kappa$: $\kappa=0$  fully trusts the current belief $P_t$, while $\kappa=1$ yields the independent bridge $p_{s\mid 0,t}=p_{s\mid 0}$. SDMs operate in the intermediate regime, maintaining a continuous belief state throughout sampling.}
    \label{fig:sdm-design-space}
\end{figure}

We use the representation from \Cref{propbeaut:interpolatingpath},
\begin{equation}
    P_t
    =
    W_t P_0
    +
    \left(1-W_t \right)V_t,
\end{equation}
where
\begin{equation}
    W_t
    \sim
    \Beta(\vareps h_t,\vareps),
    \qquad
    V_t
    \sim
    \Dir(\vareps\pi),
    \qquad
    h_t=\frac{\alpha_t}{1-\alpha_t},
\end{equation}
and $W_t$ and $V_t$ are independent.

\paragraph{High-Temperature Limit.}
For fixed $t\in(0,1)$, standard results for the Beta and Dirichlet distributions give
\begin{equation}
    W_t
    \overset{d}{\longrightarrow}
    B_t,
    \qquad
    B_t\sim\Ber(\alpha_t),
\end{equation}
and
\begin{equation}
    V_t
    \overset{d}{\longrightarrow}
    V,
    \qquad
    V
    \sim
    \sum_{i=1}^N \pi_i\,\updelta_{e_i}.
\end{equation}
The limiting variables are independent. Hence, by the continuous
mapping theorem,
\begin{equation}
    P_t
    \overset{d}{\longrightarrow}
    B_tP_0+(1-B_t)V.
\end{equation}
Since $P_0=e_{x_0}$ and $V$ is supported on the canonical basis
vectors, the limiting law is supported on the vertices of $\Delta_N$.

For $\kappa=0$, the backward transition is
\begin{equation}
    P_s
    =
    W_{s,t}^{0}P_t
    +
    \left(1-W_{s,t}^{0}\right)e_{x_0},
\end{equation}
with
\begin{equation}
    W_{s,t}^{0}
    \sim
    \Beta\!\left(
        \frac{\vareps}{1-\alpha_t},
        \frac{\vareps}{1-\alpha_s}
        -
        \frac{\vareps}{1-\alpha_t}
    \right).
\end{equation}
Applying again the small-concentration Beta limit gives
\begin{equation}
    W_{s,t}^{0}
    \overset{d}{\longrightarrow} 
    \Ber\!\left(
        \frac{1-\alpha_s}{1-\alpha_t}
    \right).
\end{equation}
Therefore, for every fixed $P_t\in\Delta_N$, we have in the limit
\begin{equation}
    P_s
    \overset{d}{\longrightarrow}
    W_{s,t}^{0}P_t
    +
    \left(1-W_{s,t}^{0}\right)e_{x_0},\qquad  W_{s,t}^{0}
    \sim
    \Ber\!\left(
        \frac{1-\alpha_s}{1-\alpha_t}
    \right).
\end{equation}
In particular, when $P_t$ is a simplex vertex, the limiting transition
is supported on simplex vertices.

\paragraph{Low-Temperature Limit.}
For fixed $t\in(0,1)$, standard results give
\begin{equation}
    W_t
    \overset{\mathbb P}{\longrightarrow}
    \frac{h_t}{1+h_t}
    =
    \alpha_t,
\end{equation}
and
\begin{equation}
    V_t
    \overset{\mathbb P}{\longrightarrow}
    \pi.
\end{equation}
Using once again the representation of $P_t$ and the continuous
mapping theorem,
\begin{equation}
    P_t
    \overset{\mathbb P}{\longrightarrow}
    \alpha_tP_0+(1-\alpha_t)\pi,
\end{equation}
which concludes the proof.

\subsection{Gaussian Fluctuations}
\label{sec:gaussian_limit_linear}

The next result identifies
the fluctuations around the deterministic path at scale $\vareps^{-1/2}$. For this result, we introduce the tangent space of the simplex $\mathrm{T}\Delta_N = \{ x \in \rset^N , \ \mathbf{1}^\top x = 0 \}$. 

\begin{propositionbeaut}{Gaussian fluctuation limit}{linear-gaussian-limit}
Let $U \in \rset^{N \times (N-1)}$ be an orthonormal basis for the tangent space $\mathrm{T}\Delta_N$.
Let $t \in (0,1]$ and let $P_t \sim \Dir(\beta_t(P_0,\pi))$ with $\pi_i>0$ for all $i$. Let $X_0 = \sqrt{\vareps}U^\top(P_0-\pi)$ and $X_t = \sqrt{\vareps}U^\top(P_t-\pi)$. We have 
\begin{equation}
    X_t = \alpha_t X_0 + (1-\alpha_t) \xi_t^\vareps,
\end{equation}
where $\xi_t^\vareps = \frac{\sqrt{\vareps}}{1-\alpha_t} U^\top (P_t - \bar{P}_t)$. Define $\Sigma_t := U^\top\left[\mathrm{diag}(\pi)-\pi\pi^\top+\alpha_t(P_0-\pi)(P_0-\pi)^\top\right]U$. As $\vareps \to \infty$,
\begin{equation}
    \xi_t^\vareps \overset{d}{\to} \mathcal{N}\left(0, \Sigma_t \right) .
\end{equation}
In addition, the spectrum of $\Sigma_t$ is contained in $[\min_i \pi_i,2]$ for all $t \in (0,1]$. 
\end{propositionbeaut}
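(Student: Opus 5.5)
The plan is to fix $t\in(0,1)$ and work with the schedule $c_t=\vareps/(1-\alpha_t)$ of \Cref{propbeaut:temperature-limits}; this is the schedule implicit in the scaling of $\xi_t^\vareps$. The endpoint $t=1$ is treated as a limiting case. Write $d:=P_0-\pi$, so that $\bar P_t=\pi+\alpha_t d$.

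\emph{Step 1: the decomposition.} This is pure algebra:
\[
X_t=\sqrt{\vareps}\,U^\top(\bar P_t-\pi)+\sqrt{\vareps}\,U^\top(P_t-\bar P_t)=\alpha_t X_0+(1-\alpha_t)\xi_t^\vareps .
\]

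\emph{Step 2: the limiting covariance.} I first compute the covariance of $\xi_t^\vareps$ for finite $\vareps$, which identifies the target $\Sigma_t$. By \eqref{eq:meanvarianceDirichlet},
\[
\Cov[\xi_t^\vareps]=\frac{\vareps}{(1-\alpha_t)^2(c_t+1)}\,U^\top\bigl(\diag(\bar P_t)-\bar P_t\bar P_t^\top\bigr)U\;\longrightarrow\;\frac{1}{1-\alpha_t}\,U^\top\bigl(\diag(\bar P_t)-\bar P_t\bar P_t^\top\bigr)U .
\]
Expand $\bar P_t=\pi+\alpha_t d$ and use that $P_0=e_{x_0}$ satisfies $\diag(P_0)=P_0P_0^\top=(\pi+d)(\pi+d)^\top$. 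This should give the exact identity
\[
\diag(\bar P_t)-\bar P_t\bar P_t^\top=(1-\alpha_t)\bigl(\diag(\pi)-\pi\pi^\top\bigr)+\alpha_t(1-\alpha_t)dd^\top .
\]
Dividing by $1-\alpha_t$ yields exactly $\Sigma_t$.

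\emph{Step 3: the CLT.} For the distributional limit I use the Gamma representation recalled in \Cref{app:basics}: $P_{t,i}=G_i/\sum_j G_j$ with independent $G_i\sim\GammaDist(c_t\bar P_{t,i},1)$. Every shape $c_t\bar P_{t,i}\to\infty$, because $\pi_i>0$ gives $\bar P_{t,i}\ge(1-\alpha_t)\pi_i>0$. The standard Gamma CLT, checked for instance via characteristic functions or infinite divisibility, gives
\[
\sqrt{c_t}\,(G_i/c_t-\bar P_{t,i})\overset{d}{\to}\mathcal N(0,\bar P_{t,i}),
\]
independently across $i$. The normalization map $g\mapsto g/\mathbf 1^\top g$ is smooth near $\bar P_t$, with Jacobian $\Id-\bar P_t\mathbf 1^\top$. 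The delta method then gives
\[
\sqrt{c_t}\,(P_t-\bar P_t)\overset{d}{\to}\mathcal N\bigl(0,\diag(\bar P_t)-\bar P_t\bar P_t^\top\bigr).
\]
Finally, $\xi_t^\vareps=\frac{\sqrt{\vareps}}{(1-\alpha_t)\sqrt{c_t}}\,U^\top\sqrt{c_t}(P_t-\bar P_t)$ and $\sqrt{\vareps/c_t}=\sqrt{1-\alpha_t}$. Slutsky's lemma and Step 2 then give $\xi_t^\vareps\overset{d}{\to}\mathcal N(0,\Sigma_t)$. I expect this step to be the main technical obstacle, specifically making the Gamma CLT and delta method rigorous with shapes that grow at different rates. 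Because all shapes are proportional to the single parameter $c_t$, I will first try rescaling everything by $c_t$ and using Lévy's continuity theorem on the Gamma characteristic function.

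\emph{Step 4: the spectrum.} Take $v=Uw$ with $|w|=1$, so that $\mathbf 1^\top v=0$ and $|v|=1$. Then
\[
w^\top\Sigma_tw=\Var_\pi(v)+\alpha_t(d^\top v)^2,\qquad \Var_\pi(v):=\sum_i\pi_iv_i^2-(\pi^\top v)^2 .
\]
For the lower bound, $\Var_\pi(v)=\min_c\sum_i\pi_i(v_i-c)^2\ge\min_i\pi_i\,\min_c\sum_i(v_i-c)^2=\min_i\pi_i$. The last equality holds because the unweighted minimizer is the mean of the $v_i$, which is zero, and $|v|=1$. The rank-one term is nonnegative, so it does not affect the lower bound.

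For the upper bound, $\Var_\pi(v)\le\sum_i\pi_iv_i^2\le1$, and $\alpha_t(d^\top v)^2\le|d|^2$. Moreover $|d|^2=1-2\pi_{x_0}+|\pi|^2\le1$, since $|\pi|^2\le\pi_{x_0}+\sum_{i\ne x_0}\pi_i^2\le 2\pi_{x_0}$ fails in general but $|\pi|^2\le 1$ and the cross term only helps. A cleaner alternative I would use if needed is $(d^\top v)^2=(v_{x_0}-\pi^\top v)^2\le 1$, obtained by bounding $v_{x_0}-\pi^\top v$ as a difference of $v$ at a point and its $\pi$-average, of magnitude at most $\max_i v_i-\min_i v_i\le\sqrt2$. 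If that route only yields the bound $2$ for this term, I would instead bound the sum directly by the Gram structure. Combining the two pieces gives a total at most $2$, so the spectrum of $\Sigma_t$ lies in $[\min_i\pi_i,2]$.
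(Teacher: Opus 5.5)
Your Steps 1--3 and your lower spectral bound are correct and follow the paper's route. The paper also uses the Gamma representation, a CLT for the Gamma vector, the delta method for $x\mapsto x/\mathbf{1}^\top x$, the same expansion of $\mathrm{diag}(\bar P_t)-\bar P_t\bar P_t^\top$, and the same weighted-variance lower bound. Scaling by $c_t$ instead of $\varepsilon$ is immaterial, since $c_t=\varepsilon/(1-\alpha_t)$. All Gamma shapes grow linearly in $\varepsilon$, so there is no rate issue. Also, $t=1$ needs no limiting argument: there $c_1=\varepsilon$ and $\bar P_1=\pi$, and your proof applies verbatim. Obtaining $t=1$ as a limit $t\to1$ would in fact require an extra interchange of limits.

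The gap is the upper bound $2$. You cannot get it by bounding $\mathrm{Var}_\pi(v)$ and $\alpha_t(d^\top v)^2$ separately, and both of your proposed bounds on the second term are false. Take $N=2$, $\pi=(\epsilon,1-\epsilon)$, $x_0=1$ and $v=(1,-1)/\sqrt2$. Then $(d^\top v)^2=\|d\|^2=2(1-\epsilon)^2$, so termwise estimates give at best $1+2=3$. In the same example $\mathrm{Var}_\pi(v)=2\epsilon(1-\epsilon)$ is small. That compensation is exactly what termwise bounds discard, and ``bound the sum directly by the Gram structure'' does not supply it.

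The missing step is to bound by the trace, where the $\|\pi\|^2$ terms cancel. Let $M=\mathrm{diag}(\pi)-\pi\pi^\top+\alpha_t dd^\top\succeq0$; since $UU^\top$ is an orthogonal projection,
\[
\lambda_{\max}(\Sigma_t)\le\mathrm{Tr}(U^\top MU)\le\mathrm{Tr}(M)=\bigl(1-\|\pi\|^2\bigr)+\alpha_t\bigl(1-2\pi_{x_0}+\|\pi\|^2\bigr)\le 2-2\pi_{x_0}\le 2,
\]
using $\alpha_t\le1$ and $\|d\|^2\ge0$. This is the paper's argument. The example shows the constant $2$ is essentially sharp, since there $\Sigma_t\to2(1-\epsilon)$ as $\alpha_t\to1$, so no termwise slack is available.
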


To connect with standard continuous Gaussian diffusion models, we project the scaled displacement of the forward state from the prior $\pi$ onto an orthonormal basis of the simplex tangent space. 

\begin{propositionbeaut}{Asymptotic normality and covariance structure}{dirichlet-clt}
Assume that $\pi$ has strictly positive components. 
Let $t \in (0,1]$ and let $P_t \sim \Dir(\beta_t(P_0,\pi))$. Define $\bar{P}_t = \alpha_t P_0 + (1-\alpha_t) \pi$ and $V_t = (1-\alpha_t)^{-1} (\mathrm{diag}(\bar{P}_t) - \bar{P}_t \bar{P}_t^\top)$. As $\vareps \to \infty$, we have
\begin{equation}
    \sqrt{\vareps}(P_t - \bar{P}_t) \overset{d}{\to} \mathcal{N}\left(0, (1-\alpha_t)^2V_t\right) .
\end{equation}
In addition, letting $V_\pi = \mathrm{diag}(\pi) - \pi \pi^\top$, we have
\begin{equation}
    V_t = V_\pi + \alpha_t (P_0 - \pi)(P_0 - \pi)^\top .
\end{equation}
\end{propositionbeaut}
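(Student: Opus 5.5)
The plan is to work with the schedule $c_t=\vareps/(1-\alpha_t)$ used throughout \Cref{sec:theoreticalproperties}, so that $P_t\sim\Dir(c_t\bar P_t)$ with total concentration $c_t$. Every coordinate of $\bar P_t=\alpha_tP_0+(1-\alpha_t)\pi$ is strictly positive for $t\in(0,1]$, because $\pi_i>0$ and $1-\alpha_t>0$. The argument then has two parts: a central limit theorem for the Dirichlet vector obtained from its Gamma representation, followed by a purely algebraic identification of the covariance.

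For the first part, I write $P_{t,i}=G_i/\sum_jG_j$ with independent $G_i\sim\GammaDist(c_t\bar P_{t,i},1)$, as in \Cref{app:basics}. I will show that $Y^\vareps:=(G-c_t\bar P_t)/\sqrt{c_t}\overset{d}{\to}\mathcal N(0,\diag(\bar P_t))$ as $\vareps\to\infty$ (equivalently $c_t\to\infty$). Since the coordinates are independent, it suffices to treat each one separately. For $G\sim\GammaDist(a,1)$ the characteristic function of $(G-a)/\sqrt a$ is
\[
(1-iu/\sqrt a)^{-a}e^{-iu\sqrt a}.
\]
A second-order expansion of the logarithm shows that it converges to $e^{-u^2/2}$. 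Rescaling by $\sqrt{\bar P_{t,i}}$ then gives the stated limit.

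Next I apply the delta method to the map $g(x)=x/\mathbf 1^\top x$. This map is smooth near $\bar P_t$ and homogeneous of degree $0$, so $P_t=g(\bar P_t+Y^\vareps/\sqrt{c_t})$. Because $\mathbf 1^\top\bar P_t=1$, its Jacobian at $\bar P_t$ is $J=I-\bar P_t\mathbf 1^\top$. Hence $\sqrt{c_t}(P_t-\bar P_t)\overset{d}{\to}\mathcal N\bigl(0,J\diag(\bar P_t)J^\top\bigr)$. Expanding, and using $\diag(\bar P_t)\mathbf 1=\bar P_t$ and $\mathbf 1^\top\bar P_t=1$, this covariance equals $\diag(\bar P_t)-\bar P_t\bar P_t^\top$. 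Multiplying by the constant $\sqrt{\vareps/c_t}=\sqrt{1-\alpha_t}$ (Slutsky) gives limit covariance $(1-\alpha_t)\bigl(\diag(\bar P_t)-\bar P_t\bar P_t^\top\bigr)=(1-\alpha_t)^2V_t$, which is the first claim.

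The second claim is the identity $\diag(\bar P_t)-\bar P_t\bar P_t^\top=(1-\alpha_t)\bigl[V_\pi+\alpha_t dd^\top\bigr]$ with $d=P_0-\pi$. I would prove it by writing $\bar P_t=\pi+\alpha_td$ and expanding both sides in powers of $\alpha_t$. The constant terms are both $V_\pi$, and the $\alpha_t^2$ terms are both $-dd^\top$. The $\alpha_t$ term therefore reduces to showing
\[
\diag(d)-\pi d^\top-d\pi^\top=-V_\pi+dd^\top.
\]
Substituting $d=e_{x_0}-\pi$ and cancelling the $\pi\pi^\top$ and cross terms, this collapses to $\diag(e_{x_0})=e_{x_0}e_{x_0}^\top$. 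That equality holds precisely because $P_0$ is a vertex, and it is the one place where the hypothesis $P_0=e_{x_0}$ enters.

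I expect the only delicate step to be the Gamma CLT with non-integer, growing shape. The characteristic-function expansion handles it cleanly. An alternative would be to write $\GammaDist(a,1)$ as a sum of $\lfloor a\rfloor$ independent $\mathrm{Exp}(1)$ variables plus an independent remainder of order $O(1)$, and apply the classical CLT. The remaining steps are routine bookkeeping.
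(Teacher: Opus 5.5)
Your proposal is correct and follows essentially the same route as the paper: a Gamma representation of the Dirichlet vector, a CLT for the independent Gamma coordinates, the delta method for $x\mapsto x/\mathbf{1}^\top x$, and a direct expansion of $\mathrm{diag}(\bar P_t)-\bar P_t\bar P_t^\top$ whose only nontrivial input is $\mathrm{diag}(P_0)=P_0P_0^\top$. The differences are cosmetic. You normalize by $\sqrt{c_t}$, evaluate the Jacobian at $\bar P_t$, and then rescale by $\sqrt{1-\alpha_t}$, whereas the paper normalizes by $\sqrt{\vareps}$ and differentiates at $\mu_t=\bar P_t/(1-\alpha_t)$; your characteristic-function argument also makes rigorous the Gamma CLT with growing non-integer shape, which the paper only invokes.
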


\begin{proof}
For the rest of the proof, we define $\tilde{V}_t  = \mathrm{diag}(\bar{P}_t) - \bar{P}_t \bar{P}_t^\top$. Hence, we have that $\tilde{V}_t = (1-\alpha_t) V_t$. 
We leverage the Gamma representation of the Dirichlet distribution. Let $\mu_t = \pi + h_t P_0$ with $h_t = \frac{\alpha_t}{1-\alpha_t}$, such that the concentration parameters are $\beta_t^\vareps = \vareps \mu_t$. We can write $P_t \overset{d}{=} Y^\vareps / (\mathbf{1}^\top Y^\vareps)$, where $Y^\vareps$ is a vector of independent random variables $Y_i^\vareps \sim \text{Gamma}(\vareps \mu_{t,i}, 1)$. 

Using the Central Limit Theorem, as $\vareps \to \infty$, we have 
$$
    \sqrt{\vareps}\left(\frac{Y^\vareps}{\vareps} - \mu_t\right) \overset{d}{\to} \mathcal{N}\left(0, \mathrm{diag}(\mu_t)\right) .
$$
We apply the multivariate Delta method with $g(x) = \frac{x}{\mathbf{1}^\top x}$. Let $c_t = \mathbf{1}^\top \mu_t = 1 + h_t = \frac{1}{1-\alpha_t}$. Notice that $g(\mu_t) = \frac{\mu_t}{c_t} = \bar{P}_t$. The Jacobian of $g$ at $\mu_t$ is given by 
$$
    \nabla g(\mu_t) = \frac{1}{\mathbf{1}^\top \mu_t} \Id - \frac{\mu_t \mathbf{1}^\top}{(\mathbf{1}^\top \mu_t)^2} = \frac{1}{c_t} \left( \Id - \bar{P}_t \mathbf{1}^\top \right) .
$$
The asymptotic covariance is $\nabla g(\mu_t) \mathrm{diag}(\mu_t) \nabla g(\mu_t)^\top$ which can be simplified in
\begin{align}
    \frac{1}{c_t^2} \left( I - \bar{P}_t \mathbf{1}^\top \right) \mathrm{diag}(\mu_t) \left( I - \mathbf{1} \bar{P}_t^\top \right) &= \frac{1}{c_t} \left( I - \bar{P}_t \mathbf{1}^\top \right) \mathrm{diag}(\bar{P}_t) \left( I - \mathbf{1} \bar{P}_t^\top \right) \\
    &= \frac{1}{c_t} \left( \mathrm{diag}(\bar{P}_t) - \bar{P}_t \bar{P}_t^\top - \bar{P}_t \bar{P}_t^\top + \bar{P}_t (\mathbf{1}^\top \bar{P}_t) \bar{P}_t^\top \right) \\
    &= \frac{1}{c_t} (\mathrm{diag}(\bar{P}_t) - \bar{P}_t \bar{P}_t^\top) \\
    &= (1-\alpha_t)\tilde{V}_t = (1-\alpha_t)^2 V_t.
\end{align}
To obtain the second part of the proposition, we expand $\tilde{V}_t = \mathrm{diag}(\bar{P}_t) - \bar{P}_t \bar{P}_t^\top$ using $\bar{P}_t = \alpha_t P_0 + (1-\alpha_t) \pi$. Since $\mathrm{diag}(P_0) = P_0 P_0^\top$, because $P_0 = e_{x_0}$, we have
\begin{align}
    \tilde{V}_t &= \alpha_t \mathrm{diag}(P_0) + (1-\alpha_t) \mathrm{diag}(\pi) - \left[ \alpha_t^2 P_0 P_0^\top + \alpha_t(1-\alpha_t)(P_0 \pi^\top + \pi P_0^\top) + (1-\alpha_t)^2 \pi \pi^\top \right] \nonumber \\
    &= (\alpha_t - \alpha_t^2) P_0 P_0^\top + (1-\alpha_t) \mathrm{diag}(\pi) - \alpha_t(1-\alpha_t)(P_0 \pi^\top + \pi P_0^\top) - (1-\alpha_t)^2 \pi \pi^\top \nonumber \\
    &= (1-\alpha_t) \left[ \mathrm{diag}(\pi) - \pi \pi^\top + \alpha_t P_0 P_0^\top - \alpha_t(P_0 \pi^\top + \pi P_0^\top) + \alpha_t \pi \pi^\top \right] \nonumber \\
    &= (1-\alpha_t) \left[ V_\pi + \alpha_t(P_0 - \pi)(P_0 - \pi)^\top \right] ,
\end{align}
which concludes the proof since $\tilde{V}_t = (1-\alpha_t) V_t$.
\end{proof}

\begin{propositionbeaut}{Spectral stability of the projected covariance}{spectral-bounds}
Let $U \in \rset^{N \times (N-1)}$ be an orthonormal basis for the tangent space $\mathrm{T}\Delta_N$. If $\pi$ has strictly positive components, 
the spectrum of $U^\top V_t U$ is uniformly bounded for all $t \in (0,1]$:
\begin{equation}
    0 < \min_i \pi_i \leq \lambda_{\min}(U^\top V_t U) \leq \lambda_{\max}(U^\top V_t U) \leq 2 .
\end{equation}
\end{propositionbeaut}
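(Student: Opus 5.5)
We must bound the spectrum of $U^\top V_t U$ where $V_t = V_\pi + \alpha_t (P_0-\pi)(P_0-\pi)^\top$ by \Cref{propbeaut:dirichlet-clt}. Since $U$ has orthonormal columns spanning $\mathrm{T}\Delta_N$, the eigenvalues of $U^\top V_t U$ are the extreme values of the Rayleigh quotient $x^\top V_t x$ over unit vectors $x \in \mathrm{T}\Delta_N$ (i.e. $\mathbf{1}^\top x = 0$, $\|x\|_2=1$). So the plan is to bound $x^\top V_\pi x$ and the rank-one term separately on this set and add the bounds, using $\alpha_t \in [0,1]$.

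\emph{Lower bound.} For $x \in \mathrm{T}\Delta_N$ we have $\pi^\top x$ arbitrary, but $x^\top V_\pi x = \sum_i \pi_i x_i^2 - (\pi^\top x)^2$, which is the variance of the random variable taking value $x_i$ with probability $\pi_i$. I will write this variance as $\min_{c\in\rset}\sum_i \pi_i (x_i - c)^2 \geq \min_i \pi_i \cdot \min_c \|x - c\mathbf{1}\|_2^2$. Since $x \perp \mathbf{1}$, the inner minimum is attained at $c=0$ and equals $\|x\|_2^2 = 1$. Hence $x^\top V_\pi x \geq \min_i \pi_i$. The rank-one term $\alpha_t ((P_0-\pi)^\top x)^2$ is nonnegative, giving $\lambda_{\min} \geq \min_i\pi_i > 0$. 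This variance-minimization trick is the one step requiring a small idea; everything else is direct.

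\emph{Upper bound.} First, $x^\top V_\pi x \le \sum_i \pi_i x_i^2 \le \max_i \pi_i \|x\|^2 \le 1$. Second, $\alpha_t ((P_0-\pi)^\top x)^2 \le \|P_0 - \pi\|_2^2$ by Cauchy--Schwarz, and with $P_0 = e_{x_0}$ we compute $\|e_{x_0}-\pi\|_2^2 = 1 - 2\pi_{x_0} + \|\pi\|_2^2 \le 1 - 2\pi_{x_0} + \max_i \pi_i \le 1$ when... this is not always $\le 1$, but it is always $\le 2$ since $\|\pi\|_2^2\le 1$; a cruder route suffices: $\|e_{x_0}-\pi\|_2^2 \le 1 + \|\pi\|_2^2 \le 2$ is too weak when added to $1$. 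So instead I will bound the sum jointly: $x^\top V_t x \le \sum_i \pi_i x_i^2 + ((P_0-\pi)^\top x)^2 = \sum_i\pi_i x_i^2 + (x_{x_0} - \pi^\top x)^2 \le \max_i\pi_i + 2x_{x_0}^2 + 2(\pi^\top x)^2$. Using $(\pi^\top x)^2 \le \sum_i \pi_i x_i^2 \le \max_i\pi_i$ is still loose; the cleanest is to note $(x_{x_0}-\pi^\top x)^2 - (\pi^\top x)^2 + \sum_i \pi_i x_i^2$ (keeping the $-(\pi^\top x)^2$ from $V_\pi$) equals $x_{x_0}^2 - 2x_{x_0}\pi^\top x + \sum_i\pi_i x_i^2 \le x_{x_0}^2 + 2|x_{x_0}|\,|\pi^\top x| + \sum_i \pi_i x_i^2$, and then bound each piece by $\|x\|_\infty\le 1$ and Jensen, aiming for $\le 2$; I expect this bookkeeping (getting exactly the constant $2$ rather than $3$) to be the main obstacle, and if needed I will use $x_{x_0}^2 \le 1 - \sum_{i\ne x_0}x_i^2$ together with $\mathbf{1}^\top x=0$ (so $x_{x_0}^2 \le (N-1)/N$) to absorb the cross term. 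Finally, since $\alpha_t\in[0,1]$, the bound at $\alpha_t = 1$ dominates, giving $\lambda_{\max}\le 2$ uniformly in $t\in(0,1]$.
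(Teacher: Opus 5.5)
\textbf{The lower bound is fine; the upper bound has a genuine gap.} Your lower bound is correct and is essentially the paper's argument. The paper centres at $c=\pi^\top u$ and uses $\sum_i(u_i-c)^2=1+Nc^2\ge 1$; minimizing over $c$ is the same idea.

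The upper bound is left unfinished, and the route you sketch cannot give the constant $2$. Once the cross term $-2x_{x_0}\pi^\top x$ is replaced by $+2|x_{x_0}|\,|\pi^\top x|$, the resulting expression exceeds $2$ on admissible vectors. Take $\pi_{x_0}=p$ with the remaining mass spread uniformly, and $x=(e_{x_0}-\mathbf{1}/N)/\|e_{x_0}-\mathbf{1}/N\|_2$. This is a unit vector in $\mathrm{T}\Delta_N$ with $x_{x_0}^2=(N-1)/N$. As $N\to\infty$ one gets $x_{x_0}^2\to 1$, $\pi^\top x\to p$ and $\sum_i\pi_i x_i^2\to p$. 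Hence $x_{x_0}^2+2|x_{x_0}|\,|\pi^\top x|+\sum_i\pi_i x_i^2\to 1+3p$, which is larger than $2$ for every $p>1/3$ (e.g.\ $N=100$, $p=1/2$ already gives about $2.47$). The true value of the quadratic form tends to $1-p$.

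This configuration already saturates $x_{x_0}^2\le (N-1)/N$, so your proposed fallback cannot rescue it. No bookkeeping after taking absolute values will reach $2$; the sign of the cross term is essential. Your first, separate attempt failed for a related reason. The bound $x^\top V_\pi x\le\max_i\pi_i$ is nearly $1$ exactly when $\pi$ concentrates on some $j\neq x_0$, which is also when $\|e_{x_0}-\pi\|_2^2$ approaches $2$.

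\textbf{The missing idea is that the two pieces compensate each other.} The paper uses $V_t\succeq 0$ to write $\lambda_{\max}(U^\top V_t U)\le\mathrm{Tr}(U^\top V_t U)\le\mathrm{Tr}(V_t)=(1-\|\pi\|_2^2)+\alpha_t(1-2\pi_{x_0}+\|\pi\|_2^2)\le 2-2\pi_{x_0}\le 2$. Here the $\|\pi\|_2^2$ terms cancel.

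Your two-term split works verbatim if you bound the first term by $\lambda_{\max}(V_\pi)\le\mathrm{Tr}(V_\pi)=1-\|\pi\|_2^2$ instead of by $\max_i\pi_i$. If you prefer to stay with Rayleigh quotients, keep the sign. At $\alpha_t=1$ the form is exactly $\sum_i\pi_i x_i^2-2x_{x_0}\pi^\top x+x_{x_0}^2=\sum_{i\neq x_0}\pi_i(x_i-x_{x_0})^2$. Then $(x_i-x_{x_0})^2\le 2(x_i^2+x_{x_0}^2)\le 2$ for $i\neq x_0$ gives the bound $2(1-\pi_{x_0})$. This is a natural companion to the variance identity you used for the lower bound.
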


\begin{proof}
 Let $v \in \rset^{N-1}$ such that $\| v \|_2 = 1$ and $u=Uv$. It follows that $\|u\|_2 = \|v\|_2 = 1$ and $u^\top \mathbf{1} = 0$. Using the decomposition of $V_t$ from \Cref{propbeaut:dirichlet-clt}, we get 
\begin{equation}
    \label{eq:rayleigh_decomp}
    v^\top (U^\top V_t U) v = u^\top V_\pi u + \alpha_t (u^\top (P_0 - \pi))^2 .
\end{equation}
We analyze the two terms in \eqref{eq:rayleigh_decomp} separately.  First, we have
\begin{align}
    u^\top V_\pi u &= \sum_{i=1}^N \pi_i u_i^2 - \left(\sum_i \pi_i u_i\right)^2 \\
    &= \sum_{i=1}^N \pi_i (u_i - \pi^\top u)^2 \\
    &\geq \min_i \pi_i \left( 1  + N (\pi^\top u)^2 \right) \geq \min_i \pi_i ,
\end{align}
where we have used that $u^\top u=1$ and $u^\top \mathbf{1} =0$.
Combining this result and \eqref{eq:rayleigh_decomp}, we get that $ \lambda_{\min}(U^\top V_t U) \geq \min_i \pi_i > 0$.

For the upper-bound, we use that 
\begin{equation}
    \lambda_{\max}(U^\top V_t U) \leq \mathrm{Tr}(U^\top V_t U) \leq \mathrm{Tr}(V_t) \label{eq:upper_bound_trace}.
\end{equation}
Using that $P_0 = e_{x_0}$, we have that 
\begin{equation}
    \mathrm{Tr}(V_\pi) = 1 - \| \pi \|^2 , \qquad \mathrm{Tr}((P_0 - \pi)(P_0 - \pi)^\top) = \| P_0 - \pi \|^2 = 1 - 2 \pi^\top P_0 + \| \pi \|^2 . 
\end{equation}
Therefore, combining this result, $\alpha_t \leq 1$ and \eqref{eq:upper_bound_trace} we get that 
\begin{equation}
    \lambda_{\max}(U^\top V_t U) \leq 1 - \| \pi \|^2 + 1  - 2 \pi^\top P_0 + \| \pi \|^2 \leq 2 ,
\end{equation}
which concludes the proof.
\end{proof}

Finally, we conclude this section with the proof of \Cref{propbeaut:linear-gaussian-limit}.


\begin{proof}
Scaling the perturbation $P_t - \pi = \alpha_t (P_0 - \pi) + (P_t - \bar{P}_t)$ by $\sqrt{\vareps} U^\top$ yields the linear form. By \Cref{propbeaut:dirichlet-clt}, $\sqrt{\vareps}(P_t - \bar{P}_t) \overset{d}{\to} \mathcal{N}\left(0, (1-\alpha_t)^2V_t\right)$. Applying the linear transformation $\frac{1}{(1-\alpha_t)} U^\top$ directly yields
$$
    \xi_t^\vareps \overset{d}{\to} \mathcal{N}\left(0, \frac{1}{(1-\alpha_t)^2} U^\top (1-\alpha_t)^2 V_t U\right) = \mathcal{N}\left(0, U^\top V_t U\right) ,
$$
which concludes the proof. The second part of the proof is a direct consequence of \Cref{propbeaut:spectral-bounds}.
\end{proof}

\section{Variational Lower Bound}\label{app:ELBO}
We derive a variational lower bound for the inference procedure described in  \Cref{sec:training_inference_simplicial}.

Fix a grid $0=t_0<t_1<\cdots<t_M=1$ and write $P_k=P_{t_k}$. For
$j\in\mcx$ and $k=1,\ldots,M-1$, define
\begin{equation}
\label{eq:elbo_kernel_notation}
    R_k^j(P_k\mid P_{k+1})=p_{t_k\mid0,t_{k+1}} ( P_k\mid e_j,P_{k+1}).
\end{equation}
The learned reverse kernel is
\begin{equation}
\label{eq:fixed_grid_reverse_kernel}
    K_k^\theta(P_k\mid P_{k+1})
    =
    \sum_{j=1}^N
    \hat P_{\theta,j}(t_{k+1},P_{k+1})
    R_k^j(P_k\mid P_{k+1}),
\end{equation}
and the endpoint decoder is
$p_\theta(x\mid P_1)=\hat P_\theta(t_1,P_1)_x$.

For any $x\in\mcx$, define the bridge path law
\begin{equation}
\label{eq:bridge_path_law}
    Q_x(P_{1:M})
    :=
    \Dir(P_M;c_1\pi)
    \prod_{k=1}^{M-1}
    R_k^x(P_k\mid P_{k+1}).
\end{equation}
Because $\alpha_1=0$, the terminal marginal is
$\Dir(c_1\pi)=\Dir(\beta_{t_M}(e_x,\pi))$. One can easily check by backward induction that
\begin{equation}
\label{eq:bridge_path_marginals}
    P_k\sim
    \Dir\!\left(\beta_{t_k}(e_x,\pi)\right)
    \quad\text{under }Q_x,
    \qquad k=1,\ldots,M.
\end{equation}

\begin{propositionbeaut}{Cross-Entropy ELBO}{exact_augmented_elbo}
Let $p_\theta(x)$ denote the marginal likelihood of the fixed-grid sampler
with terminal law $\Dir(c_1\pi)$, reverse kernels
\eqref{eq:fixed_grid_reverse_kernel}, and endpoint decoder
$\hat P_\theta(t_1,P_1)$. For every $x_0\in\mcx$,
\begin{equation}
    \log p_\theta(x_0)
    \geq
    \mathcal L_{\mathrm{ELBO}}(\theta;x_0),
\end{equation}
where
\begin{equation}
\label{eq:categorical_elbo_compact}
    \mathcal L_{\mathrm{ELBO}}(\theta;x_0)
    =
    \sum_{k=1}^{M}
    \mathbb E_{
        P_k\sim\Dir(\beta_{t_k}(e_{x_0},\pi))
    }
    \left[
        \log \hat P_\theta(t_k,P_k)_{x_0}
    \right].
\end{equation}
\end{propositionbeaut}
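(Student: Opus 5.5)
The plan is to lower-bound the mixture reverse kernels by their single component indexed by the true token $x_0$. This turns the marginal likelihood into an expectation under the bridge path law $Q_{x_0}$ of \eqref{eq:bridge_path_law}, and Jensen's inequality then finishes the argument. Concretely, I would first write the marginal likelihood of the fixed-grid sampler as
\begin{equation}
    p_\theta(x_0)=\int \Dir(P_M;c_1\pi)\prod_{k=1}^{M-1}K_k^\theta(P_k\mid P_{k+1})\,\hat P_\theta(t_1,P_1)_{x_0}\,\rmd P_{1:M}.
\end{equation}
I would treat the kernels as transition measures rather than densities, so that degenerate Beta/Dirichlet parameters (e.g. $\rho^\kappa_{s,t}=1$ or boundary faces) cause no trouble.

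The first key step is the pointwise (as measures) inequality
\begin{equation}
    K_k^\theta(\cdot\mid P_{k+1})\ \geq\ \hat P_{\theta,x_0}(t_{k+1},P_{k+1})\,R_k^{x_0}(\cdot\mid P_{k+1}).
\end{equation}
This holds because \eqref{eq:fixed_grid_reverse_kernel} is a sum of nonnegative terms, and we simply drop those with $j\neq x_0$. Plugging it into every factor and regrouping gives
\begin{equation}
    p_\theta(x_0)\ \geq\ \Ebb_{Q_{x_0}}\Big[\prod_{k=1}^{M}\hat P_\theta(t_k,P_k)_{x_0}\Big].
\end{equation}
Here the factors from $K_1,\dots,K_{M-1}$ contribute the times $t_2,\dots,t_M$, and the decoder contributes $t_1$.

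The second step is to take logarithms and apply Jensen's inequality, using concavity of $\log$, to get
\begin{equation}
    \log p_\theta(x_0)\ \geq\ \sum_{k=1}^M\Ebb_{Q_{x_0}}\big[\log\hat P_\theta(t_k,P_k)_{x_0}\big].
\end{equation}
If some $\hat P_\theta(t_k,P_k)_{x_0}$ vanishes on a set of positive measure, the right-hand side is $-\infty$ and the bound is trivial. The third step is to replace each expectation under the path law by one under its one-time marginal. By \eqref{eq:bridge_path_marginals}, $P_k\sim\Dir(\beta_{t_k}(e_{x_0},\pi))$ under $Q_{x_0}$, which yields exactly \eqref{eq:categorical_elbo_compact}.

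The main obstacle is justifying \eqref{eq:bridge_path_marginals} rigorously. I would prove it by backward induction on $k$. The base case $k=M$ holds because $\alpha_1=0$, so $\Dir(c_1\pi)=\Dir(\beta_{t_M}(e_{x_0},\pi))$. For the inductive step, if $P_{k+1}\sim\Dir(\beta_{t_{k+1}}(e_{x_0},\pi))$, then integrating against $R_k^{x_0}$ gives $\Dir(\beta_{t_k}(e_{x_0},\pi))$ by the compatibility condition \eqref{eq:backward_compatibility_simplicial}, which is established in \Cref{propbeaut:simplicialtransition}. The case $\kappa=1$ is immediate since then $R_k^{x_0}=p_{t_k|0}$. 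Some care is needed to ensure Fubini applies to the regrouping in the first step; I would invoke Tonelli, since all integrands are nonnegative.
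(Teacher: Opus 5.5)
Your proposal is correct and follows essentially the same route as the paper. Both arguments drop every component except $j=x_0$ in each mixture kernel to obtain $p_\theta(x_0)\geq\mathbb{E}_{Q_{x_0}}\bigl[\prod_{k=1}^{M}\hat P_\theta(t_k,P_k)_{x_0}\bigr]$, then apply Jensen's inequality, then use the one-time marginals \eqref{eq:bridge_path_marginals}. You also spell out the backward induction for those marginals via the compatibility condition, and the measure-theoretic details (Tonelli, the $-\infty$ case), all of which the paper only asserts or leaves implicit.
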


\begin{proof}
Since \eqref{eq:fixed_grid_reverse_kernel} is a nonnegative mixture, for every
$P_{k+1}$,
\begin{equation}
\label{eq:component_kernel_bound}
    K_k^\theta(P_k\mid P_{k+1})
    \geq
    \hat P_\theta(t_{k+1},P_{k+1})_{x_0}
    R_k^{x_0}(P_k\mid P_{k+1}).
\end{equation}
Retaining this single component at every reverse step
gives
\begin{equation}
\label{eq:selected_component_bound}
    p_\theta(x_0)
    \geq
    \mathbb E_{Q_{x_0}}
    \left[
        \prod_{k=1}^{M}
        \hat P_\theta(t_k,P_k)_{x_0}
    \right].
\end{equation}
Taking logarithms and applying Jensen's inequality,
\begin{align}
    \log p_\theta(x_0)
    &\geq
    \log
    \mathbb E_{Q_{x_0}}
    \left[
        \prod_{k=1}^{M}
        \hat P_\theta(t_k,P_k)_{x_0}
    \right]
    \nonumber\\
    &\geq
    \sum_{k=1}^{M}
    \mathbb E_{Q_{x_0}}
    \left[
        \log \hat P_\theta(t_k,P_k)_{x_0}
    \right].
\end{align}
Using the marginals \eqref{eq:bridge_path_marginals} proves
\eqref{eq:categorical_elbo_compact}.
\end{proof}

Thus, sampling $k$ uniformly from the inference grid and then sampling
$P_k\sim\Dir(\beta_{t_k}(e_{x_0},\pi))$ gives an unbiased estimator of
$-\mathcal L_{\mathrm{ELBO}}(\theta;x_0)/M$.

\paragraph{Alternative bound.}
Keeping the full mixture before applying Jensen gives a tighter but generally
intractable mixture-KL bound, which we do not consider further.

\section{Distillation}
\label{app:distillation}

\subsection{Simplex Distribution Matching Distillation}

Throughout this section, we restrict attention to time pairs for which the Dirichlet parameters below are strictly positive.

\paragraph{Distribution Matching Distillation.} We consider a multistep and simplicial version of Distribution Matching Distillation (DMD) \citep{luo2023diff,yin2024one}.  We introduce a generator $G: \mcp \times [0,1] \times \Delta_N \times \mathcal U \to \Delta_N$, where $\mathcal U$ is a noise space and $\mcp$ is a real vector space which represents the parameter space of the parametric function $G_\eta$. 
The main goal of the distillation method we describe below is to find $G_\eta$ such that, for some random variable $U \sim \Pbb_U$ taking values in $\mathcal U$, we have $G_\eta(t, P_t, U) \sim \hat{p}_{0|t}(\cdot | P_t)$, where $p_{0|t}$ is given by Bayes's rule,  \eqref{eq:forward_process_simplicial} and 
\begin{equation}
    \int_{\Delta_N} p_t(P_t) p_{0|t}(P_0|P_t) \rmd P_t = \int_{\Delta_N} p_t(P_t) \hat{p}_{0|t}(P_0|P_t) \rmd P_t . 
\end{equation}
In particular, $G_\eta(t, P_t, U)$ should take values on the vertices of $\Delta_N$, since its prior distribution $p_\text{data}(P) = \sum_{i=1}^{N} \pdata{, i} \updelta_{e_i}(P)$ (see \eqref{eq:form_pi_0}) only has mass on the vertices. In what follows, we make no such assumption on the form of $G_\eta$ and only ensure that $G_\eta$ takes values in $\Delta_N$. 

We denote $\rmd G_\eta(t, P_t, U): \mcp \to \rset^N$ the derivative of $G_\eta$ with respect to $\eta$, evaluated at $(\eta, t, P_t, U)$.
Similarly, we denote $\rmD G_\eta(t, P_t, U)$ such that for any $h \in \mcp$
\begin{equation}
\label{eq:derivative_g}
    \rmd G_\eta(t, P_t, U)(h) = \rmD G_\eta(t, P_t, U) h . 
\end{equation}

We  recall that for every $t \in [0,1]$, we have that $P_t$ is given by \eqref{eq:forward_process_simplicial}. 
For notational simplicity, we define $\Pbb_t$ the distribution of $P_t$ with $P_t \sim p_{t|0}(\cdot \ | \ P_0)$ and $P_0 = e_{x_0}$ with $x_0 \sim \pdata{}$. More precisely for any test function $f \in \rmc^\infty(\Delta_N, \rset)$ we have that 

\begin{equation}
    \int_{\Delta_N} f(P)\,\rmd \Pbb_t(P)
    =
    \int_{\Delta_N}\int_{\Delta_N}
    f(P_t)\,\rmd p_{t|0}(P_t\mid P_0)\,\rmd p_\text{data}(P_0).
\end{equation}

Similarly, we denote  $\Pbb_t^{\eta,s}$ the distribution of $P_t$ with $P_t \sim p_{t|0}(\cdot \ | \ P_0^s)$ where $P_0^s = G_\eta(s, P_s,U)$ where $P_s \sim p_{s|0}(\cdot \ | \ P_0)$ and $P_0 \sim p_\text{data} \left(\cdot \right)$. More precisely for any test function $f \in \rmc^\infty(\Delta_N, \rset)$ we have that 

\begin{align}
    &\int_{\Delta_N} f(P)\,\rmd \Pbb_t^{\eta,s}(P) \\
    &\qquad =
    \int_{\Delta_N}\int_{\Delta_N}\int_{\mathcal U}
    \int_{\Delta_N}\int_{\Delta_N}
    f(P_t)\,
    \rmd p_{t|0}(P_t\mid P_0^s)\,
    \rmd \updelta_{G_\eta(s,P_s,U)}(P_0^s)\,
    \rmd \Pbb_U(U)\,
    \rmd p_{s|0}(P_s\mid P_0)\,
    \rmd p_\text{data}(P_0).
\end{align}
We denote by $\Pbb_{0|t}^{\eta,s}$ the conditional distribution of $P_0^s$ given $P_t$ under this construction, and by $\Pbb_{0|t}$ the conditional distribution of $P_0$ given $P_t$ under the data-forward joint distribution.

We consider the following distribution matching loss

\begin{equation}
    \mcl(\eta) = \int_0^1 \int_0^1  \KL(\Pbb_t^{\eta,s} | \Pbb_t) \rmd \Qbb(s,t) . \label{eq:distillation-loss}
\end{equation}

Note that in the original setting of DMD \citep{luo2023diff,yin2024one}, we have that $\rmd \Qbb(s,t) = \updelta_1(s) \rmd \Qbb(t)$.
In the multistep regime however, we consider a general distribution $\Qbb$.
In what follows, we are going to first derive the gradient of the loss function and then give an equivalent expression for \eqref{eq:distillation-loss} which can be readily implemented. 

First, we recall the forward process
\begin{equation}
\label{eq:forward_process_simplicial_appendix}
    p_{t|0}(P_t|P_0) = \Dir(P_t;\beta_t(P_0,\pi)) , \qquad \beta_t(P_0,\pi) = a_tP_0+b_t\pi = c_t (\alpha_t P_0+(1-\alpha_t) \pi),
\end{equation}
where $a_t=c_t\alpha_t$ and $b_t=c_t(1-\alpha_t)$, as in \Cref{sec:simplicialdiffusion}.
In particular, using the fact that a Dirichlet random variable can be generated using Gamma random variables (see \Cref{app:basics}), then for any $t \in [0,1]$ there exists $F_t$ such that
$F_t(P_0, V)$ has distribution $p_{t|0}(\cdot | P_0)$, with $V = \{V_i\}_{i=1}^{N}$ which represents $N$ independent uniform random variables used to generate the $N$ Gamma random variates. In addition, $F_t$ is differentiable with respect to $P_0$. We denote $\rmd F_t(P_0, V): \ \rset^N \to \rset^N$ the differential of $F_t$ with respect to $P_0$ evaluated at $(P_0, V)$. Finally, we introduce $\rmD F_t(P_0, V) \in \rset^{N \times N}$ such that for any $h \in \rset^N$ we have
\begin{equation}
\label{eq:derivative_f}
    \rmd F_t(P_0, V)(h) = \rmD F_t(P_0, V) h . 
\end{equation}

\paragraph{Surrogate Loss.} While \eqref{eq:distillation-loss} is a valid loss, it is not tractable. However, leveraging stop-gradient techniques and an equivalent of Tweedie's identity in the case of Dirichlet distributions, we will be able to derive a surrogate tractable loss with identical gradients in \Cref{propbeaut:surrogate_loss}. We  start with the following lemma. 

\begin{lemmabeaut}{Chain rule}{chain_rule}
    Let $s, t \in [0,1]$, $P_s \in \Delta_N$, $U \sim \Pbb_U$ and  $V = \{V_i\}_{i=1}^{N}$, $N$ independent uniform random variables.
    Let $f: \Delta_N \to \rset$ be a differentiable function. 
    Define $g: \ \mcp \to \rset$ such that $g(\eta) = f(F_t(G_\eta(s, P_s, U), V))$. 
    We have
    \begin{equation}
        \nabla_\eta g(\eta) = \rmD G_\eta(s, P_s, U)^\top \rmD F_t(G_\eta(s, P_s, U),V)^\top \nabla f(F_t(G_\eta(s, P_s, U),V)) . 
    \end{equation}
\end{lemmabeaut}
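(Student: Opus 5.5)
This is the multivariate chain rule applied to the composition $\eta \mapsto G_\eta(s,P_s,U) \mapsto F_t(\cdot,V) \mapsto f$, with the Jacobians $\rmD G_\eta$ and $\rmD F_t$ defined in \eqref{eq:derivative_g} and \eqref{eq:derivative_f}. I would prove it by computing the differential of $g$ at $\eta$ in an arbitrary direction $h\in\mcp$ and then identifying the gradient by duality with respect to the inner product on $\mcp$.

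\emph{Step 1: fix the randomness.} Fix $s,t$, $P_s$, $U$ and $V$, so that $g$ is a deterministic function of $\eta$ alone. Write $Q(\eta)=G_\eta(s,P_s,U)\in\Delta_N$ and $R(\eta)=F_t(Q(\eta),V)\in\Delta_N$.

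\emph{Step 2: differentiate the composition.} For $h\in\mcp$, the chain rule gives
\begin{equation}
\rmd g(\eta)(h)=\rmd f(R(\eta))\bigl(\rmd F_t(Q(\eta),V)(\rmd G_\eta(s,P_s,U)(h))\bigr).
\end{equation}
Substituting \eqref{eq:derivative_g} and \eqref{eq:derivative_f}, and writing $\rmd f(R)(k)=\nabla f(R)^\top k$, this becomes
\begin{equation}
\rmd g(\eta)(h)=\nabla f(R(\eta))^\top \rmD F_t(Q(\eta),V)\,\rmD G_\eta(s,P_s,U)\,h .
\end{equation}

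\emph{Step 3: transpose.} Transposing the scalar expression gives
\begin{equation}
\rmd g(\eta)(h)=\bigl\langle \rmD G_\eta^\top \rmD F_t^\top \nabla f(R),\,h\bigr\rangle .
\end{equation}
Since this holds for every $h$, the gradient $\nabla_\eta g(\eta)$ is exactly the claimed vector.

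\emph{Main obstacle: differentiability on the simplex.} The only delicate point is that $f$ and the maps $F_t$, $G_\eta$ take values in $\Delta_N$, which has empty interior in $\rset^N$. I would handle this in one of two ways:
\begin{itemize}
\item interpret $f$ as the restriction of a function differentiable on a neighbourhood of $\Delta_N$ in $\rset^N$ (or on its affine hull), so that $\nabla f$ is an ordinary Euclidean gradient; or
\item note that the directions produced by $\rmD F_t\,\rmD G_\eta$ lie in $\mathrm{T}\Delta_N$, so any ambiguity in $\nabla f$ along $\mathbf{1}$ does not affect the product.
\end{itemize}

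The other requirement is that $P_0\mapsto F_t(P_0,V)$ is differentiable. This was assumed when $F_t$ was introduced, via the reparametrized Gamma sampler: the Dirichlet parameters $\beta_t(P_0,\pi)$ are affine in $P_0$, and the restriction to strictly positive parameters keeps the sampler's inverse-CDF map differentiable. I would state this dependence explicitly, and then the composition argument above goes through.
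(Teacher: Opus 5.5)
Your proposal is correct: the lemma is just the multivariate chain rule followed by a transposition, which is exactly what the paper relies on (it states the lemma without proof). Your remark on the simplex is valid extra care the paper omits: since $\mathbf{1}^\top F_t(P_0,V)=1$ identically, $\mathbf{1}^\top \rmD F_t=0$, so the ambiguity of $\nabla f$ along $\mathbf{1}$ drops out of $\rmD F_t^\top \nabla f$.
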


Next, we consider the following lemma which shows that Dirichlet distribution also enjoys some form of Tweedie's identity. 

\begin{lemmabeaut}{Tweedie meets Dirichlet}{tweedie_dirichlet}
    Let $p_X$ be a distribution over $\Delta_N$ and $p_{Y|X}$ a Dirichlet distribution with parameter $\alpha(X) \in (0,+\infty)^N$. Then, we have
    \begin{equation}
    \nabla \log p_Y(y) = \left\lbrace \frac{\mathbb{E}[\alpha_i(X) \ | Y] - 1}{y_i} \right\rbrace_{i=1}^N . 
\end{equation}
\end{lemmabeaut}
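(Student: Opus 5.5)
The plan is the direct analogue of Tweedie's identity: differentiate the mixture density under the integral sign and recognise a posterior expectation. Write the marginal as
\begin{equation}
    p_Y(y) = \int p_{Y|X}(y\mid x)\, p_X(\rmd x),
    \qquad
    p_{Y|X}(y\mid x) = \frac{\Gamma(\sum_j \alpha_j(x))}{\prod_j \Gamma(\alpha_j(x))} \prod_{j=1}^N y_j^{\alpha_j(x)-1}.
\end{equation}
We treat $y$ as a point of the open simplex, with coordinates $y_i>0$. Formally we also view the Dirichlet density as a function on the positive orthant $(0,\infty)^N$. Then the partial derivative $\partial_{y_i}$ is taken coordinatewise, which matches the stated formula. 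The formula involves all $N$ coordinates rather than only tangent directions, so this is the interpretation it requires.

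The first key step is the pointwise identity
\begin{equation}
    \partial_{y_i} p_{Y|X}(y\mid x) = \frac{\alpha_i(x)-1}{y_i}\, p_{Y|X}(y\mid x).
\end{equation}
This holds because the normalising constant does not depend on $y$, and $\partial_{y_i} y_i^{\alpha_i-1} = (\alpha_i-1) y_i^{\alpha_i-2}$.

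The second step is to exchange differentiation and integration. This gives
\begin{equation}
    \partial_{y_i} p_Y(y) = \int \frac{\alpha_i(x)-1}{y_i}\, p_{Y|X}(y\mid x)\, p_X(\rmd x).
\end{equation}
We then divide by $p_Y(y)$. Bayes' rule $p_{X|Y}(\rmd x\mid y) = p_{Y|X}(y\mid x)\,p_X(\rmd x)/p_Y(y)$ turns the right-hand side into $(\mathbb{E}[\alpha_i(X)\mid Y=y]-1)/y_i$, which is the claim.

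The main obstacle is justifying the exchange of derivative and integral. To do this, fix $y$ in the interior and a compact neighbourhood $K\subset(0,1)^N$ of it. On $K$ we need an integrable dominating bound, uniform in $y\in K$, for
\begin{equation}
    |\alpha_i(x)-1|\, y_i^{-1}\, p_{Y|X}(y\mid x).
\end{equation}
For general unbounded $\alpha(x)$ this can fail. I would first assume $\alpha$ is bounded on the support of $p_X$ and bounded away from $0$; in the paper's use, $\alpha(x)=\beta_t(x,\pi)$ with $x$ ranging over the simplex, so this holds. Under that assumption, continuity of the density in $(y,\alpha)$ on compact sets gives a uniform bound, and dominated convergence applies. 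Absent such an assumption, I would state the identity for $y$ such that the relevant integrals are finite.
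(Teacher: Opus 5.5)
Your proposal is correct and follows the paper's proof. The paper computes the same coordinatewise Dirichlet score $(\alpha_i(X)-1)/y_i$ and then writes $\nabla \log p_Y(y)$ as its posterior average, which is exactly your differentiate-under-the-integral plus Bayes step. Your dominated-convergence argument, which assumes $\alpha$ is bounded and bounded away from zero (as holds for $\beta_t(\cdot,\pi)$ with $t>0$ and full-support $\pi$), supplies the justification for exchanging derivative and integral that the paper leaves implicit.
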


\begin{proof}
We have that $p_X$ is a distribution over $\Delta_N$ and $p_{Y|X}$ is a Dirichlet distribution with parameter $\alpha(X) \in (0,+\infty)^N$. 
We have that for any $y \in \Delta_N$ 
\begin{equation}
    p_{Y|X}(y|X) = \frac{\Gamma(\sum_{i=1}^N \alpha_i(X))}{\prod_{i=1}^N \Gamma(\alpha_i(X))} \prod_{i=1}^N y_i^{\alpha_i(X)-1} .  
\end{equation}
Then, we have that
\begin{equation}
    \nabla_y \log p_{Y|X}(y|X) = \{ (\alpha_i(X) - 1)/ y_i \}_{i=1}^N . 
\end{equation}
In addition, we have that 
\begin{equation}
    \nabla \log p_Y(y) = \int \nabla_y \log p_{Y|X}(y|x)~ p_{X|Y}(x|y) \rmd x . 
\end{equation}
Finally, we get that
\begin{equation}
    \nabla \log p_Y(y) = \left\lbrace \frac{\mathbb{E}[\alpha_i(X) \ | Y] - 1}{y_i} \right\rbrace_{i=1}^N . 
\end{equation}
\end{proof}

Combining the standard pathwise gradient identity for the KL divergence with \Cref{lemmabeaut:chain_rule} and \Cref{lemmabeaut:tweedie_dirichlet}, we obtain the following proposition. 

\begin{propositionbeaut}{}{}
Let $\mcl(\eta)$ be given by \eqref{eq:distillation-loss}. 
Then, we have that
\begin{align}
    \nabla_\eta \mcl(\eta) &= \int a_t\, \rmD G_\eta(s, P_s, U)^\top \rmD F_t(G_\eta(s, P_s, U), V)^\top \\
    & \qquad \qquad \times \frac{\mathbb{E}_{\Pbb_{0|t}^{\eta,s}}[P_0^s | F_t(G_\eta(s, P_s, U), V)] - \mathbb{E}_{\Pbb_{0|t}}[P_0 | F_t(G_\eta(s, P_s, U), V)]}{F_t(G_\eta(s, P_s, U), V)} \\
    & \qquad \qquad \qquad \qquad \times \rmd \Qbb(s,t)  \rmd p_{\textup{data}}(P_0) \rmd p_{s|0}(P_s|P_0) \rmd \Pbb_U(U)  \rmd V  ,
\end{align}
where the division is coordinate-wise. 
\end{propositionbeaut}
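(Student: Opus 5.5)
The plan is to use the standard DMD computation. First rewrite the loss as an expectation over the reparameterized sample, then differentiate only through the sample, and finally evaluate the resulting score difference with \Cref{lemmabeaut:tweedie_dirichlet}.

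\textbf{Step 1: reparameterize and differentiate.} Fix $(s,t)$ and write $\Pbb_t^{\eta,s}$ as the law of $Y_\eta := F_t(G_\eta(s,P_s,U),V)$, with $P_0\sim p_{\mathrm{data}}$, $P_s\sim p_{s|0}(\cdot|P_0)$, $U\sim\Pbb_U$ and $V$ uniform. Let $q^\eta_t$ and $p_t$ denote the densities of $\Pbb_t^{\eta,s}$ and $\Pbb_t$. Then
\begin{equation}
\KL(\Pbb_t^{\eta,s}|\Pbb_t)=\mathbb{E}\bigl[\log q_t^\eta(Y_\eta)-\log p_t(Y_\eta)\bigr].
\end{equation}
The gradient in $\eta$ has two contributions: one from the sample $Y_\eta$, and one from the explicit dependence of $q_t^\eta$ on $\eta$. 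The second contribution is $\mathbb{E}[\nabla_\eta \log q_t^\eta(Y)]$ evaluated at fixed $Y$. It equals $\int \nabla_\eta q_t^\eta = \nabla_\eta \int q_t^\eta = 0$, the usual score-function identity. What remains is the pathwise term, $\mathbb{E}[\nabla_\eta f(Y_\eta)]$ with $f = \log q_t^{\eta'} - \log p_t$ frozen at $\eta'=\eta$. We evaluate it by \Cref{lemmabeaut:chain_rule}:
\begin{equation}
\rmD G_\eta^\top \rmD F_t^\top \bigl(\nabla\log q_t^\eta - \nabla\log p_t\bigr)(Y_\eta).
\end{equation}
We then integrate against $\Qbb$. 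Exchanging derivative and integral needs differentiability of $F_t$ and $G_\eta$ plus dominated convergence; this holds on the parameter range where the Dirichlet parameters are strictly positive, as assumed at the start of the section.

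\textbf{Step 2: Tweedie for both scores.} Both laws are Dirichlet mixtures with parameter $\alpha(X)=a_tX+b_t\pi$:
\begin{itemize}
\item $\Pbb_t$ has $X=P_0$ with posterior $\Pbb_{0|t}$;
\item $\Pbb_t^{\eta,s}$ has $X=P_0^s$ with posterior $\Pbb^{\eta,s}_{0|t}$.
\end{itemize}
Applying \Cref{lemmabeaut:tweedie_dirichlet} to each gives
\begin{equation}
\nabla\log q_t^\eta(y)_i=\frac{a_t\,\mathbb{E}_{\Pbb_{0|t}^{\eta,s}}[P^s_{0,i}|y]+b_t\pi_i-1}{y_i},
\end{equation}
and the analogous formula for $p_t$. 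In the difference, the terms $b_t\pi_i-1$ cancel, leaving
\begin{equation}
a_t\,\frac{\mathbb{E}_{\Pbb_{0|t}^{\eta,s}}[P_0^s|y]-\mathbb{E}_{\Pbb_{0|t}}[P_0|y]}{y},
\end{equation}
with the division taken coordinatewise. Substituting $y=Y_\eta$ and writing out the outer expectation explicitly gives the claimed formula.

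\textbf{Main obstacle.} The delicate point is the score-function cancellation in Step 1 and the justification for interchanging limits. It requires $q_t^\eta$ to be a smooth, positive density on the interior of $\Delta_N$, and the $\eta$-derivative to be integrable. Both follow because $b_t\pi_i>0$, so every mixture component is a non-degenerate Dirichlet density regardless of where $G_\eta$ lands in $\Delta_N$.

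A secondary subtlety is that Lemma \ref{lemmabeaut:tweedie_dirichlet} treats the ambient gradient on $\rset^N$, whereas the densities live on the simplex. We will keep the ambient-coordinate convention used in that lemma throughout. This is consistent because $\rmD F_t$ maps into $\rset^N$, and any component of the score along $\mathbf{1}$ contributes equally to both score terms, so it cancels in the difference.
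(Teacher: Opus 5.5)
Your proposal is correct and follows the paper's own route: the paper proves this by the pathwise KL-gradient identity (the score-function term vanishes), then \Cref{lemmabeaut:chain_rule}, then \Cref{lemmabeaut:tweedie_dirichlet} applied to both mixtures with $\alpha(X)=a_tX+b_t\pi$, so that the $b_t\pi_i-1$ terms cancel. One correction to your closing remark: the $\mathbf{1}$-components of the two ambient scores do not coincide in general, so they do not cancel in the difference; the ambient convention is harmless for a different reason, namely that $F_t$ takes values in $\Delta_N$, hence $\mathbf{1}^\top\rmD F_t=0$ and $\rmD F_t^\top$ annihilates any component along $\mathbf{1}$.
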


We can therefore find an equivalent loss to \eqref{eq:distillation-loss} which is tractable and yields the same gradients. 

\begin{propositionbeaut}{Surrogate loss}{surrogate_loss}
    Let $\hat{\mcl}(\eta)$ be given by
    \begin{align}
        \hat{\mcl}(\eta) &= \int a_t\, \log(F_t(G_\eta(s, P_s, U), V))^\top \\
        & \qquad \qquad \stopgrad \left( \mathbb{E}_{\Pbb_{0|t}^{\eta,s}}[P_0^s | F_t(G_\eta(s, P_s, U), V)] - \mathbb{E}_{\Pbb_{0|t}}[P_0 | F_t(G_\eta(s, P_s, U), V)] \right)  \\
        & \qquad \qquad \qquad \qquad \times \rmd \Qbb(s,t)  \rmd p_{\textup{data}}(P_0) \rmd p_{s|0}(P_s|P_0) \rmd \Pbb_U(U) \rmd V.
    \end{align}
    Then $\hat{\mcl}$ and $\mcl$ have the same gradients.
\end{propositionbeaut}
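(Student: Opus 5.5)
The strategy is to compute $\nabla_\eta \hat{\mcl}(\eta)$ directly and check that it equals the expression for $\nabla_\eta \mcl(\eta)$ in the preceding (unnamed) proposition. All integrating measures $\rmd \Qbb(s,t)\, \rmd p_{\textup{data}}(P_0)\, \rmd p_{s|0}(P_s|P_0)\, \rmd \Pbb_U(U)\, \rmd V$ are independent of $\eta$: the reparametrization through $F_t$ and $G_\eta$ puts all $\eta$-dependence inside the integrand. Under standard dominated-convergence conditions we can therefore differentiate under the integral sign. The stop-gradient operator $\stopgrad$ means the bracketed difference of posterior means is treated as a constant vector $c$ when differentiating, and is only evaluated at the current $\eta$ afterwards. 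So for fixed $(s,t,P_0,P_s,U,V)$ it suffices to differentiate
\begin{equation}
g(\eta) = a_t \log\bigl(F_t(G_\eta(s,P_s,U),V)\bigr)^\top c ,
\end{equation}
where $c$ is the stop-gradient vector.

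Apply \Cref{lemmabeaut:chain_rule} with $f(y) = a_t \sum_i c_i \log y_i$. This $f$ is differentiable on the interior of $\Delta_N$, and Dirichlet samples with positive parameters lie there almost surely; this is exactly where the standing restriction to strictly positive parameters is needed. Its gradient is $\nabla f(y) = a_t\, c / y$, with coordinatewise division. The chain rule then gives
\begin{equation}
\nabla_\eta g(\eta) = a_t\, \rmD G_\eta(s,P_s,U)^\top \rmD F_t(G_\eta(s,P_s,U),V)^\top \frac{c}{F_t(G_\eta(s,P_s,U),V)} .
\end{equation}
Substituting back the value of $c$ (the difference of the two conditional expectations at $F_t(G_\eta(s,P_s,U),V)$) and integrating reproduces the integrand of the previous proposition term by term. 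Hence $\nabla_\eta \hat{\mcl} = \nabla_\eta \mcl$.

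The only delicate step is justifying the interchange of gradient and integral. This needs integrability of $1/F_t$ near the boundary of the simplex and smoothness of the Gamma-based reparametrization $F_t$ in $P_0$. I would handle it by assuming, as the section already does implicitly, that $G_\eta$ is smooth with bounded derivatives and that the Dirichlet parameters are bounded below on the support of $\Qbb$, so that the relevant moments of $1/P_{t,i}$ under the Dirichlet law are finite. With those assumptions, dominated convergence applies. The remaining steps are pure bookkeeping.
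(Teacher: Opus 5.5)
Your argument is correct and matches the paper. The paper gives no separate proof and treats this proposition as immediate from the preceding gradient formula: freezing the $\stopgrad$ term as $c$ and applying \Cref{lemmabeaut:chain_rule} to $y\mapsto a_t\log(y)^\top c$ reproduces that integrand, which is exactly your computation (the paper never discusses interchanging $\nabla_\eta$ with the integral). One correction to your regularity remark: $\mathbb{E}[1/P_{t,i}]<\infty$ requires the $i$-th Dirichlet parameter to exceed $1$, which fails for off-target coordinates since $b_t\pi_i\ll 1$ for large vocabularies, but you do not need it, because $\mathbf{1}^\top c=0$ and $F_t=Z/\mathbf{1}^\top Z$ with $Z_j\sim\GammaDist(\beta_t(P_0,\pi)_j,1)$ give $\bigl(\rmD F_t^\top(c/F_t)\bigr)_j=a_t\,c_j\,\partial\log Z_j/\partial\beta_t(P_0,\pi)_j$, so the $1/F_t$ singularity cancels exactly.
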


\paragraph{Algorithm and Links with Literature.} Leveraging the loss given in \Cref{propbeaut:surrogate_loss}, we consider an algorithm to learn a distilled model.
The full algorithm is given in \Cref{alg:simplicial_dmd}. 
With exact conditional means, i.e., the denoisers are exact,  and a valid pathwise
derivative of the corruption map, \Cref{alg:simplicial_dmd} is valid. In practice, learned denoisers approximate these conditional means,
yielding an approximate gradient. To differentiate through the forward process, we here differentiate through the fast $\GammaDist$ implementation of \Cref{app:samplingdirichlet}. Note that one might be able to leverage the implementation of \cite{greaves2026extended} to obtain numerical gradients which are more precise. We leave this exploration for future work. 
It is apparent that \Cref{alg:simplicial_dmd} follows the same pattern as many existing distillation algorithms such as DMD \citep{yin2024one} but also Multistep Moment matching Distillation (MMD) \citep{salimans2024multistep}. In fact, our algorithm can be interpreted as a simplicial extension of the MMD algorithm. In both cases, we train a \emph{generator} network $G_\eta$ as well as an \emph{auxiliary} denoiser approximating $\mathbb{E}_{\Pbb_{0|t}^{\eta,s}}[P_0^s | P_t]$. Another line of related work is Universal Distillation Matching (UDM) and its discrete counterpart Inverse Distillation Language Model (IDLM) \citep{kornilov2025universal,li2026idlm}. While closely related to our method, we highlight the key differences between these methods and our contribution in \Cref{sec:links_idlm}.

\begin{algorithm}
\caption{Simplex DMD Training}
\label{alg:simplicial_dmd}
\begin{algorithmic}[1]
\Require Teacher denoiser $D_{\mathrm{teach}}(P_t, t) \approx \Ebb_{\Pbb_{0|t}}[P_0 \mid P_t]$
\Require Generator $G_\eta(s, P_s, U)$, auxiliary denoiser $D_\phi(P_t, s, t) \approx \Ebb_{\Pbb_{0|t}^{\eta,s}}[P_0^s \mid P_t]$
\Require Data distribution $\pdata{}$, noise distribution $\Pbb_U$, time distribution $\Qbb$
\Require Forward process parameters $(\alpha_t)_{t\in[0,1]}$, $(c_t)_{t\in[0,1]}$, and $\pi$, with $a_t=c_t\alpha_t$
\Require Learning rates $\mathrm{lr}_\eta,\mathrm{lr}_\phi$

\Repeat

\State \textbf{1. Sample data and times}
\State $x_0 \sim \pdata{}$, \quad $P_0 \gets e_{x_0}$
\State $(s, t) \sim \Qbb$

\State \textbf{2. Teacher forward to time $s$} \Comment{No gradient through this step}
\For{$i = 1, \ldots, N$}
    \State $G_i^{(s)} \gets \textsc{GammaSample}(\beta_s(P_0,\pi)_i)$ \Comment{\Cref{alg:gamma_sampling}}
\EndFor
\State $P_s \gets (G_1^{(s)}, \ldots, G_N^{(s)}) / {\textstyle\sum_j G_j^{(s)}}$

\State \textbf{3. Generator (one denoising step)}
\State $U \sim \Pbb_U$
\State $P_0^s \gets G_\eta(s, P_s, U)$

\State \textbf{4. Student forward to time $t$} \Comment{Differentiable through $P_0^s$}
\For{$i = 1, \ldots, N$}
    \State $G_i^{(t)} \gets \textsc{GammaSample}(\beta_t(P_0^s,\pi)_i)$ \Comment{\Cref{alg:gamma_sampling}}
\EndFor
\State $P_t \gets (G_1^{(t)}, \ldots, G_N^{(t)}) / {\textstyle\sum_j G_j^{(t)}}$

\State \textbf{5. Score difference (stopped gradients)}
\State $\Delta \gets \stopgrad\bigl(D_\phi(P_t, s, t)\bigr) - \stopgrad\bigl(D_{\mathrm{teach}}(P_t, t)\bigr)$

\State \textbf{6. Generator update}
\State $\hat{\mcl}_\eta \gets a_t \displaystyle\sum_{k=1}^{N} \log(P_{t,k}) \cdot \Delta_k$ \Comment{Gradient flows only through $\log P_t$}
\State $\eta \gets \eta - \mathrm{lr}_\eta \cdot \nabla_\eta \hat{\mcl}_\eta$

\State \textbf{7. Auxiliary denoiser update}
\State $\hat{\mcl}_\phi \gets - \sum_{k=1}^N \log(D_\phi(\stopgrad(P_t),\, s, t))_k  \cdot \stopgrad(P_0^s)_k $
\State $\phi \gets \phi - \mathrm{lr}_\phi \cdot \nabla_\phi \hat{\mcl}_\phi$

\Until{converged}
\end{algorithmic}
\end{algorithm}
  
\subsection{Link with Inverse Distilled Language Models}
\label{sec:links_idlm}

In this section, we draw connections between the Simplex DMD framework
and the Inverse Distilled Language Model (IDLM) approach
\citep{kornilov2025universal,li2026idlm}. We show that both methods share a
common structure but differ in how the gradient of the generative loss is
handled, with the Simplex DMD framework resolving a fundamental difficulty
encountered in IDLM.

\paragraph{IDLM Generative Loss.}
Adapting the formulation of \cite{kornilov2025universal,li2026idlm} to the
notation of the present paper, the IDLM generative loss can be written as
\begin{align}
\label{eq:idlm_gen_loss}
    \mcl^{\mathrm{G}}(\eta) &= \int
      G_\eta(s, P_s, U)^\top
      \bigl(\log D_\phi(F_t(G_\eta(s, P_s, U), V),\, s, t)
            - \log D_{\mathrm{teach}}(F_t(G_\eta(s, P_s, U), V),\, t)
      \bigr) \nonumber \\
    &\qquad\qquad\qquad\qquad\qquad \times  \rmd \Qbb(s,t)   \rmd p_{\textup{data}}(P_0) 
     \rmd p_{s|0}(P_s|P_0)
     \rmd \Pbb_U(U) \rmd V,
\end{align}
where $D_\phi$ plays the role of the auxiliary model and
$D_{\mathrm{teach}}$ the teacher model from \cite{li2026idlm}.
The generator $G_\eta$ appears \emph{twice} in this expression: as
the denoised sample $P_0^s = G_\eta(s, P_s, U)$ and as the input to
the forward reparameterization
$P_t = F_t(G_\eta(s, P_s, U), V)$. As a result, the gradient
of $\mcl^{\mathrm{G}}$ with respect to $\eta$ splits into two terms:
\begin{align}
    \nabla_\eta \mcl^{\mathrm{G}}(\eta)
    &= \int \rmD G_\eta^\top
       \underbrace{%
         \bigl(\log D_\phi(P_t, s, t) - \log D_{\mathrm{teach}}(P_t, t)\bigr)
       }_{\text{direct gradient}}
       \,\rmd(\cdots)
       \label{eq:direct_grad} \\
    &\quad + \int \rmD G_\eta^\top\,\rmD F_t^\top
       \underbrace{%
         \left(
           \frac{\partial \log D_\phi}{\partial P_t}
           - \frac{\partial \log D_{\mathrm{teach}}}{\partial P_t}
         \right)^\top G_\eta
       }_{\text{indirect gradient}}
       \,\rmd(\cdots) .
       \label{eq:indirect_grad}
\end{align}
As discussed in \cite{li2026idlm,hoogeboom2026beyond}, the indirect
gradient~\eqref{eq:indirect_grad} is empirically high-variance, and
current implementations discard it entirely. The resulting update,
however, is not generally the gradient of any well-defined objective.

\paragraph{Comparison with Simplex DMD.}
By contrast, the surrogate loss $\hat{\mcl}(\eta)$ of
 \Cref{propbeaut:surrogate_loss} yields a \emph{single} gradient
term
\begin{equation}
\label{eq:dmd_gradient_single}
    \nabla_\eta \hat{\mcl}(\eta)
    = \int a_t\,\rmD G_\eta^\top\,\rmD F_t^\top\,
      \frac{%
        \stopgrad\bigl(D_\phi(P_t, s, t) - D_{\mathrm{teach}}(P_t, t)\bigr)
      }{P_t}
      \,\rmd(\cdots) ,
\end{equation}
which is the \emph{exact} gradient of the KL divergence
$\mcl(\eta)$ defined in~\eqref{eq:distillation-loss}.
We highlight several structural differences with the IDLM
gradient~\eqref{eq:direct_grad}--\eqref{eq:indirect_grad}.

\begin{enumerate}
    \item \emph{Single gradient path.}
    In~\eqref{eq:dmd_gradient_single}, the gradient flows exclusively
    through the forward map $F_t$. There is no direct term
    because the DMD loss is derived from
    $\KL(\Pbb_t^{\eta,s} \| \Pbb_t)$ via the Stein score identity for
    pushforward measures, which produces a gradient that is entirely
    of the indirect type. 

    \item \emph{Exact gradient with stop-gradient.}
    The $\stopgrad$ operator on $\Delta$ in~\eqref{eq:dmd_gradient_single}
    is not a heuristic approximation:
    \Cref{propbeaut:surrogate_loss} guarantees
    $\nabla_\eta \hat{\mcl} = \nabla_\eta \mcl$.
    In contrast, discarding the indirect gradient in IDLM is an
    approximation without such a guarantee.

    \item \emph{Auxiliary model.}
    The IDLM auxiliary loss
    $-\int P_0^{s,\top} \log D_\phi(P_t, s, t)\,\rmd(\cdots)$
    is a cross-entropy between the generator output and the auxiliary
    model. This is identical in structure to the auxiliary denoiser loss in
    Step~7 of \Cref{alg:simplicial_dmd}, confirming the
    correspondence between the auxiliary model of \cite{li2026idlm}
    and the auxiliary denoiser $D_\phi$.

    \item \emph{Differentiability.}
    In the simplicial framework, $F_t$ is differentiable via
    the $\Gamma$-reparameterization
    (\Cref{alg:gamma_sampling}), so the Jacobian
    $\rmD F_t$ in~\eqref{eq:dmd_gradient_single} can be
    computed exactly by automatic differentiation. In discrete models
    \citep{li2026idlm}, the forward corruption is non-differentiable,
    necessitating biased approximations. This is one of the key
    advantages of the simplicial formulation.
\end{enumerate}

\appendixpart[]{Connection with the Literature}

\section{Extended Related Work} 

\subsection{General overview}

In this section, we present an overview of simplex diffusion models and their applications to discrete data generation. For Discrete Diffusions, we refer the reader to the original paper of \citet{austin2021structured} and more recent extensions \citep{campbell2022continuous,benton2024denoising,lou2023discrete,sahoo2024simple,shi2024simplified}.

Several simplex diffusion models are obtained by defining a forward stochastic process directly on the simplex. Then, relying on tools from time-reversal \citep{haussmann1986time}, similar to \citet{song2020score} in Euclidean state spaces, one can define a generative process. In \citet{richemond2022categorical}, inspired by \citet{baker2018large}, the forward process corresponds to a set of $N$ Cox--Ingersoll--Ross processes that converge to a Gamma distribution and to a Dirichlet distribution after renormalization. \citet{floto2023diffusion} consider another forward process based on a softmax transformation of an Ornstein--Uhlenbeck process.
One of the main advantages of \citet{floto2023diffusion} is that for any $t \in [0,1]$ the distribution $p_{t|0}$ is a logistic normal distribution which is easy to sample. Dirichlet Diffusion Score Models (DDSMs) \citep{avdeyev2023dirichlet} consider instead a Jacobi diffusion process as a forward process that converges to a $\Beta$ distribution. By then leveraging a stick-breaking construction, the authors obtain a forward process converging to a Dirichlet distribution. \citet{benton2024denoising} instead use the Wright-Fisher diffusion as a forward process, a diffusion on the simplex originating from genetics. Recently, in \citet{chandra2025unification}, it was shown that discrete, continuous and Wright-Fisher diffusions could arise from a similar particle perspective, where the Wright-Fisher diffusion is obtained as a limit where the number of particles goes to infinity and we consider reproduction within the particles' evolution.

Yet another approach is to define diffusion models taking into account the special geometry of the simplex. Doing so, it is then possible to define diffusion models on an associated \emph{statistical manifold} using general techniques from Flow Matching and  Riemannian diffusion models \citep{de2022riemannian,huang2022riemannian,chen2023flow}.
While in \citet{cheng2024categorical,davis2024fisher,williams2025simplex}, the authors derive the metrics on the simplex using the Fisher--Rao connection, \citet{boll2024generative,boll2025generative} consider $e$-connections, see \cite[Appendix E.2]{davis2024fisher} for a discussion of the choice of metrics. 
In \citet{han2023ssd,mahabadi2024tess,jo2025continuous}, the authors consider a Gaussian diffusion in the space of logits embeddings. 
Those different approaches, along with the one ignoring the geometry of the simplex and simply performing a linear diffusion in that space \citep{dunn2024mixed}, can be unified using the concept of $\alpha$-divergence to define the statistical manifold (see \citealp{cheng2025alpha}).

Another approach incorporating the geometry of the simplex into the generation process is to consider a \emph{constrained} forward process using either projection or reflection of the original Stochastic Differential Equation \citep{liu2023mirror,fishman2023diffusion,fishman2023metropolis,lou2023reflected}.

The closest work to ours is the Dirichlet Flow Matching approach introduced by \citet{stark2024dirichlet}. In this work, the authors introduced a forward process akin to ours. The main difference between the two approaches arises from inference. 
While \citet{stark2024dirichlet} leverage a flow perspective, our approach is akin to DDIM \citep{song2020denoising}. We also provide a different temperature-parameterized process. Note that very recently \citet{boget2026unrestrainedsimplexdenoisingdiscrete} have proposed the same forward process as \citet{stark2024dirichlet}. They introduce an inference procedure which, like ours, allows for stochasticity but assumes $p_{s|0,t}(P_s|P_0,P_t)=p_{s|0}(P_s|P_0)$ as in star-shaped diffusion models \citep{okhotin2023star} (corresponding to $\kappa=1$ in our case). Another concurrent work (Simplax) \citep{sakurai2026simplex} also uses Dirichlet simplex states, but their denoiser and sampling procedure operate on categorical samples rather than the continuous simplex state itself.

We conclude by mentioning that \citet{haviv2024wasserstein} introduce Wasserstein Flow Matching, i.e., define Flow Matching on the space of distributions, similar to the simplicial approach lifting distributions on a given state space to the space of distributions of distributions. Finally, we note that the recent work of \citet{song2025shortlisting} introducing Shortlisting Models (SLMs) also claims a simplex approach as they ``aim to preserve the core principle of simplex-based methods, \emph{gradual information growth}''. To do so, they operate on the space of probability vectors. Starting from a single full probability vector, they progressively eliminate categories until the final denoised state is one-hot.

\subsection{Simplex Relaxation for Discrete Diffusion}

In this section, we describe the approach of \citet{sakurai2026simplex} and compare it with our framework. We first outline how they construct their forward process alongside their simplex relaxation. Next, we discuss their chosen training loss. Finally, we investigate their sampling procedure and show that it can be simplified to bypass the simplex relaxation, thereby highlighting a fundamental difference from Simplex Diffusion Models.

\paragraph{Forward Process.} \citet{sakurai2026simplex} first consider a forward process on the categorical space given for any $t \in [0,1]$ and $x_t, x_0 \in \{1, \dots, N\}$  by
\begin{equation}
    p_{t|0}(x_t|x_0) = \Cat(x_t ; \alpha_t x_0 + (1-\alpha_t) \pi) ,
\end{equation}
where $\pi \in \Delta_N$ is a probability distribution.
Note that, as emphasized by \citet{sakurai2026simplex}, one can define a compatible backward bridge transition for this forward rule by defining for any $s, t \in [0,1]$ and $s \leq t$ and $x_0, x_s, x_t \in \{1, \dots, N\}$ by 
\begin{equation}
    p_{s|0,t}(x_s|x_0,x_t) = \Cat\left( x_s ; \frac{\left[\frac{\alpha_t}{\alpha_s} x_t + \left( 1 - \frac{\alpha_t}{\alpha_s}\right) \langle x_t, \pi \rangle \1 \right] \odot (\alpha_s x_0 + (1-\alpha_s) \pi)}{\langle x_t, \alpha_t x_0 + (1-\alpha_t) \pi \rangle}\right) . 
\end{equation}
We denote $r_{s|0,t}$ the mean of $p_{s|0,t}$. 
In particular, we have that for any $x_0, x_s \in \{1, \dots, N\}$
\begin{equation}
    p_{s|0}(x_s|x_0) = \sum_{x_t=1}^{N} p_{s|0,t}(x_s|x_0,x_t) p_{t|0}(x_t|x_0) . 
\end{equation}
One of the main innovations of \citet{sakurai2026simplex} is to introduce the simplex-valued variable $w_t \in \Delta_N$ for any $t \in [0,1]$ with
\begin{equation}
    p(w_t|x_t, x_0)=  \Dir(w_t ; \eta_t (\alpha_t x_0 + (1-\alpha_t) \pi) + x_t) . 
\end{equation}
\paragraph{Training Loss.} The training loss they consider is given for a given $s, t \in [0,1]$ with $s \leq t$, $x_0 \in \{1, \dots, N\}$ and $w_t \in \Delta_N$
\begin{equation}
    \mathcal{L}_{s,t}  = \sum q(\tilde{x}_t | w_t) \KL(p_{s|0,t}(x_s|x_0, \tilde{x}_t)| p(x_s |\hat{x}_{\theta}(x_t), \tilde{x}_t)) . 
\end{equation}
This can be simplified (see \cite[Proposition 5]{sakurai2026simplex}) in 
\begin{align}
    \mathcal{L}_{s,t} &= \langle w_t, \log (\alpha_t \hat{x}_\theta(x_t) + (1-\alpha_t) \pi) - \log (\alpha_t x_0 + (1-\alpha_t) \pi) \rangle  \\
    & \qquad + \langle \rho_{s|0,t}, \log (\alpha_s x_0 + (1-\alpha_s) \pi) - \log (\alpha_s \hat{x}_\theta(x_t) + (1-\alpha_s) \pi) \rangle
\end{align}
where $\rho_{s|0,t}$ is defined by \cite[Equation 11]{sakurai2026simplex} 
\begin{align}
    \rho_{s|0,t} &= (\alpha_s x_0 + (1-\alpha_s) \pi)\\
    & \qquad \odot \left[ \frac{\alpha_t}{\alpha_s}(w_t \oslash (\alpha_t x_0 + (1-\alpha_t) \pi)) + \left( 1- \frac{\alpha_t}{\alpha_s}\right)  \langle w_t, \pi \oslash (\alpha_t x_0 + (1-\alpha_t) \pi) \rangle \1 \right] . 
\end{align}
In contrast, we use a simple cross-entropy loss \eqref{eq:xentropy}, and discuss a true ELBO in \Cref{app:ELBO}.

\paragraph{Sampling.} Once the model is trained, they sample from the model as follows. 
Let $s,t \in [0,1]$ with $s \leq t$. Assume that we have access to a pair $(x_t, w_t)$. Then, they let the denoiser predict $x_\theta(t, x_t)$. 
Then, they sample $x_s \sim \Cat(\rho_{s|0,t})$ and $w_s \sim \Dir(\eta_s (\alpha_s x_\theta(t, x_t) + (1-\alpha_s)\pi) + x_s)$. 
In \cite[Proposition 1]{sakurai2026simplex}, it is shown that $x_s \sim p_{s|0,t}(x_s|x_\theta(t, x_t), w_t)$.
Therefore, we have that for any test function $f: \ \{1, \dots, N\} \to \rset$
\begin{equation}
    \mathbb{E}[f(x_s) | x_t] = \sum f(x_s) \mathbb{E}[p_{s|0,t}(x_s|x_0, w_t)|x_t] p_{0|t}(x_0|x_t),
\end{equation}
where the expectation is w.r.t. $w_t$.
In addition, we have that 
\begin{align}
    &\mathbb{E}[p_{s|0,t}(x_s|x_0, w_t)|x_t] = (\alpha_s x_0 + (1-\alpha_s) \pi)\\
    & \qquad \odot \left[ \frac{\alpha_t}{\alpha_s}(\mathbb{E}[w_t|x_t] \oslash (\alpha_t x_0 + (1-\alpha_t) \pi)) + \left( 1- \frac{\alpha_t}{\alpha_s}\right)  \langle \mathbb{E}[w_t|x_t], \pi \oslash (\alpha_t x_0 + (1-\alpha_t) \pi) \rangle \1 \right] .
\end{align}
We have that 
\begin{equation}
    \mathbb{E}[w_t |x_t] = \frac{\eta_t}{1 + \eta_t } (\alpha_t x_0 + (1-\alpha_t) \pi) + \frac{1}{1 + \eta_t} x_t .  
\end{equation}
Therefore, we have that 
\begin{equation}
    \mathbb{E}[p_{s|0,t}(x_s|x_0, w_t)|x_t] = \frac{\eta_t}{1+\eta_t} (\alpha_s x_0 + (1-\alpha_s)\pi) + \frac{1}{1+\eta_t} r_{s|0,t} . 
\end{equation}
Hence, we get that 
\begin{equation}
    \mathbb{E}[f(x_s) | x_t] = \sum f(x_s) \sum \left( \frac{\eta_t}{1+\eta_t} p_{s|0}(x_s|x_0) + \frac{1}{1+\eta_t} p_{s|0,t}(x_s|x_0,x_t) \right) p_{0|t}(x_0|x_t) . 
\end{equation}
Therefore, we can interpret the transition proposed in \citet{sakurai2026simplex} as a \emph{pure discrete} backward transition \emph{with remasking} with remasking levels controlled by $\eta_t$.  

In contrast, in our framework, we do not maintain a discrete state during the generation and instead only track a simplex state and only sample from the terminal simplex state.

\section{Extended Background}
\subsection{Discrete Diffusion Models}
\label{sec:ext_bg_ddms}
In this section, we outline the training and sampling procedures for Discrete Diffusion models. For simplicity, we describe processes over scalar variables, and the extension to sequences is similar to \Cref{sec:multiple_tokens}. Refer to \citet{austin2021structured, campbell2022continuous, sahoo2024simple, shi2024simplified, schiff2025simpleguidancemechanismsdiscrete, vonrutte2025generalizedinterpolatingdiscretediffusion, gourevitch2026uniformdiffusionmodelsrevisited} for the derivations. Discrete Diffusion Models define a corruption process \eqref{eq:discrete_forward} in terms of a prior $\pi \in \Delta_N$.
Prior work mainly focuses on the \emph{absorbing} (or masked) prior $\pi^\text{mask} = m$, where $m$ is the one-hot embedding of a special \masktoken token, and the \emph{uniform} prior $\pi^\text{unif} = \mathbf{1}/N$. Several works study mixtures of $\pi^\text{mask}$ and $\pi^\text{unif}$ \citep{fathi2025unifyingautoregressivediffusionbasedsequence, vonrutte2025generalizedinterpolatingdiscretediffusion, liu2026balancingunderstandinggenerationdiscrete, wang2026generalizeddiscretediffusionselfcorrection, zhang2026denoising}, or data-dependent priors \citep{Alamdari2023, vignac2023digressdiscretedenoisingdiffusion,
qin2025defogdiscreteflowmatching} but it is not clear whether elaborate priors are necessary at scale for language modeling \citep{sahoo2026scalingmaskeddiffusionlanguage, vonrutte2026scalingbehaviordiscretediffusion}. 
\paragraph{Sampling.}
Let us refer to $p_{s|0, t}$ as the \emph{bridge} \citep{gourevitch2026uniformdiffusionmodelsrevisited}:
\begin{equation}
\label{eq:appendix_bridge_bayes}
\begin{aligned}
    p_{s|0, t}(x_s \mid x_0, x_t) &= \frac{p_{t|s}(x_t \mid x_s) p_{s|0}(x_s \mid x_0)}{p_{t|0}(x_t \mid x_0)} \\
    &= \operatorname{Cat}\left(x_s ; \frac{\left[\alpha_{t|s} e_{x_t} + (1 - \alpha_{t|s})(e_{x_t}^\top \pi) \mathbf{1}\right] \odot \left[\alpha_s e_{x_0} + (1 - \alpha_s)\pi\right]}{\alpha_t (e_{x_t}^\top e_{x_0}) + (1 - \alpha_t)(e_{x_t}^\top \pi)}\right).
\end{aligned}
\end{equation}
where $\alpha_{t|s} = \nicefrac{\alpha_t}{\alpha_s}$. While the bridge \eqref{eq:appendix_bridge_bayes} is defined for discrete variables $x_0, x_s, x_t$, 
it is possible to define transitions $\hat{p}_{s|t}^\theta(x_s|x_t)$ by replacing $e_{x_0}$ with the predictions of a denoiser $\mathbf{x}_\theta(t, x_t)$. Thus, with a slight abuse of notation, one can write $\hat{p}_{s|t}^\theta(x_s|x_t) = p_{s|0, t}(x_s | \mathbf{x}_\theta(t, x_t), x_t)$. After expanding \eqref{eq:appendix_bridge_bayes} with the absorbing and uniform priors, we find that
\begin{equation}
    p_{s|0,t}^\text{mask}(x_s \mid x_0, x_t) = \begin{cases}
    \operatorname{Cat}(x_s ; e_{x_0}) & \text{if } x_t = x_0 \\[1.5ex]
    \operatorname{Cat}\left(x_s ; \frac{\alpha_s - \alpha_t}{1 - \alpha_t} e_{x_0} + \frac{1 - \alpha_s}{1 - \alpha_t} m\right) & \text{if } x_t = m,
    \end{cases}
\end{equation}
and
\begin{equation}
    p_{s|0,t}^\text{unif}(x_s \mid x_0, x_t) = \operatorname{Cat}\left(x_s ; \frac{N \alpha_t (e_{x_t}^\top e_{x_0}) e_{x_t} + (\alpha_{t|s} - \alpha_t) e_{x_t} + (\alpha_s - \alpha_t) e_{x_0} + D_{s,t} \mathbf{1} / N}{N \alpha_t (e_{x_t}^\top e_{x_0}) + 1 - \alpha_t}\right),
\end{equation}

where $D_{s,t} := (1 - \alpha_{t|s})(1 - \alpha_s)$. For a time grid $0=t_0 < t_1 < ... < t_n = 1$, the standard ancestral sampler applies the transition $\hat{p}_{s|t}^\theta(x_s|x_t)$ $n$ times to obtain $x_0$:
\begin{equation}
x_{1} \sim \operatorname{Cat}(\pi), \qquad x_{t_{i-1}} \sim \hat{p}_{t_{i-1}|t_i}^\theta(\cdot \mid x_{t_i}) \quad \text{for } i = n, \dots, 1.
\end{equation}
Alternatively, Predictor-Corrector samplers \citep{grathwohl2021oopsitookgradient, campbell2022continuous, lezama2023discrete, sun2023discretelangevinsamplerwasserstein, campbell2024generativeflowsdiscretestatespaces, kim2026finetuningmaskeddiffusionprovable, wang2026remaskingdiscretediffusionmodels, liu2025thinkgeneratediscretediffusion, zhao2025informedcorrectorsdiscretediffusion, deschenaux2026diffusiondualitychapterii, gourevitch2026uniformdiffusionmodelsrevisited} also exist. We describe the variant used in our experiments in \Cref{sec:pc_sampler}.
\paragraph{Temperature.}
All samplers can scale the logits $\zeta$ of the denoiser by a temperature $T > 0$, i.e., they use the prediction
\begin{equation}
    \mathrm{softmax}(\zeta / T)
    \label{eq:logit_temperature}
\end{equation}
instead of $\mathrm{softmax}(\zeta)$. $T = 1$ recovers the denoiser prediction, and $T \to 0$ approaches greedy decoding. For SDMs, the scaled prediction replaces $\hat P_\theta(t, P_t)$ in the transition of \Cref{propbeaut:simplicialtransition}.
\paragraph{Training.}
As for Variational (continuous) Diffusion Models \citep{sohl2015deep,ho2020denoising,kingma2023variationaldiffusionmodels, kingma2023understandingdiffusionobjectiveselbo}, Discrete Diffusion Models can be trained by minimizing an expected Negative Evidence Lower Bound (NELBO). Specifically, one can first derive an expression for the standard discrete-time expected NELBO with $n$ noise levels:
\begin{equation}
\label{eq:discrete_diffusion_nelbo}
L_n(x_0; \theta) = \mathbb{E}\left[ -\log \hat{p}_{0|t_1}^\theta(x_0 \mid x_{t_1}) + \sum_{i=2}^n \operatorname{KL}\left(p_{t_{i-1}|0,t_i}(\cdot \mid x_0, x_{t_i}) \parallel \hat{p}_{t_{i-1}|t_i}^\theta(\cdot \mid x_{t_i})\right) \right],
\end{equation}
and find the limit of \eqref{eq:discrete_diffusion_nelbo} as $n \rightarrow \infty$ to obtain a continuous-time objective. For the absorbing corruption processes, the limit of \eqref{eq:discrete_diffusion_nelbo} converges to \citep{ou2024absorbingdiscretediffusionsecretly, sahoo2024simple, shi2024simplified}:
\begin{equation}
    \label{eq:mdm_nelbo}
    L^{\text{mask}}_\infty (x_0; \theta) = - \int_0^1 \frac{\alpha_t'}{1 - \alpha_t} \mathbb{E}_{p_{t|0}} \left[ e_{x_0}^\top \log \mathbf{x}_\theta(t, x_t) \right] \, \mathrm{d}t.
\end{equation}
With uniform corruption, \eqref{eq:discrete_diffusion_nelbo} converges to \citep{schiff2025simpleguidancemechanismsdiscrete, sahoo2025diffusionduality}:
\begin{equation}
    \label{eq:udm_elbo}
    \begin{aligned}
    L^{\text{unif}}_\infty(x_0; \theta) = -\int_0^1 \frac{\alpha_t'}{N \alpha_t} \mathbb{E}_{p_{t|0}} \Bigg[ 
    &\frac{N}{\bar{x}_{x_t}} - \frac{N}{e_{x_t}^\top \mathbf{x}_\theta(t, x_t)} \\
    & - (\kappa_t \mathbb{I}_{x_t = x_0} + \mathbb{I}_{x_t \neq x_0}) \left( N \log (e_{x_t}^\top \mathbf{x}_\theta(t, x_t)) - \mathbf{1}^\top \log \mathbf{x}_\theta(t, x_t) \right) \\
    &- N \frac{\alpha_t}{1 - \alpha_t} \left( \log(e_{x_t}^\top \mathbf{x}_\theta(t, x_t)) - \log(e_{x_0}^\top \mathbf{x}_\theta(t, x_t)) \right) \mathbb{I}_{x_t \neq x_0} \\
    &- \left( (N-1)\kappa_t \mathbb{I}_{x_t = x_0} - \frac{1}{\kappa_t} \mathbb{I}_{x_t \neq x_0} \right) \log \kappa_t 
    \Bigg] \mathrm{d}t,
    \end{aligned}
\end{equation}
where $\kappa_t := \frac{1 - \alpha_t}{N \alpha_t + 1 - \alpha_t}$ and $\mathbb{I}$ is the indicator function.
\subsection{Predictor-Corrector Sampler}
\label{sec:pc_sampler}
Our PC baselines for MDMs and UDMs use a \emph{pure} predict-and-renoise sampler.
Instead of the ancestral transition $\hat p^\theta_{s|t}$, each step (1) samples $\hat x_0$ from the plug-in posterior at target time $0$, i.e., the bridge \eqref{eq:appendix_bridge_bayes} with $s = 0$ and $e_{x_0}$ replaced by the denoiser prediction, and (2) re-applies the forward process \eqref{eq:discrete_forward} to $\hat x_0$ at the next time.
Let $(t_i)_{i=0}^n$ be the sampling time grid (\Cref{sec:adaptive_time_schedule}), and let $\mathbf{x}_\theta(t, x_t)$ denote the denoiser, with logits scaled by the temperature $T$ of \eqref{eq:logit_temperature}.
Starting from $x_{t_n} \sim \operatorname{Cat}(\pi)$, we repeat for $i = n, \dots, 1$:
\begin{align}
    \textbf{(Predict)} \qquad & \hat x_0 \sim \operatorname{Cat}\bigl(\mathbf{x}_\theta(t_i, x_{t_i})\bigr), \\
    \textbf{(Re-noise)} \qquad & x_{t_{i-1}} \sim p_{t_{i-1}|0}(\cdot \mid \hat x_0) = \operatorname{Cat}\bigl(\alpha_{t_{i-1}} e_{\hat x_0} + (1-\alpha_{t_{i-1}})\pi\bigr),
\end{align}
independently for each position, and we return $\hat x_0$ at the last step ($t_0 = 0$, $\alpha_{t_0} = 1$).
The sampler therefore uses the bridge $p_{s|0,t}(x_s \mid x_0, x_t) = p_{s|0}(x_s \mid x_0)$ with the plug-in prediction, as in star-shaped diffusion \citep{okhotin2023star}.
Every step discards $x_{t_i}$ except through the prediction $\hat x_0$.
For MDMs, this means that tokens unmasked at earlier steps can be masked again, which lets the sampler revise earlier choices.
\subsection{Self-Conditioning and Loopholing}
\label{sec:ext_sc_loopholing}
Because Discrete Diffusion models operate directly on discrete state spaces, the rich categorical distribution predicted by the denoiser collapses into a single discrete token value at each sampling step. Consequently, sampling discards the uncertainty of the denoiser. Therefore, it is common to resort to Self-Conditioning (SC; \citealp{chen2022analog}) or Loopholing \citep{jo2025loopholing} to propagate continuous information across sampling steps. We implement SC following \citet{jo2025loopholing}.
\paragraph{Architecture.}
We decompose the denoiser into three parts. An embedding layer $e(\cdot)$ maps the input state $x_t$ to one vector per position. A backbone $f_\theta$ maps the embedded input and the time $t$ to the last hidden representation $h$, which we call the latent. An output head $g$ (a linear projection followed by a softmax) maps $h$ to the predicted distribution over clean tokens. Without SC, the denoiser computes
\begin{equation}
    h = f_\theta\bigl(e(x_t), t\bigr), \qquad \mathbf{x}_\theta(t, x_t) = g(h).
\end{equation}
With SC, the denoiser additionally receives a latent $h^\text{prev}$ and adds it to the input embedding after a LayerNorm $\mathrm{LN}(\cdot)$:
\begin{equation}
\label{eq:sc_loopholing}
    h = f_\theta\bigl(e(x_t) + \mathrm{LN}(h^\text{prev}), t\bigr), \qquad \mathbf{x}_\theta(t, x_t, h^\text{prev}) = g(h).
\end{equation}
Setting $h^\text{prev} = 0$ recovers a denoiser without context. Following \citet{jo2025loopholing}, we initialize the scale and shift of $\mathrm{LN}$ to zero, so that SC initially leaves the input unchanged.
\paragraph{Training.}
Let $\mathrm{sg}(\cdot)$ denote the stop-gradient operator, which acts as the identity in the forward pass and blocks gradients in the backward pass. With probability $p_\text{SC} = 0.9$, we apply SC: a first forward pass with $h^\text{prev} = 0$ and without gradient produces a latent $h^0$, and a second forward pass \eqref{eq:sc_loopholing} with $h^\text{prev} = \mathrm{sg}(h^0)$ produces the prediction on which we compute the loss. With probability $1 - p_\text{SC}$, we use a single forward pass with $h^\text{prev} = 0$. This avoids unrolling the sampling trajectory during training.
\paragraph{Matching the Training FLOPs.}
We count the cost of a forward pass as $1$ and that of a forward and backward pass as $3$. A training step with SC then costs on average $3 + p_\text{SC} \cdot 1 = 3 + 0.9 = 3.9$ forward-equivalents, against $3$ without SC. To match the training FLOPs of models trained without SC, we train SC models for a fraction $3 / 3.9 \approx 0.77$ of the training steps.
\paragraph{Sampling.}
At the first sampling step, $h^\text{prev} = 0$. The latent $h$ computed at step $t_i$ is then passed as $h^\text{prev}$ to step $t_{i-1}$. Next to the sampled discrete state, each step thus carries a deterministic continuous state. This adds one LayerNorm and one addition per step, and no extra forward pass.

\subsection{Dirichlet Flow Matching}
\label{sec:ext_dirichlet_fm}
This section contains additional background on Euclidean and Dirichlet Flow Matching. To ensure consistency with the Discrete Diffusion literature, we denote the noise distribution by $p_1$ and the empirical data distribution by $p_0$.
\paragraph{Continuous Normalizing Flows.}
Continuous Normalizing Flows (CNFs; \citealp{chen2018neural, grathwohl2018ffjord}) are generative models on $\mathbb{R}^d$ that transport samples from a tractable prior $p_1 = p_{\text{noise}}$ to an unknown data distribution $p_0= p_{\text{data}}$. To match the rest of the paper, we reverse the usual CNF time convention, in which $t=0$ is noise. The time-dependent velocity field $u_t^\theta \colon \mathbb{R}^d \to \mathbb{R}^d$ induces a flow map $\phi_t \colon \mathbb{R}^d \to \mathbb{R}^d$ via the Ordinary Differential Equation (ODE):
\begin{equation}
    \label{eq:cnf_ode}
    \frac{\mathrm{d} x_t}{\mathrm{d} t} = u_t^\theta(x_t), \quad x_1 \sim p_1,
\end{equation}
where $\theta$ parameterizes a neural network. Integrating \eqref{eq:cnf_ode} from $t=1$ to $t=0$ maps approximately noise $x_1$ to data $x_0 = \phi_0(x_1)$, defining intermediate densities $p_t = [\phi_t]_\sharp p_1$ along the probability path $\{p_t\}_{t \in [0, 1]}$. 
\paragraph{Flow Matching.}
Rather than leaving the velocity field $u_t^\theta$ unconstrained and optimizing $\theta$ through expensive ODE simulation, Flow Matching (FM) constructs a generative process using conditional probability paths $p_{t|0}(x_t \mid x_0)$ and conditional velocity fields $u_{t|0}(x_t \mid x_0)$, conditioned on clean data samples $x_0 \sim p_0$. From this conditional pair, we define 
\begin{equation}
    \label{eq:marginal_path}
    p_t(x_t) = \int p_{t|0}(x_t \mid x_0) \, p_0(x_0) \, \mathrm{d}x_0.
\end{equation}
and
\begin{equation}
    \label{eq:marginal_velocity}
    u_t(x_t) = \int u_t(x_t \mid x_0) \, p_{0|t}(x_0 \mid x_t) \, \mathrm{d}x_0 = \int  u_t(x_t \mid x_0) \frac{p_{t|0}(x_t \mid x_0) p_0(x_0)}{p_t(x_t)} dx_0.
\end{equation}
It can then be shown that these quantities (see e.g., \citet{lipman2022flow} ) satisfy
\begin{equation}
    \label{eq:continuity}
    \frac{\partial p_t(x)}{\partial t} + \nabla \cdot \big( p_t(x) u_t(x) \big) = 0.
\end{equation}
Satisfying \eqref{eq:continuity} ensures that integrating an ODE of drift $u_t$ from $t=1$ to $t=t'$ produces a sample $x_{t'} \sim p_{t'}$.
%
\paragraph{Extension to the Simplex.}
Dirichlet Flow Matching (DFM; \citealp{stark2024dirichlet}) extends FM to the simplex $\Delta_N$ to model categorical data. In particular, DFM tackles the issue of contracting support on $\Delta_N$ by defining conditional probability paths with full support:
\begin{equation}
    p_{t|0}(P_t | P_0) = \Dir \big(P_t; \mathbf{1} + h_t P_0 \big).
\end{equation}
\citet{stark2024dirichlet} show that when paired with the following conditional velocity field $u_{t|0}$, $(p_{t|0}, u_{t|0})$ satisfy the continuity equation. For $P_0$ the one-hot embedding $e_i$ of category $i$, 
\begin{equation}
\label{eq:dfm_c}
\begin{aligned}
    &u_{t|0}(P_t | P_0) = C(P_{t,i}, t) (e_i - P_t), \\
    &C(P_{t,i}, t) = -\tilde{I}_{P_{t,i}}(t+1, N-1) \frac{\mathcal{B}(t+1, N-1)}{(1 - P_{t,i})^{N-1} P^t_{t,i}},
\end{aligned}
\end{equation}
where $\mathcal{B}$ denotes the beta function, $\tilde{I}_x(a, b) = \frac{\partial}{\partial a} I_x(a, b)$ the partial derivative of the regularized incomplete beta function, and $N$ the number of categories. Like us, \citet{stark2024dirichlet} train a denoiser $p^\theta_{0|t}(x_0|x_t): (\Delta_N)^L \mapsto (\Delta_N)^L$ with Cross-Entropy. However, their sampling dynamics differs, as they marginalize the conditional as in \eqref{eq:marginal_velocity}, replacing the true posterior by the learned one. Therefore \citet{stark2024dirichlet} propose a deterministic sampler, while ours is not, even in the case $\kappa = 0$ (\Cref{sec:simplicialdiffusion}).

\newpage

\appendixpart[]{Experimental Setup, Ablations and Results}

\section{Extended Experimental Details}
\subsection{Time Samplers and Sampling Schedules}
\label{sec:adaptive_time_sampler}
We distinguish two choices. The \emph{time sampler} is the distribution of $t$ during training: uniform ($^\dagger$) or adaptive ($^\ddagger$). The \emph{sampling schedule} is the time grid used at inference: linear, cosine or adaptive. The two can be combined freely, except that the adaptive schedule requires the adaptive time sampler, since it reuses the loss profile fitted during training.
\paragraph{Motivation.}
By default, recent Discrete Diffusion draws the time $t$ uniformly from $t \in [0, 1]$ during training. However, recent \emph{continuous} diffusion language models require \emph{adaptive} time samplers \citep{dieleman2022continuous, pynadath2025candi, batzolis2026cobitlanguagemodelingbitstream, chemseddine2026sphericalflowssamplingcategorical, chen2026langflowcontinuousdiffusionrivals, deschenaux2026languagemodelinghypersphericalflows, lee2026one, potaptchik2026discreteflowmaps, raya2026noiseschedulinginformationguidedallocation, roos2026categoricalflowmaps, yang2026continuousdiffusionscalescompetitively} to perform well. During training, we use the derivative of the loss profile $t \mapsto \frac{\rmd}{\rmd t}  L_t$ as a proxy for where the network learns the most. Assuming that the true loss increases monotonically with $t$, the derivative $\frac{\rmd{L_t}}{\rmd t} \ge 0$ directly defines an importance density $q(t) \propto \frac{\rmd{L_t}}{\rmd t}$. 
\paragraph{High-Level Algorithm.}
We maintain a ring buffer to store recent $(t, L_t)$ pairs. During the first 1k training steps, we start with $t \sim \mathcal{U}(0, 1)$ to fill the buffer. Afterwards, every $50$ training steps, we approximate the loss profile using the content of the ring buffer, with either B-splines \citep{cox1972numerical, deboor1972calculating} or with the mean per bucket, and differentiate via finite differences to obtain the empirical density $\hat q(t) \propto \frac{\rmd L_t}{\rmd t}$. We track an EMA $\hat{q}_\text{EMA}$ of the successive densities $\hat q$ with momentum $0.9$ for stability. Let $\hat{F}_\text{EMA}$ denote the CDF obtained by numerically integrating $\hat{q}_\text{EMA}$. Finally, we evaluate $\hat{F}_\text{EMA}$ on a fine grid of 1k values and store $(\hat{F}_\text{EMA}(t), t)$. We evaluate $t = \hat{F}^{-1}_\text{EMA}(u)$ with linear interpolation for $u \sim \mathcal{U}(0, 1)$ during training (inverse transform sampling).
\paragraph{Estimating the Loss Profile.}
\citet{deschenaux2026languagemodelinghypersphericalflows} estimates $L_t$ by fitting a global cubic B-spline with ridge regression. In preliminary experiments, we found that a simpler bucketed piecewise-linear estimator led to stronger performance. Therefore, we partition $[0, 1]$ into $B = 50$ uniform bins, compute the mean loss within each bin independently, and linearly interpolate, placing the estimated means at the center of each bin. We compute the density as $\hat{q}(t) = \max \left( 0, \frac{\rmd L}{\rmd t} \right)$. While the true $\frac{\rmd L}{\rmd t}$ should be non-negative in principle, $\max$ removes numerical artifacts that would make the density negative. 
\paragraph{Sampling Time Schedules.}
\label{sec:adaptive_time_schedule}
We define the sampling time grid as $0 = t_0 < t_1 < \dots < t_n = 1$ and compare three schedules.
The first is the linear grid $t_i = \frac{i}{n}$.
The second is the cosine grid $t_i = \cos\left(\frac{\pi}{2}\left(1 - \frac{i}{n}\right)\right)$ \citep{chang2022maskgitmaskedgenerativeimage, shi2024simplified}.
The third is an adaptive grid that reuses the loss profile fitted during training.
Specifically, we take the final CDF $\hat F_\text{EMA}$ of the adaptive time sampler, stored as a lookup table of 1k points, and place the grid at uniform quantiles:
\begin{equation}
    t_i = \hat F^{-1}_\text{EMA}\!\left(\tfrac{i}{n}\right), \qquad i = 0, \dots, n,
\end{equation}
evaluated by linear interpolation, exactly as during training.
Since $\hat q_\text{EMA} \propto \max(0, \frac{\rmd L_t}{\rmd t})$, the grid spends more denoising steps on the noise levels where the loss changes fastest.
These are the same noise levels at which the adaptive sampler concentrates training.
The grid needs no extra computation or tuning at inference.
\subsection{Denoiser Input for SDMs}
\label{sec:sdm_input_variants}
The SDM denoiser receives a sequence of simplex states $P_t = (P_t^1, \dots, P_t^L) \in (\Delta_N)^L$ (\Cref{sec:multiple_tokens}). As in \Cref{sec:ext_sc_loopholing}, an embedding layer $e$ maps the input to one vector in $\rset^d$ per position, a backbone $f_\theta$ maps the embedded sequence and the time $t$ to the last hidden representation $h$, and an output head $g$ maps $h$ to the predicted distribution $\hat P_\theta(t, P_t)$. Let $E \in \rset^{N \times d}$ be the input embedding matrix, whose $i$-th row $E_i$ embeds token $i \in \mcx$. We consider two embeddings of $P_t$, each with or without Self-Conditioning.
\paragraph{Expectation.}
We feed the expected embedding
\begin{equation}
    e(P_t)^\ell = (P_t^\ell)^\top E = \sum\nolimits_{i=1}^N P_{t,i}^\ell \, E_i = \mathbb{E}_{X \sim \Categorical(P_t^\ell)}\bigl[E_X\bigr] .
    \label{eq:sdm_input_expectation}
\end{equation}
This input uses the full belief state: two states with the same most likely token but different uncertainty can map to different inputs. 
Note that we are not ensured that we do not necessarily have that $e(P) = e(Q) $ implies $P=Q$, i.e. the embedding is not necessarily injective. 
Since $P_t^\ell$ has strictly positive entries almost surely, \eqref{eq:sdm_input_expectation} requires a dense product with $E$, which costs $2Nd$ FLOPs per token, as much as the output projection. On TinyGSM ($N \approx 49$k, $d = 768$), this adds $\approx 29\%$ to the FLOPs of a forward pass of our 12-layer DiT.
\paragraph{Argmax.}
To avoid this product, we embed only the most likely token, both during training and at sampling:
\begin{equation}
    e(P_t)^\ell = E_{\hat x_t^\ell}, \qquad \hat x_t^\ell = \argmax_{i} P_{t,i}^\ell ,
    \label{eq:sdm_input_argmax}
\end{equation}
which is a table lookup, as for MDMs and UDMs. The denoiser then sees only $\hat x_t^\ell$, but the sampler is unchanged: the transition of \Cref{propbeaut:simplicialtransition} still acts on the full state $P_t$, so the belief state is still carried from step to step.
\paragraph{Self-Conditioning.}
We also combine both inputs with SC, as described in \Cref{sec:ext_sc_loopholing}, with the simplex state $P_t$ in place of $x_t$: the backbone receives $e(P_t) + \mathrm{LN}(h^\text{prev})$. SC gives the denoiser a rich summary of its previous prediction, which complements the argmax input in particular (Argmax + SC).
\paragraph{Which Input Is Used Where.}
On Sudoku, SDMs use the expectation input, with and without SC (\Cref{tab:sudoku_hard_table_main}). On TinyGSM, we report the expectation, Argmax and Argmax + SC inputs (\Cref{tab:tinygsm_sdm_sdm_t10_s512,tab:tinygsm_sdm_sdm_t10_s64,tab:tinygsm_sdm_sdm_t01_s512,tab:tinygsm_sdm_sdm_t01_s64}).
\subsection{Dirichlet Flow Matching Baseline}
\label{app:dfm_baseline}

To understand the benefits of our simplex based method, we also compared to the Dirichlet Flow Matching (DFM) method, as described in \Cref{sec:ext_dirichlet_fm}. The original implementation provided by the authors at \url{https://github.com/HannesStark/dirichlet-flow-matching} does not include the ability to run on any of the datasets we experimented on. Furthermore, the original implementation and experiments deal with small vocab sizes (only up to 160 categories) whereas our text experiments use up to 50k tokens. We therefore re-implemented and extended the DFM method to provide a fair comparison to our approach. We made two key changes to DFM. The first was to use the adaptive time sampler during training which we found helped performance on TinyGSM for our method. 
In the original presentation in \citet{stark2024dirichlet}, eq (14) presents the corruption distribution as (in the authors' original notation)
\begin{equation}
    p_t(\mathbf{x} | \mathbf{x}_1 = \mathbf{e}_i) = \text{Dir}(\mathbf{x}; \mathbf{\alpha} = \mathbf{1} + t \cdot \mathbf{e}_i).
\end{equation}
In the original implementation during training, $t$ is sampled as an exponential random variable with scale parameter $\alpha_{\text{scale}}$, $t \sim \text{Exp}(\frac{1}{\alpha_{\text{scale}}})$. We instead sample $t$ with the adaptive time sampler described in \Cref{sec:adaptive_time_sampler}.

Secondly, we improved the sampling implementation to handle much larger numbers of categories than the original implementation. Sampling in DFM requires integrating the conditional velocity field \eqref{eq:dfm_c} towards target vertices $e_i$ on the simplex $\Delta_N$. Writing $b = P_{t,i} \in [0, 1]$ for the coordinate along category $i$, the velocity is $u_{t|0}(P_t \mid P_0 = e_i) = C(b, t) (e_i - P_t)$ with
\begin{equation}
    C(b, t) = -\tilde{I}_b(t+1, N-1) \frac{\mathcal{B}(t+1, N-1)}{(1 - b)^{N-1} b^t},
    \label{eq:dfm_c_def}
\end{equation}
where $\mathcal{B}$ and $\tilde{I}$ are defined in \Cref{sec:ext_dirichlet_fm}. Since the first argument is $t+1$, we have $\tilde{I}_b(t+1, N-1) = \frac{\partial}{\partial t} I_b(t+1, N-1)$.

In the original implementation of \citet{stark2024dirichlet}, $\tilde{I}_b(t+1, N-1)$ is calculated through computing $I_b(t+1, N-1)$ at linearly spaced intervals of $b \in [0, 1]$ and $t \in [t_\text{min}, t_\text{max}]$ and using numerical differences to approximate the gradient. This breaks down for large vocab sizes because if we consider a sample from the uniform prior $\Dir(\mathbf{1})$ then the coordinate $b$ is distributed according to $\text{Beta}(1, N-1)$ which has mean $1/N \approx 2 \times 10^{-5}$. Therefore we need more precision in this $b$ range than a linear spacing of $b$ would provide. The original implementation used $\Delta b = 10^{-3}$ which immediately skips over the high probability region at $2 \times 10^{-5}$ for large vocabulary sizes.  We instead use a non-uniform geometric discretization grid that is denser around the prior mean $1/N$.
\begin{equation}
 b_k = b_{\text{min}} \left( \frac{b_{\text{max}}}{b_{\text{min}}} \right)^{\frac{k}{M-1}}
\end{equation}
with $M = 1000$, $b_{\text{min}} = 10^{-7}$ and $b_{\text{max}} = 0.999$.

Furthermore, the original implementation uses $\texttt{scipy.special.betainc}$ to provide the values of $I_b(t+1, N-1)$. However, for $I_b$ close to $1.0$, values quickly round to exactly $1.00$ when represented as floats thus giving $0$ numerical gradient. We avoid this by using the complementary Beta CDF function \texttt{scipy.special.betaincc} to compute gradients in the regime where $I_b > 0.5$ because this keeps numerical values closer to $0.0$ where they have more precision.

Finally, as $b$ gets larger in \eqref{eq:dfm_c_def}  $\tilde{I}_b \rightarrow 0$ while $\frac{1}{(1-b)^{N-1}} \rightarrow \infty$. These two effects should approximately cancel leaving a well-conditioned $C(b,t)$ however when the terms are represented numerically, underflow and overflow can result in attempting to compute $0 \times \infty$. To avoid this situation, we can reformulate $C(b, t)$ into an exact integral where $(1 - b)^{N - 1}$ is canceled analytically

\begin{align}
    C(b, t) &= \frac{\mathcal{B}(t+1, N - 1)}{(1 - b)^{N - 1} b^t} \frac{\partial}{\partial t} \left[ \frac{1}{\mathcal{B}(t+1, N - 1)} \int_b^1 s^t (1 - s)^{N - 2} ds \right] \label{eq:step1} \\
    &= \frac{1}{(1 - b)^{N - 1} b^t} \left[ \int_b^1 s^t (1 - s)^{N - 2} \ln(s) ds \right. \nonumber \\
    &\qquad\qquad\qquad \left. - \frac{\frac{\partial}{\partial t}\mathcal{B}(t+1, N - 1)}{\mathcal{B}(t+1, N - 1)} \int_b^1 s^t (1 - s)^{N - 2} ds \right] \label{eq:step2} \\
    &= \frac{1}{(1 - b)^{N - 1} b^t} \int_b^1 s^t (1 - s)^{N - 2} \left[ \psi(t + N) - \psi(t + 1) + \ln(s) \right] ds \label{eq:step3} \\
    &= \frac{1}{(1 - b)^{N - 1} b^t} \int_0^1 (b + (1 - b)u)^t (1 - b)^{N - 1} (1 - u)^{N - 2} \nonumber \\
    &\qquad\qquad\qquad \times \left[ \psi(t + N) - \psi(t + 1) + \ln(b + (1 - b)u) \right] du \label{eq:step4} \\
    &= \int_0^1 \left( 1 + \frac{1 - b}{b} u \right)^t (1 - u)^{N - 2} \left[ \psi(t + N) - \psi(t + 1) + \ln(b + (1 - b) u) \right] du, \label{eq:step5}
\end{align}

where $\psi(z) = \frac{\rmd}{\rmd z} \ln \Gamma(z) = \frac{\Gamma'(z)}{\Gamma(z)}$ is the digamma function. In the above derivation, we have used in ~\eqref{eq:step3} that $\frac{\partial}{\partial t} \ln \mathcal{B}(t+1, N-1) = \psi(t+1) - \psi(t+N)$, and in ~\eqref{eq:step4} the  substitution $s = b + (1 - b)u$, allowing $(1 - b)^{N - 1}$ to cancel in ~\eqref{eq:step5}. For $b \gtrsim 0.01$, we numerically integrate this integral using 48-node Gauss-Legendre quadrature instead of the numerical gradient approach. For numerical integration, we make the substitution $(1-u)^{N-2} = e^{-w}$ $\implies$ $u = 1 - e^{\frac{-w}{N-2}}$ and integrate over the range $w \in [0, 50]$. This prevents standard quadrature on $[0,1]$ from stepping over the narrow region near zero where $(1-u)^{N-2}$ is concentrated before decaying to zero.

These three improvements to the DFM sampling algorithm allow us to safely sample at the $50k$ vocabulary size scale.

To verify our implementation, we first re-ran the toy experiment from \citet{stark2024dirichlet} where the model is tasked to reproduce a synthetic categorical distribution. We train our re-implemented model with 40 categories, sample it and compute the KL-divergence to the ground truth. We obtain a KL-divergence of 0.03 approximately matching the value from \citet[Figure 4]{stark2024dirichlet}.

We integrate the marginal velocity field with the explicit Euler method in $\alpha$ from $\alpha_\text{min}=1$ to $\alpha_\text{max}=4\alpha_\text{scale}$, project onto the simplex after every step, and return the argmax of the final prediction. We use the same number of steps as for the other methods (180 on Sudoku; 64 or 512 on TinyGSM), so one step is one NFE.

\subsection{Sudoku}
\label{sec:appendix-setup-sudoku}
\paragraph{Data Generation.}
We generate 9x9 Sudoku puzzles with 30 visible cells using the backtracking generator of \citet{alp2024sudoku}. We produce partial grids by iteratively removing cells and verifying that the solution remains unique. We ensure that the training and validation sets share no full grids. We use 200k training and 5k validation puzzles (instead of the 48k/2k split in \citet{deschenaux2026languagemodelinghypersphericalflows, kim2026finetuningmaskeddiffusionprovable}) since we observed mild overfitting in preliminary experiments, especially when using the adaptive time sampler. We report the exact-match accuracy on the validation set.
\paragraph{Tokenization.}
We represent each puzzle as a sequence of 180 tokens drawn from a vocabulary of size 14. We map digits $\{1, \dots, 9\}$ directly to their numerical values, empty cells $\texttt{\_}$ to ID 0, row separators $\texttt{|}$ to ID 10, sequence boundaries \bostoken{} to ID 11, (unused) padding to ID 12, and the \masktoken to ID 13. To accommodate both autoregressive and diffusion algorithms, we concatenate the unsolved puzzle and its complete solution into a sequence of 180 tokens:
\begin{equation}
\Big( \underbrace{\text{\bostoken}, \, 5, 3, \texttt{\_}, \dots, 2, \, \texttt{|}, \dots, \texttt{|}, \, \texttt{\_}, \dots, 9}_{\text{Prompt / Unsolved Puzzle } (90\text{ tokens})}, \; \underbrace{\text{\bostoken}, \, 5, 3, 4, \dots, 2, \, \texttt{|}, \dots, \texttt{|}, \, 3, \dots, 9}_{\text{Target / Complete Solution } (90\text{ tokens})} \Big).
\end{equation}
During training, we only corrupt the target tokens and keep the first half clean as conditioning. During inference, we start from a clean prompt and denoise only the second half.
\paragraph{Architecture and Hyperparameters.}
We train an 8-layer Diffusion Transformer (DiT) \citep{peebles2023scalablediffusionmodelstransformers} with hidden dimension 512, 8 attention heads, an MLP expansion ratio of 4, 1D Rotary Positional Embeddings \citep{su2023roformerenhancedtransformerrotary}, and untied input/output embeddings (28.6M total parameters). We apply a 0.1 dropout rate to the output projection \citep{sahoo2024simple}. We train with Adam ($\beta_1 = 0.9, \beta_2 = 0.999, \epsilon = 10^{-8}$), gradient clipping at maximum norm $1.0$, and no weight decay. The learning rate warms up linearly to $3 \times 10^{-4}$ over 2.5k steps and remains constant afterwards. We maintain an Exponential Moving Average (EMA) of the parameters with decay $0.9999$ for evaluation for all methods. We implement time conditioning via Adaptive LayerNorm (AdaLN) \citep{peebles2023scalablediffusionmodelstransformers}. The time-independent variants (AR and absorbing diffusion) receive a constant (zero) vector in place of time-conditioning, to share the exact same architecture with the other approaches. We train for 50k steps with a global batch size of 256 in full 32-bit precision ($\approx 77\%$ of the steps, i.e.\ 38{,}462, for SC variants, to match the training FLOPs; \Cref{sec:ext_sc_loopholing}).
\subsection{TinyGSM}
\label{sec:appendix-setup-tinygsm}
\paragraph{Tokenization.}
We train our models on TinyGSM \citep{liu2023tinygsmachieving80gsm8k}, a dataset containing approximately 11.8M synthetic grade-school math word problems associated with executable Python programs producing the correct answer. We evaluate the models zero-shot on the 1319 test problems of GSM8K \citep{cobbe2021trainingverifierssolvemath} by executing a generated program and verifying its numerical output. Following \citet{deschenaux2026languagemodelinghypersphericalflows}, we tokenize TinyGSM with the SmolLM tokenizer \citep{allal2025smollm2smolgoesbig} (49k tokens). Unlike GPT-2, SmolLM pretrains on code, and thus its tokenizer compresses Python programs better. \citet{kim2026stoptrainingworstprogressive} originally used the Qwen2 tokenizer \citep{qwen2}, but its 151k tokens induce huge embedding tables, and therefore we chose SmolLM as a compact alternative.
\paragraph{Architecture and Hyperparameters.}
We train a 12-layer Diffusion Transformer (DiT) \citep{peebles2023scalablediffusionmodelstransformers} with hidden dimension 768, 12 attention heads, an MLP expansion ratio of 4, 1D Rotary Positional Embeddings \citep{su2023roformerenhancedtransformerrotary}, and untied input and output embeddings (167.9M total parameters). Following \citet{sahoo2024simple}, we apply a 0.1 dropout rate to the output projection. We optimize all models using Adam \citep{kingma2014adam} ($\beta_1 = 0.9, \beta_2 = 0.999, \epsilon = 10^{-8}$) without weight decay, clipping gradient norms at $1.0$. We linearly warm up the learning rate to $3 \times 10^{-4}$ over 2{,}500 steps and hold it constant thereafter. We maintain an Exponential Moving Average (EMA) of the parameters with decay $0.9999$ and evaluate the EMA weights across all methods, including the autoregressive baseline. We implement time conditioning via Adaptive LayerNorm (AdaLN) \citep{peebles2023scalablediffusionmodelstransformers} with a conditioning dimension of 128. The time-independent models (autoregressive and absorbing Discrete Diffusion) receive a constant zero vector as time conditioning, to keep the architecture the same across experiments. We train each model for 250k steps with a global batch size of 512 in full 32-bit floating-point precision ($\approx 77\%$ of the steps for SC variants, to match the training FLOPs; \Cref{sec:ext_sc_loopholing}). We train all SDMs with the adaptive time sampler (\Cref{sec:adaptive_time_sampler}). During evaluation, we run 512 sampling steps for all diffusion variants unless stated otherwise.
\paragraph{AST Diversity Score.}
\label{app:ast_diversity}
We measure how structurally different the $K=5$ programs sampled for each GSM8K problem are by comparing their abstract syntax trees (ASTs).
For each sample, we keep the text from the first function definition onward (or the content of a Markdown code block, if present), parse it with Python's \texttt{ast} module, and discard samples that do not parse.
We then normalize each tree: we remove docstrings, rename user-defined variables, functions and arguments to canonical identifiers within each scope (\texttt{VAR0}, \texttt{VAR1}, \dots) while keeping built-in names such as \texttt{sum} or \texttt{range}, and replace keyword-argument names and import aliases by fixed placeholders.
Each node is labeled by its node type; names also carry their canonical identifier, constants their type and value, and arithmetic operators their type.

For two normalized trees $T_a$ and $T_b$ with $|T_a|$ and $|T_b|$ nodes, we compute the tree edit distance $\mathrm{TED}(T_a, T_b)$ \citep{zhang1989simple} with unit costs for insertion, deletion and relabeling, and define the similarity
\begin{equation}
  \mathrm{sim}(T_a, T_b) = \max\Bigl(0,\, 1 - \frac{\mathrm{TED}(T_a, T_b)}{\max(|T_a|, |T_b|)}\Bigr) \in [0, 1] .
\end{equation}
For a problem $i$ whose set of parsable samples $\mathcal{V}_i$ has at least two elements, the diversity is one minus the mean pairwise similarity,
\begin{equation}
  D_i = 1 - \binom{|\mathcal{V}_i|}{2}^{-1} \sum_{a < b \in \mathcal{V}_i} \mathrm{sim}(T_a, T_b) .
\end{equation}
We report $100 \cdot \frac{1}{|\mathcal{I}|}\sum_{i \in \mathcal{I}} D_i$, where $\mathcal{I}$ is the set of problems with at least two parsable samples; higher is more diverse.
\emph{AST Div.\ (Correct)} is the same score computed only on the samples whose execution returns the reference answer, averaged over the problems with at least two such samples. It measures whether a model finds structurally different \emph{correct} solutions, rather than rewarding diversity that comes from incorrect programs.
\subsection{OpenWebText}
\label{sec:appendix-setup-owt}
\paragraph{Tokenization.}
We evaluate unconditional language modeling on OpenWebText (OWT) \citep{Gokaslan2019OpenWeb}. Following prior work, we tokenize OWT using the GPT-2 tokenizer \citep{Radford2019LanguageMA} (50257 tokens) and pack documents into sequences of 1024 tokens. Unlike Sudoku and TinyGSM, which require conditional prefix completion, we corrupt all 1024 positions during training and denoise sequences from scratch during inference. We measure the sample quality with the Generative Perplexity (Gen.\ PPL) under a pretrained GPT-2-large model and the unigram entropy. Specifically, we produce Pareto curves, following \citet{pynadath2025candi}.
\paragraph{Architecture and Hyperparameters.}
We use the same 12-layer DiT architecture as in TinyGSM. We use the exact same optimizer, learning rate schedule, AdaLN time conditioning, EMA decay rate, and global batch size as well. To follow the academic literature, we train each model for 1M steps. We sample the diffusion models with 64 steps, and the AR model token by token (1024 steps).
\subsection{Language Understanding}
\label{sec:appendix-setup-lu}
We evaluate our model on language understanding benchmarks such as ARC-easy \citep{clark2018think}, PIQA \citep{bisk2020piqa} and HellaSwag \citep{zellers2019hellaswag}. The goal of this benchmark is to evaluate the accuracy of a language model to choose the correct continuation when faced with multiple choices for a given context prompt. This is a significant departure from the \emph{sample quality} metrics that we report for Sudoku, TinyGSM and OWT. 
 
We compare our method to GPT-2 \citep{Radford2019LanguageMA}, the retrained LLaMA baseline \citep{touvron2023llama} of \citet{vonrutte2025generalizedinterpolatingdiscretediffusion}, Generalized Interpolating Discrete Diffusion (GIDDs) \citep{vonrutte2025generalizedinterpolatingdiscretediffusion}, Masked Diffusion Models (MDMs) as reported in \citet{deschenaux2025beyond} and Partition Generative Models (PGMs) \citep{deschenaux2026partition}. 

In each scenario we are given a context $s$ of length $L_s$ and possible continuations $c^{(1)}, \dots, c^{(n)}$ ($n=2$ in the case of PIQA, and $n=4$ in the case of ARC-easy and HellaSwag). For a continuation $c = (c^1, \dots, c^{L_c})$ of length $L_c$, we denote by $x = (s, c) \in \mcx^L$ the full sequence, with $L = L_s + L_c$. For AR models, the score of a continuation is given by
\begin{equation}
    S_{\text{AR}}(s, c) = \sum_{\ell=1}^{L_c} \log p_\theta(c^\ell \mid s, c^1, \dots, c^{\ell-1}) . 
\end{equation}
In the case of MDMs, the score is given by 
\begin{equation}
    S_{\text{MDM}}(s, c) =  -\sum_{\ell=1}^{L} \int_0^1 \frac{\alpha_t'}{1-\alpha_t} \langle x^\ell, \log \mathbf{x}_\theta(t, x_t)^\ell \rangle \rmd t . \label{eq:mdm_scoring}
\end{equation}
In \citet{vonrutte2025generalizedinterpolatingdiscretediffusion}, the integral is discretized on a uniform grid with 128 NFE, whereas in \citet{deschenaux2025beyond,deschenaux2026partition} the authors sample uniformly $1024$ times in the interval $[0,1]$. 

In our case, the noising process retains significant information about the initial sample contrary to MDMs. Therefore, we consider an additional regularization term emphasizing that we care about the continuation and not only the whole text plausibility. In practice, we consider 
\begin{equation}
    S_{\text{SDM}}(s, c) =  \sum_{\ell=L_s+1}^{L} \int_0^1 q_t \langle x^\ell, \log x_{\theta}(t, x_t)^\ell \rangle \rmd t - w \sum_{\ell=L_s+1}^{L} \int_0^1 q_t \langle \tilde{x}^\ell, \log x_{\theta}(t, \tilde{x}_t)^\ell \rangle \rmd t , \label{eq:scoring_language_understanding}
\end{equation}
where $\tilde{x}$ is the same as $x$ except that the context is replaced by $\texttt{"Answer"}$. 
Given a score $S$, we define the \emph{accuracy} as follows. Given a context $s$ and possible continuations $c^{(1)}, \dots, c^{(n)}$ for this context with ground-truth $c^{(1)}$, we define the accuracy as $\text{acc} = \updelta_1\bigl(\argmax_{j \in \{1, \dots, n\}} S(s, c^{(j)})\bigr)$, where we normalize each score by the byte length of its continuation. For each task we report the average accuracy over the test set. 

The quantity $q_t$ is the density we consider to reweight the time during training with the ring buffer, see \Cref{sec:adaptive_time_sampler} for details. This practice is consistent with the recommendation of \citet{holtzman2021surface} who introduced \emph{Domain Conditional Pointwise Mutual Information}. For completeness, we sweep over $w \in [0,1]$ and remark that $w=0$ corresponds to disregarding the domain regularization. 

We first evaluate the results for a checkpoint obtained while training with the hyper-parameters of \Cref{sec:appendix-setup-owt}. 
However, we note that the training procedure on OWT noises the whole sequence of 1024 tokens. Hence, when we noise only the continuation while keeping the context clean in \eqref{eq:scoring_language_understanding}, the model is severely out of distribution. We therefore also train a short adaptation run, which changes how the sequence is corrupted on OWT. To train the adapted model, we initialize the training run with the baseline weights and train the model for 20,000 iterations with global batch size 256 and EMA decay rate 0.999. We consider a learning rate of $1\times10^{-4}$ with cosine decay to 0 after 20,000 iterations. Finally, during the adaptation run, we train on OWT sequences of length 128 and corrupt them as at evaluation: each sequence is split into a clean context and a noised continuation, with lengths $(L_s, L_c)$ drawn from the $1{,}024$ pairs measured on the tokenized evaluation prompts. 
\subsection{Unconditional molecular generation}
\label{sec:appendix-setup-genmol}

\paragraph{Experimental setup.}

We evaluate unconditional \textit{de novo} molecular generation following the GenMol benchmark protocol~\citep{lee2025genmol} using the SAFE (Sequential Attachment-based Fragment Embedding) molecular representation~\citep{noutahi2024safe}.
We train on the SAFE-GPT dataset~\citep{noutahi2024safe} (v2 version) using the SAFE tokenizer, which has a vocabulary size of $|\mathcal{V}|=1{,}880$. All sequences are padded or truncated to a fixed length of $L=256$. In all diffusion experiments, we treat padding tokens as ordinary tokens during forward corruption, reverse denoising, and in the loss, whereas in the autoregressive (AR) baseline, padding tokens are ignored in the cross-entropy loss.

For all experiments, we use a $12$-layer BERT~\citep{devlin-etal-2019-bert} transformer ($d_{\text{model}}=768$, $12$ attention heads, MLP dimension $3{,}072$) similar to the $87\text{M}$-parameter BERT-Base architecture of GenMol~\citep{lee2025genmol}. We use Post-LayerNorm layers as in MaskGIT~\citep{chang2022maskgitmaskedgenerativeimage}, but replace its MLM prediction head with a single zero-initialized linear output layer that is not tied to the input embeddings. For masked diffusion, similar to~\citet{sahoo2024simple}, we do not condition the network on the diffusion time, while for uniform and Simplex diffusion, we study both variants -- with and without time conditioning. Time-conditioned variants use Sinusoidal time embedding with SiLU activation and embedding dimension $128$, the time conditioning is provided via adaptive layer normalization (AdaLN).
Diffusion models use bidirectional attention, while the AR model uses causal attention. We disable dropout on the attention probabilities and apply a hidden dropout rate of $0.1$ to the embedding output and to every attention and MLP sublayer output. For Simplex Diffusion, the denoiser receives the expectation embedding~\eqref{eq:sdm_input_expectation} of each simplex state. We do not use Self-Conditioning in any of the diffusion models. SDM and masked diffusion are trained with cross-entropy objectives, while uniform diffusion is trained with the ELBO objective.

All models are trained for up to $400$k iterations with a global batch size of $2{,}048$ using AdamW ($\text{lr}=3\times 10^{-4}$ with a $2{,}500$-step linear warmup followed by a constant schedule, weight decay $0.0$, $\beta_1=0.9,\beta_2=0.999,\epsilon=10^{-8}$), global gradient-norm clipping at $1.0$, and an EMA of the parameters with decay $0.9999$; all evaluations use the EMA parameters.
 During training, diffusion times are drawn from $[10^{-4},1]$: by stratified uniform sampling for MDM and UDM, and by the adaptive time sampler of \Cref{sec:adaptive_time_schedule} for SDM.
 
During training, we monitored the evaluation metrics (see below) and noticed that some runs (including AR ones) exhibited collapse (all the evaluation metrics went to zero, while the loss went up). We therefore used early stopping. For each training run, we select the checkpoint with the highest Quality (see below) metric, computed on $1{,}000$ generated molecules. For this evaluation, every method used the standard sampler (see below) with temperature $T=1$ and $256$ sampling steps, and we used $\kappa=0$ for Simplex diffusion. The checkpoints were saved every $500$ steps. In addition, for Simplex Diffusion, we swept over the concentration schedules: \emph{Constant-linear}($\nu_0=0.2,\nu_1=0.5,\ell=0.8)$, \emph{Constant-linear}($\nu_0=0.2,\nu_1=0.75,\ell=0.8)$, \emph{Constant-linear} ($\nu_0=0.4,\nu_1=0.75,\ell=0.2)$,  \emph{Constant}($\nu=0.5$), \emph{Constant}($\nu=0.25$), where the \emph{Constant-linear}($\nu_0=0.4,\nu_1=0.75,\ell=0.2$) schedule achieved the best Quality metric and is used for all reported Simplex Diffusion evaluations.

\paragraph{Evaluation Metrics.}
Following~\citet{lee2025genmol}, we generate $M=1{,}000$ molecules from scratch for each of $3$ sampling seeds, and report the mean $\pm$ standard deviation over seeds. Generated SAFE strings are decoded using the post-processing from~\citep{lee2025genmol}. We report four metrics:
\textbf{Validity} (percentage of the $M$ generated sequences that decode to chemically valid molecules),
\textbf{Uniqueness} (percentage of unique canonical SMILES among valid molecules),
\textbf{Diversity} (one minus the mean pairwise Tanimoto similarity of Morgan fingerprints with radius $2$ and $2{,}048$ bits, over the unique valid molecules), and
\textbf{Quality} (percentage of generated samples that are simultaneously valid, unique and drug-like. Drug-like molecules are defined as those satisfying quantitative estimate of drug-likeness $\mathrm{QED}\geq 0.6$~\citep{bickerton2012quantifying} and synthetic accessibility $\mathrm{SA}\leq 4$~\citep{ertl2009estimation}, respectively, 
following~\citet{lee2025genmol}.).

\paragraph{Sampling and Hyperparameter Sweeps.}
For the autoregressive (AR) baseline, we generate sequences token-by-token using temperature-scaled softmax sampling with temperature $T$. For all diffusion models, we evaluate sampling budgets of $\{32,64,256,1024\}$ steps and three sampling schedules: \textit{linear}, \textit{cosine}, and \textit{adaptive} (\Cref{sec:adaptive_time_schedule}). The detailed results in
\Cref{tab:safegpt_dropout0_ordinary,tab:safegpt_dropout0_conf}
use $256$ steps.

We use two diffusion sampling strategies: (i)~\textbf{standard (ordinary) sampling} (\Cref{tab:safegpt_dropout0_ordinary}), where all sequence positions at step $t \to s$ are updated solely via the reverse bridge transition kernel using temperature-scaled clean predictions sampled from $\hat{P}_0^\ell = \operatorname{Softmax}(z_\theta^\ell(P_t)/T)$ (with churn $\kappa \in [0, 1]$ in \Cref{propbeaut:simplicialtransition} for SDM);
(ii)~\textbf{confidence-based (\textit{conf.}) sampling} (\Cref{tab:safegpt_dropout0_conf}), where each position $\ell \in \{1, \dots, L\}$ is scored by its maximum predicted probability $c_\ell = \max_{v \in \mcx} \hat{P}_{0,v}^\ell$. At step $t \to s$, we commit the union of the $\lfloor \alpha_s^p L \rfloor$ most confident positions (top-fraction schedule with power $p$)
directly to their most likely token $\arg\max_{v \in \mcx} \hat{P}_{0,v}^\ell$ (a simplex vertex for SDM). The remaining positions follow the standard update~(i). Here, $\alpha_s$ is defined in~\eqref{eq:discrete_forward} and is evaluated at the target time $s$ of the transition from $t$ to $s$. Note that the end at the step going from $t \to s$ we have a mixed of clean and noisy tokens in the sample. In the case of UDM and MDM those samples are \emph{in distribution} while in the case of SDM they are severely \emph{out of distribution}. Nevertheless we observe benefits from using this confidence based sampler even without retraining. Additional training of the denoiser with such strategy (e.g.\ training with a fraction $\alpha_t^p$ of positions replaced by clean vertices) could, in principle, further improve Quality and remains an interesting avenue for future work. For MDM, unmasked tokens are never re-masked, and when the top-fraction schedule is active, positions outside the committed set stay masked. For SDM and UDM, the committed set is recomputed at every step.

\section{Additional Experimental Results}
\label{app:additional-exp}

\subsection{Sudoku}
\label{app:additional-exp-sudoku}
We report the full Sudoku results for SDMs with $\kappa \in \{0, 0.2, 1\}$ in \Cref{tab:sudoku_sdm_churn_0_0,tab:sudoku_sdm_churn_0_2,tab:sudoku_sdm_churn_1_0}, for the baselines in \Cref{tab:sudoku_baselines_appendix}, and for our re-implementation of DFM in \Cref{tab:sudoku_hard_dfm_apdx}. The full tables are in \Cref{app:full_tables_sudoku}.
\paragraph{Churn and Concentration Schedule.}
SDMs perform best with $\kappa = 1$. Without SC, every concentration schedule reaches 86--92\% at $\kappa = 1$, whereas no schedule exceeds 79.4\% at $\kappa = 0$. The best result (91.6\%) comes from $\nu_t^\text{cst.-lin.}$ with $\nu_0 = 0.2$, $\nu_1 = 0.75$, $\ell = 0.2$. With SC, SDMs are robust to the churn: all three values reach 98.6--99.2\%.
\paragraph{Comparison with the Baselines.}
\begin{wraptable}{r}{0.45\textwidth}
    \vspace{-12pt}
    \caption{\textbf{Accuracy (\%)} on Sudoku in 180 steps comparing Dirichlet Flow Matching and our SDM method. SDMs use $\nu_t^\text{cst.-lin.}$ with $\nu_0 = 0.4$, $\nu_1 = 0.75$, $\ell = 0.8$, as in \Cref{tab:sudoku_hard_table_main}.}
    \label{tab:sudoku_hard_dfm_apdx}
    \centering
    \small
    \setlength{\tabcolsep}{12pt}
    \renewcommand{\arraystretch}{1.05}
    {%
        \newcommand{\tabrow}{\hspace*{0.8em}}
        \begin{tabular}{l c}
        \toprule
        Model & Linear \\
        \midrule
        \multicolumn{2}{@{}l@{}}{\textit{Continuous}} \\
        \tabrow DFM (reimpl.) & $76.7_{{\color{gray}\scriptscriptstyle\pm 0.75}}$  \\ 
        \rowcolor{gray!15}
        \tabrow SDM ($\kappa = 1.0$) & $88.4_{{\color{gray}\scriptscriptstyle\pm 2.2}}$ \\
        \rowcolor{gray!15}
        \tabrow SDM (+SC; $\kappa = 1.0$) & $\underline{\textbf{99.1}}_{{\color{gray}\scriptscriptstyle\pm 0.2}}$  \\
        \bottomrule
        \vspace{-20pt}
        \end{tabular}%
    }
\end{wraptable}
We compare against SDMs with $\nu_t^\text{cst.-lin.}$ with $\nu_0 = 0.4$, $\nu_1 = 0.75$, $\ell = 0.8$ and $\kappa = 1$, as in \Cref{tab:sudoku_hard_table_main}. Without SC, SDMs (88.4\%) outperform DFM (76.7\%) and FLMs (75.3\%), and match $\mathbb{S}$-FLMs (87.3\%). Among discrete models without SC, only UDMs with predictor-corrector sampling (95.8\%) exceed them. With SC, SDMs reach 99.1\% ($\kappa = 1$), on par with the best discrete model (MDM with SC, 99.3\%).
\paragraph{Adaptive Time Sampler.}
Without SC, the adaptive time sampler improves ancestral sampling across models: MDMs (60.1\% $\to$ 69.4\%), UDMs (73.7\% $\to$ 82.1\%), FLMs (73.4\% $\to$ 75.3\%), $\mathbb{S}$-FLMs (83.5\% $\to$ 87.3\%) and SDMs at $\kappa = 0$ (73.9\% $\to$ 77.1\%). At $\kappa = 1$, SDMs perform similarly with both samplers (within 3 points). With SC, the adaptive sampler increases the variance across seeds for both SDMs and UDMs, and it degrades predictor-corrector sampling for the discrete baselines. We therefore train SDMs with the uniform time sampler in the main text.
\paragraph{Comparison with DFM.}
With the improvements of \Cref{app:dfm_baseline}, DFM reaches 76.7\% on Sudoku, well below SDMs with $\nu_t^\text{cst.-lin.}$ with $\nu_0 = 0.4$, $\nu_1 = 0.75$, $\ell = 0.8$ and $\kappa = 1$ (88.4\%, and 99.1\% with SC; \Cref{tab:sudoku_hard_dfm_apdx}).
\subsection{TinyGSM}
\label{app:additional-exp-tinygsm}
Full tables are in \Cref{app:full_tables_tinygsm}.
\paragraph{Effect of the Churn $\kappa$.}
\label{app:tinygsm_churn}
\Cref{tab:tinygsm_sdm_sdm_t10_s512,tab:tinygsm_sdm_sdm_t10_s64,tab:tinygsm_sdm_sdm_t01_s512,tab:tinygsm_sdm_sdm_t01_s64} report SDM accuracy for $\kappa \in \{0, 0.2, 1\}$.
The tables cover three sampling grids and several concentration schedules (\Cref{sec:concentration_schedule}), as well as the three input variants (expected embedding, Argmax, Argmax + SC), two temperatures and two step budgets.
\Cref{fig:tinygsm_churn_T1p0_s512,fig:tinygsm_churn_T0p1_s512,fig:tinygsm_churn_T1p0_s64,fig:tinygsm_churn_T0p1_s64} sweep $\kappa \in \{0, 0.1, \dots, 1\}$ for two concentration schedules.
In all configurations of the tables, $\kappa = 1$ gives the highest accuracy.
The gain is largest for the expected-embedding input with the adaptive grid.
For $\nu^\text{cst.-lin.}$ with $\nu_0 = 0.2$, $\nu_1 = 0.5$, $\ell = 0.2$, accuracy rises from $12.6\%$ to $45.8\%$ at $T = 1$ and from $19.3\%$ to $49.0\%$ at $T = 0.1$ (512 steps).
The trend is not always monotone.
In some configurations, $\kappa = 0.2$ is below $\kappa = 0$, mostly with the Argmax inputs at 64 steps (e.g., $15.6\% \to 10.0\% \to 22.5\%$ for Argmax, $\nu = 0.5$, cosine grid, $T = 1$).
%
\begin{figure*}
    \centering
    \includegraphics[width=\textwidth]{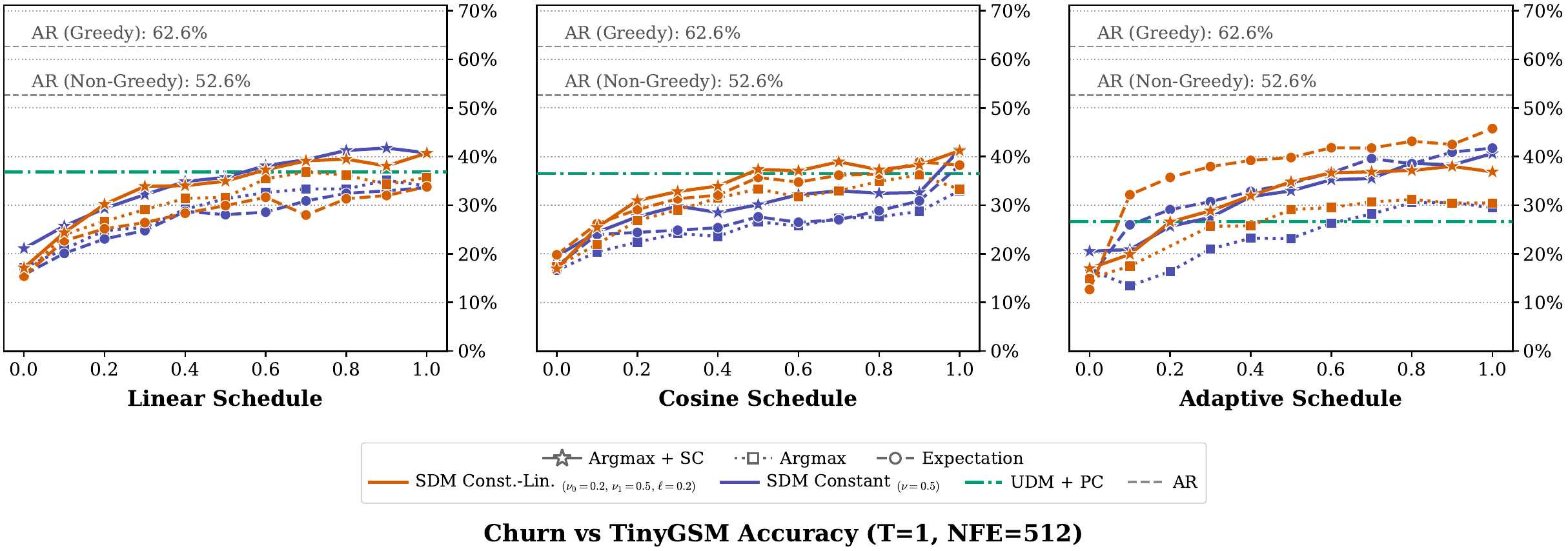}
    \caption{\textbf{Churn $\kappa$ vs.\ TinyGSM accuracy at $T = 1.0$ with 512 steps} for the \textit{Linear} (left), \textit{Cosine} (center) and \textit{Adaptive} (right) sampling schedules. We compare two concentration schedules, \texttt{Const.-Lin.} ($\nu_0=0.2, \nu_1=0.5, \ell=0.2$, orange) and \texttt{Constant} ($\nu=0.5$, indigo), and three input variants: \textit{Argmax + SC} ($\star$), \textit{Argmax} ($\blacksquare$) and \textit{Expectation} $P_t$ ($\bullet$). Horizontal lines show AR decoding (greedy: $62.6\%$; sampling: $52.6\%$) and UDM with the Predictor-Corrector (PC) sampler. Accuracy generally increases with $\kappa$. With the adaptive schedule, \texttt{Const.-Lin.} with the expectation input rises from $12.6\%$ at $\kappa=0$ to $\mathbf{45.8\%}$ at $\kappa=1$, $9.0$ points above the best UDM with PC ($36.8\%$). Exact values for $\kappa \in \{0, 0.2, 1\}$ are in \Cref{tab:tinygsm_sdm_sdm_t10_s512}.}
    \label{fig:tinygsm_churn_T1p0_s512}
\end{figure*}
\begin{figure*}
    \centering
    \includegraphics[width=\textwidth]{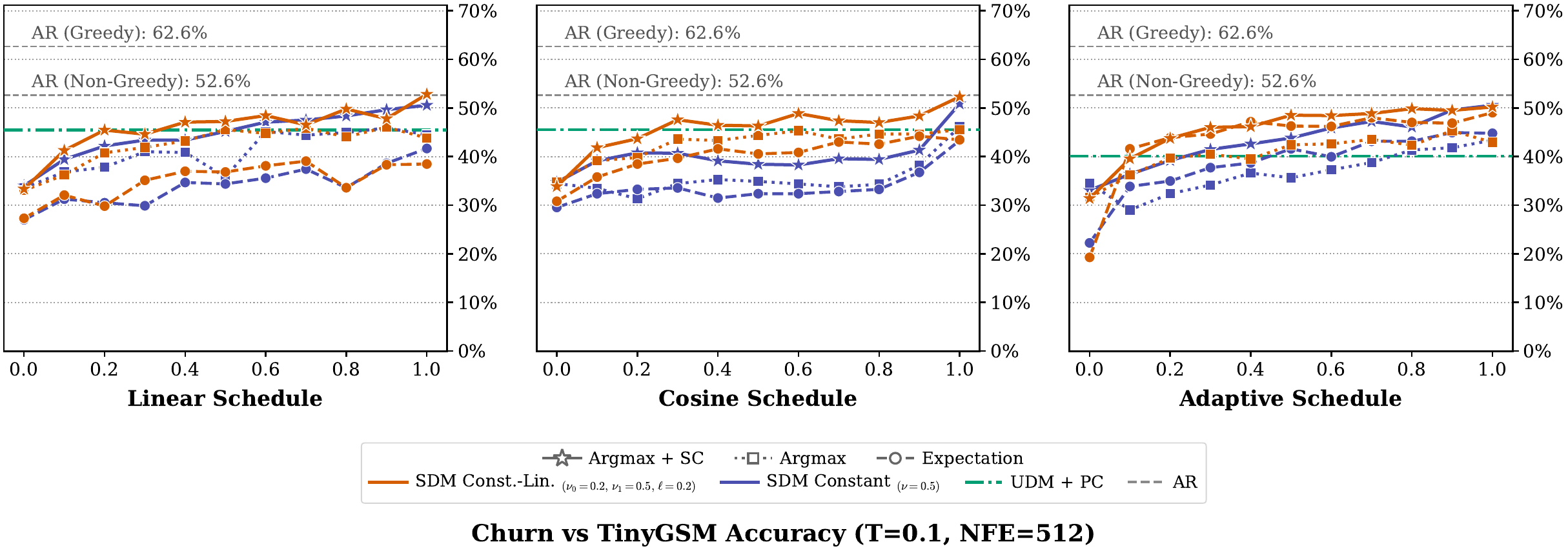}
    \caption{\textbf{Churn $\kappa$ vs.\ TinyGSM accuracy at $T = 0.1$ with 512 steps} for the \textit{Linear} (left), \textit{Cosine} (center) and \textit{Adaptive} (right) sampling schedules. We compare two concentration schedules, \texttt{Const.-Lin.} ($\nu_0=0.2, \nu_1=0.5, \ell=0.2$, orange) and \texttt{Constant} ($\nu=0.5$, indigo), and three input variants: \textit{Argmax + SC} ($\star$), \textit{Argmax} ($\blacksquare$) and \textit{Expectation} $P_t$ ($\bullet$). Horizontal lines show AR decoding (greedy: $62.6\%$; sampling: $52.6\%$) and UDM with the Predictor-Corrector (PC) sampler. Accuracy generally increases with $\kappa$, and \textit{Argmax + SC} is the strongest input under the linear and cosine schedules. At $\kappa=1$, \texttt{Constant} with \textit{Argmax + SC} reaches $\mathbf{50.9\%}$ (cosine), above the best UDM with PC ($45.5\%$). With the adaptive schedule, \texttt{Const.-Lin.} with the expectation input rises from $19.3\%$ at $\kappa=0$ to $49.0\%$ at $\kappa=1$. Exact values for $\kappa \in \{0, 0.2, 1\}$ are in \Cref{tab:tinygsm_sdm_sdm_t01_s512}.}
    \label{fig:tinygsm_churn_T0p1_s512}
\end{figure*}
\begin{figure*}
    \centering
    \includegraphics[width=\textwidth]{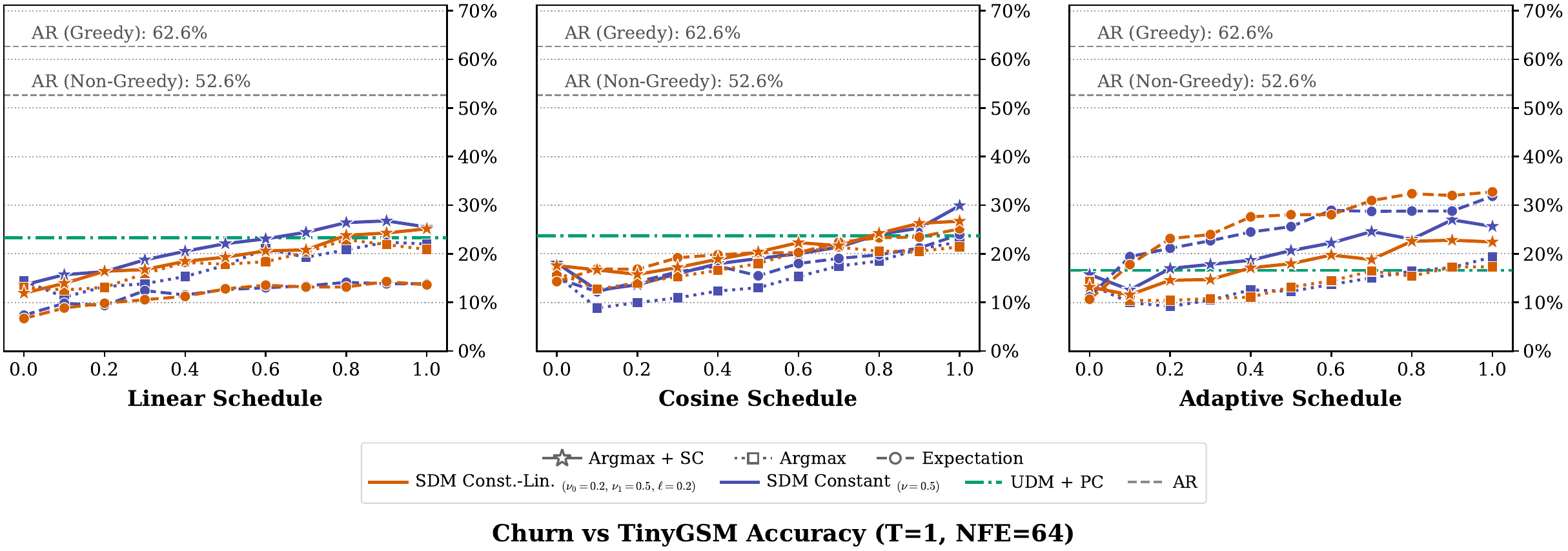}
    \caption{\textbf{Churn $\kappa$ vs.\ TinyGSM accuracy at $T = 1.0$ with 64 steps} for the \textit{Linear} (left), \textit{Cosine} (center) and \textit{Adaptive} (right) sampling schedules. We compare two concentration schedules, \texttt{Const.-Lin.} ($\nu_0=0.2, \nu_1=0.5, \ell=0.2$, orange) and \texttt{Constant} ($\nu=0.5$, indigo), and three input variants: \textit{Argmax + SC} ($\star$), \textit{Argmax} ($\blacksquare$) and \textit{Expectation} $P_t$ ($\bullet$). Horizontal lines show AR decoding (greedy: $62.6\%$; sampling: $52.6\%$) and UDM with the Predictor-Corrector (PC) sampler. With few steps and high temperature, the expectation input combined with the adaptive schedule and $\kappa=1$ performs best: \texttt{Const.-Lin.} rises from $10.6\%$ at $\kappa=0$ to $\mathbf{32.7\%}$ at $\kappa=1$ and \texttt{Constant} from $11.2\%$ to $31.8\%$, compared with $14.7\%$ and $13.8\%$ under the linear schedule and $23.7\%$ for the best UDM with PC. Exact values for $\kappa \in \{0, 0.2, 1\}$ are in \Cref{tab:tinygsm_sdm_sdm_t10_s64}.}
    \label{fig:tinygsm_churn_T1p0_s64}
\end{figure*}
\begin{figure*}
    \centering
    \includegraphics[width=\textwidth]{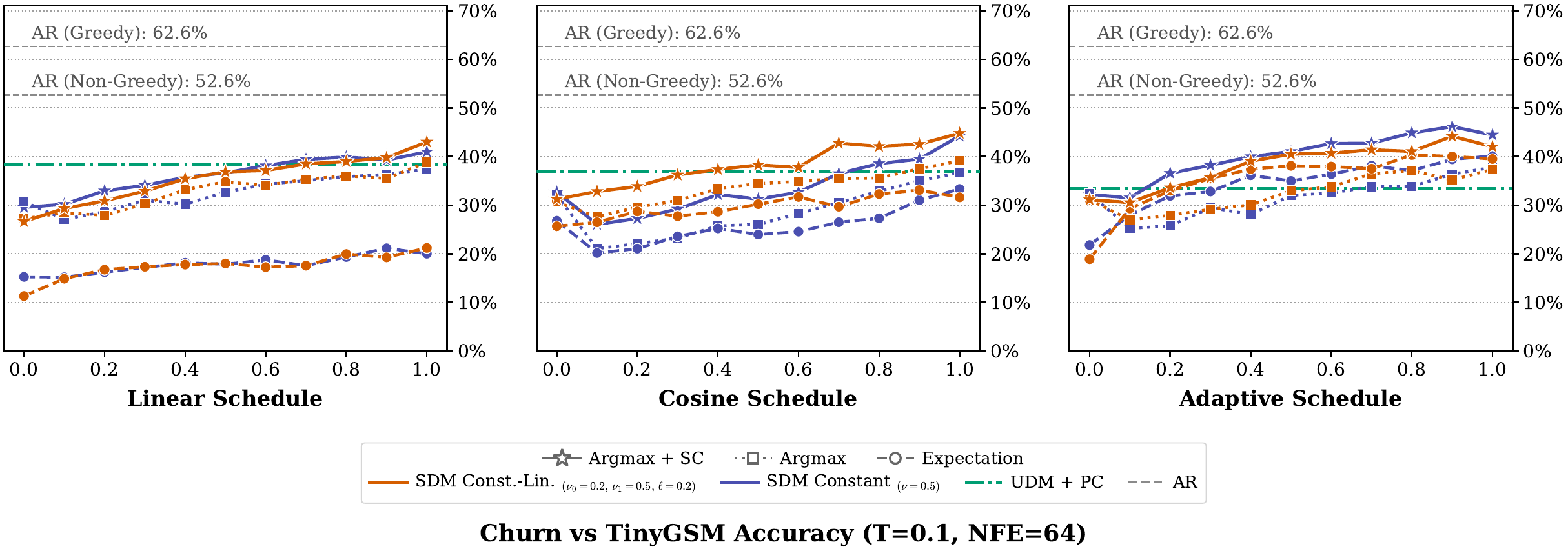}
    \caption{\textbf{Churn $\kappa$ vs.\ TinyGSM accuracy at $T = 0.1$ with 64 steps} for the \textit{Linear} (left), \textit{Cosine} (center) and \textit{Adaptive} (right) sampling schedules. We compare two concentration schedules, \texttt{Const.-Lin.} ($\nu_0=0.2, \nu_1=0.5, \ell=0.2$, orange) and \texttt{Constant} ($\nu=0.5$, indigo), and three input variants: \textit{Argmax + SC} ($\star$), \textit{Argmax} ($\blacksquare$) and \textit{Expectation} $P_t$ ($\bullet$). Horizontal lines show AR decoding (greedy: $62.6\%$; sampling: $52.6\%$) and UDM with the Predictor-Corrector (PC) sampler. At $\kappa=1$, \textit{Argmax + SC} reaches up to $\mathbf{44.5\%}$ (\texttt{Constant}, adaptive), above the best UDM with PC ($38.3\%$). With the adaptive schedule, the expectation input also improves strongly with churn, from $21.8\%$ to $40.1\%$ (\texttt{Constant}) and from $18.9\%$ to $39.5\%$ (\texttt{Const.-Lin.}). Exact values for $\kappa \in \{0, 0.2, 1\}$ are in \Cref{tab:tinygsm_sdm_sdm_t01_s64}.}
    \label{fig:tinygsm_churn_T0p1_s64}
\end{figure*}
\paragraph{Blockwise Generation.}
We compare full-sequence SDMs with a block-autoregressive variant \citep{han2023ssd, arriola2025block} that generates blocks of 32 tokens from left to right, with the linear, cosine or adaptive grid (\Cref{fig:tinygsm_block_vs_fullseq_nfe}). Both use $\kappa = 1$, with the expected-embedding and Argmax + SC inputs. Full-sequence SDMs are more accurate at every NFE budget, despite the stronger left-to-right bias of block generation.
\begin{figure*}
    \centering
    \includegraphics[width=\textwidth]{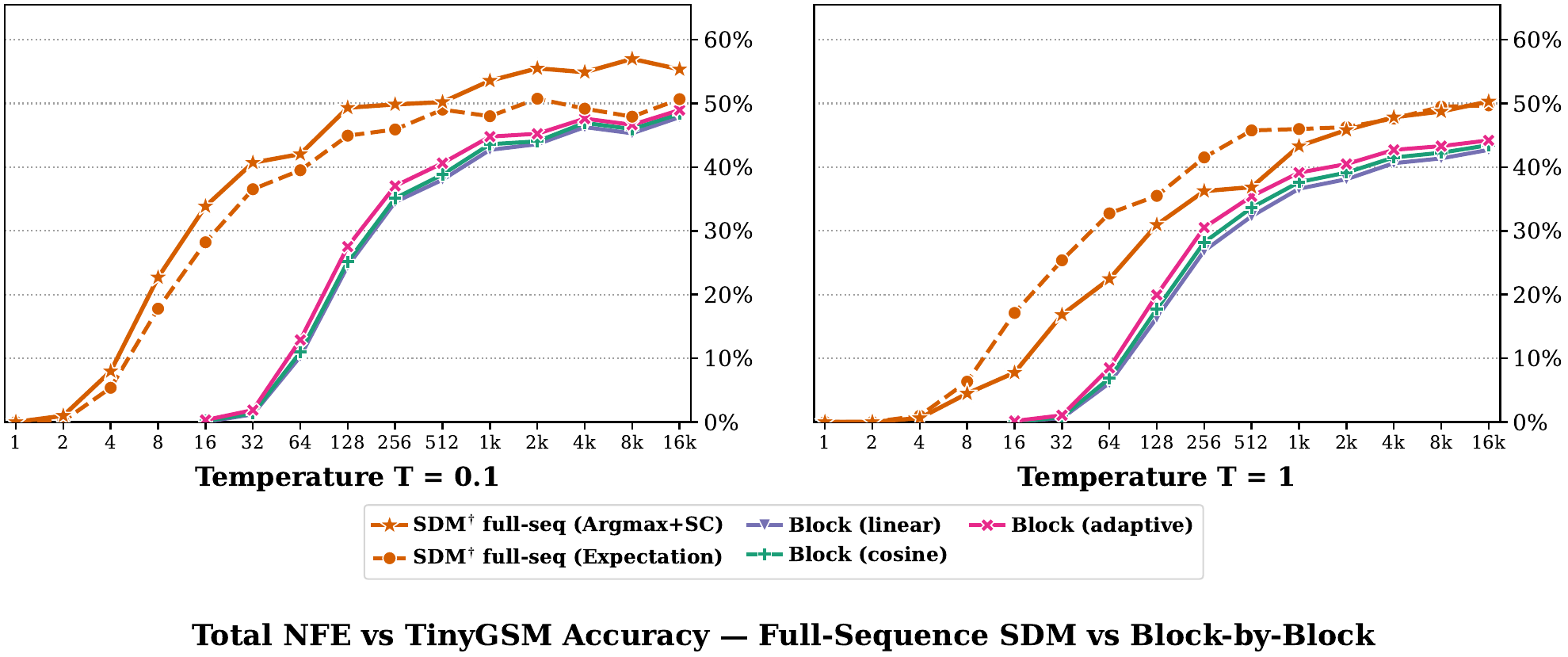}
    \caption{\textbf{Full-sequence vs.\ block-wise SDM generation on TinyGSM.} Accuracy vs.\ total NFE at $T = 0.1$ (left) and $T = 1$ (right) for block-wise generation with block size 32 under the linear, cosine and adaptive grids, compared with full-sequence SDMs (expected-embedding and Argmax + SC inputs, $\kappa = 1$). Block-wise generation is less accurate at every budget.}
    \label{fig:tinygsm_block_vs_fullseq_nfe}
\end{figure*}
\paragraph{Adaptive Grid.}
The adaptive grid (\Cref{sec:adaptive_time_schedule}) gives the largest gains for SDMs with the expected-embedding input.
These gains are largest at small step budgets: $+18.0$ points at 64 steps and $T = 1$ (\Cref{tab:tinygsm_sdm_sdm_t10_s64}).
\paragraph{Training Sampler and Grid for the Baselines.}
\begin{wraptable}{r}{0.45\textwidth}
    \vspace{-12pt}
    \caption{\textbf{Accuracy (\%)} on TinyGSM (Adaptive schedule) comparing Dirichlet Flow Matching and our SDM method across step budgets and logit temperature $T$. For SDM, we use the expectation input, $\nu_t^\text{cst.-lin.}$ ($\nu_0=0.2$, $\nu_1=0.5$, $\ell=0.2$) and $\kappa = 1$ (\Cref{tab:tinygsm_sdm_sdm_t10_s512,tab:tinygsm_sdm_sdm_t10_s64,tab:tinygsm_sdm_sdm_t01_s512,tab:tinygsm_sdm_sdm_t01_s64}).}
    \label{tab:tinygsm_dfm_apdx}
    \centering
    \small
    \setlength{\tabcolsep}{12pt}
    \renewcommand{\arraystretch}{1.05}
    {%
        \newcommand{\tabrow}{\hspace*{0.8em}}
        \begin{tabular}{l cc}
        \toprule
        Model & 64 & 512 \\
        \midrule
        \multicolumn{3}{@{}l@{}}{\textit{$T = 1.0$}} \\
        \tabrow DFM (reimpl.) & 5.4 &  6.1 \\
        \rowcolor{gray!15}
        \tabrow SDM ($\kappa = 1.0$) & 32.7  & 45.8 \\
        \midrule
        \multicolumn{3}{@{}l@{}}{\textit{$T = 0.1$}} \\
        \tabrow DFM (reimpl.) & 12.1 & 13.3 \\
        \rowcolor{gray!15}
        \tabrow SDM ($\kappa = 1.0$) & 39.5 & 49.0 \\
        \bottomrule
        \end{tabular}%
    }
\end{wraptable}
\Cref{tab:tinygsm_baselines_t10_s512,tab:tinygsm_baselines_t10_s64,tab:tinygsm_baselines_t01_s512,tab:tinygsm_baselines_t01_s64} report every baseline trained with the uniform ($^\dagger$) and the adaptive ($^\ddagger$) time sampler, and sampled with the linear, cosine and adaptive grids.
The adaptive training sampler improves the SC variants in 11 of 12 settings (e.g., UDM + SC from $31.7\%$ to $39.7\%$ at $T = 0.1$ with 512 steps) and FLMs in all four settings.
It lowers UDM + PC in all four settings (e.g., from $45.5\%$ to $40.1\%$), consistent with our Sudoku results.
For discrete baselines, the adaptive grid changes accuracy by between $-5.8$ and $+5.8$ points relative to the linear grid.
For every baseline, we sweep the training time sampler (uniform or adaptive), the loss (CE or ELBO for MDMs), the sampler (ancestral, PC, SC) and the sampling grid (linear, cosine, and adaptive when trained with the adaptive sampler), at the same temperatures and step budgets as SDMs, and compare SDMs against the best of these configurations. SDMs have a larger search space, since we additionally sweep the concentration schedule (4--5 per input) and the churn $\kappa \in \{0, 0.2, 1\}$; however, a single configuration ($\nu_0=0.2$, $\nu_1=0.5$, $\ell=0.2$, $\kappa=1$, adaptive grid) is the best Expectation SDM in three of the four settings.
\paragraph{Comparison with DFM.}
We trained our re-implemented DFM model (\Cref{app:dfm_baseline}) on TinyGSM, using the exact same network architecture and training setup as our method. We again find that DFM performs worse than our method (\Cref{tab:tinygsm_dfm_apdx}).
\paragraph{Comparison with Spherical Flow.}
\label{app:spherical_flows_tinygsm}
\citet{chemseddine2026sphericalflowssamplingcategorical} train Spherical Flow (SF) on TinyGSM with the same architecture, tokenizer, context length and training budget as ours, and evaluate it at $T = 1$.
\Cref{tab:tinygsm_spherical_flows} compares their reported numbers with SDMs at matched NFE.
Both SDMs and SFs benefit from increased stochasticity during generation. With the least stochasticity (ODE for Spherical Flow, $\kappa = 0$ for SDMs), neither exceeds $13\%$.
Without SC, SDMs outperform Spherical Flow with predictor-corrector (PC) sampling with 64 ($32.7\%$ vs.\ $26.9\%$) and 512 ($45.8\%$ vs.\ $32.4\%$) NFEs.
SDMs without SC perform similarly to Spherical Flow with PC and SC ($32.7\%$ vs.\ $35.6\%$ at 64 NFE, $45.8\%$ vs.\ $41.7\%$ at 512 NFE).

\begin{wraptable}{r}{0.45\textwidth}
    \vspace{-12pt}
    \caption{\textbf{Accuracy (\%) of SDMs and Spherical Flow on TinyGSM} at \mbox{$T = 1$}. Spherical Flow numbers come from \citet{chemseddine2026sphericalflowssamplingcategorical} (Table 20), who train with the same architecture, tokenizer, context length and training budget as ours. Each PC entry uses their best sampler configuration for that budget. For SDMs, we always use $\nu_t^\text{cst.-lin.}$ ($\nu_0=0.2$, $\nu_1=0.5$, $\ell=0.2$) and the adaptive grid, as in \Cref{fig:tinygsm-nfe-vs-accuracy}.}
    \label{tab:tinygsm_spherical_flows}
    \centering
    \small
    \setlength{\tabcolsep}{5pt}
    \renewcommand{\arraystretch}{1.05}
    {%
        \newcommand{\tabrow}{\hspace*{0.8em}}
        \begin{tabular}{l cc}
        \toprule
        Model & 64 NFE & 512 NFE \\
        \midrule
        \multicolumn{3}{@{}l@{}}{\textit{Spherical Flow}} \\
        \tabrow ODE & 6.1 & 6.4 \\
        \tabrow PC & 26.9 & 32.4 \\
        \tabrow PC + SC & 35.6 & 41.7 \\
        \midrule
        \multicolumn{3}{@{}l@{}}{\textit{SDMs}} \\
        \tabrow Expectation ($\kappa = 0$) & 10.6 & 12.6 \\
        \rowcolor{gray!15}
        \tabrow Expectation ($\kappa = 1$) & 32.7 & 45.8 \\
        \tabrow Argmax + SC ($\kappa = 1$) & 22.4 & 36.9 \\
        \bottomrule
        \end{tabular}%
    }
    \vspace{-25pt}
\end{wraptable}

\subsection{Distillation on TinyGSM}
\label{app:distillation_exp}
\paragraph{Teacher.} We distill the SDM trained with the cross-entropy loss \eqref{eq:xentropy}, the expected-embedding input (no SC), $\alpha_t = 1-t$, a uniform prior $\pi$, the schedule $\nu_t^\text{cst.-lin.}$ with $\nu_0=0.2$, $\nu_1=0.5$, $\ell=0.2$ (\Cref{sec:concentration_schedule}), and the adaptive time sampler (\Cref{sec:adaptive_time_sampler}). This is the SDM (Expectation) of \Cref{fig:tinygsm-nfe-vs-accuracy}. We sample distilled models with $\kappa = 1$, the adaptive grid and $T = 1$.
\paragraph{Hyperparameters.} We sweep 36 configurations (\Cref{tab:tinygsm_distilled_ast_diversity_t10}) and the sampling churn $\kappa \in \{0, 0.2, 1\}$. We do not hold out a validation split: configurations are evaluated, and the reported one is selected, on the GSM8K test set, so the $32.1\%$ in the main text is a best-of-sweep number. The conclusion does not depend on this selection: with $\kappa = 1$, all 36 configurations reach between $27.4\%$ and $32.1\%$ pass@1 with 8 steps (median $28.9\%$), above IDLM with 128 steps ($21.4\%$). The generator loss weight $\lambda_\text{gen}$ has little to no effect.
\paragraph{Comparison with IDLM.} We take the IDLM results from \citet{li2026idlm} without re-training. They use the same tokenizer, metric (pass@1) and temperature ($T=1$) as us (\Cref{tab:tinygsm_distill_idlm}). The remaining differences are the distillation objective, the diffusion process (masked vs.\ simplex) and the teacher.
\paragraph{Accuracy vs.\ Steps and Diversity.} \Cref{fig:tinygsm_distilled_vs_sdm_nfe} compares distilled and undistilled SDMs across sampling budgets. \Cref{tab:tinygsm_baselines_ast_diversity} reports the multi-sample accuracy and AST diversity of the baselines. Distillation trades diversity for speed: with 8 NFEs at $T=1$, the distilled SDM reaches an AST diversity of $17.5$, vs.\ $35.3$ for the undistilled SDM with 512 NFEs and $36.7$ for AR.
\begin{table}[t]
    \caption{\textbf{Accuracy (\%)} of distilled models on TinyGSM across sampling steps ($T=1$). SDMs use the concentration schedule defined in terms of the total variance with $\nu_t^\text{cst.-lin.}$ (\mbox{$\nu_0=0.2$, $\nu_1=0.5$, $\ell=0.2$}).}
    \label{tab:tinygsm_distill_idlm}
    \centering
    \small
    \setlength{\tabcolsep}{5pt}
    \begin{tabular}{l cccc}
    \toprule
    Model & 8 & 32 & 64 & 128 \\
    \midrule
    IDLM-MDLM \citep{li2026idlm} & -- & 12.8 & 14.9 & 19.9 \\
    IDLM-Duo \citep{li2026idlm} & -- & 15.4 & 19.0 & 21.4 \\
    SDM (undistilled) & 6.3 & 25.4 & 32.7 & 35.5 \\
    \rowcolor{gray!15}
    SDM (distilled) & \textbf{32.1} & \textbf{37.6} & \textbf{37.8} & \textbf{39.4} \\
    \bottomrule
    \end{tabular}
\end{table}
\begin{table*}[t]
    \centering
    \scriptsize
    \setlength{\tabcolsep}{6.2pt}
    \renewcommand{\arraystretch}{1.12}
    \caption{\textbf{Distilled SDMs on TinyGSM} (8 steps, $\kappa = 1$, adaptive grid, $T = 1$, $K=5$ samples per problem). We sweep the learning rate $\in \{3\times10^{-6}, 10^{-5}, 3\times10^{-5}\}$, the Adam parameters $(\beta_1, \beta_2) \in \{0, 0.9\} \times \{0.95, 0.999\}$ and the generator loss weight $\lambda_{\text{gen}} \in \{1, 2, 5\}$ (36 configurations), and report the 25 configurations with highest pass@1.}
    \label{tab:tinygsm_distilled_ast_diversity_t10}
    \begin{tabular}{c c c c ccccc}
        \toprule

        $\text{lr}$ & $\beta_1$ & $\beta_2$ & $\lambda_{\text{gen}}$ & pass@1 (\%) $\uparrow$ & pass@2 (\%) $\uparrow$ & pass@5 (\%) $\uparrow$ & AST Div. $\uparrow$ & AST Div. (Correct) $\uparrow$ \\
        \midrule
        \texttt{1e-05} & 0.0 & 0.95 & 1.0 & \textbf{31.9} & \textbf{40.0} & 48.7 & 17.5 & 7.5 \\
        \texttt{1e-05} & 0.0 & 0.95 & 2.0 & \textbf{31.9} & \textbf{40.0} & 48.7 & 17.5 & 7.5 \\
        \texttt{1e-05} & 0.0 & 0.95 & 5.0 & \textbf{31.9} & \textbf{40.0} & 48.7 & 17.5 & 7.5 \\
        \texttt{1e-05} & 0.9 & 0.999 & 1.0 & 31.8 & \textbf{40.0} & 49.6 & 17.4 & 7.6 \\
        \texttt{1e-05} & 0.9 & 0.999 & 2.0 & 31.8 & \textbf{40.0} & 49.6 & 17.4 & 7.6 \\
        \texttt{1e-05} & 0.9 & 0.999 & 5.0 & 31.7 & \textbf{40.0} & 49.6 & 17.4 & 7.6 \\
        \texttt{3e-06} & 0.0 & 0.999 & 2.0 & 30.1 & 39.6 & \textbf{49.9} & 19.3 & \textbf{8.7} \\
        \texttt{3e-06} & 0.0 & 0.999 & 1.0 & 29.9 & 39.3 & 49.8 & 19.3 & \textbf{8.7} \\
        \texttt{3e-06} & 0.0 & 0.999 & 5.0 & 29.9 & 39.3 & 49.8 & 19.3 & \textbf{8.7} \\
        \texttt{1e-05} & 0.0 & 0.999 & 1.0 & 29.5 & 38.1 & 48.5 & 18.2 & 8.3 \\
        \texttt{1e-05} & 0.0 & 0.999 & 2.0 & 29.5 & 38.1 & 48.5 & 18.2 & 8.3 \\
        \texttt{1e-05} & 0.0 & 0.999 & 5.0 & 29.5 & 38.1 & 48.5 & 18.2 & 8.3 \\
        \texttt{3e-05} & 0.9 & 0.95 & 1.0 & 29.2 & 37.1 & 46.6 & \textbf{20.5} & 8.1 \\
        \texttt{3e-05} & 0.9 & 0.95 & 2.0 & 29.2 & 37.1 & 46.6 & \textbf{20.5} & 8.1 \\
        \texttt{3e-05} & 0.9 & 0.95 & 5.0 & 29.2 & 37.1 & 46.6 & \textbf{20.5} & 8.1 \\
        \texttt{3e-06} & 0.9 & 0.999 & 1.0 & 28.9 & 38.3 & 48.7 & 17.5 & 8.4 \\
        \texttt{3e-06} & 0.9 & 0.999 & 2.0 & 28.9 & 38.2 & 48.7 & 17.6 & 8.3 \\
        \texttt{3e-06} & 0.9 & 0.999 & 5.0 & 28.9 & 38.3 & 48.7 & 17.5 & 8.4 \\
        \texttt{3e-05} & 0.0 & 0.95 & 1.0 & 28.8 & 37.0 & 44.9 & 19.6 & 8.4 \\
        \texttt{3e-05} & 0.0 & 0.95 & 2.0 & 28.8 & 36.9 & 44.9 & 19.6 & 8.2 \\
        \texttt{3e-05} & 0.0 & 0.95 & 5.0 & 28.8 & 36.9 & 44.9 & 19.6 & 8.2 \\
        \texttt{3e-05} & 0.0 & 0.999 & 1.0 & 28.6 & 35.8 & 45.3 & 19.6 & 8.4 \\
        \texttt{3e-05} & 0.0 & 0.999 & 2.0 & 28.6 & 35.8 & 45.3 & 19.6 & 8.5 \\
        \texttt{3e-05} & 0.0 & 0.999 & 5.0 & 28.6 & 35.8 & 45.3 & 19.6 & 8.4 \\
        \texttt{3e-05} & 0.9 & 0.999 & 1.0 & 28.6 & 36.6 & 44.7 & 18.4 & 8.0 \\
        \bottomrule
    \end{tabular}
\end{table*}
\begin{figure*}
    \centering
    \includegraphics[width=\textwidth]{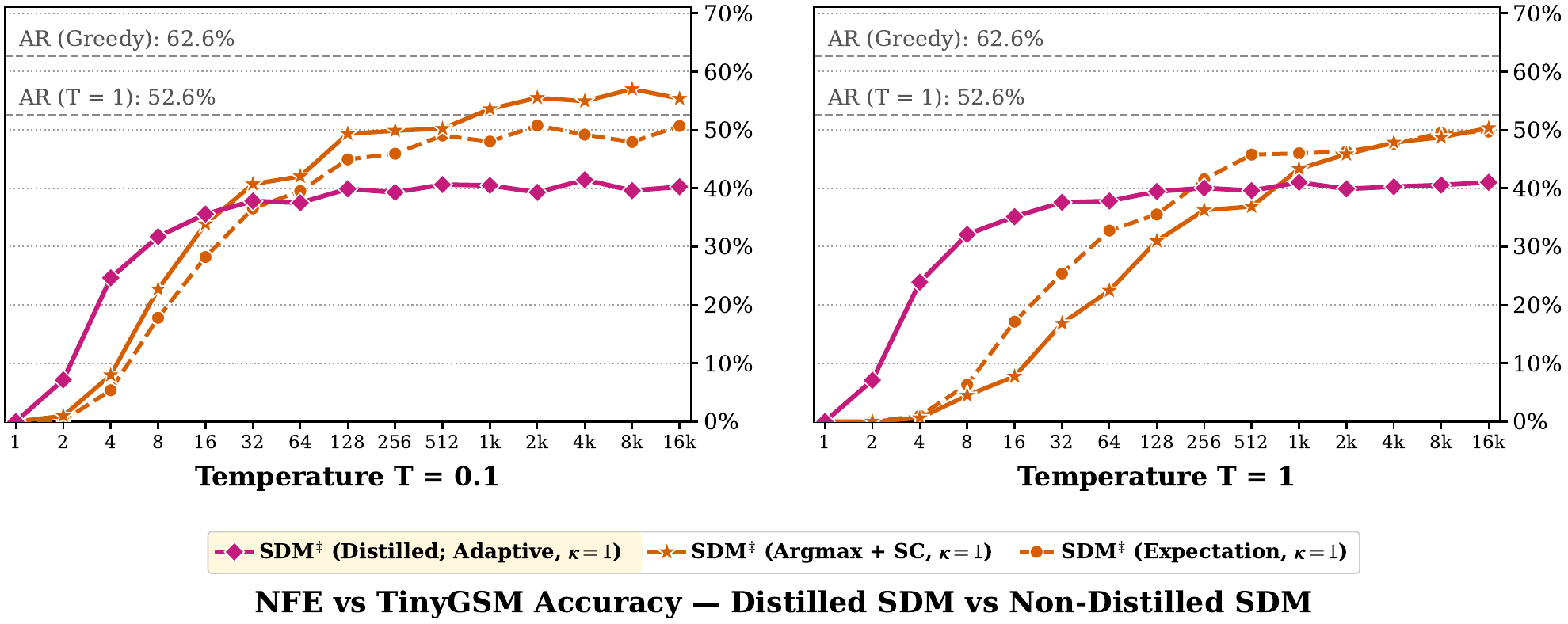}
    \caption{\textbf{Distilled vs.\ undistilled SDMs on TinyGSM} across sampling steps (NFE $\in [1, 2^{14}]$, $\kappa = 1$, adaptive grid) at $T = 0.1$ (left) and $T = 1$ (right). At $T = 1$, the distilled SDM solves $32.1\%$ of the problems with 8 NFEs, against $6.3\%$ for the undistilled SDM with the expected embedding, and $39.4\%$ with 128 NFEs. Beyond 128 NFEs it plateaus at $40$--$41\%$, whereas the undistilled SDM keeps improving ($45.8\%$ at 512 and $49.7\%$ at 16k NFEs).}
    \label{fig:tinygsm_distilled_vs_sdm_nfe}
\end{figure*}

\subsection{Auto-Guidance on TinyGSM}
\label{app:autoguidance}

While Classifier-Free Guidance (CFG; \citealp{ho2022classifier}) is a standard mechanism for improving sample fidelity in diffusion models, it requires training with conditioning dropout and cannot be applied directly to unconditional generation or pre-trained models without dropout tokens. To sharpen generation quality at inference time without modifying the training objective, we adapt \emph{Auto-Guidance} \citep{karras2024guiding} to Simplex Diffusion Models (SDMs). Specifically, we guide the fully converged model $\theta_{\text{final}}$ away from an earlier, unconverged checkpoint $\theta_K$ saved during training. More precisely, we modify the logits and probabilities of the Simplex Diffusion Model  as follows
\begin{equation}
    \tilde{\zeta}(x_t, t)
    = \zeta_{\theta_{\text{final}}}(x_t, t)
      + w \bigl( \zeta_{\theta_{\text{final}}}(x_t, t) - \zeta_{\theta_K}(x_t, t) \bigr),
    \qquad
    \hat{p}_0(x_t, t) = \operatorname{softmax}\!\left( \frac{\tilde{\zeta}(x_t, t)}{T} \right),
    \label{eq:simplicial_autoguidance}
\end{equation}
where $w \ge 0$ controls the guidance strength ($w = 0$ recovers the un-guided baseline).

We apply Auto-Guidance to the TinyGSM SDM with the expected embedding (no SC; $\nu_0=0.2$, $\nu_1=0.5$, $\ell=0.2$), sampled with 64 steps. The guided model is the EMA checkpoint at $250$k steps, and the guiding model uses the raw (non-EMA) parameters at step $K \in \{0, 10\text{k}, 20\text{k}, 30\text{k}, 50\text{k}, 100\text{k}, 150\text{k}, 200\text{k}, 249.5\text{k}\}$. We sweep $w \in \{0.05, 0.1, 0.15, 0.2, 0.25, 0.35, 0.5, 0.75, 1.0, 1.5\}$ and logit temperatures $T \in \{0.01, 0.1, 1.0\}$ for each churn $\kappa \in \{0.0, 0.2, 1.0\}$, with 3 evaluation seeds. \Cref{fig:tinygsm_autoguidance_by_churn} shows TinyGSM pass@1 accuracy for $w \in \{0.1, 0.25, 0.5, 1.0\}$. We observe that guiding against an early-to-intermediate checkpoint ($K \in [20\text{k}, 50\text{k}]$)  maximizes the accuracy for all churn levels $\kappa$, while for checkpoints close to convergence, the accuracy gets closer to the unguided baseline.

\Cref{fig:tinygsm_autoguidance_summary} summarizes the best auto-guided accuracy ($w^\star$) alongside the accuracy gain ($\Delta\%$) over the un-guided baseline ($w=0$) across logit temperatures $T \in \{0.01, 0.1, 1.0\}$. We observe improvements over the baseline in all frameworks we investigate.
In particular, the Auto-Guidance has the strongest effect with low churn and high temperature. 

\begin{figure}[!ht]
    \centering
    \includegraphics[width=\textwidth]{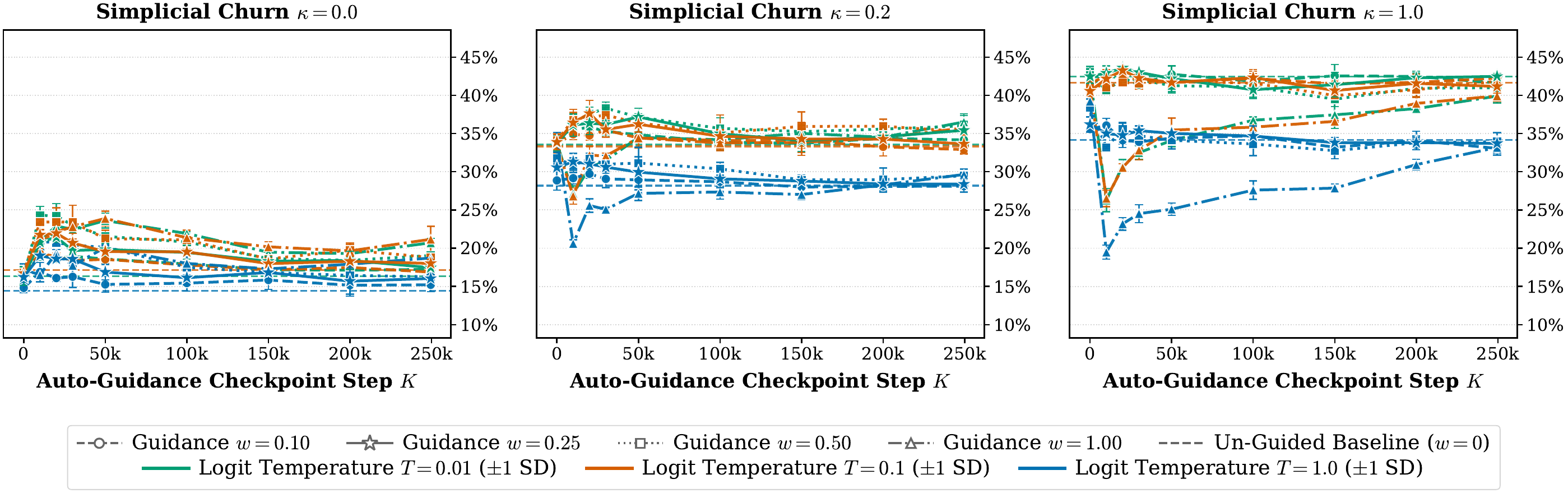}
    \vspace{-0.4em}
    \caption{\textbf{TinyGSM accuracy (\%) vs.\ Auto-Guidance checkpoint step $K$ for different churn regimes ($\kappa \in \{0.0, 0.2, 1.0\}$), with 64 sampling steps.} We compare logit temperatures $T=0.01$ (green), $T = 0.1$ (orange) and $T = 1.0$ (blue) for different guidance weights $w \in \{0.1, 0.25, 0.5, 1.0\}$ against the corresponding un-guided baselines ($w=0$, dashed horizontal lines). Error bars denote a variation of one standard deviation across 3 evaluation seeds. Early-to-intermediate checkpoints ($K \in [20\text{k}, 50\text{k}]$) consistently achieve peak accuracy.}
    \label{fig:tinygsm_autoguidance_by_churn}
\end{figure}

\begin{figure}[!ht]
    \centering
    \includegraphics[width=0.95\textwidth]{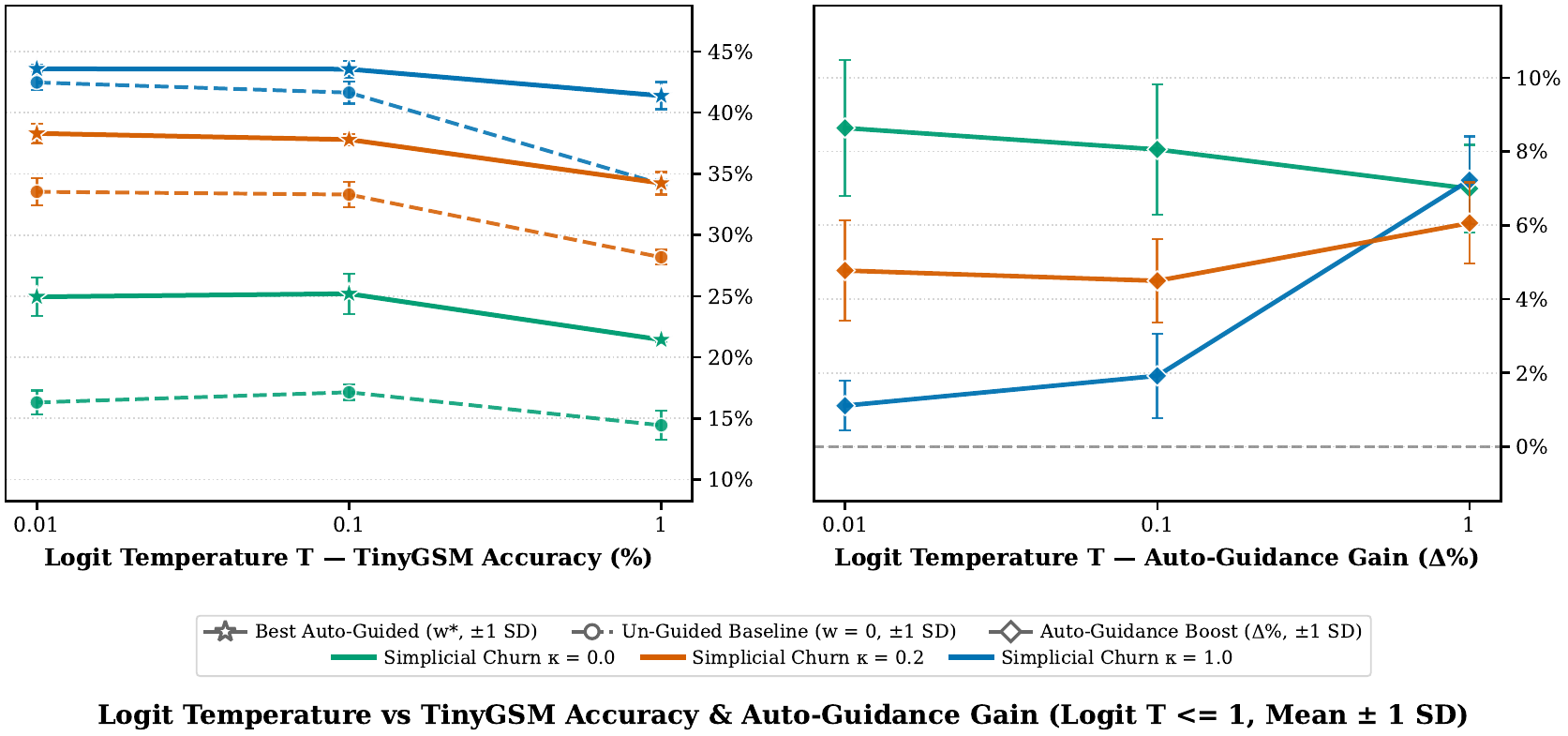}
    \vspace{-0.4em}
    \caption{\textbf{Logit temperature sensitivity and Auto-Guidance gain on TinyGSM ($T \in \{0.01, 0.1, 1.0\}$).} \textbf{Left:} Validation accuracy (\%) of the best auto-guided model ($w^\star$, solid lines with stars) vs.\ the un-guided baseline ($w=0$, dashed lines with circles) for $\kappa \in \{0.0, 0.2, 1.0\}$. \textbf{Right:} Accuracy gain ($\Delta\%$) from Auto-Guidance. Error bars denote a variation of one standard deviation across 3 evaluation seeds.
    }
    \label{fig:tinygsm_autoguidance_summary}
\end{figure}

\subsection{OpenWebText}
\label{app:additional-exp-owt}

\paragraph{Inference Interventions.} In the following paragraph, we show how different interventions can push the Pareto frontier in terms of Generative Perplexity and unigram entropy, see \citet{pynadath2026generative} for a discussion on those evaluations. We explore those interventions for the four different classes of models we investigate. Namely, we propose interventions for Autoregressive (AR) models, Masked Diffusion Models (MDMs), Uniform Diffusion Models (UDMs) and Simplex Diffusion Models (SDMs). Most of those techniques can be interpreted as being part of a logit shaping pipeline. The model was trained according to \Cref{sec:appendix-setup-owt}.

First, we consider some nucleus sampling techniques. For completeness, we recall the process of nucleus sampling adapted from \citet{holtzman2019curious}. In the case of one token we denote $\{\zeta_1, \dots, \zeta_{N}\}$, the proposed logits, i.e., we have that $\log \hat{P}_{0|t}(\cdot|P_t) = \{\zeta_1, \dots, \zeta_{N}\}$ in the case of SDMs for instance. Next, we denote $\{\zeta_{\varphi(1)}, \dots, \zeta_{\varphi(N)}\}$ the sorted set of logits (in descending order, i.e., $\zeta_{\varphi(j)} \geq \zeta_{\varphi(j+1)}$ for any $j \in \{1, \dots, N-1\}$). Next, we denote $k \in \{1, \dots, N\}$ the smallest integer such that $\sum_{j=1}^k p_{\varphi(j)} \geq p$, where $p$ is a hyperparameter and $\{p_1, \dots, p_{N}\} = \mathrm{softmax}(\{\zeta_1, \dots, \zeta_{N}\})$. We denote $\phi$ the inverse of $\varphi$, i.e. for any $i \in \{1, \dots, N\}$, $\varphi(\phi(i)) = i$. For any $j \in \{1, \dots, N\}$, we denote $\hat{\zeta}_j = \zeta_j$ if $\phi(j) \leq k$ and $\hat{\zeta}_j = -\infty$ otherwise.  Next, we consider $\topp$ such that 
\begin{equation}
    \topp(p, \{\zeta_1, \dots, \zeta_{N}\}) = \{\hat{\zeta}_1, \dots, \hat{\zeta}_{N}\} . \label{eq:topp}
\end{equation}
Another intervention we consider is temperature scaling of the logits. More precisely, we consider
\begin{equation}
\temp(T, \{\zeta_1, \dots, \zeta_{N}\}) =  \{\zeta_1/T, \dots, \zeta_{N}/T\} . \label{eq:temp}
\end{equation}
For the first intervention we consider $T$ in \eqref{eq:temp} to be in the following set 
\begin{equation}
\{0.20,0.30,0.40,0.50,0.60,0.65,0.70,0.75,0.80,0.85,0.90,0.95,1.00,1.05,1.10,1.18,1.28\} . \label{eq:sweep_t}
\end{equation}
and $p \in \{0.92, 0.96\}$ in \eqref{eq:topp}.  This yields $34$ sweeps for each family UDM, MDM, SDM and AR after this intervention.

For the second intervention, which only applies to diffusion methods, we consider temperature annealing. Let $t \in [0,1]$ be the diffusion time of the forward process, we consider a similar sweep as before but instead consider a temperature annealing procedure where
\begin{equation}
    T = T_{\startt} + (T_{\finish} - T_{\startt}) t^{1.5} ,
\end{equation}
with $T_{\finish}$ given by \eqref{eq:sweep_t} and $T_{\startt} = 0.8 T_{\finish}$. We do not claim that the power relationship with coefficient $1.5$ and $T_{\startt} = 0.8 T_{\finish}$ are optimal but we found those values to give good results in early experiments. Note that a linear temperature annealing was already considered in \citet{team2026diffusiongemma} with $T_{\startt} = 0.4$ and $T_{\finish}=0.8$. Similar power law temperature annealing schedule were also identified in \citet{chang2022maskgitmaskedgenerativeimage}. Since AR has no corruption time, we instead anneal its temperature over token positions.  We always consider $\temp$ and then follow it by $\topp$.

For the third intervention, which is by far the most influential one, we consider a \emph{frequency penalty}. The frequency penalty is a \emph{sequence} based penalty. In what follows, in the case of UDM and MDM, $x_t \in \{1, \dots, N\}^L$ where $L$ is the sequence length and $N$ is the vocabulary size (in the case of MDM we assume that one of the token is the masked one denoted $\masktoken$).  We denote the count variable $\{c_{t,v}\}_{v=1}^{N} \in \nset^N$ which is defined 
for any $v \in \{1, \dots, N\}$ with $v \neq \masktoken$
\begin{equation}
    c_{t,v} = \sum_{j=1}^L \updelta_{v}(x_{t,j}) . \label{eq:count_mdm_udm}
\end{equation}
In addition, in the case $v=\masktoken$, $c_{t,v}=0$. In other words, $c_{t,v}$ is the count of tokens which have values $v$ in the sequence $x_t$, except for potentially the mask token. In the case of the simplex diffusion model, we slightly modify the definition of the count in \eqref{eq:count_mdm_udm} and we define 
for any $v \in \{1, \dots, N\}$ with $v \neq \masktoken$
\begin{equation}
    c_{t,v} = \sum_{j=1}^L \updelta_{v}(\argmax \ P_{t,j}) . \label{eq:count_simplex}
\end{equation}
Let $\{c_{t,v}\}_{v=1}^{N} \in \nset^N$ be defined either with \eqref{eq:count_mdm_udm} or \eqref{eq:count_simplex}. We introduce the frequency penalty $\freq$ given by
\begin{equation}
    \freq(\lambda, \{\zeta_1, \dots, \zeta_{N}\}) = \{\zeta_1 - \lambda \log(1 + c_{t,1}), \dots, \zeta_{N} - \lambda \log(1 + c_{t,N})\} . 
\end{equation}
Repetition penalties were also considered in \citet{xu2206learning}. 
We always consider $\temp$ and then follow it by $\freq$ and then by $\topp$. In the case of AR one can define a similar intervention on the tokens already generated. We consider a regularization penalty $\lambda \in \{1.5, 3.0, 4.5, 6.0, 8.0\}$ for all the sweeps. 

For the fourth and final intervention, we consider another token penalty but at the \emph{local} level, contrary to the \emph{global} level of the frequency penalty of the third intervention. In particular, we consider $\{\hat{c}_{t,j,v}\} \in \rset^{L \times N}$ given for any $j \in \{1, \dots, L\}$ and $v \in \{1, \dots, N\}$ with $v \neq \masktoken$ by
\begin{equation}
    \hat{c}_{t,j,v} = \log\left(1 + \frac{\alpha_t N}{1-\alpha_t}\right) \updelta_{v}(x_{t,j}) . \label{eq:count_local_mdm_udm}
\end{equation}
We set $\hat{c}_{t,j,v} = 0$ for $v = \masktoken$. Finally, similarly to \eqref{eq:count_simplex}, we can define the simplicial counterpart of \eqref{eq:count_local_mdm_udm} as 
\begin{equation}
    \hat{c}_{t,j,v} = \log\left(1 + \frac{\alpha_t N}{1-\alpha_t}\right) \updelta_{v}(\argmax \ P_{t,j}) . \label{eq:count_local_simplex}
\end{equation}
We are now ready to define the \emph{local frequency} intervention $\freqloc$ given by  
\begin{equation}
    \freqloc(\gamma, \{\zeta_1, \dots, \zeta_{N}\}) = \{\zeta_1 - \gamma \hat{c}_{t,j,1}, \dots, \zeta_{N} - \gamma \hat{c}_{t,j,N} \} , 
\end{equation}
where here we have assumed that $\{\zeta_1, \dots, \zeta_{N}\}$  are the logits at position $j \in \{1, \dots, L\}$. We consider $\gamma \in \{2.0, 2.5\}$ for SDM only. We do not consider the intervention for AR as it already yields models which are scoring better than real data on the Generative Perplexity and Entropy Pareto frontier. In the case of UDM the intervention did not improve the Pareto frontier and is therefore not reported. In the case of MDM the local frequency constraint does not change the prediction since we do not consider remasking. Therefore, we only report SDMs results.

In \Cref{fig:owt_pareto_head_to_head}, we illustrate the obtained Pareto frontiers after all those interventions. Note that the AR Pareto frontier scores \emph{better} than the real data on OpenWebText, thereby putting into question the validity of generative frontiers on OWT as a meaningful benchmark for language generation. In \Cref{fig:owt_pareto_interventions_h1} and \Cref{fig:owt_pareto_interventions_h2}, we show the effect of each intervention on the Pareto frontier for all four models that we are investigating. The best hyperparameters for each model at matched entropy are reported in \Cref{tab:owt_pareto_h1_vertical_cuts} and at matched Generative Perplexity are reported in \Cref{tab:owt_pareto_h1_horizontal_cuts}.

For each evaluation point, we generate 128 unconditional sequences of length 1024 (evaluated in batches of 2 over 64 batches) using 64 sampling steps for all diffusion models and 1024 steps (token-by-token) for the autoregressive baseline.

\begin{figure*}[t]
  \centering
  \includegraphics[width=0.49\textwidth]{\figdir/owt_pareto_head_to_head_h1.pdf}\hfill
  \includegraphics[width=0.49\textwidth]{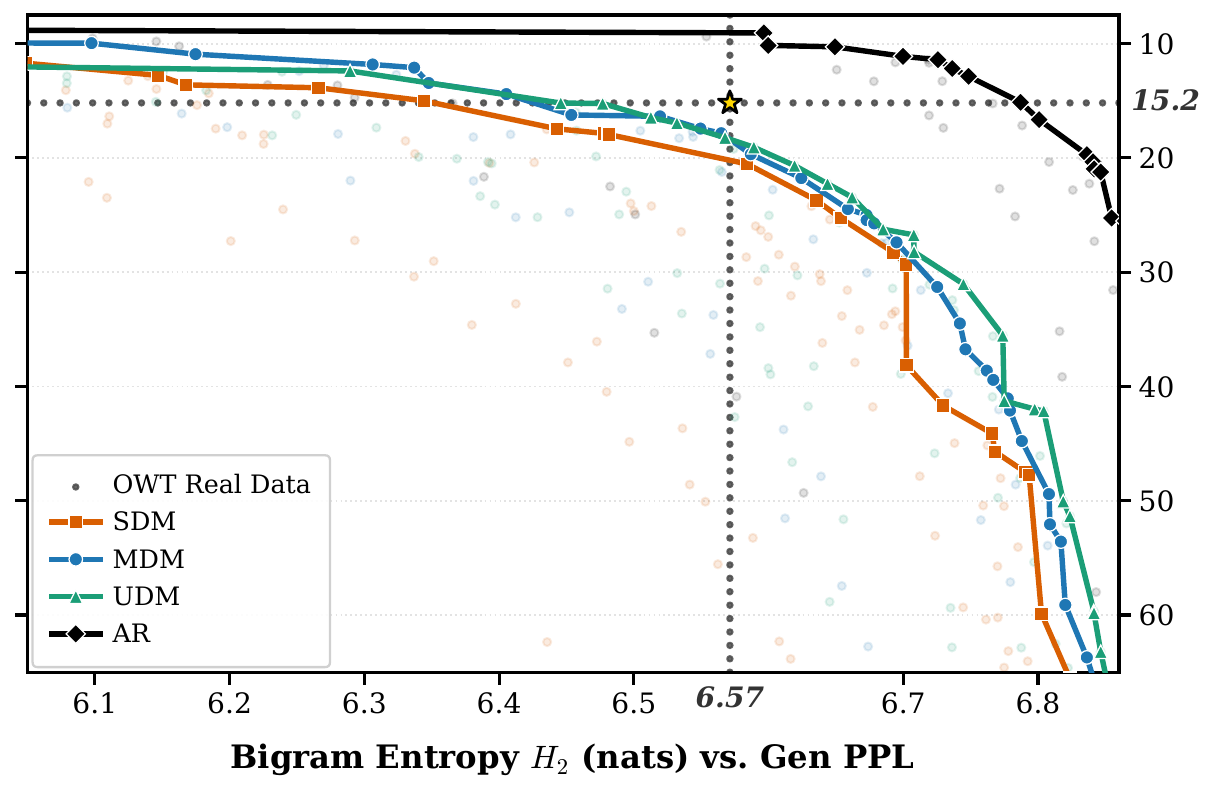}
  \vspace{-4pt}
  \caption{%
    \textbf{OpenWebText ($L=1024$) Pareto frontiers across generative families.}
    Best cumulative Pareto frontiers of Generative Perplexity (scored by GPT-2 Large; \emph{top is lower/better}) versus sequence token diversity (\emph{right is higher/better}) for Simplex Diffusion Models (\textbf{SDM}, orange), Masked Diffusion (\textbf{MDM}, blue), Uniform Discrete Diffusion (\textbf{UDM}, green), and the Autoregressive baseline (\textbf{AR}, black).
    \textbf{Left:} Unigram token entropy $H_1$ (nats).
    \textbf{Right:} Bigram token entropy $H_2$ (nats).
    Dotted reference lines and the gold star ($\star$) denote the measured OpenWebText validation distribution ($H_1 = 5.46\text{ nats}$, $H_2 = 6.57\text{ nats}$, $\text{Gen PPL} = 15.19$).%
  }
  \label{fig:owt_pareto_head_to_head}
\end{figure*}
\begin{figure*}[t]
  \centering
  \includegraphics[width=0.49\textwidth]{\figdir/owt_pareto_interventions_sdm_h1.pdf}\hfill
  \includegraphics[width=0.49\textwidth]{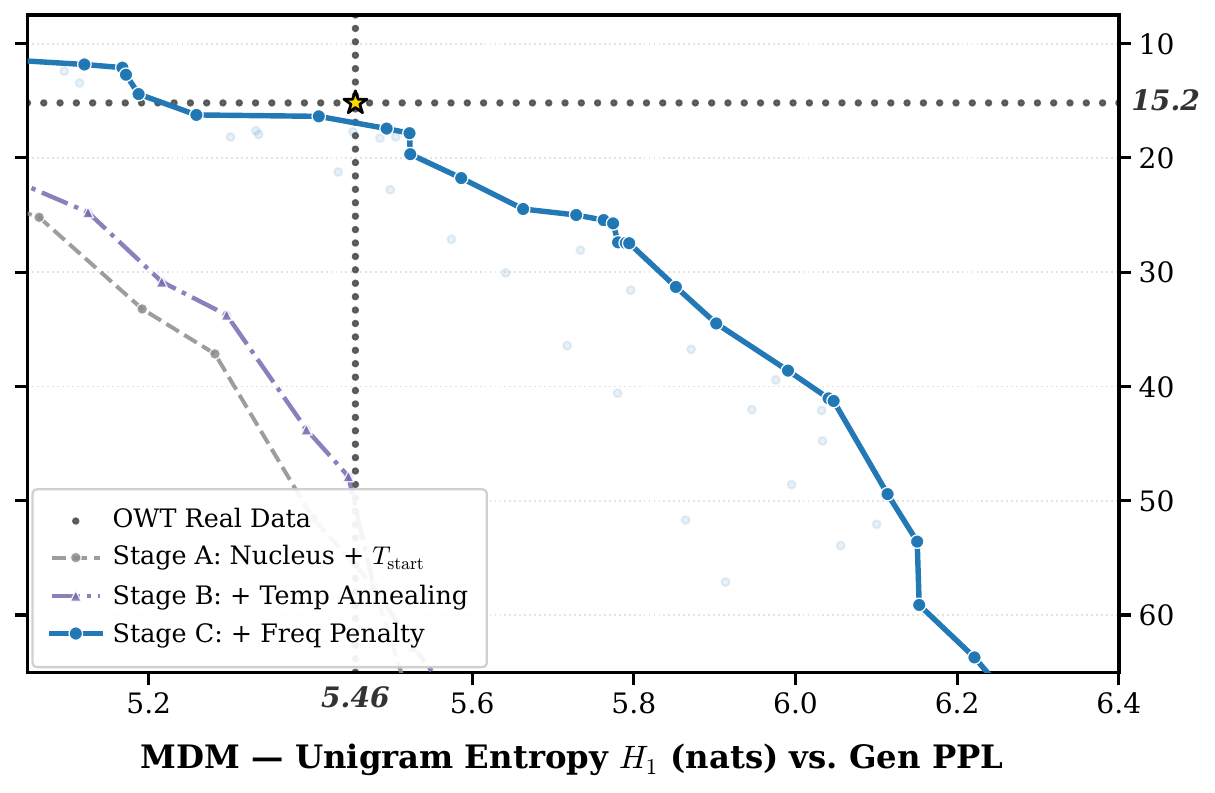}\\[6pt]
  \includegraphics[width=0.49\textwidth]{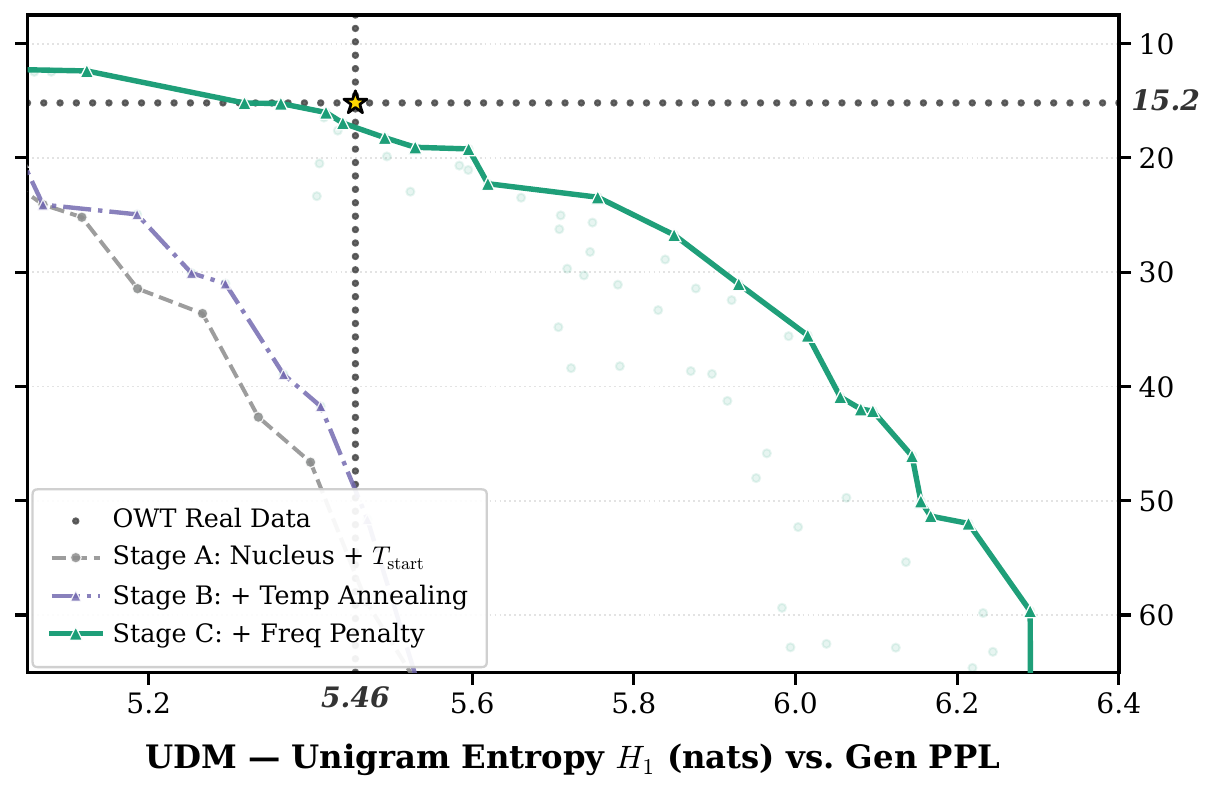}\hfill
  \includegraphics[width=0.49\textwidth]{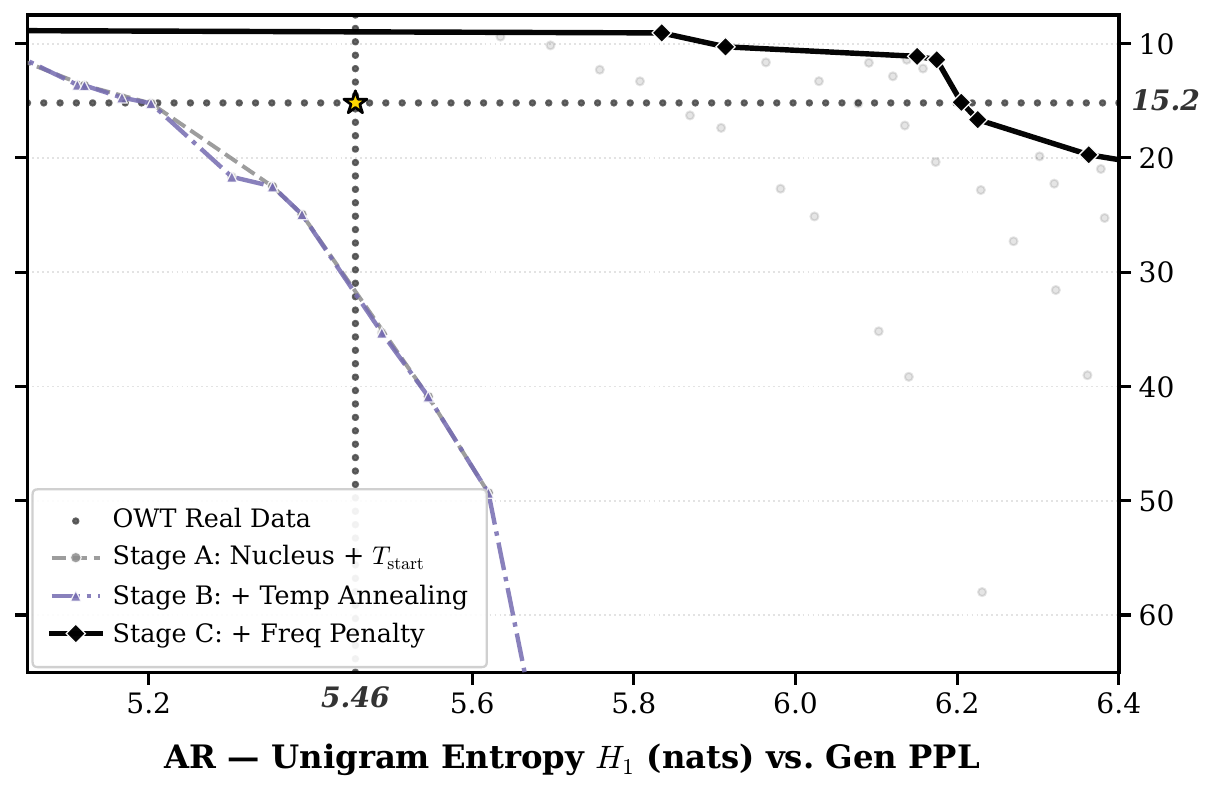}
  \vspace{-4pt}
  \caption{%
    \textbf{Progressive intervention ablations on Unigram Entropy $H_1$ (nats) vs.\ Generative Perplexity.}
    Cumulative Pareto frontiers per family across the intervention stack:
    \textbf{Stage A} (nucleus truncation $p \in \{0.92, 0.96\}$ + initial temperature $T_{\text{start}}$),
    \textbf{Stage B} ($+$ power-law temperature annealing $T_{\text{start}} = 0.80\,T_{\text{end}}$, exponent $1.5$),
    \textbf{Stage C} ($+$ sequence-level frequency penalty $\lambda \in \{1.5, 3.0, 4.5, 6.0, 8.0\}$), and
    \textbf{Stage D} ($+$ local frequency penalty $\gamma$, \textbf{SDM} only).%
  }
  \label{fig:owt_pareto_interventions_h1}
\end{figure*}
\begin{figure*}[t]
  \centering
  \includegraphics[width=0.49\textwidth]{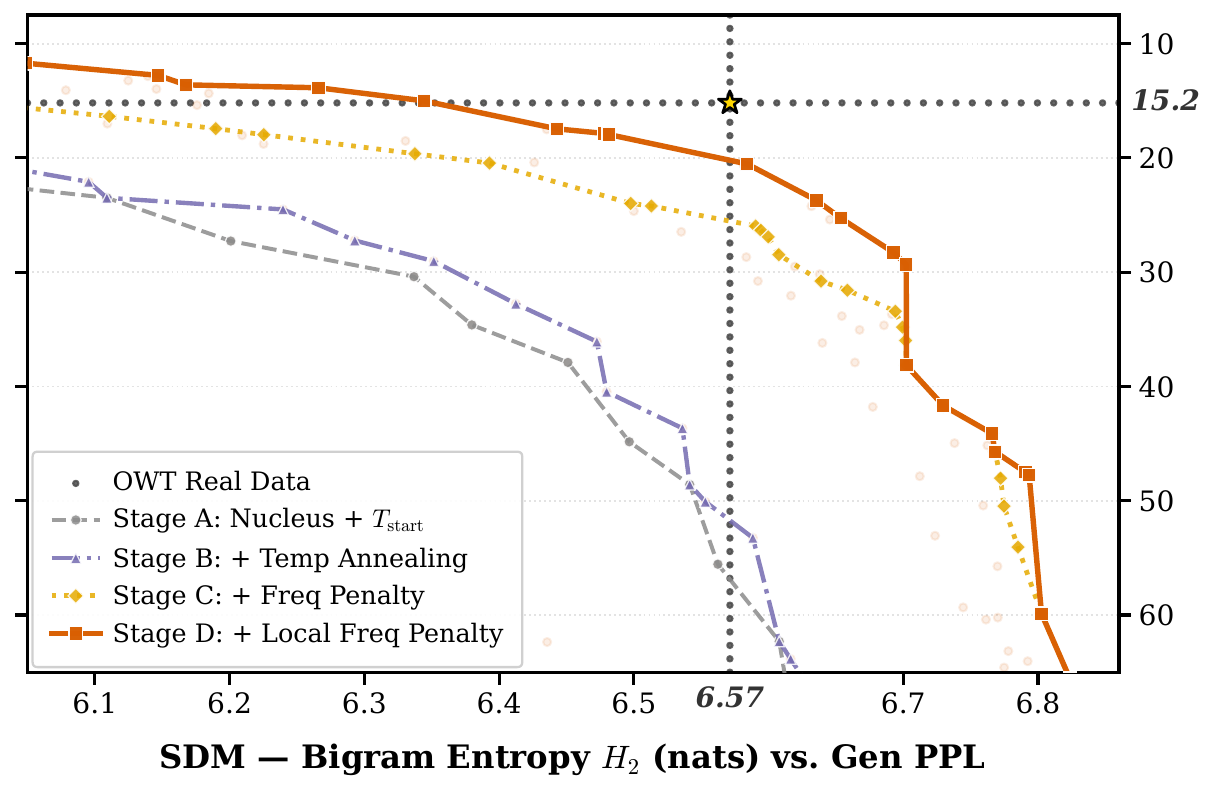}\hfill
  \includegraphics[width=0.49\textwidth]{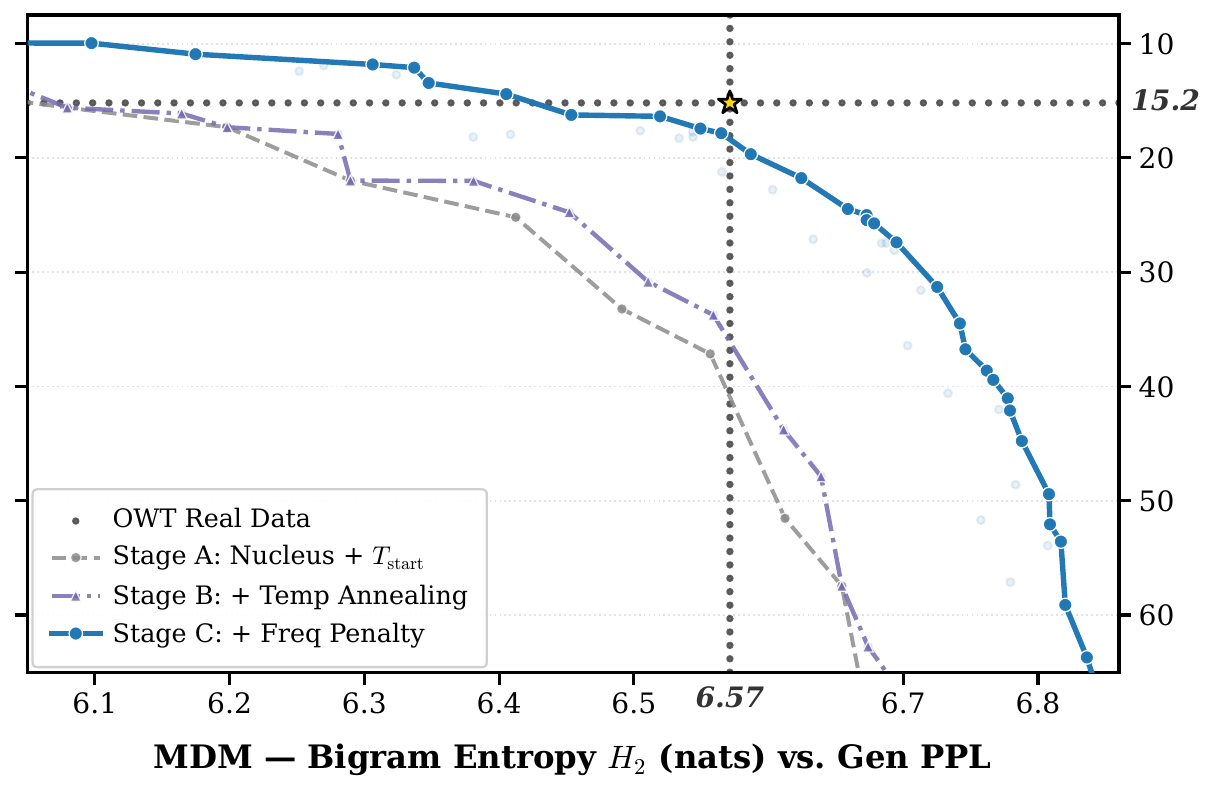}\\[6pt]
  \includegraphics[width=0.49\textwidth]{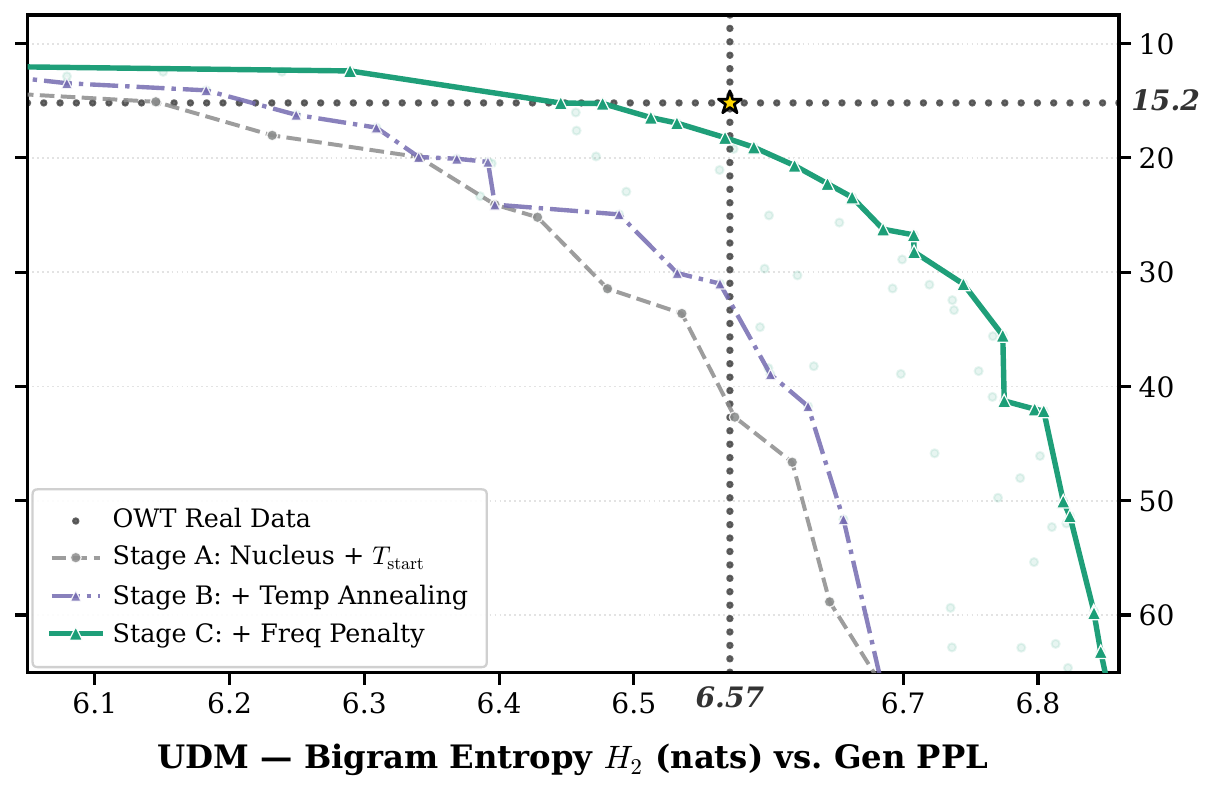}\hfill
  \includegraphics[width=0.49\textwidth]{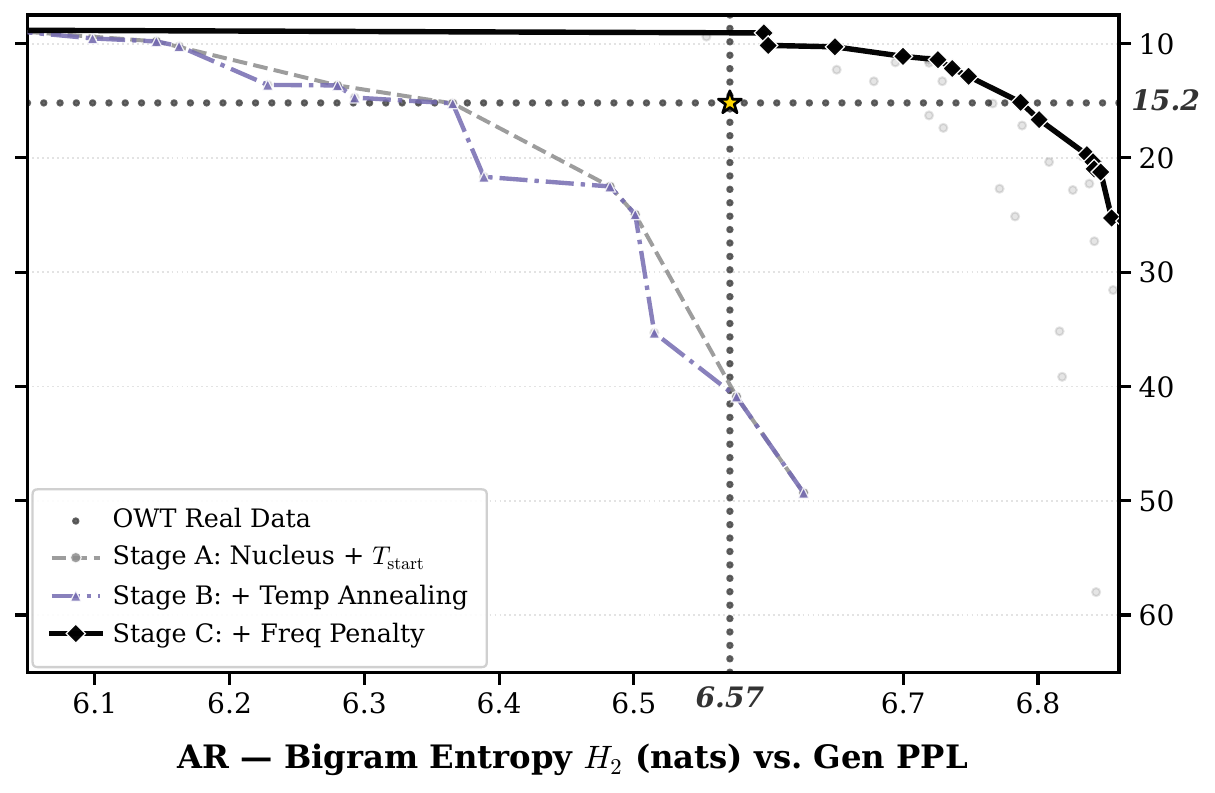}
  \vspace{-4pt}
  \caption{%
    \textbf{Progressive intervention ablations on Bigram Entropy $H_2$ (nats) vs.\ Generative Perplexity.}
    Order-sensitive bigram diversity control ($H_2$) across the same nested intervention stages (\textbf{Stage A} $\to$ \textbf{Stage D}) for \textbf{SDM}, \textbf{MDM}, \textbf{UDM}, and \textbf{AR}.
    Both the sequence frequency penalty (\textbf{Stage C}) and the local frequency penalty (\textbf{Stage D}) preserve gains along the bigram entropy $H_2$ axis.%
  }
  \label{fig:owt_pareto_interventions_h2}
\end{figure*}

\begin{table}[t!]
\centering
\scriptsize
{\setlength{\aboverulesep}{0pt}\setlength{\belowrulesep}{0pt}
\renewcommand{\arraystretch}{1.18}
\setlength{\tabcolsep}{6pt}
\caption{\textbf{Vertical Cuts on Unigram Entropy ($H_1 \ge \tau$): Minimum Generative Perplexity ($\text{GenPPL} \downarrow$) on OpenWebText.} AR is our model with the same architecture as the diffusion models, trained on OWT  (1024 steps). The best diffusion method is \textbf{bold}, second best is \underline{underlined}. The associated hyperparameters $(p,\, T_{\text{start}},\, \lambda,\, \gamma)$ are shown in grey below each entry. The real OWT validation data ($\star$) has $H_1 = 5.46$ and $\text{GenPPL} = 15.2$.}
\label{tab:owt_pareto_h1_vertical_cuts}
\begin{tabular}{@{}l !{\vrule width 0.65pt} ccc !{\vrule width 0.3pt} c@{}}
\toprule
\rule{0pt}{2.4ex}\rule[-1.0ex]{0pt}{0pt}\textbf{Threshold ($H_1 \ge \tau$)} & \textbf{SDM (Simplex)} & \textbf{MDM (Masked)} & \textbf{UDM (Uniform)} & \textbf{AR (1024 steps)} \\
\midrule
\rule{0pt}{2.5ex}$H_1 \ge 5.20$ & \underline{14.0} & 14.7 & \textbf{13.5} & 8.9 \\
\rule[-1.3ex]{0pt}{0pt}{\tiny \textcolor{gray}{\textit{Hyperparameters $(p, T_{\text{start}}, \lambda, \gamma)$}}} & {\tiny \textcolor{gray}{$(.96, .40, 3.0, 2.0)$}} & {\tiny \textcolor{gray}{$(.96, .70, 1.5, 0.0)$}} & {\tiny \textcolor{gray}{$(.92, .60, 1.5, 0.0)$}} & {\tiny \textcolor{gray}{$(.96, .65, 3.0, 0.0)$}} \\
\rule{0pt}{2.3ex}$H_1 \ge 5.35$ & \underline{15.8} & 16.3 & \textbf{15.2} & 8.9 \\
\rule[-1.3ex]{0pt}{0pt}{\tiny \textcolor{gray}{\textit{Hyperparameters $(p, T_{\text{start}}, \lambda, \gamma)$}}} & {\tiny \textcolor{gray}{$(.92, .30, 6.0, 2.5)$}} & {\tiny \textcolor{gray}{$(.92, .60, 3.0, 0.0)$}} & {\tiny \textcolor{gray}{$(.96, .60, 1.5, 0.0)$}} & {\tiny \textcolor{gray}{$(.96, .65, 3.0, 0.0)$}} \\
\rule{0pt}{2.3ex}\textbf{$\mathbf{H_1 \ge 5.46}$ {\tiny ($\star$ OWT = 15.2)}} & \underline{17.0} & \textbf{16.9} & 17.3 & 9.0 \\
\rule[-1.3ex]{0pt}{0pt}{\tiny \textcolor{gray}{\textit{Hyperparameters $(p, T_{\text{start}}, \lambda, \gamma)$}}} & {\tiny \textcolor{gray}{$(.92, .30, 6.0, 2.5)$}} & {\tiny \textcolor{gray}{$(.92, .50, 4.5, 0.0)$}} & {\tiny \textcolor{gray}{$(.92, .70, 1.5, 0.0)$}} & {\tiny \textcolor{gray}{$(.96, .65, 3.0, 0.0)$}} \\
\rule{0pt}{2.3ex}$H_1 \ge 5.65$ & \textbf{20.6} & 24.0 & \underline{22.5} & 9.0 \\
\rule[-1.3ex]{0pt}{0pt}{\tiny \textcolor{gray}{\textit{Hyperparameters $(p, T_{\text{start}}, \lambda, \gamma)$}}} & {\tiny \textcolor{gray}{$(.96, .30, 8.0, 2.0)$}} & {\tiny \textcolor{gray}{$(.92, .70, 3.0, 0.0)$}} & {\tiny \textcolor{gray}{$(.96, .60, 3.0, 0.0)$}} & {\tiny \textcolor{gray}{$(.96, .65, 3.0, 0.0)$}} \\
\rule{0pt}{2.3ex}$H_1 \ge 5.85$ & \textbf{26.2} & 31.2 & \underline{26.7} & 9.3 \\
\rule[-1.3ex]{0pt}{0pt}{\tiny \textcolor{gray}{\textit{Hyperparameters $(p, T_{\text{start}}, \lambda, \gamma)$}}} & {\tiny \textcolor{gray}{$(.92, .40, 6.0, 2.0)$}} & {\tiny \textcolor{gray}{$(.92, .65, 4.5, 0.0)$}} & {\tiny \textcolor{gray}{$(.96, .70, 3.0, 0.0)$}} & {\tiny \textcolor{gray}{$(.92, .70, 3.0, 0.0)$}} \\
\bottomrule
\end{tabular}}
\end{table}

\begin{table}[t!]
\centering
\scriptsize
{\setlength{\aboverulesep}{0pt}\setlength{\belowrulesep}{0pt}
\renewcommand{\arraystretch}{1.18}
\setlength{\tabcolsep}{6pt}
\caption{\textbf{Horizontal Cuts on Generative Perplexity ($\text{GenPPL} \le \tau_{\text{PPL}}$): Maximum Unigram Entropy ($H_1 \uparrow$) on OpenWebText.} AR is our model with the same architecture as the diffusion models, trained on OWT (1024 steps). The best diffusion method is \textbf{bold}, second best is \underline{underlined}. The associated hyperparameters $(p,\, T_{\text{start}},\, \lambda,\, \gamma)$ are shown in grey below each entry. The real OWT validation data ($\star$) has $\text{GenPPL} = 15.2$ and $H_1 = 5.46$.}
\label{tab:owt_pareto_h1_horizontal_cuts}
\begin{tabular}{@{}l !{\vrule width 0.65pt} ccc !{\vrule width 0.3pt} c@{}}
\toprule
\rule{0pt}{2.4ex}\rule[-1.0ex]{0pt}{0pt}\textbf{Budget ($\text{GenPPL} \le \tau_{\text{PPL}}$)} & \textbf{SDM (Simplex)} & \textbf{MDM (Masked)} & \textbf{UDM (Uniform)} & \textbf{AR (1024 steps)} \\
\midrule
\rule{0pt}{2.5ex}$\text{GenPPL} \le 12.5$ & 5.09 & \textbf{5.17} & \underline{5.13} & 6.18 \\
\rule[-1.3ex]{0pt}{0pt}{\tiny \textcolor{gray}{\textit{Hyperparameters $(p, T_{\text{start}}, \lambda, \gamma)$}}} & {\tiny \textcolor{gray}{$(.92, .30, 4.5, 2.5)$}} & {\tiny \textcolor{gray}{$(.96, .50, 3.0, 0.0)$}} & {\tiny \textcolor{gray}{$(.96, .50, 1.5, 0.0)$}} & {\tiny \textcolor{gray}{$(.96, .40, 8.0, 0.0)$}} \\
\rule{0pt}{2.3ex}$\text{GenPPL} \le 14.0$ & \underline{5.20} & 5.18 & \textbf{5.23} & 6.20 \\
\rule[-1.3ex]{0pt}{0pt}{\tiny \textcolor{gray}{\textit{Hyperparameters $(p, T_{\text{start}}, \lambda, \gamma)$}}} & {\tiny \textcolor{gray}{$(.92, .40, 3.0, 2.0)$}} & {\tiny \textcolor{gray}{$(.96, .40, 4.5, 0.0)$}} & {\tiny \textcolor{gray}{$(.96, .50, 1.5, 0.0)$}} & {\tiny \textcolor{gray}{$(.96, .40, 8.0, 0.0)$}} \\
\rule{0pt}{2.3ex}\textbf{$\mathbf{\text{GenPPL} \le 15.2}$ {\tiny ($\star$ OWT $H_1=5.46$)}} & \underline{5.29} & 5.22 & \textbf{5.32} & 6.21 \\
\rule[-1.3ex]{0pt}{0pt}{\tiny \textcolor{gray}{\textit{Hyperparameters $(p, T_{\text{start}}, \lambda, \gamma)$}}} & {\tiny \textcolor{gray}{$(.96, .40, 3.0, 2.0)$}} & {\tiny \textcolor{gray}{$(.92, .70, 1.5, 0.0)$}} & {\tiny \textcolor{gray}{$(.96, .50, 1.5, 0.0)$}} & {\tiny \textcolor{gray}{$(.92, .65, 4.5, 0.0)$}} \\
\rule{0pt}{2.3ex}$\text{GenPPL} \le 18.0$ & \textbf{5.54} & \underline{5.52} & 5.48 & 6.29 \\
\rule[-1.3ex]{0pt}{0pt}{\tiny \textcolor{gray}{\textit{Hyperparameters $(p, T_{\text{start}}, \lambda, \gamma)$}}} & {\tiny \textcolor{gray}{$(.96, .30, 6.0, 2.5)$}} & {\tiny \textcolor{gray}{$(.96, .50, 4.5, 0.0)$}} & {\tiny \textcolor{gray}{$(.96, .65, 1.5, 0.0)$}} & {\tiny \textcolor{gray}{$(.96, .65, 4.5, 0.0)$}} \\
\rule{0pt}{2.3ex}$\text{GenPPL} \le 22.0$ & \textbf{5.72} & 5.59 & \underline{5.62} & 6.43 \\
\rule[-1.3ex]{0pt}{0pt}{\tiny \textcolor{gray}{\textit{Hyperparameters $(p, T_{\text{start}}, \lambda, \gamma)$}}} & {\tiny \textcolor{gray}{$(.92, .40, 4.5, 2.0)$}} & {\tiny \textcolor{gray}{$(.96, .65, 3.0, 0.0)$}} & {\tiny \textcolor{gray}{$(.96, .50, 3.0, 0.0)$}} & {\tiny \textcolor{gray}{$(.96, .50, 8.0, 0.0)$}} \\
\bottomrule
\end{tabular}}
\end{table}

\subsection{Language Understanding}
\label{sec:language_understanding}

We describe the setup in \Cref{sec:appendix-setup-lu}.

The results sweeping on the weight $w$ in \eqref{eq:scoring_language_understanding} and comparing the baseline model with the adapted model  are reported in \Cref{fig:pmi-sweep}. The adapted model outperforms the baseline model, which confirms that the baseline is out of distribution when only the continuation is noised. Finally, our main results and comparison with benchmarks are presented in \Cref{tab:pmi_benchmarks}. 

\begin{figure}[h]
\resizebox{1.00\linewidth}{!}{%
\includegraphics[width=0.99\linewidth]{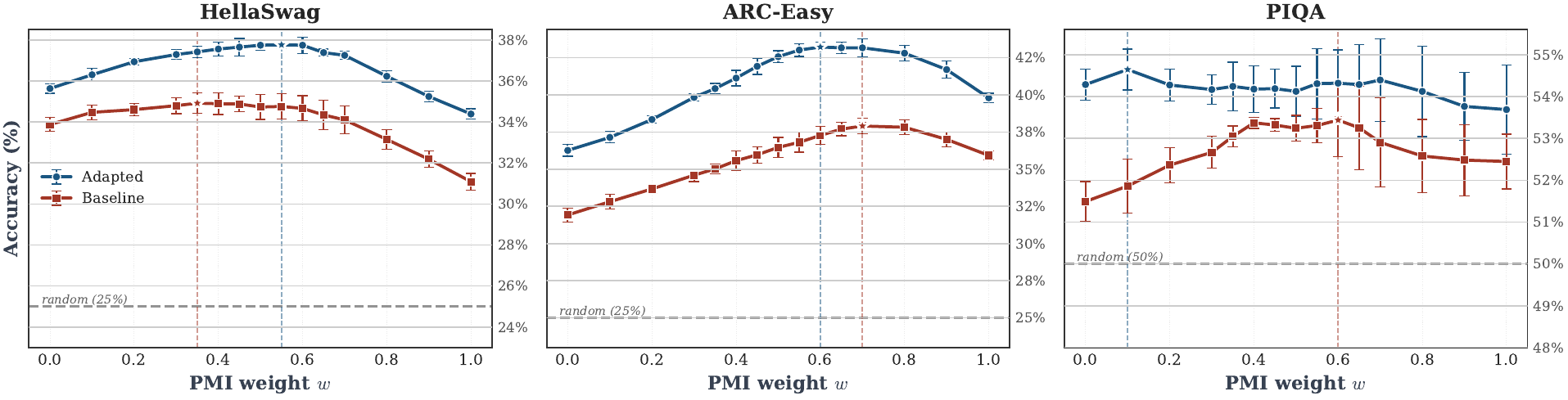}
}
\caption{We present a sweep on the Pointwise Mutual Information (PMI) weight $w$ in \eqref{eq:scoring_language_understanding}. For each of the three tasks that we investigate HellaSwag \citep{zellers2019hellaswag}, ARC-easy \citep{clark2018think} and PIQA \citep{bisk2020piqa}, we report the influence of $w$ on the final results. Note that $w=0$ corresponds to the setup which is most comparable to the MDM scoring \eqref{eq:mdm_scoring} (even though MDM noise is applied on the whole sequence while ours focus on the continuation). We denote $w^\star$ the best weighting for each task. The optimal weight differs across tasks ($w^\star = 0.1$ for PIQA, $0.6$ for ARC-Easy and $0.55$ for HellaSwag) and is selected on the evaluation split. The results are averaged over 5 different seeds. In red we report the results for the baseline out of distribution OWT checkpoint. In blue we report the results after the short adaptation run.}
    \label{fig:pmi-sweep}
\end{figure}

\begin{table}[h]
    \centering
    \small
    \setlength{\tabcolsep}{6.0pt}
    \renewcommand{\arraystretch}{1.08}
    \caption{\textbf{Zero-shot accuracy (\%)} on language understanding benchmarks. We evaluate SDMs by sampling 5 times with a different random seed, and  report the $\text{mean}_{\pm\text{std}}$. The best score per column is \textbf{bolded} and the second best is \underline{underlined}. The LLaMA baseline is taken from \citet{vonrutte2025generalizedinterpolatingdiscretediffusion}.}
    \label{tab:pmi_benchmarks}
    \begin{tabular}{l ccc}
    \toprule
    Model & PIQA & ARC-Easy & HellaSwag \\
    \midrule
    \multicolumn{4}{@{}l@{}}{\textit{Autoregressive baselines}} \\
    \hspace*{0.8em} LLaMA & \underline{62.7} & 40.5 & 33.1 \\
    \hspace*{0.8em} GPT-2 & \textbf{62.9} & \textbf{43.8} & 28.9 \\
    \midrule
    \multicolumn{4}{@{}l@{}}{\textit{Discrete Diffusion models}} \\
    \hspace*{0.8em} MDM & 54.1 & 31.0 & 31.1 \\
    \hspace*{0.8em} PGM & 58.9 & 40.4 & 33.2 \\
    \hspace*{0.8em} GIDD & 56.4 & 31.0 & 31.9 \\
    \midrule
    \multicolumn{4}{@{}l@{}}{\textit{Ours (Simplex Diffusion)}} \\
    \hspace*{0.8em} $w = 0$ & 54.3$_{\pm 0.4}$ & 36.3$_{\pm 0.4}$ & \underline{35.6}$_{\pm 0.2}$ \\
    \hspace*{0.8em} $w = w^\star$ & 54.6$_{\pm 0.5}$ & \underline{43.2}$_{\pm 0.3}$ & \textbf{37.9}$_{\pm 0.2}$ \\
    \bottomrule
    \end{tabular}
\end{table}

\subsection{Unconditional molecular generation}
\label{sec:appendix-genmol}

The experimental setup is described in~\Cref{sec:appendix-setup-genmol}.

\paragraph{Sampling hyperparameters selection.} To select the optimal inference hyperparameters for each method, sampler, and schedule, we swept over:
\begin{itemize}[leftmargin=1.5em, itemsep=1pt, topsep=2pt]
    \item \textbf{Autoregressive (AR)}: softmax temperature $T \in \{0.0,\allowbreak 0.5,\allowbreak 0.6,\allowbreak 0.7,\allowbreak 0.8,\allowbreak 0.9,\allowbreak 1.0\}$.
    \item \textbf{Discrete Diffusion (MDM \& UDM)}: logit temperature $T \in \{0.001,\allowbreak 0.005,\allowbreak 0.01,\allowbreak 0.03,\allowbreak 0.05,\allowbreak 0.08,\allowbreak 0.1,\allowbreak 0.15,\allowbreak 0.2,\allowbreak 0.3,\allowbreak 0.5,\allowbreak 0.7,\allowbreak 0.8,\allowbreak 1.0\}$, top-fraction schedule power $p \in \{0.5,\allowbreak 1.0,\allowbreak 1.5,\allowbreak 2.0,\allowbreak 3.0\}$.
    \item \textbf{Simplex Diffusion (SDM)}: logit temperature $T \in \{0.001,\allowbreak 0.005,\allowbreak 0.01,\allowbreak 0.03,\allowbreak 0.05,\allowbreak 0.08,\allowbreak 0.1,\allowbreak 0.15,\allowbreak 0.2,\allowbreak 0.3,\allowbreak 0.5,\allowbreak 0.7,\allowbreak 0.8,\allowbreak 1.0\}$, churn $\kappa \in \{0.0,\allowbreak 0.2,\allowbreak 1.0\}$, top-fraction schedule power $p \in \{0.5,\allowbreak 1.0,\allowbreak 1.5,\allowbreak 2.0,\allowbreak 3.0\}$.
\end{itemize}

\begin{table*}[t]
\centering
\small
\setlength{\tabcolsep}{6pt}
\renewcommand{\arraystretch}{0.95}
\caption{\textbf{Selected hyperparameters for Simplex Diffusion (with time conditioning, \texttt{SDM\_TE}) across noise schedules and sampling steps} (molecular generation; $N \in \{32, 64, 256, 1024\}$) corresponding to \Cref{fig:sdm_te_quality_diversity_vs_steps}. $T$: logit sampling temperature; $\kappa$: Simplex churn parameter; $p$: top-fraction power for $\alpha_s^p$ to select the most confident subset.}
\label{tab:sdm_te_multistep_hypers}
\begin{tabular}{lr cc ccc}
\toprule
& & \multicolumn{2}{c}{\textbf{Standard Sampler}} & \multicolumn{3}{c}{\textbf{Confidence-Based Sampler}} \\
\cmidrule(lr){3-4} \cmidrule(lr){5-7}
\textbf{Schedule} & \textbf{Steps ($N$)} & \textbf{Temperature ($T$)} & \textbf{Churn ($\kappa$)} & \textbf{Temperature ($T$)} & \textbf{Churn ($\kappa$)} & \textbf{Power ($p$)} \\
\midrule
Linear     &   32 & $0.01$   & $1.0$    & $0.01$   & $1.0$    & $1.0$    \\
           &   64 & $0.005$  & $1.0$    & $0.05$   & $1.0$    & $1.0$    \\
           &  256 & $0.05$   & $0.2$    & $0.05$   & $1.0$    & $1.0$    \\
           & 1024 & $0.001$  & $0.2$    & $0.05$   & $1.0$    & $1.0$    \\
\midrule
Cosine     &   32 & $0.03$   & $1.0$    & $0.03$   & $1.0$    & $1.0$    \\
           &   64 & $0.05$   & $1.0$    & $0.05$   & $1.0$    & $1.0$    \\
           &  256 & $0.10$   & $0.0$    & $0.05$   & $1.0$    & $1.0$    \\
           & 1024 & $0.15$   & $1.0$    & $0.08$   & $1.0$    & $1.0$    \\
\midrule
Adaptive   &   32 & $0.005$  & $0.0$    & $0.05$   & $1.0$    & $1.0$    \\
           &   64 & $0.001$  & $0.0$    & $0.05$   & $1.0$    & $1.0$    \\
           &  256 & $0.005$  & $0.2$    & $0.03$   & $1.0$    & $1.0$    \\
           & 1024 & $0.03$   & $1.0$    & $0.005$  & $1.0$    & $1.0$    \\
\bottomrule
\end{tabular}
\end{table*}

\begin{table}[t]
\centering
\small
\setlength{\tabcolsep}{6pt}
\renewcommand{\arraystretch}{0.95}
\caption{\textbf{Selected hyperparameters for diffusion methods under standard sampling across steps} ($N \in \{32, 64, 256, 1024\}$) corresponding to \Cref{fig:methods_standard_quality_diversity_vs_steps}. Simplex methods use the adaptive schedule; Masked and Uniform use the cosine schedule. $T$: logit sampling temperature; $\kappa$: Simplex churn parameter.}
\label{tab:diffusion_methods_standard_multistep_hypers}
\begin{tabular}{llr cc}
\toprule
\textbf{Method} & \textbf{Schedule} & \textbf{Steps ($N$)} & \textbf{Temperature ($T$)} & \textbf{Churn ($\kappa$)} \\
\midrule
Masked Diffusion (MDM)             & Cosine   &   32 & $0.05$   & ---      \\
                                   &          &   64 & $0.01$   & ---      \\
                                   &          &  256 & $0.005$  & ---      \\
                                   &          & 1024 & $0.03$   & ---      \\
\midrule
Uniform Diffusion (UDM)            & Cosine   &   32 & $0.05$   & ---      \\
                                   &          &   64 & $0.001$  & ---      \\
                                   &          &  256 & $0.01$   & ---      \\
                                   &          & 1024 & $0.01$   & ---      \\
\midrule
Uniform (with time conditioning)   & Cosine   &   32 & $0.005$  & ---      \\
                                   &          &   64 & $0.001$  & ---      \\
                                   &          &  256 & $0.05$   & ---      \\
                                   &          & 1024 & $0.05$   & ---      \\
\midrule
Simplex Diffusion (SDM)            & Adaptive &   32 & $0.08$   & $0.2$    \\
                                   &          &   64 & $0.10$   & $0.2$    \\
                                   &          &  256 & $0.10$   & $1.0$    \\
                                   &          & 1024 & $0.05$   & $1.0$    \\
\midrule
Simplex (with time conditioning)   & Adaptive &   32 & $0.005$  & $0.0$    \\
                                   &          &   64 & $0.001$  & $0.0$    \\
                                   &          &  256 & $0.005$  & $0.2$    \\
                                   &          & 1024 & $0.03$   & $1.0$    \\
\bottomrule
\end{tabular}
\end{table}

\begin{table}[t]
\centering
\small
\setlength{\tabcolsep}{5pt}
\renewcommand{\arraystretch}{0.95}
\caption{\textbf{Selected hyperparameters for diffusion methods under confidence-based sampling across steps} (molecular generation; $N \in \{32, 64, 256, 1024\}$) corresponding to \Cref{fig:methods_conf_quality_diversity_vs_steps}. Simplex methods use the adaptive schedule; Masked and Uniform diffusion use the cosine schedule. $T$: logit sampling temperature; $\kappa$: Simplex churn parameter; $p$: top-fraction power for $\alpha_s^p$ to select the most confident subset.}
\label{tab:diffusion_methods_conf_multistep_hypers}
\begin{tabular}{llr ccc}
\toprule
\textbf{Method} & \textbf{Schedule} & \textbf{Steps ($N$)} & \textbf{Temperature ($T$)} & \textbf{Churn ($\kappa$)} & \textbf{Power ($p$)} \\
\midrule
Masked Diffusion (MDM)             & Cosine   &   32 & $0.01$   & ---      & $1.5$    \\
                                   &          &   64 & $0.01$   & ---      & $1.5$    \\
                                   &          &  256 & $0.01$   & ---      & $1.0$    \\
                                   &          & 1024 & $0.01$   & ---      & $2.0$    \\
\midrule
Uniform Diffusion (UDM)            & Cosine   &   32 & $0.10$   & ---      & $2.0$    \\
                                   &          &   64 & $0.03$   & ---      & $1.5$    \\
                                   &          &  256 & $0.03$   & ---      & $1.0$    \\
                                   &          & 1024 & $0.01$   & ---      & $0.5$    \\
\midrule
Uniform (with time conditioning)   & Cosine   &   32 & $0.01$   & ---      & $1.0$    \\
                                   &          &   64 & $0.01$   & ---      & $1.0$    \\
                                   &          &  256 & $0.01$   & ---      & $1.0$    \\
                                   &          & 1024 & $0.01$   & ---      & $1.0$    \\
\midrule
Simplex Diffusion (SDM)            & Adaptive &   32 & $0.05$   & $1.0$    & $1.0$    \\
                                   &          &   64 & $0.01$   & $1.0$    & $1.0$    \\
                                   &          &  256 & $0.05$   & $1.0$    & $1.0$    \\
                                   &          & 1024 & $0.08$   & $1.0$    & $1.0$    \\
\midrule
Simplex (with time conditioning)   & Adaptive &   32 & $0.05$   & $1.0$    & $1.0$    \\
                                   &          &   64 & $0.05$   & $1.0$    & $1.0$    \\
                                   &          &  256 & $0.03$   & $1.0$    & $1.0$    \\
                                   &          & 1024 & $0.005$  & $1.0$    & $1.0$    \\
\bottomrule
\end{tabular}
\end{table}

\begin{table}[t]
\centering
\small
\setlength{\tabcolsep}{5pt}
\renewcommand{\arraystretch}{0.95}
\caption{\textbf{Optimal sampling hyperparameters for each method and schedule} (molecular generation) in \Cref{tab:safegpt_dropout0_ordinary} (standard sampling) and \Cref{tab:safegpt_dropout0_conf} (confidence-based sampling, \textit{conf.}) using $256$ sampling steps. $T$: logit temperature; $\kappa$: Simplex churn (\Cref{propbeaut:simplicialtransition}); $p$: top-fraction power for $\alpha_s^p$ to select the most confident subset.}
\label{tab:safegpt_dropout0_hypers}
\begin{tabular}{llcc}
\toprule
\textbf{Method} & \textbf{Schedule} & \textbf{Standard Sampling Hypers} & \textbf{Confidence (\textit{conf.}) Sampling Hypers} \\
\midrule
Autoregressive                       & ---      & $T=0.7$                & --- \\
\addlinespace[2pt]
\emph{Simplex}             & Linear   & $T=0.03,\; \kappa=1.0$ & $T=0.05,\; \kappa=1.0,\;  p=1.0$ \\
\emph{Simplex}             & Cosine   & $T=0.01,\; \kappa=1.0$ & $T=0.05,\; \kappa=1.0,\;  p=1.0$ \\
\emph{Simplex}             & Adaptive & $T=0.10,\; \kappa=1.0$ & $T=0.05,\; \kappa=1.0,\;  p=1.0$ \\
\addlinespace[2pt]
\emph{Simplex (time cond.)} & Linear   & $T=0.05,\; \kappa=0.2$ & $T=0.05,\; \kappa=1.0,\;  p=1.0$ \\
\emph{Simplex (time cond.)} & Cosine   & $T=0.10,\; \kappa=0.0$ & $T=0.05,\; \kappa=1.0,\;  p=1.0$ \\
\emph{Simplex (time cond.)} & Adaptive & $T=0.005,\; \kappa=0.2$ & $T=0.03,\; \kappa=1.0,\;  p=1.0$ \\
\addlinespace[2pt]
Uniform                    & Linear   & $T=0.01$               & $T=0.03,\;  p=1.5$ \\
Uniform                    & Cosine   & $T=0.01$               & $T=0.03,\;  p=1.0$ \\
Uniform                    & Adaptive & $T=0.10$               & $T=0.03,\;  p=1.0$ \\
\addlinespace[2pt]
Uniform (time cond.)        & Linear   & $T=0.01$               & $T=0.01,\;  p=1.0$ \\
Uniform (time cond.)        & Cosine   & $T=0.05$               & $T=0.01,\; p=1.0$ \\
Uniform (time cond.)        & Adaptive & $T=0.08$               & $T=0.01,\;  p=1.0$ \\
\addlinespace[2pt]
Masked                     & Linear   & $T=0.01$               & $T=0.01,\;  p=1.5$ \\
Masked                     & Cosine   & $T=0.005$              & $T=0.01,\;  p=1.0$ \\
Masked                     & Adaptive & $T=0.001$              & $T=0.01,\;  p=1.0$ \\
\bottomrule
\end{tabular}
\end{table}

The optimal sampling hyperparameters corresponding to Figure~\ref{fig:sdm_te_quality_diversity_vs_steps} are reported in Table~\ref{tab:sdm_te_multistep_hypers}. The optimal sampling hyperparameters corresponding to Figure~\ref{fig:methods_standard_quality_diversity_vs_steps} are reported in Table~\ref{tab:diffusion_methods_standard_multistep_hypers}, while those for Figure~\ref{fig:methods_conf_quality_diversity_vs_steps} are reported in Table~\ref{tab:diffusion_methods_conf_multistep_hypers}. The optimal sampling hyperparameters corresponding to each row of \Cref{tab:safegpt_dropout0_ordinary,tab:safegpt_dropout0_conf} with $256$ sampling steps, are reported in \Cref{tab:safegpt_dropout0_hypers} and are selected based on the quality metric.

\begin{figure}[!ht]
    \centering
    \includegraphics[width=\textwidth]{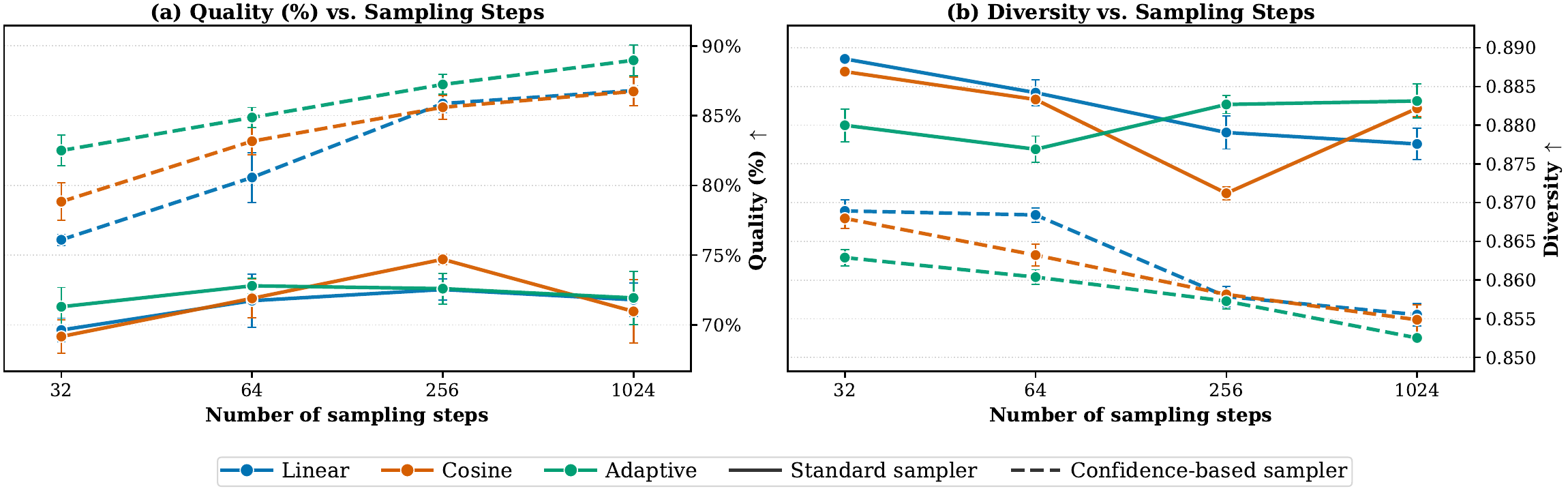}
    \vspace{-0.4em}
    \caption{\textbf{Sampling-budget and sampling time schedule comparison for time-conditioned SDM.}
Quality and Diversity (molecular generation) are shown for standard sampling (solid) and confidence-based sampling (dashed), using linear (blue), cosine (orange), and adaptive (green) inference grids.
Increasing the sampling budget improves Quality under confidence-based sampling while reducing Diversity.
Standard sampling retains higher Diversity, with smaller and nonmonotonic changes in Quality.
Error bars show one standard deviation across three sampling seeds.}
    \label{fig:sdm_te_quality_diversity_vs_steps}
\end{figure}

\begin{figure}[!ht]
    \centering
    \includegraphics[width=\textwidth]{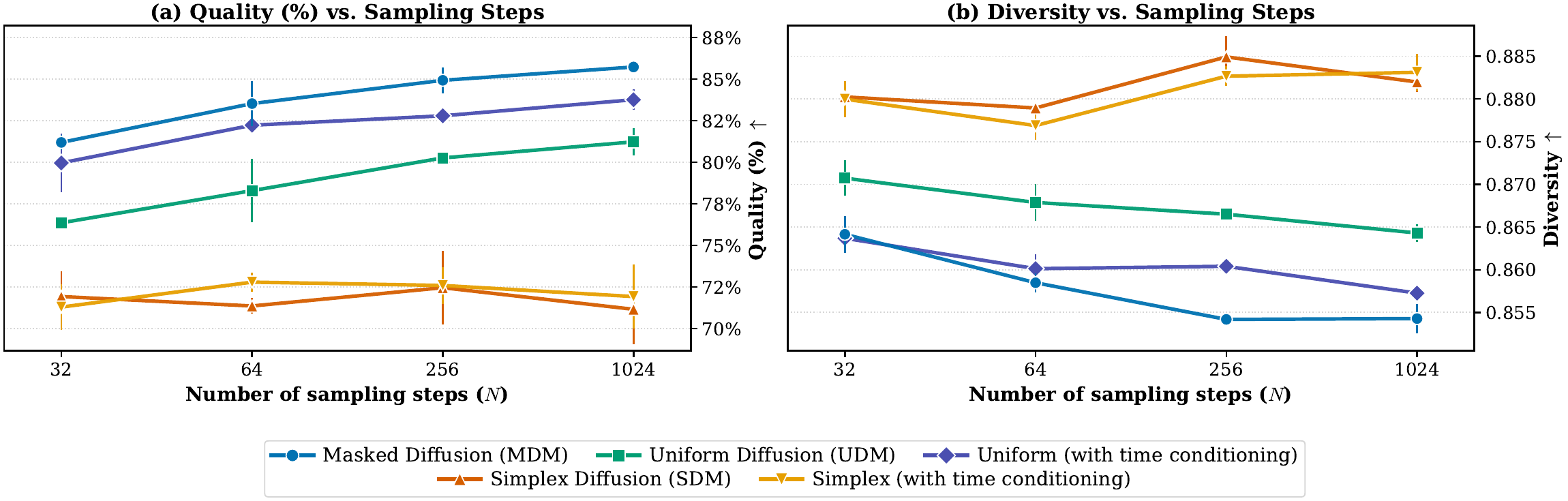}
    \vspace{-0.4em}
    \caption{\textbf{Quality and Diversity versus sampling steps under standard sampling} (molecular generation). 
    Comparison across all five diffusion methods, using adaptive time schedule for Simplex diffusion and cosine for the rest. 
    Error bars show one standard deviation across three sampling seeds.}
    \label{fig:methods_standard_quality_diversity_vs_steps}
\end{figure}

\begin{figure}[!ht]
    \centering
    \includegraphics[width=\textwidth]{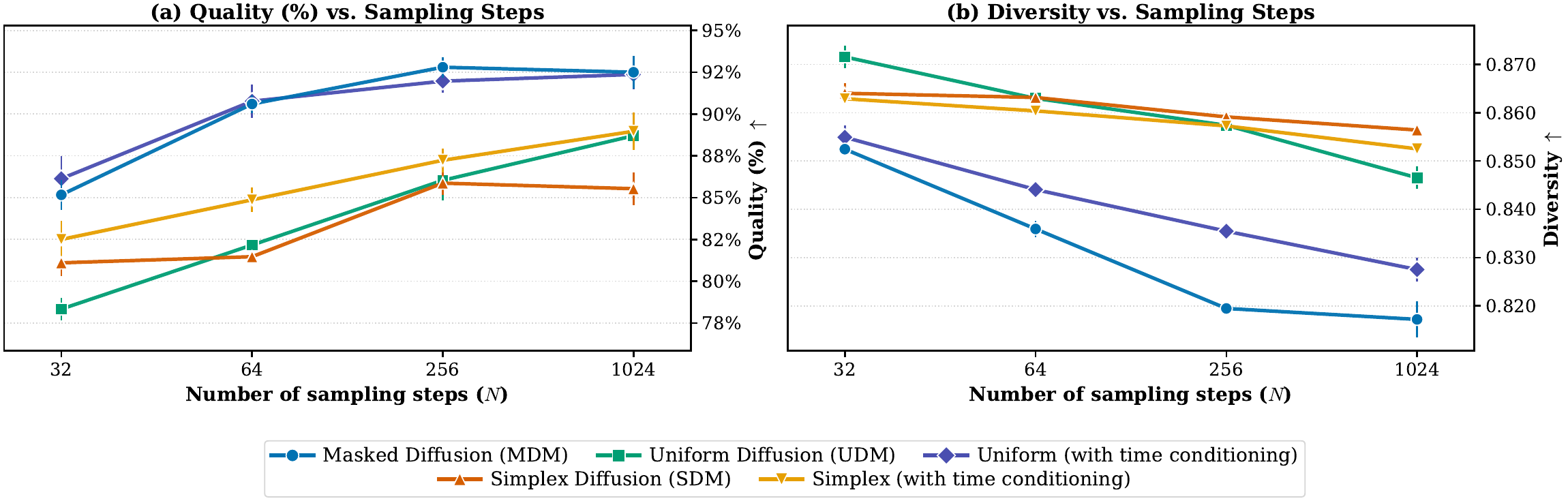}
    \vspace{-0.4em}
    \caption{\textbf{Quality and Diversity versus sampling steps under confidence-based sampling} (molecular generation).
    Comparison across all five diffusion methods, using adaptive time schedule for Simplex diffusion and cosine for the rest. 
    Error bars show one standard deviation across three sampling seeds.}
    \label{fig:methods_conf_quality_diversity_vs_steps}
\end{figure}

\begin{figure}[!ht]
    \centering
    \includegraphics[width=\textwidth]{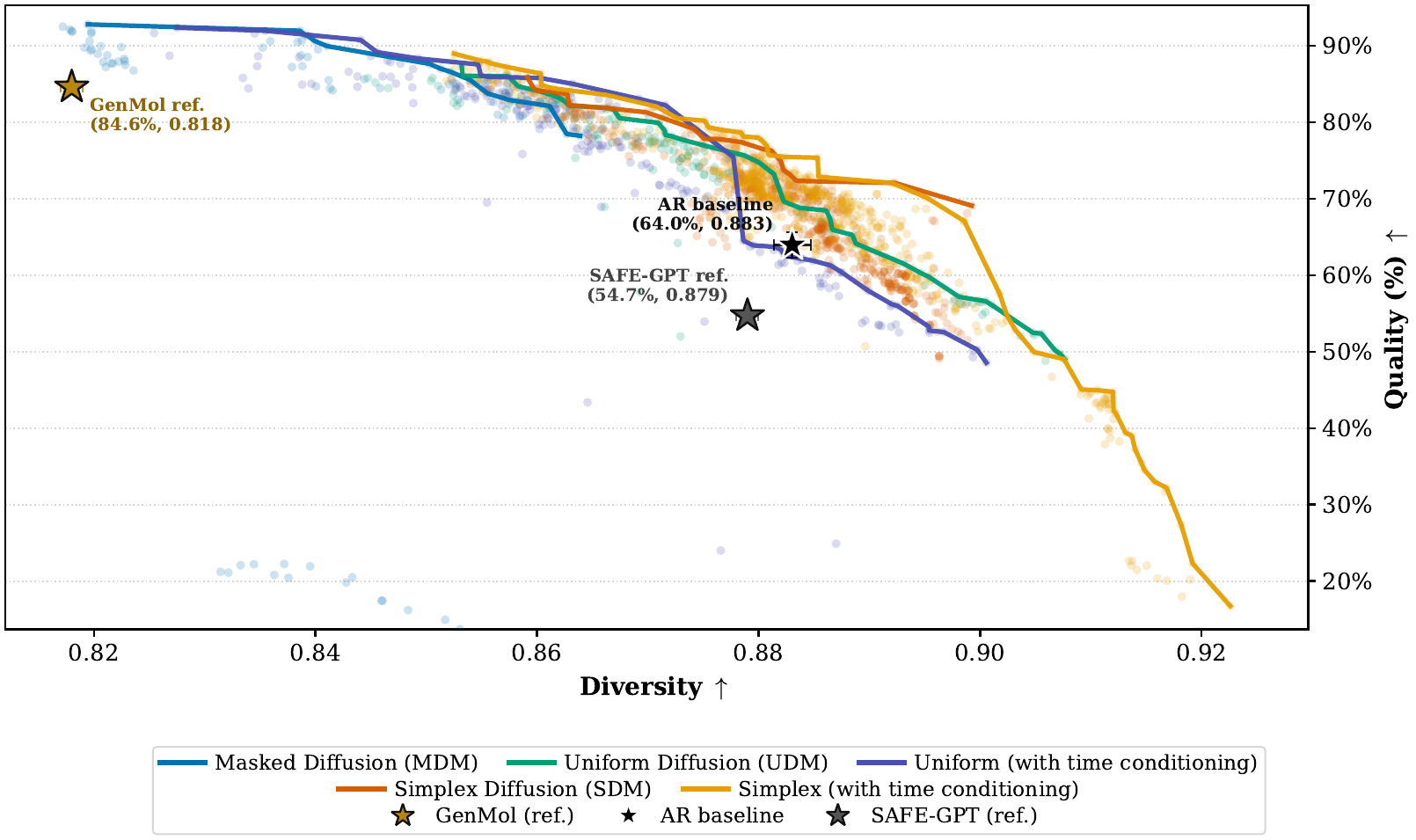}
    \vspace{-0.4em}
    \caption{\textbf{Empirical Quality--Diversity Pareto frontiers across diffusion methods with confidence based sampling and hyperparameter sweeps.}
Molecule Quality (\%), $\uparrow$, versus Diversity ($\uparrow$) across all evaluated sampling hyperparameter configurations. 
All markers denote individual exploration runs sweeping sampling temperatures, number of sampling steps, top-fraction power for selecting the most confident subset and churn parameters, see~\Cref{sec:appendix-setup-genmol} for more details.
Solid curves trace the empirical Pareto frontier of non-dominated configurations for each diffusion family, where we use adaptive time schedule for Simplex diffusion and cosine for the rest. Distinct stars mark external and autoregressive baselines: \textbf{GenMol} (gold star), the autoregressive \textbf{AR} model (black star), and \textbf{SAFE-GPT} (gray star). Simplex Diffusion Pareto frontier leans more towards top right, with many points located in high diversity regime.}
    \label{fig:quality_vs_diversity_sweeps_pareto}
\end{figure}

\begin{table}[t]
\centering
\small
\setlength{\tabcolsep}{4.5pt} 
\renewcommand{\arraystretch}{0.95} 
\caption{\textbf{Unconditional \textit{de novo} molecular generation on SAFE-GPT with standard sampling} following the GenMol evaluation protocol~\citep{lee2025genmol} ($1{,}000$ molecules/seed, mean $\pm$ std over 3 seeds). \textbf{Bold}: best; \underline{underline}: second best among our runs. All our runs use $256$ sampling steps.}
\label{tab:safegpt_dropout0_ordinary}
\begin{tabular}{llcccc}
\toprule
\textbf{Method} & \textbf{Schedule} & \textbf{Quality (\%)} $\uparrow$ & \textbf{Validity (\%)} $\uparrow$ & \textbf{Uniqueness (\%)} $\uparrow$ & \textbf{Diversity} $\uparrow$ \\
\midrule
\multicolumn{6}{l}{\textit{Published Reference~\citep{lee2025genmol}}} \\
SAFE-GPT                                & ---      & $54.7 \pm 0.3$               & $94.0 \pm 0.4$               & $100.0 \pm 0\phantom{.0}$    & $0.879 \pm 0.001$ \\
GenMol                                  & ---      & $84.6 \pm 0.8$               & $100.0 \pm 0.0$              & $99.7 \pm 0.1$               & $0.818 \pm 0.001$ \\
\midrule
\multicolumn{6}{l}{\textit{Our Run Comparison}} \\
Autoregressive (AR)                     & ---      & $63.97 \pm 1.67$             & $96.37 \pm 0.21$             & $99.76 \pm 0.06$             & $\underline{0.883 \pm 0.002}$ \\
\addlinespace[2pt]
\emph{Simplex}                & Linear   & $73.57 \pm 1.86$             & $99.63 \pm 0.12$             & $99.83 \pm 0.12$             & $0.879 \pm 0.001$ \\
\emph{Simplex}                & Cosine   & $72.33 \pm 0.15$             & $99.40 \pm 0.40$             & $\mathbf{99.97 \pm 0.06}$    & $0.881 \pm 0.000$ \\
\emph{Simplex}                & Adaptive & $72.47 \pm 2.21$             & $99.63 \pm 0.15$             & $99.80 \pm 0.17$             & $\mathbf{0.885 \pm 0.002}$ \\
\addlinespace[2pt]
\emph{Simplex (time cond.)}    & Linear   & $72.53 \pm 0.76$             & $99.00 \pm 0.10$             & $99.70 \pm 0.30$             & $0.879 \pm 0.002$ \\
\emph{Simplex (time cond.)}    & Cosine   & $74.70 \pm 0.36$             & $98.90 \pm 0.44$             & $97.64 \pm 0.76$             & $0.871 \pm 0.001$ \\
\emph{Simplex (time cond.)}    & Adaptive & $72.60 \pm 1.10$             & $98.83 \pm 0.06$             & $99.73 \pm 0.23$             & $\underline{0.883 \pm 0.001}$ \\
\addlinespace[2pt]
Uniform                       & Linear   & $80.60 \pm 1.41$             & $99.27 \pm 0.21$             & $98.89 \pm 0.44$             & $0.865 \pm 0.001$ \\
Uniform                       & Cosine   & $80.27 \pm 0.32$             & $99.30 \pm 0.46$             & $98.42 \pm 0.72$             & $0.867 \pm 0.001$ \\
Uniform                       & Adaptive & $75.70 \pm 1.87$             & $98.43 \pm 0.42$             & $98.58 \pm 0.27$             & $0.873 \pm 0.003$ \\
\addlinespace[2pt]
Uniform (time cond.)           & Linear   & $83.03 \pm 1.50$             & $\mathbf{99.80 \pm 0.00}$    & $99.73 \pm 0.21$             & $0.859 \pm 0.003$ \\
Uniform (time cond.)           & Cosine   & $82.80 \pm 0.36$             & $99.57 \pm 0.06$             & $99.87 \pm 0.12$             & $0.860 \pm 0.001$ \\
Uniform (time cond.)           & Adaptive & $73.80 \pm 0.53$             & $98.37 \pm 0.12$             & $98.68 \pm 0.54$             & $0.873 \pm 0.002$ \\
\addlinespace[2pt]
Masked                        & Linear   & $\underline{84.83 \pm 0.68}$ & $99.67 \pm 0.12$             & $\underline{99.90 \pm 0.10}$ & $0.855 \pm 0.002$ \\
Masked                        & Cosine   & $\mathbf{84.93 \pm 0.78}$    & $\underline{99.77 \pm 0.32}$ & $99.77 \pm 0.06$             & $0.854 \pm 0.001$ \\
Masked                        & Adaptive & $81.17 \pm 1.65$             & $99.20 \pm 0.00$             & $99.83 \pm 0.15$             & $0.863 \pm 0.001$ \\
\bottomrule
\end{tabular}
\end{table}

\begin{table}[t]
\centering
\small
\setlength{\tabcolsep}{4.5pt} 
\renewcommand{\arraystretch}{0.95} 
\caption{\textbf{Unconditional \textit{de novo} molecular generation on SAFE-GPT with confidence-based (\textit{conf.}) sampling} following the GenMol evaluation protocol~\citep{lee2025genmol} ($1{,}000$ molecules/seed, mean $\pm$ std over 3 seeds). \textbf{Bold}: best; \underline{underline}: second best among our runs. All our runs use $256$ sampling steps.}
\label{tab:safegpt_dropout0_conf}
\begin{tabular}{llcccc}
\toprule
\textbf{Method} & \textbf{Schedule} & \textbf{Quality (\%)} $\uparrow$ & \textbf{Validity (\%)} $\uparrow$ & \textbf{Uniqueness (\%)} $\uparrow$ & \textbf{Diversity} $\uparrow$ \\
\midrule
\multicolumn{6}{l}{\textit{Published Reference~\citep{lee2025genmol}}} \\
SAFE-GPT                                & ---      & $54.7 \pm 0.3$               & $94.0 \pm 0.4$               & $100.0 \pm 0\phantom{.0}$    & $0.879 \pm 0.001$ \\
GenMol                                  & ---      & $84.6 \pm 0.8$               & $100.0 \pm 0.0$              & $99.7 \pm 0.1$               & $0.818 \pm 0.001$ \\
\midrule
\multicolumn{6}{l}{\textit{Our Run Comparison}} \\
Autoregressive (AR)                     & ---      & $63.97 \pm 1.67$             & $96.37 \pm 0.21$             & $99.76 \pm 0.06$             & $\mathbf{0.883 \pm 0.002}$ \\
\addlinespace[2pt]
\emph{Simplex}                & Linear   & $83.87 \pm 1.36$             & $99.80 \pm 0.10$             & $\underline{99.87 \pm 0.12}$ & $0.859 \pm 0.000$ \\
\emph{Simplex}                & Cosine   & $83.57 \pm 1.59$             & $99.77 \pm 0.15$             & $\underline{99.87 \pm 0.15}$ & $0.862 \pm 0.001$ \\
\emph{Simplex}                & Adaptive & $85.87 \pm 0.70$             & $99.63 \pm 0.31$             & $\mathbf{99.93 \pm 0.06}$    & $0.859 \pm 0.001$ \\
\addlinespace[2pt]
\emph{Simplex (time cond.)}    & Linear   & $85.87 \pm 0.67$             & $99.63 \pm 0.35$             & $99.80 \pm 0.10$             & $0.858 \pm 0.001$ \\
\emph{Simplex (time cond.)}    & Cosine   & $85.60 \pm 0.85$             & $99.67 \pm 0.06$             & $99.70 \pm 0.20$             & $0.858 \pm 0.000$ \\
\emph{Simplex (time cond.)}    & Adaptive & $87.23 \pm 0.71$             & $99.87 \pm 0.12$             & $99.77 \pm 0.06$             & $0.857 \pm 0.001$ \\
\addlinespace[2pt]
Uniform                       & Linear   & $85.23 \pm 0.93$             & $99.37 \pm 0.23$             & $98.56 \pm 0.59$             & $0.858 \pm 0.003$ \\
Uniform                       & Cosine   & $86.03 \pm 1.18$             & $99.63 \pm 0.06$             & $97.69 \pm 0.27$             & $0.857 \pm 0.001$ \\
Uniform                       & Adaptive & $83.60 \pm 0.26$             & $99.27 \pm 0.31$             & $97.01 \pm 0.51$             & $\underline{0.863 \pm 0.001}$ \\
\addlinespace[2pt]
Uniform (time cond.)           & Linear   & $89.60 \pm 0.36$             & $\mathbf{99.93 \pm 0.12}$    & $99.20 \pm 0.40$             & $0.846 \pm 0.002$ \\
Uniform (time cond.)           & Cosine   & $91.97 \pm 0.70$             & $\underline{99.90 \pm 0.10}$ & $99.30 \pm 0.20$             & $0.835 \pm 0.001$ \\
Uniform (time cond.)           & Adaptive & $91.50 \pm 1.11$             & $99.87 \pm 0.06$             & $99.37 \pm 0.15$             & $0.839 \pm 0.002$ \\
\addlinespace[2pt]
Masked                        & Linear   & $\mathbf{92.80 \pm 0.26}$    & $\underline{99.90 \pm 0.10}$ & $99.53 \pm 0.38$             & $0.820 \pm 0.000$ \\
Masked                        & Cosine   & $\mathbf{92.80 \pm 0.61}$    & $99.77 \pm 0.06$             & $99.83 \pm 0.12$             & $0.819 \pm 0.001$ \\
Masked                        & Adaptive & $\underline{92.00 \pm 1.05}$ & $99.77 \pm 0.12$             & $99.83 \pm 0.06$             & $0.823 \pm 0.001$ \\
\bottomrule
\end{tabular}
\end{table}

\paragraph{Sampling budgets and sampling time schedules.} We start from an ablation over sampling time schedules for Simplex Diffusion for molecular generation. In Figure~\ref{fig:sdm_te_quality_diversity_vs_steps}, we report performance of Simplex Diffusion (with time conditioning) as a function of sampling steps using either standard or confidence-based sampler. We see that depending on the sampling budget, the type of a sampler and the evaluation metric chosen, the choice of a sampling time schedule may differ. For the next ablation, we select adaptive time schedule for Simplex diffusion since it offers overall a good trade-off between different metrics and different sampling budgets.

\paragraph{Comparison across model variants.} Next, we compare performance of Simplex Diffusion to masked and uniform diffusion. For these methods we use cosine time schedule since it led to the best Quality. We report results in Figure~\ref{fig:methods_standard_quality_diversity_vs_steps} for the standard sampler and in Figure~\ref{fig:methods_conf_quality_diversity_vs_steps} for the confidence-based one. We see that both masked and uniform diffusion lead to higher Quality than Simplex Diffusion, while achieving lower Diversity. In case of confidence-based sampler the finding remains except for the fact that Simplex diffusion performs on-par with Uniform diffusion without time conditioning. These results demonstrate that different diffusion methods provide a trade-off between these two metrics.

\paragraph{Quality--diversity trade-offs.} We further highlight this point by plotting a Pareto frontier for different methods with confidence-based sampler in Figure~\ref{fig:quality_vs_diversity_sweeps_pareto}, where different points represent different sampling seeds, different number of sampling steps, different sampling temperatures, different churns and different top fraction power, see~\Cref{sec:appendix-setup-genmol} for details. We see that Simplex Diffusion Pareto frontier is situated more towards right and top, though it does not achieve the highest Quality, compared to Uniform and Masked diffusion. Understanding how to push this frontier further to the top right is a promising future research direction.

\paragraph{Detailed results at 256 steps.}
\Cref{tab:safegpt_dropout0_ordinary,tab:safegpt_dropout0_conf}
report all four evaluation metrics under standard and confidence-based sampling.
With confidence-based sampling and an adaptive grid, time-conditioned SDM achieves $87.23\%$ Quality and $0.857$ Diversity.
For comparison, time-conditioned UDM with a cosine grid achieves $91.97\%$ and $0.835$, while MDM with a cosine grid achieves $92.80\%$ and $0.819$.
Time conditioning increases the selected confidence-based Quality scores from $85.87\%$ to $87.23\%$ for SDM with an adaptive grid and from $86.03\%$ to $91.97\%$ for UDM with a cosine grid, accompanied by reductions in Diversity. These results highlight the same findings observed in the figures above.

All reported diffusion configurations exceed our AR baseline in Quality.
Time-conditioned SDM also achieves higher reported Quality and Diversity than the published GenMol reference, although the training and sampling protocols differ. Overall, the results demonstrate the applicability of SDMs to molecular generation and their competitive performance in higher-diversity regimes, while identifying a remaining gap in maximum Quality relative to masked diffusion. Finding ways to bridge the gap and to push the Pareto frontier for Simplex Diffusion is left for future work.

\section{Additional OWT Samples}
\label{sec:additional_samples}
\Cref{tab:qual_owt_wu17_sample1,tab:qual_owt_wu17_sample2,tab:qual_owt_wu17_sample3,tab:qual_owt_wu17_sample4,tab:qual_owt_wu17_sample5} show non-cherry-picked unconditional 1024-token samples from SDMs trained on OWT.

\begin{table*}[t!]
\centering
\scriptsize
\renewcommand{\arraystretch}{1.25}
\caption{Unconditional 1024-token generation from Simplex Diffusion Model trained on OWT (Sample 1).}
\label{tab:qual_owt_wu17_sample1}
\begin{tabular}{@{}p{\textwidth}@{}}
\toprule
\textbf{\textcolor{teal!80!black}{Sample 1 (1024 tokens)}} \hfill \textbf{GenPPL (GPT-2 Large):} \texttt{14.15} \quad|\quad \textbf{Unigram Entropy:} \texttt{5.27} \quad|\quad \textbf{Distinct-2:} \texttt{0.724} \\
\midrule
bring peace and stability Middle East,'' he told reporters. ``We're all going to be hopeful if we're part of an agreement that is retroactively \dots\ And I think the Democratic Party will be up for grabs if she takes it seriously.''\par\vspace{0.35em}
Republican strategist Steve Bannon also criticized Trump's comments earlier this week, saying that he had a ``waste for American diplomacy'' by ``trying to burn out sanctions relief with Iran.\par\vspace{0.35em}
New York Times columnist Hugh Hewitt, also a former State Department official and a member of the Republican National Committee, says Trump is trying to forge a deal with Iran without ever breaching international sanctions.\par\vspace{0.35em}
``I have a lot of miscalculations because I think they've succeeded in a bad engagement'' Hewitt told The New York Times, adding that the administration needs to understand the terms of the deal.\par\vspace{0.35em}
``I think I don't think this deal will be bad, but it's hard --- which is very hard, very hard --- to fix it,'' he added. ``What can we do? Can we depolarize the rest of the Middle East?''\par\vspace{0.35em}
\texttt{\_\_\_} \quad 10 p.m.\par\vspace{0.35em}
North Korea's government says it backs Republican presidential candidate Donald Trump for setting a new tone with his criticism of North Korea's state-controlled media, in a move at odds with Washington rhetoric over its pursuit of nuclear weapons.\par\vspace{0.35em}
South Korean Foreign Minister Kim Kyung-seo responded in a statement calling Trump ``reckless and reckless.'' He criticizes Trump on North Korea, calling him a ``highly dangerous man for the President of the United States.''\par\vspace{0.35em}
He also said he would not talk to North Korea without doubting its nuclear weapons abroad. ``Without such a strong leadership, any threat from the North Korean regime would be rejected,'' he said. South Korea has said it banned Pyongyang's nuclear programs in the past because it would stop North Koreans from leaving the country without their weapons.\par\vspace{0.35em}
\texttt{\_\_\_} \quad 9:30 p.m.\par\vspace{0.35em}
North Korean military leader Jang Song-thaek says Trump ``totally impoundately'' the U.S. nuclear weapons program because it was ``totally fragile'' after the United States dropped an H-bomb on the city of Nagasaki, according to an Air Force One video released Tuesday.\par\vspace{0.35em}
He also said he hoped that the U.S. would make a good return solely to the negotiating table.\par\vspace{0.35em}
He also called for North Korea to get serious about halting its program, although the U.S. has not sanctioned or formally approved its nuclear-weapons program since August 1950. He also urged the United States to learn a lesson from history.\par\vspace{0.35em}
Trump said on Tuesday that the U.S. must getting ready to restart its nuclear program. He said restarting its work with North Korea could mean more years to begin in the coming months.\par\vspace{0.35em}
``It will be very gradual,'' he said. ``And I think that will not be abrupt --- and it will will be gradual --- until the United States can get back on the table.''\par\vspace{0.35em}
\texttt{\_\_\_} \quad 8:30 p.m.\par\vspace{0.35em}
Yuri Yuri, South Korea's former deputy prime minister, says Donald Trump believes the U.S. should continue to unravel its nuclear program but is a ``preaching and grave mistake.''\par\vspace{0.35em}
His remarks followed a recent high-level meeting between Japan and South Korea at an annual summit in Seoul.\par\vspace{0.35em}
Yuri, who is South Korea's first president, has criticized the U.S. should apologize for dropping two atomic bombs on Hiroshima during World War II. He said the U.S. should take steps necessary to try to halt its nuclear program.\par\vspace{0.35em}
\texttt{\_\_\_} \quad 8:45 p.m.\par\vspace{0.35em}
Former Republican presidential candidate Donald Trump said on Tuesday that his country's nuclear and missile programs should hold a meeting to discuss ``a peaceful solution.''\par\vspace{0.35em}
Trump made similar comments about his country's relationship with the United States, which has long labeled its nuclear weapons and other NATO allies are ``obsolete.''\par\vspace{0.35em}
North Korean leader, Kim Il Jong-un, had earlier this month accused China of cutting off off the country's nuclear program during talks with other parties for peace talks.\par\vspace{0.35em}
Trump's comments have been seen as provocative by some over his tough stance toward China and his controversial unification policy.\par\vspace{0.35em}
But South Korean Foreign Minister Jang Song-thaek issued a statement Tuesday saying the issue was non-negotiable and adding that nuclear talks could not be reached unless both sides reached a peaceful political solution.\par\vspace{0.35em}
\texttt{\_\_\_\_\_} \quad 7:30 p.m. \quad The United \\
\bottomrule
\end{tabular}
\end{table*}

\begin{table*}[t!]
\centering
\scriptsize
\renewcommand{\arraystretch}{1.25}
\caption{Unconditional 1024-token generation from Simplex Diffusion Model trained on OWT (Sample 2).}
\label{tab:qual_owt_wu17_sample2}
\begin{tabular}{@{}p{\textwidth}@{}}
\toprule
\textbf{\textcolor{teal!80!black}{Sample 2 (1024 tokens)}} \hfill \textbf{GenPPL (GPT-2 Large):} \texttt{19.52} \quad|\quad \textbf{Unigram Entropy:} \texttt{5.59} \quad|\quad \textbf{Distinct-2:} \texttt{0.804} \\
\midrule
after it was advertised as a ``gender-only'' service run by a non-binary woman. The ad came less than two hours after a video featuring a YouTube user called Avoid Open Door Wicked, which featured 12 women, 14 men, 11 men and 10 women was posted online on its website.\par\vspace{0.35em}
``I have been harassed 400+ times because I am surrounded by a hostile environment directed at me,'' the 23-year-old woman wrote. ``I can't believe it,'' read one of the ads, which read ``Divorce is in my veil!'' Another added: ``You are not a misogynist, and you cannot use your own operating system to escape your own travails.''\par\vspace{0.35em}
``If you were a non-binary woman would you be captive for your own sexual desires? If you'd you were a woman woman would you be captive for your desires?'' asked ACLU attorney Jennifer Partridge, who filed the case through the Justice Department's nonprofit Philanthropy Law Project.\par\vspace{0.35em}
``It's reflective of what we kind of social network is about and whether it is a real issue,'' Partridge added.\par\vspace{0.35em}
Courtney ACLU attorneys argued that the company's discrimination against harassment and gender-based bias could violate her legal rights.\par\vspace{0.35em}
``We do not substantially suggest that gender discrimination is not a real issue,'' she wrote her brief. ``This ad does not fit the context of our social Facebook or Google ads.'' Partridge noted that the company's policy --- which requires customers to purchase or sell their products online --- suggests more for women than it does for men. The company also sells products in the same retail department stores.\par\vspace{0.35em}
``One could argue that this product based on gender-based disrespect is far more sanctimonious what people might buy from a store online,'' Partridge said in an email. ``This is not an abstract issue.''\par\vspace{0.35em}
The company's justification is that its site must engage users to `disconnect' amongst psychological issues. ``I don't think it should,'' Partridge said. ``I don't think another person should be discriminated against.''\par\vspace{0.35em}
The company has also said that its decision to remove names or corporate logos from all ads on the site because it doesn't want to deter any potential trespassing. Story continues below advertisement --- Hillary Clinton has maintained her commanding lead in the presidential race over Republican presidential nominee Donald Trump.\par\vspace{0.35em}
Clinton says Trump leads her by 39\%, according to a new NBC News/Wall Street Journal Journal poll that shows Republican presidential nominee Donald Trump with the largest-ever lead in any presidential election.\par\vspace{0.35em}
The poll, conducted by Public Opinion Strategies, a long-time polling firm, surveyed 1,000 likely voters. It has a margin of error at 3.0 points with the total sample of 1,000.\par\vspace{0.35em}
Clinton gets 43 percent of the vote behind Green Green Party candidate Jill Stein (38 percent), while Trump gets 41 (36 percent) and Jill Stein (35 percent). Under the case for both candidates, Stein would get 33.9 percent of the vote.\par\vspace{0.35em}
The poll predicts Clinton would win over Trump (34 percent), while Stein would get 9.7 percent ahead of Stein (8 percent) Stein/Garrabee (6 percent).\par\vspace{0.35em}
Despite last week's debate announcing her bid for the presidency, Clinton has implied that she would support any future Republican presidential candidate. In November, when she joined Barack Obama's national security team, she said she would need to cast her first female vote in the U.S. Senate to serve as president.\par\vspace{0.35em}
The poll also shows Trump leads all other Republican candidates with 34\% support, while Texas Sen. Ted Cruz leads the presumptive GOP nominee with just 15\%.\par\vspace{0.35em}
Trump has previously abandoned former Democratic Secretary of State Hillary Clinton after launching an unsuccessful bid to revive his campaign. He himself, however, has endorsed the presumptive GOP nominee, admitting that he still had no intention of securing support. He also suggested that he might be tempted to endorse Clinton if his party didn't back him up.\par\vspace{0.35em}
Story continues below advertisement.\par\vspace{0.35em}
The polls are conducted by landline and automated landline telephone interviews with 1,000 likely voters nationwide and were conducted between Oct. 19 through Nov. 20, 2012, with Barack Obama and Mitt Romney registered among likely voters. The margin of error is plus or minus 3.5 points.\par\vspace{0.35em}
Also on HuffPost: Two new skyscraper plans will bring some of Manhattan's tallest skyline to the rest of the world, after Madison Square Park was knocked out of Manhattan's tallest buildings earlier this year. Construction on the first three-story tower has been delayed in time for a full moon. The city's skyscraper straddled between New York, New York, Chicago and \\
\bottomrule
\end{tabular}
\end{table*}

\begin{table*}[t!]
\centering
\scriptsize
\renewcommand{\arraystretch}{1.25}
\caption{Unconditional 1024-token generation from Simplex Diffusion Model trained on OWT (Sample 3).}
\label{tab:qual_owt_wu17_sample3}
\begin{tabular}{@{}p{\textwidth}@{}}
\toprule
\textbf{\textcolor{teal!80!black}{Sample 3 (1024 tokens)}} \hfill \textbf{GenPPL (GPT-2 Large):} \texttt{19.53} \quad|\quad \textbf{Unigram Entropy:} \texttt{5.23} \quad|\quad \textbf{Distinct-2:} \texttt{0.645} \\
\midrule
Related Articles Section 491. Related Articles Section 492 Miscellaneous Related Articles Section 493 Miscellaneous. Section 493. Related Section 494 Miscellaneous Related Articles Section 494 Miscellaneous. Section 493 Miscellaneous Articles Section 494 Miscellaneous Articles Sections 495 Miscellaneous Sections 487 Miscellaneous Sections 488 Miscellaneous Sections 489 24/24 Sections 4811 Miscellaneous Sections 4812 24/24 4813 Miscellaneous Sections 4812 24/24 4813 Miscellaneous Section 4812.\par\vspace{0.35em}
Sec. 501 of Hawaii Revised, Hawaii Revised Code as follows:\par\vspace{0.35em}
(a) It shall be unlawful as a citizen of the United States or a foreign territory of the United States; (a) may conduct, including but not not limited to as a citizen of the United States; (b) as an ex-concitizen of the State of Hawaii or a foreign territory; or (b) as a non-citizen of Hawaii or a former citizen of the United States; (ii) perform other activities as defined in this title. If such an individual does not wish to sign up for annual free membership or holiday free walks in an area outside Hawaii, it shall be unlawful to use internet services, advertisements newspaper articles, or witherbecoming to advertise such activities. Any person who knowingly giving birth to another person under this same title (or any other person under this title or regulations) shall be penalized.\par\vspace{0.35em}
If a person who is denied health care coverage has been notified in advance that a person who does not give birth birth at his birthplace will he become a resident of a foreign territory, or a Territory of another state or vice versa, there is no penalty for violation of this above law. If it also is found that a person who denies giving birth occurs at a site of birth will he is a resident of a foreign territory, or Foreign Territory of the State.\par\vspace{0.35em}
To prohibit any person visiting a foreign territory or from becoming a resident of any territory outside the United States, I am hereby directing that such person knowingly violates the above law. The incurring any violation of the above law or regulations may be taken pursuant to the provisions of the Rawled ``Hawaii'' Virgin Islands Act of 2018.\par\vspace{0.35em}
Hawaiian and native Hawai'i people: Hawaiian is a Hawaiian language which is a native Hawaiian language that has been traditionally associated with the Hawai'i people. It is nevertheless traditional Hawaiian language spoken by native Hawai'i people. The Native Hawaiian people are descendants of Native Hawaiian people who are descended from Native Hawaiian tribes, and since then they have been interceded by tribes from other Native Hawaiian tribes.\par\vspace{0.35em}
A permanent Hawaiian resident or permanent Hawaiian resident who resides in Hawaii becomes a permanent Hawaiian resident shall continue to form a liaison with all Hawaiian community and volunteer volunteers to assist with navig land being navigated navigated within Hawaii. A permanent Hawaiian resident shall be required surrender reside in Hawaii court for specified time period, and if granted permanent Hawaiian resident order obtain permission from his residence in Hawaii for a specified time period. Permanent Hawaiian residents shall be required to reside before a Hawaii court. A permanent Hawaiian resident shall continue to become legal citizens of Poly Hawaiians because they are native Hawai'i residents who are non-institutionalized employment and who operate their own agricultural enterprises.\par\vspace{0.35em}
Article XV01 of the Civil Rights Act of Samoa: I am proposing to amend section article XV01 of the Civil Rights Act of Samoa and shall be amended by sections VII, II and Article XII, the Constitution of the Republic on January 1, 2016.\par\vspace{0.35em}
Article XV01 U.S. Constitution of Samoa --- The Pacific Ocean Rights Act of Samoa Act: This act was performed on January 1, 2016 while facilitating the Civil Rights Act of Samoa. This Act shall be amended in section I, II, sections XVI, and XVI, the Constitution of Samoa as amended by section VI.\par\vspace{0.35em}
The American Samoa Act: This Act shall be amended in section Article XIII, the Pacific Ocean Rights Act as amended by section XXVIII. This Act shall be amended in section Article XIV, the Constitution of the United States as amended by section VI.\par\vspace{0.35em}
Section 18017 U.S. Constitution of the United States of America: Nothing shall be made construed unlawful for any county, state, political subdivision of any country or any other State whatsoever to employ any person in the United States of America. Section 18017 U.S. Constitution of the United States of America includes persons: Native American, African American, Baltic American, Mexican-American Indians non-Indigenous Indian Americans, immigrants, citizens of the United States.\par\vspace{0.35em}
Corrections / Proions: Section 17:20 Sec. 2 --- the Constitution 1/3 dated January 1, 2017. Section 17:20 Sec. 3 --- the text of the Constitution 1/3, dated January 1, 2017. (Proclamation). Section 17:20 Sec. 3 \\
\bottomrule
\end{tabular}
\end{table*}

\begin{table*}[t!]
\centering
\scriptsize
\renewcommand{\arraystretch}{1.25}
\caption{Unconditional 1024-token generation from Simplex Diffusion Model trained on OWT (Sample 4).}
\label{tab:qual_owt_wu17_sample4}
\begin{tabular}{@{}p{\textwidth}@{}}
\toprule
\textbf{\textcolor{teal!80!black}{Sample 4 (1024 tokens)}} \hfill \textbf{GenPPL (GPT-2 Large):} \texttt{20.85} \quad|\quad \textbf{Unigram Entropy:} \texttt{5.60} \quad|\quad \textbf{Distinct-2:} \texttt{0.848} \\
\midrule
efficient price for carbon emitted from renewable energy rests with the European Commission, spokesman Richard Ritter said.\par\vspace{0.35em}
Commissions are come under a special agreement, which allows member states to be analysed and then sell all of the emissions emitted from other EU states.\par\vspace{0.35em}
Under this agreement member states would buy emission-free allowances. The member states could then decide to sell them only if they couldn't afford to buy additional items.\par\vspace{0.35em}
For emissions-free under the new agreement, those allowances from Germany would have to give up their market share, said Ritter.\par\vspace{0.35em}
``The Commission will henceforth refuse to use emission-free allowances as an exercise for Germany or for Jean Juniet,'' Ritter said. ``There are other alternatives I would advise that we should be able to bring them into the domestic market, but not at all.''\par\vspace{0.35em}
Ritter added that the EU would need to remove emission-free allowances from its domestic market before sending them back to emission-free markets. He added that such plans were approved by the German parliament.\par\vspace{0.35em}
German MEP Bart Schoto told reporters on Sunday that there is no need for a new deal. ``It's just a simple legislative process,'' he said. He suggested emission-free allowances should be sold.\par\vspace{0.35em}
``I don't think so much but but leaving is not really a problem,'' he said. ``If we think we will be able to keep up the cost of our growing economy then I think this pathway will be very difficult.''\par\vspace{0.35em}
Other measures must be taken through the new emissions agreement, Ritter said. The measures could include building incentives to build nuclear power plants, which could cost about \$1 billion a year plus subsidies for nuclear power plants in South Florida, which could raise fuel costs.\par\vspace{0.35em}
``We will have to expand our capacity if we can't get our policies back into place,'' he said. ``It's going to be a very tough decision and it will be taken out as soon as possible,'' he said.\par\vspace{0.35em}
Merkel's action `unemptive': Juncker, who was arrested in Brussels in connection with his meeting between the two leaders and German Chancellor Angela Merkel, met privately with former Chancellor Gerhard Schroeder, the former Bavarian prime minister and medical doctor Reinforce Eisenberg at a press conference in Berlin last week.\par\vspace{0.35em}
The commission's decision has been fraught with political tensions. However, Germany has appealed to Brussels against a form of one-party rule imposed by the European Union.\par\vspace{0.35em}
On Friday, Germany's state energy company (SDF) and its state utility EDF --- which has been the main target for calls for divesting coal, said it would not move forward. Eisenberg said it was highly unlikely that Merkel would make any effort to change her position.\par\vspace{0.35em}
``This is a step in a right direction,'' said Marguerte Schweiko, the head of the German daily newspaper Vorb Mauschaftung in an interview with Der Spiegel.\par\vspace{0.35em}
``The chancellor has clearly changed position\dots\ She has already shown herself capable of exerting power without the popular support of her friends, who control the world's most valuable fossil fuel reserves.''\par\vspace{0.35em}
``Merkel is an anti-nuclear campaigner\dots\ She has made it clear that she did not actually participate in any concrete action.''\par\vspace{0.35em}
Schweiko also said Merkel her actions are ``unemptive'', adding that ``the German government played a principal role in cultivation of tools for climate change''. She added that Merkel had pressure on many Germans. ``Mrs Merkel seems to have lost her credibility in many ways she cannot stand trusted,'' she said.\par\vspace{0.35em}
Earlier this week, the German Green Party, Christian Saar, issued a statement condemning ``polarly polarized'' climate negotiations. ``It is inconceivable that all parties can agree on a legally binding treaty---even within the framework of the parliament---but it documents highly polarised,'' the statement read.\par\vspace{0.35em}
The EU has started undertaking its own voluntary emissions trading scheme, making matters more pressing among some European countries that are realistic about climate change, as well as other countries that have warned against taking climate action.\par\vspace{0.35em}
Germany may have already signed up to reduce carbon emissions, meaning that still does not have the resources to do so. But it may also have an opportunity to kick-start talks in Paris on climate change.\par\vspace{0.35em}
Later this year, Europe and the major United States are expected to host talks on agriculture and climate change. Their involvement The Paris talks is scheduled to begin at the end of 2012, months after the Kyoto Protocol took place in Copenhagen in December 2009.\par\vspace{0.35em}
British Prime Minister David Cameron, who is in Copenhagen, and is due to discuss negotiations on climate change, had told reporters the Paris talks would ``have no end in sight yet''. Cameron told reporters earlier this week that Britain was ``not ruling out of Paris'', nor was an organiser of the negotiations. G20 leaders will hold a summit in Hamburg later next month \\
\bottomrule
\end{tabular}
\end{table*}

\begin{table*}[t!]
\centering
\scriptsize
\renewcommand{\arraystretch}{1.25}
\caption{Unconditional 1024-token generation from Simplex Diffusion Model trained on OWT (Sample 5).}
\label{tab:qual_owt_wu17_sample5}
\begin{tabular}{@{}p{\textwidth}@{}}
\toprule
\textbf{\textcolor{teal!80!black}{Sample 5 (1024 tokens)}} \hfill \textbf{GenPPL (GPT-2 Large):} \texttt{4.04} \quad|\quad \textbf{Unigram Entropy:} \texttt{3.56} \quad|\quad \textbf{Distinct-2:} \texttt{0.197} \\
\midrule
Commandments. 300. 325 325. The Son of God, His Son, His Father and the Commandments.\par\vspace{0.25em}
300. 325 325. The Son of God, The Son of the Holy and Father and the Commandments.\par\vspace{0.25em}
300. 325 325 Elijah Elijah is not commanded by The Lord of Christ, His Father and the Commandments.\par\vspace{0.25em}
300. 325 325 Elijah is not commanded by Jesus Christ, His Father, the Commandments.\par\vspace{0.25em}
300. 325 325. Of The Lord of Christ, His Father and the Commandments. \quad 300. 325. The Lord of Christ, His Son and the Commandments. \quad 300. 325 325. The Lord of Christ, His Father and the Commandments.\par\vspace{0.25em}
301. 325 The Lord of Christ, His Father and the Commandments. \quad 301. 325. The Lord of Christ, His Father, the Commandments. \quad 301. 325 325. Of The Lord's son, His Father, the Commandments.\par\vspace{0.25em}
301. the Lord Elijah is not commanded by The Lord of son, His Father, the Commandments. \quad 301. the Lord Elijah is not commanded by The Lord's son, Father, the Commandments.\par\vspace{0.25em}
301. the Lord Elijah is not being commanded by the Lord's Christ, His Son, His Father and the Commandments. \quad 301. the Lord Elijah is not being commanded by The Lord's son, His Father and the Commandments.\par\vspace{0.25em}
302. 304 308 308. The Son of God to His Father and the Commandment 351. \quad 303. 308 308 The Son of God, Maker Commandments.\par\vspace{0.25em}
304. 328 309. Jesus God to His Father and the Commandment 351. \quad 303. 328 309. Jesus Lord to be Father, Maker Commandments. \quad 303. 328 309. Jesus Lord to be Stronger.\par\vspace{0.25em}
305. 328 309. The Son of God. Maker Commandment. \quad 306. 328 309 309. The Son of Christ, His Father, the Commandments. \quad 305. 328 309. The Lord to be Strong, Maker Commandments.\par\vspace{0.25em}
307. 328. The Son of Christ, His Father, the Commandments. \quad 308. 328 328. The Son of Christ, His Father, the Commandments. \quad 309. 328 328. The Lord to be Stronger.\par\vspace{0.25em}
322. 329 330. The Son of Christ, His Father, the Commandments. \quad 323. 329 330. Jesus The Son of God, Maker Command. \quad 324. 329 330. Jesus Lord to be Strong. \quad 323. 329 330. The Son of God, Maker Commandments.\par\vspace{0.25em}
324. 329 331 Lose! Of The Son's God, His Father and the Commandments. \quad 326. 329 331. The Son of Christ, His Father and the Commandments.\par\vspace{0.25em}
325. 320 315 315. Jesus The Son of God, His Exalted Father and the Commandments. \quad 326. 320 320. Jesus the Son of Christ, His Father and the Commandments.\par\vspace{0.25em}
325 320 320. Jesus Christ to the Father, the Commandment 351. \quad 325 320 320. The Son of Christ to the Father and the Commandment 351. \quad 327. 320 320. The Son of Christ to the Father and the Commandments.\par\vspace{0.25em}
349. 321 321. The Son of God, Maker Command. \quad 350. 321 321. Jesus Christ to He Father, Maker Commandment. \quad 351. 321 321. Jesus the Son of God to His Father, the Commandment 351.\par\vspace{0.25em}
352 321 321. The Son of Christ to His Father, the Commandment 351. \quad 352. 321 321. The Son's God, His Exalted Father and the Commandment 351. \quad 353. 321 321. The Son of Christ to His Father, the Commandment 351.\par\vspace{0.25em}
352. 412 355. The Son of Christ to His Father, the Commandment 351. \quad 352. 412 355. The Son of God, Maker Commandment. \quad 353. 412 355. The Lord to be Strongerful. \quad 354. 412 355. The Lord of his Father, Maker Commandments.\par\vspace{0.25em}
355. 412 412. Jesus The Son of Christ, the Father and the Commandments. \quad 355. 412 412. The Son of Christ, the Father and the Commandments. \quad 356. 412 412. Jesus Christ to the Father, Maker Commandments.\par\vspace{0.25em}
357. 412 410. The Son of Christ, His God, the Commandment 351. \quad 356 412 412 410. Jesus Christ to the Father, Maker Commandments. \quad 357. 412 410. The Son of God, the Father and the Commandments.\par\vspace{0.25em}
358. 412 412 410. The Lord's God, the Father and Commanders. \quad 358. 412 410. The Son of God, Maker Commandments. \quad 357. 415 414. Jesus Christ to the Father, the Commandments. \quad 357. 412 414. The Son of Christ, the Father and the Commandments. \quad 358. 412 415. It Happen to \\
\bottomrule
\end{tabular}
\end{table*}

\clearpage
\section{Full Result Tables}
\label{app:full_tables}
\subsection{Sudoku}
\label{app:full_tables_sudoku}

\begin{table}[!htbp]
    \centering
    \small
    \setlength{\tabcolsep}{6.0pt}
    \renewcommand{\arraystretch}{1.08}
    \caption{\textbf{Accuracy (\%) on Sudoku} in 180 steps for \textbf{Simplex Diffusion Models} with Churn \mbox{$\kappa = 0.0$} across sampling schedules. Values are reported as $\text{mean}_{{\color{gray}\pm\text{std}}}$ over 5 seeds. Column-wise top scores are \textbf{bolded}, and overall best for $\kappa = 0.0$ is \underline{\textbf{underlined}}}
    \label{tab:sudoku_sdm_churn_0_0}
    \begin{tabular}{l ccc}
    \toprule
    Concentration Schedule & Linear & Cosine & Adaptive \\
    \midrule
    \multicolumn{4}{@{}l@{}}{\textit{Uniform time sampler}} \\
    \hspace*{0.8em} Const.-Lin. ($\nu_0=0.2, \nu_1=0.5, \ell=0.2$) & $62.8_{{\color{gray}\scriptscriptstyle\pm 5.0}}$ & $62.4_{{\color{gray}\scriptscriptstyle\pm 4.7}}$ & -- \\
    \hspace*{0.8em} Const.-Lin. ($\nu_0=0.2, \nu_1=0.75, \ell=0.2$) & $74.0_{{\color{gray}\scriptscriptstyle\pm 4.4}}$ & $73.7_{{\color{gray}\scriptscriptstyle\pm 4.3}}$ & -- \\
    \hspace*{0.8em} Const.-Lin. ($\nu_0=0.4, \nu_1=0.75, \ell=0.8$) & $73.9_{{\color{gray}\scriptscriptstyle\pm 4.7}}$ & $74.0_{{\color{gray}\scriptscriptstyle\pm 4.7}}$ & -- \\
    \hspace*{0.8em} Constant ($\nu = 0.5$) & $64.9_{{\color{gray}\scriptscriptstyle\pm 3.7}}$ & $64.6_{{\color{gray}\scriptscriptstyle\pm 3.1}}$ & -- \\
    \hspace*{0.8em} Constant ($\nu = 0.25$) & $56.0_{{\color{gray}\scriptscriptstyle\pm 6.2}}$ & $55.6_{{\color{gray}\scriptscriptstyle\pm 5.8}}$ & -- \\
    \midrule
    \multicolumn{4}{@{}l@{}}{\textit{Adaptive time sampler}} \\
    \hspace*{0.8em} Const.-Lin. ($\nu_0=0.2, \nu_1=0.5, \ell=0.2$) & $78.5_{{\color{gray}\scriptscriptstyle\pm 2.1}}$ & $78.1_{{\color{gray}\scriptscriptstyle\pm 2.3}}$ & $75.3_{{\color{gray}\scriptscriptstyle\pm 2.5}}$ \\
    \hspace*{0.8em} Const.-Lin. ($\nu_0=0.2, \nu_1=0.75, \ell=0.2$) & $79.4_{{\color{gray}\scriptscriptstyle\pm 2.6}}$ & $79.2_{{\color{gray}\scriptscriptstyle\pm 2.7}}$ & $77.2_{{\color{gray}\scriptscriptstyle\pm 3.1}}$ \\
    \hspace*{0.8em} Const.-Lin. ($\nu_0=0.4, \nu_1=0.75, \ell=0.8$) & $77.1_{{\color{gray}\scriptscriptstyle\pm 2.5}}$ & $76.9_{{\color{gray}\scriptscriptstyle\pm 2.4}}$ & $75.1_{{\color{gray}\scriptscriptstyle\pm 2.8}}$ \\
    \hspace*{0.8em} Constant ($\nu = 0.5$) & $78.0_{{\color{gray}\scriptscriptstyle\pm 2.0}}$ & $77.9_{{\color{gray}\scriptscriptstyle\pm 2.1}}$ & $74.2_{{\color{gray}\scriptscriptstyle\pm 2.0}}$ \\
    \hspace*{0.8em} Constant ($\nu = 0.25$) & $76.0_{{\color{gray}\scriptscriptstyle\pm 3.5}}$ & $75.5_{{\color{gray}\scriptscriptstyle\pm 3.6}}$ & $72.4_{{\color{gray}\scriptscriptstyle\pm 4.2}}$ \\
    \midrule
    \multicolumn{4}{@{}l@{}}{\textit{Self-Conditioning + Uniform time sampler}} \\
    \hspace*{0.8em} Const.-Lin. ($\nu_0=0.2, \nu_1=0.5, \ell=0.2$) & $97.4_{{\color{gray}\scriptscriptstyle\pm 2.3}}$ & $94.6_{{\color{gray}\scriptscriptstyle\pm 6.6}}$ & -- \\
    \hspace*{0.8em} Const.-Lin. ($\nu_0=0.2, \nu_1=0.75, \ell=0.2$) & $96.9_{{\color{gray}\scriptscriptstyle\pm 3.1}}$ & $94.0_{{\color{gray}\scriptscriptstyle\pm 9.1}}$ & -- \\
    \hspace*{0.8em} Const.-Lin. ($\nu_0=0.4, \nu_1=0.75, \ell=0.8$) & $\underline{\textbf{99.2}}_{{\color{gray}\scriptscriptstyle\pm 0.2}}$ & $\textbf{99.1}_{{\color{gray}\scriptscriptstyle\pm 0.2}}$ & -- \\
    \hspace*{0.8em} Constant ($\nu = 0.5$) & $98.7_{{\color{gray}\scriptscriptstyle\pm 0.6}}$ & $98.1_{{\color{gray}\scriptscriptstyle\pm 1.3}}$ & -- \\
    \hspace*{0.8em} Constant ($\nu = 0.25$) & $97.6_{{\color{gray}\scriptscriptstyle\pm 1.7}}$ & $96.5_{{\color{gray}\scriptscriptstyle\pm 3.4}}$ & -- \\
    \midrule
    \multicolumn{4}{@{}l@{}}{\textit{Self-Conditioning + adaptive time sampler}} \\
    \hspace*{0.8em} Const.-Lin. ($\nu_0=0.2, \nu_1=0.5, \ell=0.2$) & $91.6_{{\color{gray}\scriptscriptstyle\pm 17.3}}$ & $88.5_{{\color{gray}\scriptscriptstyle\pm 24.3}}$ & $89.7_{{\color{gray}\scriptscriptstyle\pm 21.7}}$ \\
    \hspace*{0.8em} Const.-Lin. ($\nu_0=0.2, \nu_1=0.75, \ell=0.2$) & $94.0_{{\color{gray}\scriptscriptstyle\pm 10.1}}$ & $85.4_{{\color{gray}\scriptscriptstyle\pm 24.8}}$ & $89.5_{{\color{gray}\scriptscriptstyle\pm 17.9}}$ \\
    \hspace*{0.8em} Const.-Lin. ($\nu_0=0.4, \nu_1=0.75, \ell=0.8$) & $97.3_{{\color{gray}\scriptscriptstyle\pm 4.5}}$ & $95.2_{{\color{gray}\scriptscriptstyle\pm 9.1}}$ & $96.6_{{\color{gray}\scriptscriptstyle\pm 4.9}}$ \\
    \hspace*{0.8em} Constant ($\nu = 0.5$) & $99.1_{{\color{gray}\scriptscriptstyle\pm 0.2}}$ & $98.9_{{\color{gray}\scriptscriptstyle\pm 0.5}}$ & $\textbf{98.9}_{{\color{gray}\scriptscriptstyle\pm 0.3}}$ \\
    \hspace*{0.8em} Constant ($\nu = 0.25$) & $98.1_{{\color{gray}\scriptscriptstyle\pm 2.9}}$ & $97.1_{{\color{gray}\scriptscriptstyle\pm 5.0}}$ & $97.5_{{\color{gray}\scriptscriptstyle\pm 4.1}}$ \\
    \bottomrule
    \end{tabular}
\end{table}

\begin{table}[!htbp]
    \centering
    \small
    \setlength{\tabcolsep}{6.0pt}
    \renewcommand{\arraystretch}{1.08}
    \caption{\textbf{Accuracy (\%) on Sudoku} in 180 steps for \textbf{Simplex Diffusion Models} with Churn \mbox{$\kappa = 0.2$} across sampling schedules. Values are reported as $\text{mean}_{{\color{gray}\pm\text{std}}}$ over 5 seeds. Column-wise top scores are \textbf{bolded}, and overall best for $\kappa = 0.2$ is \underline{\textbf{underlined}}}
    \label{tab:sudoku_sdm_churn_0_2}
    \begin{tabular}{l ccc}
    \toprule
    Concentration Schedule & Linear & Cosine & Adaptive \\
    \midrule
    \multicolumn{4}{@{}l@{}}{\textit{Uniform time sampler}} \\
    \hspace*{0.8em} Const.-Lin. ($\nu_0=0.2, \nu_1=0.5, \ell=0.2$) & $81.1_{{\color{gray}\scriptscriptstyle\pm 3.2}}$ & $81.4_{{\color{gray}\scriptscriptstyle\pm 3.3}}$ & -- \\
    \hspace*{0.8em} Const.-Lin. ($\nu_0=0.2, \nu_1=0.75, \ell=0.2$) & $85.3_{{\color{gray}\scriptscriptstyle\pm 3.2}}$ & $85.4_{{\color{gray}\scriptscriptstyle\pm 3.3}}$ & -- \\
    \hspace*{0.8em} Const.-Lin. ($\nu_0=0.4, \nu_1=0.75, \ell=0.8$) & $80.4_{{\color{gray}\scriptscriptstyle\pm 3.7}}$ & $79.1_{{\color{gray}\scriptscriptstyle\pm 3.5}}$ & -- \\
    \hspace*{0.8em} Constant ($\nu = 0.5$) & $83.5_{{\color{gray}\scriptscriptstyle\pm 2.0}}$ & $80.0_{{\color{gray}\scriptscriptstyle\pm 1.8}}$ & -- \\
    \hspace*{0.8em} Constant ($\nu = 0.25$) & $76.6_{{\color{gray}\scriptscriptstyle\pm 3.5}}$ & $77.6_{{\color{gray}\scriptscriptstyle\pm 3.5}}$ & -- \\
    \midrule
    \multicolumn{4}{@{}l@{}}{\textit{Adaptive time sampler}} \\
    \hspace*{0.8em} Const.-Lin. ($\nu_0=0.2, \nu_1=0.5, \ell=0.2$) & $88.3_{{\color{gray}\scriptscriptstyle\pm 1.4}}$ & $88.0_{{\color{gray}\scriptscriptstyle\pm 1.3}}$ & $85.7_{{\color{gray}\scriptscriptstyle\pm 1.6}}$ \\
    \hspace*{0.8em} Const.-Lin. ($\nu_0=0.2, \nu_1=0.75, \ell=0.2$) & $87.9_{{\color{gray}\scriptscriptstyle\pm 1.6}}$ & $87.6_{{\color{gray}\scriptscriptstyle\pm 1.7}}$ & $85.1_{{\color{gray}\scriptscriptstyle\pm 1.9}}$ \\
    \hspace*{0.8em} Const.-Lin. ($\nu_0=0.4, \nu_1=0.75, \ell=0.8$) & $85.2_{{\color{gray}\scriptscriptstyle\pm 1.8}}$ & $82.9_{{\color{gray}\scriptscriptstyle\pm 1.8}}$ & $82.1_{{\color{gray}\scriptscriptstyle\pm 2.1}}$ \\
    \hspace*{0.8em} Constant ($\nu = 0.5$) & $87.5_{{\color{gray}\scriptscriptstyle\pm 2.0}}$ & $83.6_{{\color{gray}\scriptscriptstyle\pm 1.6}}$ & $84.0_{{\color{gray}\scriptscriptstyle\pm 3.2}}$ \\
    \hspace*{0.8em} Constant ($\nu = 0.25$) & $86.3_{{\color{gray}\scriptscriptstyle\pm 2.2}}$ & $86.1_{{\color{gray}\scriptscriptstyle\pm 1.8}}$ & $84.4_{{\color{gray}\scriptscriptstyle\pm 2.4}}$ \\
    \midrule
    \multicolumn{4}{@{}l@{}}{\textit{Self-Conditioning + Uniform time sampler}} \\
    \hspace*{0.8em} Const.-Lin. ($\nu_0=0.2, \nu_1=0.5, \ell=0.2$) & $98.3_{{\color{gray}\scriptscriptstyle\pm 1.4}}$ & $95.8_{{\color{gray}\scriptscriptstyle\pm 4.5}}$ & -- \\
    \hspace*{0.8em} Const.-Lin. ($\nu_0=0.2, \nu_1=0.75, \ell=0.2$) & $97.3_{{\color{gray}\scriptscriptstyle\pm 2.4}}$ & $95.8_{{\color{gray}\scriptscriptstyle\pm 4.8}}$ & -- \\
    \hspace*{0.8em} Const.-Lin. ($\nu_0=0.4, \nu_1=0.75, \ell=0.8$) & $\underline{\textbf{98.6}}_{{\color{gray}\scriptscriptstyle\pm 0.2}}$ & $95.8_{{\color{gray}\scriptscriptstyle\pm 0.6}}$ & -- \\
    \hspace*{0.8em} Constant ($\nu = 0.5$) & $97.8_{{\color{gray}\scriptscriptstyle\pm 0.5}}$ & $92.7_{{\color{gray}\scriptscriptstyle\pm 1.1}}$ & -- \\
    \hspace*{0.8em} Constant ($\nu = 0.25$) & $98.5_{{\color{gray}\scriptscriptstyle\pm 0.6}}$ & $\textbf{97.5}_{{\color{gray}\scriptscriptstyle\pm 1.4}}$ & -- \\
    \midrule
    \multicolumn{4}{@{}l@{}}{\textit{Self-Conditioning + adaptive time sampler}} \\
    \hspace*{0.8em} Const.-Lin. ($\nu_0=0.2, \nu_1=0.5, \ell=0.2$) & $89.8_{{\color{gray}\scriptscriptstyle\pm 21.6}}$ & $86.6_{{\color{gray}\scriptscriptstyle\pm 28.2}}$ & $86.9_{{\color{gray}\scriptscriptstyle\pm 28.1}}$ \\
    \hspace*{0.8em} Const.-Lin. ($\nu_0=0.2, \nu_1=0.75, \ell=0.2$) & $92.7_{{\color{gray}\scriptscriptstyle\pm 13.8}}$ & $87.7_{{\color{gray}\scriptscriptstyle\pm 22.3}}$ & $88.5_{{\color{gray}\scriptscriptstyle\pm 21.9}}$ \\
    \hspace*{0.8em} Const.-Lin. ($\nu_0=0.4, \nu_1=0.75, \ell=0.8$) & $96.7_{{\color{gray}\scriptscriptstyle\pm 4.3}}$ & $92.0_{{\color{gray}\scriptscriptstyle\pm 8.3}}$ & $95.0_{{\color{gray}\scriptscriptstyle\pm 5.9}}$ \\
    \hspace*{0.8em} Constant ($\nu = 0.5$) & $98.2_{{\color{gray}\scriptscriptstyle\pm 0.2}}$ & $93.4_{{\color{gray}\scriptscriptstyle\pm 0.3}}$ & $97.1_{{\color{gray}\scriptscriptstyle\pm 2.0}}$ \\
    \hspace*{0.8em} Constant ($\nu = 0.25$) & $98.4_{{\color{gray}\scriptscriptstyle\pm 2.1}}$ & $97.0_{{\color{gray}\scriptscriptstyle\pm 3.6}}$ & $\textbf{97.2}_{{\color{gray}\scriptscriptstyle\pm 4.3}}$ \\
    \bottomrule
    \end{tabular}
\end{table}

\begin{table}[!htbp]
    \centering
    \small
    \setlength{\tabcolsep}{6.0pt}
    \renewcommand{\arraystretch}{1.08}
    \caption{\textbf{Accuracy (\%) on Sudoku} in 180 steps for \textbf{Simplex Diffusion Models} with Churn \mbox{$\kappa = 1.0$} across sampling schedules. Values are reported as $\text{mean}_{{\color{gray}\pm\text{std}}}$ over 5 seeds. Column-wise top scores are \textbf{bolded}, and overall best for $\kappa = 1.0$ is \underline{\textbf{underlined}}}
    \label{tab:sudoku_sdm_churn_1_0}
    \begin{tabular}{l ccc}
    \toprule
    Concentration Schedule & Linear & Cosine & Adaptive \\
    \midrule
    \multicolumn{4}{@{}l@{}}{\textit{Uniform time sampler}} \\
    \hspace*{0.8em} Const.-Lin. ($\nu_0=0.2, \nu_1=0.5, \ell=0.2$) & $89.3_{{\color{gray}\scriptscriptstyle\pm 2.3}}$ & $89.7_{{\color{gray}\scriptscriptstyle\pm 2.3}}$ & -- \\
    \hspace*{0.8em} Const.-Lin. ($\nu_0=0.2, \nu_1=0.75, \ell=0.2$) & $91.5_{{\color{gray}\scriptscriptstyle\pm 1.9}}$ & $91.6_{{\color{gray}\scriptscriptstyle\pm 1.6}}$ & -- \\
    \hspace*{0.8em} Const.-Lin. ($\nu_0=0.4, \nu_1=0.75, \ell=0.8$) & $88.4_{{\color{gray}\scriptscriptstyle\pm 2.2}}$ & $88.9_{{\color{gray}\scriptscriptstyle\pm 2.4}}$ & -- \\
    \hspace*{0.8em} Constant ($\nu = 0.5$) & $90.5_{{\color{gray}\scriptscriptstyle\pm 1.1}}$ & $90.6_{{\color{gray}\scriptscriptstyle\pm 1.2}}$ & -- \\
    \hspace*{0.8em} Constant ($\nu = 0.25$) & $86.9_{{\color{gray}\scriptscriptstyle\pm 2.1}}$ & $87.8_{{\color{gray}\scriptscriptstyle\pm 2.5}}$ & -- \\
    \midrule
    \multicolumn{4}{@{}l@{}}{\textit{Adaptive time sampler}} \\
    \hspace*{0.8em} Const.-Lin. ($\nu_0=0.2, \nu_1=0.5, \ell=0.2$) & $90.8_{{\color{gray}\scriptscriptstyle\pm 0.9}}$ & $90.8_{{\color{gray}\scriptscriptstyle\pm 1.0}}$ & $89.4_{{\color{gray}\scriptscriptstyle\pm 1.0}}$ \\
    \hspace*{0.8em} Const.-Lin. ($\nu_0=0.2, \nu_1=0.75, \ell=0.2$) & $90.5_{{\color{gray}\scriptscriptstyle\pm 1.1}}$ & $90.3_{{\color{gray}\scriptscriptstyle\pm 1.3}}$ & $88.8_{{\color{gray}\scriptscriptstyle\pm 1.5}}$ \\
    \hspace*{0.8em} Const.-Lin. ($\nu_0=0.4, \nu_1=0.75, \ell=0.8$) & $88.3_{{\color{gray}\scriptscriptstyle\pm 1.3}}$ & $88.5_{{\color{gray}\scriptscriptstyle\pm 1.2}}$ & $86.5_{{\color{gray}\scriptscriptstyle\pm 1.4}}$ \\
    \hspace*{0.8em} Constant ($\nu = 0.5$) & $90.6_{{\color{gray}\scriptscriptstyle\pm 1.5}}$ & $90.4_{{\color{gray}\scriptscriptstyle\pm 1.5}}$ & $88.8_{{\color{gray}\scriptscriptstyle\pm 1.8}}$ \\
    \hspace*{0.8em} Constant ($\nu = 0.25$) & $89.6_{{\color{gray}\scriptscriptstyle\pm 1.8}}$ & $89.8_{{\color{gray}\scriptscriptstyle\pm 1.8}}$ & $88.3_{{\color{gray}\scriptscriptstyle\pm 1.9}}$ \\
    \midrule
    \multicolumn{4}{@{}l@{}}{\textit{Self-Conditioning + Uniform time sampler}} \\
    \hspace*{0.8em} Const.-Lin. ($\nu_0=0.2, \nu_1=0.5, \ell=0.2$) & $98.9_{{\color{gray}\scriptscriptstyle\pm 0.8}}$ & $97.2_{{\color{gray}\scriptscriptstyle\pm 2.8}}$ & -- \\
    \hspace*{0.8em} Const.-Lin. ($\nu_0=0.2, \nu_1=0.75, \ell=0.2$) & $98.3_{{\color{gray}\scriptscriptstyle\pm 1.1}}$ & $96.9_{{\color{gray}\scriptscriptstyle\pm 3.4}}$ & -- \\
    \hspace*{0.8em} Const.-Lin. ($\nu_0=0.4, \nu_1=0.75, \ell=0.8$) & $\underline{\textbf{99.1}}_{{\color{gray}\scriptscriptstyle\pm 0.2}}$ & $\textbf{98.7}_{{\color{gray}\scriptscriptstyle\pm 0.3}}$ & -- \\
    \hspace*{0.8em} Constant ($\nu = 0.5$) & $98.5_{{\color{gray}\scriptscriptstyle\pm 0.6}}$ & $97.5_{{\color{gray}\scriptscriptstyle\pm 1.1}}$ & -- \\
    \hspace*{0.8em} Constant ($\nu = 0.25$) & $\underline{\textbf{99.1}}_{{\color{gray}\scriptscriptstyle\pm 0.3}}$ & $98.5_{{\color{gray}\scriptscriptstyle\pm 0.8}}$ & -- \\
    \midrule
    \multicolumn{4}{@{}l@{}}{\textit{Self-Conditioning + adaptive time sampler}} \\
    \hspace*{0.8em} Const.-Lin. ($\nu_0=0.2, \nu_1=0.5, \ell=0.2$) & $91.2_{{\color{gray}\scriptscriptstyle\pm 18.8}}$ & $87.5_{{\color{gray}\scriptscriptstyle\pm 26.6}}$ & $88.6_{{\color{gray}\scriptscriptstyle\pm 24.4}}$ \\
    \hspace*{0.8em} Const.-Lin. ($\nu_0=0.2, \nu_1=0.75, \ell=0.2$) & $93.9_{{\color{gray}\scriptscriptstyle\pm 11.9}}$ & $89.1_{{\color{gray}\scriptscriptstyle\pm 20.7}}$ & $90.3_{{\color{gray}\scriptscriptstyle\pm 18.7}}$ \\
    \hspace*{0.8em} Const.-Lin. ($\nu_0=0.4, \nu_1=0.75, \ell=0.8$) & $97.5_{{\color{gray}\scriptscriptstyle\pm 3.9}}$ & $94.8_{{\color{gray}\scriptscriptstyle\pm 8.9}}$ & $96.3_{{\color{gray}\scriptscriptstyle\pm 5.4}}$ \\
    \hspace*{0.8em} Constant ($\nu = 0.5$) & $98.8_{{\color{gray}\scriptscriptstyle\pm 0.2}}$ & $98.2_{{\color{gray}\scriptscriptstyle\pm 0.2}}$ & $\textbf{98.3}_{{\color{gray}\scriptscriptstyle\pm 0.2}}$ \\
    \hspace*{0.8em} Constant ($\nu = 0.25$) & $98.8_{{\color{gray}\scriptscriptstyle\pm 1.5}}$ & $97.8_{{\color{gray}\scriptscriptstyle\pm 3.3}}$ & $98.1_{{\color{gray}\scriptscriptstyle\pm 2.9}}$ \\
    \bottomrule
    \end{tabular}
\end{table}

\begin{table}[!htbp]
    \centering
    \small
    \setlength{\tabcolsep}{6.0pt}
    \renewcommand{\arraystretch}{1.08}
    \caption{\textbf{Baseline Models Accuracy (\%) on Sudoku} in 180 steps across sampling schedules for the Appendix. We evaluate Autoregressive (AR), Discrete Absorbing (MDM with CE and ELBO losses), Discrete Uniform (UDM), Predictor-Corrector (PC), Self-Conditioning (SC / loopholing), Flow Matching over one-hot encoding (FLM with uniform, Gauss-Hermite LUT, and adaptive time samplers), and Hyperspherical Flows ($\mathbb{S}$-FLM). $^\dagger$trained with uniform time sampler, $^\ddagger$trained with adaptive time sampler. Values are reported as $\text{mean}_{{\color{gray}\pm\text{std}}}$ over 5 seeds. We \textbf{bold} the best result and \underline{underline} the second best in each column.}
    \label{tab:sudoku_baselines_appendix}
    \begin{tabular}{l ccc}
    \toprule
    Model & Linear & Cosine & Adaptive \\
    \midrule
    \multicolumn{4}{@{}l@{}}{\textit{Autoregressive}} \\
    \hspace*{0.8em} AR (Sample) & $\hphantom{0}2.9_{{\color{gray}\scriptscriptstyle\pm 0.6}}$ & -- & -- \\
    \hspace*{0.8em} AR (Greedy) & $\hphantom{0}3.1_{{\color{gray}\scriptscriptstyle\pm 0.5}}$ & -- & -- \\
    \midrule
    \multicolumn{4}{@{}l@{}}{\textit{Discrete (Absorbing)}} \\
    \hspace*{0.8em} Ancestral$^\dagger$ (CE) & $60.1_{{\color{gray}\scriptscriptstyle\pm 3.8}}$ & $61.1_{{\color{gray}\scriptscriptstyle\pm 4.5}}$ & -- \\
    \hspace*{0.8em} Ancestral$^\ddagger$ (CE) & $69.4_{{\color{gray}\scriptscriptstyle\pm 4.8}}$ & $69.7_{{\color{gray}\scriptscriptstyle\pm 4.0}}$ & $70.0_{{\color{gray}\scriptscriptstyle\pm 4.4}}$ \\
    \hspace*{0.8em} Ancestral$^\dagger$ (ELBO) & $56.0_{{\color{gray}\scriptscriptstyle\pm 5.4}}$ & $55.8_{{\color{gray}\scriptscriptstyle\pm 5.8}}$ & -- \\
    \hspace*{0.8em} Ancestral$^\ddagger$ (ELBO) & $71.5_{{\color{gray}\scriptscriptstyle\pm 2.2}}$ & $71.7_{{\color{gray}\scriptscriptstyle\pm 1.6}}$ & $71.8_{{\color{gray}\scriptscriptstyle\pm 2.4}}$ \\
    \hspace*{0.8em} Predictor-Corrector$^\dagger$ & $85.3_{{\color{gray}\scriptscriptstyle\pm 1.2}}$ & $84.6_{{\color{gray}\scriptscriptstyle\pm 1.0}}$ & -- \\
    \hspace*{0.8em} Predictor-Corrector$^\ddagger$ & $33.8_{{\color{gray}\scriptscriptstyle\pm 41.4}}$ & $35.6_{{\color{gray}\scriptscriptstyle\pm 40.7}}$ & $69.6_{{\color{gray}\scriptscriptstyle\pm 31.6}}$ \\
    \hspace*{0.8em} Ancestral + SC$^\dagger$ (CE) & $91.2_{{\color{gray}\scriptscriptstyle\pm 0.7}}$ & $98.3_{{\color{gray}\scriptscriptstyle\pm 0.5}}$ & -- \\
    \hspace*{0.8em} Ancestral + SC$^\ddagger$ (CE) & $94.7_{{\color{gray}\scriptscriptstyle\pm 0.8}}$ & $\textbf{99.3}_{{\color{gray}\scriptscriptstyle\pm 0.2}}$ & $\underline{91.6}_{{\color{gray}\scriptscriptstyle\pm 1.0}}$ \\
    \hspace*{0.8em} Ancestral + SC$^\dagger$ (ELBO) & $90.5_{{\color{gray}\scriptscriptstyle\pm 0.8}}$ & $98.1_{{\color{gray}\scriptscriptstyle\pm 0.8}}$ & -- \\
    \hspace*{0.8em} Ancestral + SC$^\ddagger$ (ELBO) & $94.3_{{\color{gray}\scriptscriptstyle\pm 0.7}}$ & $\underline{99.2}_{{\color{gray}\scriptscriptstyle\pm 0.2}}$ & $90.9_{{\color{gray}\scriptscriptstyle\pm 1.0}}$ \\
    \midrule
    \multicolumn{4}{@{}l@{}}{\textit{Discrete (Uniform)}} \\
    \hspace*{0.8em} Ancestral$^\dagger$ (ELBO) & $73.7_{{\color{gray}\scriptscriptstyle\pm 3.4}}$ & $73.6_{{\color{gray}\scriptscriptstyle\pm 3.2}}$ & -- \\
    \hspace*{0.8em} Ancestral$^\ddagger$ (ELBO) & $82.1_{{\color{gray}\scriptscriptstyle\pm 1.7}}$ & $82.4_{{\color{gray}\scriptscriptstyle\pm 1.7}}$ & $81.5_{{\color{gray}\scriptscriptstyle\pm 1.6}}$ \\
    \hspace*{0.8em} Predictor-Corrector$^\dagger$ & $\underline{95.8}_{{\color{gray}\scriptscriptstyle\pm 1.0}}$ & $95.2_{{\color{gray}\scriptscriptstyle\pm 1.3}}$ & -- \\
    \hspace*{0.8em} Predictor-Corrector$^\ddagger$ & $92.9_{{\color{gray}\scriptscriptstyle\pm 0.9}}$ & $92.3_{{\color{gray}\scriptscriptstyle\pm 0.9}}$ & -- \\
    \hspace*{0.8em} Ancestral + SC$^\dagger$ (ELBO) & $\textbf{97.8}_{{\color{gray}\scriptscriptstyle\pm 1.7}}$ & $98.1_{{\color{gray}\scriptscriptstyle\pm 1.2}}$ & -- \\
    \hspace*{0.8em} Ancestral + SC$^\ddagger$ (ELBO) & $94.9_{{\color{gray}\scriptscriptstyle\pm 5.1}}$ & $95.5_{{\color{gray}\scriptscriptstyle\pm 4.8}}$ & $\textbf{93.9}_{{\color{gray}\scriptscriptstyle\pm 5.5}}$ \\
    \midrule
    \multicolumn{4}{@{}l@{}}{\textit{Euclidean / Spherical Flows}} \\
    \hspace*{0.8em} FLM$^\dagger$ (Uniform) & $73.4_{{\color{gray}\scriptscriptstyle\pm 5.2}}$ & $73.0_{{\color{gray}\scriptscriptstyle\pm 5.1}}$ & -- \\
    \hspace*{0.8em} FLM (Gauss-Hermite LUT) & $67.7_{{\color{gray}\scriptscriptstyle\pm 3.4}}$ & $67.4_{{\color{gray}\scriptscriptstyle\pm 3.2}}$ & -- \\
    \hspace*{0.8em} FLM$^\ddagger$ (Adaptive) & $75.3_{{\color{gray}\scriptscriptstyle\pm 3.4}}$ & $74.9_{{\color{gray}\scriptscriptstyle\pm 3.5}}$ & $69.2_{{\color{gray}\scriptscriptstyle\pm 4.2}}$ \\
    \hspace*{0.8em} $\mathbb{S}$-FLM$^\dagger$ & $83.5_{{\color{gray}\scriptscriptstyle\pm 2.2}}$ & $84.0_{{\color{gray}\scriptscriptstyle\pm 1.8}}$ & -- \\
    \hspace*{0.8em} $\mathbb{S}$-FLM$^\ddagger$ & $87.3_{{\color{gray}\scriptscriptstyle\pm 1.5}}$ & $87.3_{{\color{gray}\scriptscriptstyle\pm 1.7}}$ & $84.5_{{\color{gray}\scriptscriptstyle\pm 1.9}}$ \\
    \bottomrule
    \end{tabular}
\end{table}

\clearpage
\subsection{TinyGSM}
\label{app:full_tables_tinygsm}
%

\begin{table}[!htbp]
    \centering
    \small
    \setlength{\tabcolsep}{6.0pt}
    \renewcommand{\arraystretch}{1.08}
    \caption{\textbf{Baseline Models Accuracy (\%) on TinyGSM} ($T = 1.0$, 512 steps) across sampling schedules. We evaluate Autoregressive (AR), Discrete Absorbing (CE and ELBO losses), Discrete Uniform (ELBO loss), Predictor-Corrector, Self-Conditioning (Ancestral + SC), FLM, and $\mathbb{S}$-FLM. $^\dagger$trained with the uniform time sampler, $^\ddagger$trained with the adaptive time sampler. We \textbf{bold} the best result in each column and \underline{underline} the best result within each section (both if achieving both).}
    \label{tab:tinygsm_baselines_t10_s512}
    \begin{tabular}{l ccc}
    \toprule
    Model & Linear & Cosine & Adaptive \\
    \midrule
    \multicolumn{4}{@{}l@{}}{\textit{Autoregressive}} \\
    \hspace*{0.8em} AR (Sample) & 52.6 & -- & -- \\
    \hspace*{0.8em} AR (Greedy) & \textbf{\underline{62.6}} & -- & -- \\
    \midrule
    \multicolumn{4}{@{}l@{}}{\textit{Discrete (Absorbing)}} \\
    \hspace*{0.8em} Ancestral$^\dagger$ (CE) & 12.9 & 14.8 & -- \\
    \hspace*{0.8em} Ancestral$^\ddagger$ (CE) & 13.2 & 13.4 & 13.1 \\
    \hspace*{0.8em} Ancestral$^\dagger$ (ELBO) & 15.8 & 14.2 & -- \\
    \hspace*{0.8em} Ancestral$^\ddagger$ (ELBO) & 14.0 & 13.9 & 13.9 \\
    \hspace*{0.8em} Predictor-Corrector$^\dagger$ & 13.3 & 12.1 & -- \\
    \hspace*{0.8em} Predictor-Corrector$^\ddagger$ & 10.8 & 11.4 & 10.9 \\
    \hspace*{0.8em} Ancestral + SC$^\dagger$ (CE) & 23.0 & \underline{26.4} & -- \\
    \hspace*{0.8em} Ancestral + SC$^\ddagger$ (CE) & 22.1 & 23.2 & \underline{21.9} \\
    \hspace*{0.8em} Ancestral + SC$^\dagger$ (ELBO) & 20.5 & 22.2 & -- \\
    \hspace*{0.8em} Ancestral + SC$^\ddagger$ (ELBO) & \underline{23.5} & 22.2 & 21.1 \\
    \midrule
    \multicolumn{4}{@{}l@{}}{\textit{Discrete (Uniform)}} \\
    \hspace*{0.8em} Ancestral$^\dagger$ (ELBO) & 17.7 & 17.0 & -- \\
    \hspace*{0.8em} Ancestral$^\ddagger$ (ELBO) & 16.7 & 16.2 & 15.6 \\
    \hspace*{0.8em} Predictor-Corrector$^\dagger$ & \underline{36.8} & \textbf{\underline{36.5}} & -- \\
    \hspace*{0.8em} Predictor-Corrector$^\ddagger$ & 32.4 & 30.9 & \textbf{\underline{26.6}} \\
    \hspace*{0.8em} Ancestral + SC$^\dagger$ (ELBO) & 15.6 & 15.2 & -- \\
    \hspace*{0.8em} Ancestral + SC$^\ddagger$ (ELBO) & 19.2 & 19.0 & 18.6 \\
    \midrule
    \multicolumn{4}{@{}l@{}}{\textit{Euclidean / Spherical Flows}} \\
    \hspace*{0.8em} FLM (Gauss-Hermite LUT) & 0.6 & 0.5 & -- \\
    \hspace*{0.8em} FLM$^\dagger$ & 3.0 & 2.9 & -- \\
    \hspace*{0.8em} FLM$^\ddagger$ & 4.0 & 3.9 & 3.6 \\
    \hspace*{0.8em} $\mathbb{S}$-FLM$^\ddagger$ & 10.7 & 11.5 & 10.3 \\
    \hspace*{0.8em} $\mathbb{S}$-FLM$^\ddagger$ (argmax velocity) & \underline{15.0} & \underline{16.5} & \underline{13.2} \\
    \bottomrule
    \end{tabular}
\end{table}


\begin{table}[!htbp]
    \centering
    \small
    \setlength{\tabcolsep}{6.0pt}
    \renewcommand{\arraystretch}{1.08}
    \caption{\textbf{Baseline Models Accuracy (\%) on TinyGSM} ($T = 1.0$, 64 steps) across sampling schedules. We evaluate Autoregressive (AR), Discrete Absorbing (CE and ELBO losses), Discrete Uniform (ELBO loss), Predictor-Corrector, Self-Conditioning (Ancestral + SC), FLM, and $\mathbb{S}$-FLM. $^\dagger$trained with the uniform time sampler, $^\ddagger$trained with the adaptive time sampler. We \textbf{bold} the best result in each column and \underline{underline} the best result within each section (both if achieving both).}
    \label{tab:tinygsm_baselines_t10_s64}
    \begin{tabular}{l ccc}
    \toprule
    Model & Linear & Cosine & Adaptive \\
    \midrule
    \multicolumn{4}{@{}l@{}}{\textit{Discrete (Absorbing)}} \\
    \hspace*{0.8em} Ancestral$^\dagger$ (CE) & 9.7 & 11.1 & -- \\
    \hspace*{0.8em} Ancestral$^\ddagger$ (CE) & 8.3 & 11.7 & 11.2 \\
    \hspace*{0.8em} Ancestral$^\dagger$ (ELBO) & 10.0 & 13.3 & -- \\
    \hspace*{0.8em} Ancestral$^\ddagger$ (ELBO) & 8.5 & 11.8 & 12.6 \\
    \hspace*{0.8em} Predictor-Corrector$^\dagger$ & 10.7 & 10.2 & -- \\
    \hspace*{0.8em} Predictor-Corrector$^\ddagger$ & 9.8 & 11.4 & 7.2 \\
    \hspace*{0.8em} Ancestral + SC$^\dagger$ (CE) & 12.2 & \underline{19.9} & -- \\
    \hspace*{0.8em} Ancestral + SC$^\ddagger$ (CE) & 12.6 & 19.6 & 18.0 \\
    \hspace*{0.8em} Ancestral + SC$^\dagger$ (ELBO) & 12.6 & 17.2 & -- \\
    \hspace*{0.8em} Ancestral + SC$^\ddagger$ (ELBO) & \underline{12.9} & 19.3 & \textbf{\underline{18.7}} \\
    \midrule
    \multicolumn{4}{@{}l@{}}{\textit{Discrete (Uniform)}} \\
    \hspace*{0.8em} Ancestral$^\dagger$ (ELBO) & 14.4 & 14.5 & -- \\
    \hspace*{0.8em} Ancestral$^\ddagger$ (ELBO) & 14.9 & 15.6 & 14.6 \\
    \hspace*{0.8em} Predictor-Corrector$^\dagger$ & \textbf{\underline{23.3}} & \textbf{\underline{23.7}} & -- \\
    \hspace*{0.8em} Predictor-Corrector$^\ddagger$ & 20.5 & 20.1 & 16.6 \\
    \hspace*{0.8em} Ancestral + SC$^\dagger$ (ELBO) & 12.0 & 14.7 & -- \\
    \hspace*{0.8em} Ancestral + SC$^\ddagger$ (ELBO) & 17.0 & 18.2 & \underline{16.7} \\
    \midrule
    \multicolumn{4}{@{}l@{}}{\textit{Euclidean / Spherical Flows}} \\
    \hspace*{0.8em} FLM (Gauss-Hermite LUT) & 0.5 & 0.6 & -- \\
    \hspace*{0.8em} FLM$^\dagger$ & 2.5 & 2.5 & -- \\
    \hspace*{0.8em} FLM$^\ddagger$ & 3.3 & 3.5 & 4.0 \\
    \hspace*{0.8em} $\mathbb{S}$-FLM$^\ddagger$ & 4.5 & 9.7 & 9.5 \\
    \hspace*{0.8em} $\mathbb{S}$-FLM$^\ddagger$ (argmax velocity) & \underline{9.1} & \underline{14.2} & \underline{12.9} \\
    \bottomrule
    \end{tabular}
\end{table}


\begin{table}[!htbp]
    \centering
    \small
    \setlength{\tabcolsep}{6.0pt}
    \renewcommand{\arraystretch}{1.08}
    \caption{\textbf{Baseline Models Accuracy (\%) on TinyGSM} ($T = 0.1$, 512 steps) across sampling schedules. We evaluate Autoregressive (AR), Discrete Absorbing (CE and ELBO losses), Discrete Uniform (ELBO loss), Predictor-Corrector, Self-Conditioning (Ancestral + SC), FLM, and $\mathbb{S}$-FLM. $^\dagger$trained with the uniform time sampler, $^\ddagger$trained with the adaptive time sampler. We \textbf{bold} the best result in each column and \underline{underline} the best result within each section (both if achieving both). With argmax velocity, the $\mathbb{S}$-FLM sampler is deterministic given the initial noise and does not depend on the temperature, hence identical rows at $T=1$ and $T=0.1$.}
    \label{tab:tinygsm_baselines_t01_s512}
    \begin{tabular}{l ccc}
    \toprule
    Model & Linear & Cosine & Adaptive \\
    \midrule
    \multicolumn{4}{@{}l@{}}{\textit{Autoregressive}} \\
    \hspace*{0.8em} AR (Sample) & 52.6 & -- & -- \\
    \hspace*{0.8em} AR (Greedy) & \textbf{\underline{62.6}} & -- & -- \\
    \midrule
    \multicolumn{4}{@{}l@{}}{\textit{Discrete (Absorbing)}} \\
    \hspace*{0.8em} Ancestral$^\dagger$ (CE) & 31.0 & 31.1 & -- \\
    \hspace*{0.8em} Ancestral$^\ddagger$ (CE) & 30.4 & 30.0 & 30.3 \\
    \hspace*{0.8em} Ancestral$^\dagger$ (ELBO) & 32.0 & 32.0 & -- \\
    \hspace*{0.8em} Ancestral$^\ddagger$ (ELBO) & 32.3 & 33.5 & 32.0 \\
    \hspace*{0.8em} Predictor-Corrector$^\dagger$ & 39.6 & 37.7 & -- \\
    \hspace*{0.8em} Predictor-Corrector$^\ddagger$ & 39.8 & 39.2 & 38.9 \\
    \hspace*{0.8em} Ancestral + SC$^\dagger$ (CE) & 44.0 & 42.7 & -- \\
    \hspace*{0.8em} Ancestral + SC$^\ddagger$ (CE) & \underline{45.8} & \textbf{\underline{45.5}} & \textbf{\underline{42.7}} \\
    \hspace*{0.8em} Ancestral + SC$^\dagger$ (ELBO) & 39.0 & 38.0 & -- \\
    \hspace*{0.8em} Ancestral + SC$^\ddagger$ (ELBO) & 43.2 & 43.5 & 42.6 \\
    \midrule
    \multicolumn{4}{@{}l@{}}{\textit{Discrete (Uniform)}} \\
    \hspace*{0.8em} Ancestral$^\dagger$ (ELBO) & 33.0 & 32.6 & -- \\
    \hspace*{0.8em} Ancestral$^\ddagger$ (ELBO) & 34.0 & 31.2 & 34.6 \\
    \hspace*{0.8em} Predictor-Corrector$^\dagger$ & \underline{45.5} & \textbf{\underline{45.5}} & -- \\
    \hspace*{0.8em} Predictor-Corrector$^\ddagger$ & 40.1 & 39.8 & \underline{40.1} \\
    \hspace*{0.8em} Ancestral + SC$^\dagger$ (ELBO) & 31.7 & 30.3 & -- \\
    \hspace*{0.8em} Ancestral + SC$^\ddagger$ (ELBO) & 39.7 & 38.3 & 38.8 \\
    \midrule
    \multicolumn{4}{@{}l@{}}{\textit{Euclidean / Spherical Flows}} \\
    \hspace*{0.8em} FLM (Gauss-Hermite LUT) & 0.8 & 0.8 & -- \\
    \hspace*{0.8em} FLM$^\dagger$ & 6.4 & 6.4 & -- \\
    \hspace*{0.8em} FLM$^\ddagger$ & 8.3 & 8.3 & 8.3 \\
    \hspace*{0.8em} $\mathbb{S}$-FLM$^\ddagger$ & \underline{15.4} & 15.4 & \underline{14.7} \\
    \hspace*{0.8em} $\mathbb{S}$-FLM$^\ddagger$ (argmax velocity) & 15.0 & \underline{16.5} & 13.2 \\
    \bottomrule
    \end{tabular}
\end{table}


\begin{table}[!htbp]
    \centering
    \small
    \setlength{\tabcolsep}{6.0pt}
    \renewcommand{\arraystretch}{1.08}
    \caption{\textbf{Baseline Models Accuracy (\%) on TinyGSM} ($T = 0.1$, 64 steps) across sampling schedules. We evaluate Autoregressive (AR), Discrete Absorbing (CE and ELBO losses), Discrete Uniform (ELBO loss), Predictor-Corrector, Self-Conditioning (Ancestral + SC), FLM, and $\mathbb{S}$-FLM. $^\dagger$trained with the uniform time sampler, $^\ddagger$trained with the adaptive time sampler. We \textbf{bold} the best result in each column and \underline{underline} the best result within each section (both if achieving both). With argmax velocity, the $\mathbb{S}$-FLM sampler is deterministic given the initial noise and does not depend on the temperature, hence identical rows at $T=1$ and $T=0.1$.}
    \label{tab:tinygsm_baselines_t01_s64}
    \begin{tabular}{l ccc}
    \toprule
    Model & Linear & Cosine & Adaptive \\
    \midrule
    \multicolumn{4}{@{}l@{}}{\textit{Discrete (Absorbing)}} \\
    \hspace*{0.8em} Ancestral$^\dagger$ (CE) & 25.0 & 28.2 & -- \\
    \hspace*{0.8em} Ancestral$^\ddagger$ (CE) & 24.9 & 29.3 & 27.3 \\
    \hspace*{0.8em} Ancestral$^\dagger$ (ELBO) & 27.4 & 29.5 & -- \\
    \hspace*{0.8em} Ancestral$^\ddagger$ (ELBO) & 29.6 & 30.7 & 29.9 \\
    \hspace*{0.8em} Predictor-Corrector$^\dagger$ & 31.1 & 31.8 & -- \\
    \hspace*{0.8em} Predictor-Corrector$^\ddagger$ & 33.0 & 35.2 & 32.9 \\
    \hspace*{0.8em} Ancestral + SC$^\dagger$ (CE) & 31.5 & 38.7 & -- \\
    \hspace*{0.8em} Ancestral + SC$^\ddagger$ (CE) & \underline{35.9} & \textbf{\underline{40.3}} & \textbf{\underline{41.4}} \\
    \hspace*{0.8em} Ancestral + SC$^\dagger$ (ELBO) & 29.8 & 34.5 & -- \\
    \hspace*{0.8em} Ancestral + SC$^\ddagger$ (ELBO) & 35.6 & 39.8 & 41.1 \\
    \midrule
    \multicolumn{4}{@{}l@{}}{\textit{Discrete (Uniform)}} \\
    \hspace*{0.8em} Ancestral$^\dagger$ (ELBO) & 30.7 & 31.5 & -- \\
    \hspace*{0.8em} Ancestral$^\ddagger$ (ELBO) & 32.3 & 32.4 & 31.6 \\
    \hspace*{0.8em} Predictor-Corrector$^\dagger$ & \textbf{\underline{38.3}} & 37.0 & -- \\
    \hspace*{0.8em} Predictor-Corrector$^\ddagger$ & 35.7 & 35.2 & 33.5 \\
    \hspace*{0.8em} Ancestral + SC$^\dagger$ (ELBO) & 27.8 & 30.7 & -- \\
    \hspace*{0.8em} Ancestral + SC$^\ddagger$ (ELBO) & 34.7 & \underline{37.5} & \underline{37.1} \\
    \midrule
    \multicolumn{4}{@{}l@{}}{\textit{Euclidean / Spherical Flows}} \\
    \hspace*{0.8em} FLM (Gauss-Hermite LUT) & 0.7 & 0.8 & -- \\
    \hspace*{0.8em} FLM$^\dagger$ & 6.6 & 6.4 & -- \\
    \hspace*{0.8em} FLM$^\ddagger$ & 8.1 & 8.3 & 7.9 \\
    \hspace*{0.8em} $\mathbb{S}$-FLM$^\ddagger$ & \underline{9.4} & 14.1 & \underline{13.4} \\
    \hspace*{0.8em} $\mathbb{S}$-FLM$^\ddagger$ (argmax velocity) & 9.1 & \underline{14.2} & 12.9 \\
    \bottomrule
    \end{tabular}
\end{table}


\begin{table}[!htbp]
    \centering
    \scriptsize
    \setlength{\tabcolsep}{3.0pt}
    \renewcommand{\arraystretch}{1.02}
    \caption{\textbf{TinyGSM Simplex Diffusion (SDM) Accuracy (\%)} ($T = 1.0$, 512 steps) across noise concentration schedules, sampling schedules (Linear, Cosine, Adaptive), and churn (\mbox{$\kappa \in \{0.0, 0.2, 1.0\}$}). We compare Expectation ($P_t$), Argmax, and Argmax with Self-Conditioning (+ SC). All SDMs are trained with the adaptive time sampler. We \textbf{bold} the best result in each column and \underline{underline} the best result within each section (both if achieving both).}
    \label{tab:tinygsm_sdm_sdm_t10_s512}
    \begin{tabular}{l ccc ccc ccc}
    \toprule
    \multirow{2}{*}{Concentration Schedule} & \multicolumn{3}{c}{Linear} & \multicolumn{3}{c}{Cosine} & \multicolumn{3}{c}{Adaptive} \\
    \cmidrule(lr){2-4} \cmidrule(lr){5-7} \cmidrule(lr){8-10}
     & $\kappa=0$ & $\kappa=0.2$ & $\kappa=1.0$ & $\kappa=0$ & $\kappa=0.2$ & $\kappa=1.0$ & $\kappa=0$ & $\kappa=0.2$ & $\kappa=1.0$ \\
    \midrule
    \multicolumn{10}{@{}l@{}}{\textit{Expectation Input Processing ($P_t$)}} \\
    \hspace*{0.8em} Const.-Lin.~{\tiny ($\nu_0=0.2, \nu_1=0.5, \ell=0.2$)} & 15.4 & \underline{25.1} & \underline{34.7} & \textbf{\underline{19.8}} & \underline{29.1} & \underline{38.8} & 12.6 & \textbf{\underline{35.7}} & \textbf{\underline{45.8}} \\
    \hspace*{0.8em} Const.-Lin.~{\tiny ($\nu_0=0.2, \nu_1=0.75, \ell=0.2$)} & 15.1 & 24.5 & 31.7 & 17.9 & 28.4 & 36.8 & 11.5 & 29.2 & 38.8 \\
    \hspace*{0.8em} Constant~{\tiny ($\nu = 0.5$)} & \underline{15.7} & 23.1 & 33.6 & 17.5 & 24.4 & 38.2 & \underline{15.0} & 29.1 & 41.7 \\
    \hspace*{0.8em} Constant~{\tiny ($\nu = 0.25$)} & 12.1 & 19.4 & 26.6 & 14.2 & 25.8 & 32.5 & 10.2 & 33.9 & 43.3 \\
    \hspace*{0.8em} Constant~{\tiny ($\nu = 0.1$)} & 4.7 & 9.6 & 12.8 & 6.5 & 15.6 & 20.1 & 3.5 & 17.0 & 24.5 \\
    \midrule
    \multicolumn{10}{@{}l@{}}{\textit{Argmax Input Processing}} \\
    \hspace*{0.8em} Const.-Lin.~{\tiny ($\nu_0=0.2, \nu_1=0.5, \ell=0.2$)} & 15.5 & 26.7 & \underline{35.7} & 17.4 & 26.9 & 33.3 & 14.9 & 21.9 & 30.4 \\
    \hspace*{0.8em} Const.-Lin.~{\tiny ($\nu_0=0.2, \nu_1=0.75, \ell=0.2$)} & 18.3 & 26.3 & 34.2 & \underline{18.4} & 26.0 & \underline{37.4} & 16.0 & 20.2 & \underline{30.6} \\
    \hspace*{0.8em} Constant~{\tiny ($\nu = 0.5$)} & 17.1 & 24.8 & 33.9 & 16.6 & 22.4 & 33.0 & 16.7 & 16.4 & 29.5 \\
    \hspace*{0.8em} Constant~{\tiny ($\nu = 0.25$)} & \underline{19.3} & \underline{29.3} & 31.8 & 17.8 & \underline{27.3} & 35.0 & \underline{17.8} & \underline{23.0} & 30.3 \\
    \midrule
    \multicolumn{10}{@{}l@{}}{\textit{Argmax + Self-Conditioning (SC)}} \\
    \hspace*{0.8em} Const.-Lin.~{\tiny ($\nu_0=0.2, \nu_1=0.5, \ell=0.2$)} & 17.1 & 30.2 & 38.1 & 16.9 & 31.0 & 40.1 & 17.0 & 26.6 & 36.9 \\
    \hspace*{0.8em} Const.-Lin.~{\tiny ($\nu_0=0.2, \nu_1=0.75, \ell=0.2$)} & 16.9 & \textbf{\underline{32.5}} & \textbf{\underline{43.7}} & 18.9 & \textbf{\underline{32.8}} & \textbf{\underline{42.7}} & 16.9 & \underline{30.8} & 40.3 \\
    \hspace*{0.8em} Const.-Lin.~{\tiny ($\nu_0=0.4, \nu_1=0.75, \ell=0.8$)} & 13.7 & 25.6 & 34.7 & 17.4 & 24.1 & 33.5 & 11.2 & 20.2 & 32.5 \\
    \hspace*{0.8em} Constant~{\tiny ($\nu = 0.5$)} & \textbf{\underline{21.1}} & 29.3 & 40.8 & \underline{19.5} & 27.6 & 41.4 & \textbf{\underline{20.5}} & 25.6 & \underline{40.6} \\
    \hspace*{0.8em} Constant~{\tiny ($\nu = 0.25$)} & 17.6 & 31.1 & 37.0 & 18.2 & 29.0 & 38.9 & 17.6 & 29.3 & 37.8 \\
    \bottomrule
    \end{tabular}
\end{table}


\begin{table}[!htbp]
    \centering
    \scriptsize
    \setlength{\tabcolsep}{3.0pt}
    \renewcommand{\arraystretch}{1.02}
    \caption{\textbf{TinyGSM Simplex Diffusion (SDM) Accuracy (\%)} ($T = 1.0$, 64 steps) across noise concentration schedules, sampling schedules (Linear, Cosine, Adaptive), and churn (\mbox{$\kappa \in \{0.0, 0.2, 1.0\}$}). We compare Expectation ($P_t$), Argmax, and Argmax with Self-Conditioning (+ SC). All SDMs are trained with the adaptive time sampler. We \textbf{bold} the best result in each column and \underline{underline} the best result within each section (both if achieving both).}
    \label{tab:tinygsm_sdm_sdm_t10_s64}
    \begin{tabular}{l ccc ccc ccc}
    \toprule
    \multirow{2}{*}{Concentration Schedule} & \multicolumn{3}{c}{Linear} & \multicolumn{3}{c}{Cosine} & \multicolumn{3}{c}{Adaptive} \\
    \cmidrule(lr){2-4} \cmidrule(lr){5-7} \cmidrule(lr){8-10}
     & $\kappa=0$ & $\kappa=0.2$ & $\kappa=1.0$ & $\kappa=0$ & $\kappa=0.2$ & $\kappa=1.0$ & $\kappa=0$ & $\kappa=0.2$ & $\kappa=1.0$ \\
    \midrule
    \multicolumn{10}{@{}l@{}}{\textit{Expectation Input Processing ($P_t$)}} \\
    \hspace*{0.8em} Const.-Lin.~{\tiny ($\nu_0=0.2, \nu_1=0.5, \ell=0.2$)} & 6.7 & 9.8 & 14.7 & 14.3 & \underline{16.8} & 22.8 & 10.6 & \textbf{\underline{23.1}} & \textbf{\underline{32.7}} \\
    \hspace*{0.8em} Const.-Lin.~{\tiny ($\nu_0=0.2, \nu_1=0.75, \ell=0.2$)} & 4.9 & \underline{12.0} & \underline{16.1} & 14.1 & 16.1 & 20.4 & 10.3 & 15.3 & 25.5 \\
    \hspace*{0.8em} Constant~{\tiny ($\nu = 0.5$)} & \underline{7.4} & 9.4 & 13.8 & \underline{15.0} & 14.2 & \underline{24.0} & \underline{11.2} & 21.1 & 31.8 \\
    \hspace*{0.8em} Constant~{\tiny ($\nu = 0.25$)} & 4.2 & 5.9 & 8.6 & 11.4 & 14.6 & 18.9 & 8.0 & 18.2 & 30.1 \\
    \hspace*{0.8em} Constant~{\tiny ($\nu = 0.1$)} & 0.8 & 1.5 & 1.9 & 5.0 & 6.7 & 9.3 & 2.7 & 8.9 & 13.4 \\
    \midrule
    \multicolumn{10}{@{}l@{}}{\textit{Argmax Input Processing}} \\
    \hspace*{0.8em} Const.-Lin.~{\tiny ($\nu_0=0.2, \nu_1=0.5, \ell=0.2$)} & 13.1 & 13.1 & 20.9 & 16.2 & 14.0 & 21.4 & 14.4 & 10.4 & 17.3 \\
    \hspace*{0.8em} Const.-Lin.~{\tiny ($\nu_0=0.2, \nu_1=0.75, \ell=0.2$)} & 9.7 & \underline{13.5} & \underline{22.8} & 16.1 & \underline{14.7} & 22.4 & 14.1 & 11.0 & 17.1 \\
    \hspace*{0.8em} Constant~{\tiny ($\nu = 0.5$)} & \textbf{\underline{14.4}} & 13.2 & 22.0 & 15.6 & 10.0 & \underline{22.5} & 13.7 & 9.2 & \underline{19.3} \\
    \hspace*{0.8em} Constant~{\tiny ($\nu = 0.25$)} & 12.6 & 13.3 & 21.7 & \underline{16.8} & 13.5 & 21.6 & \underline{15.3} & \underline{11.9} & 17.4 \\
    \midrule
    \multicolumn{10}{@{}l@{}}{\textit{Argmax + Self-Conditioning (SC)}} \\
    \hspace*{0.8em} Const.-Lin.~{\tiny ($\nu_0=0.2, \nu_1=0.5, \ell=0.2$)} & 11.9 & \textbf{\underline{16.4}} & 24.1 & 17.6 & 15.8 & 25.0 & 13.2 & 14.6 & 22.4 \\
    \hspace*{0.8em} Const.-Lin.~{\tiny ($\nu_0=0.2, \nu_1=0.75, \ell=0.2$)} & 10.0 & \textbf{\underline{16.4}} & \textbf{\underline{25.7}} & 15.1 & \textbf{\underline{17.5}} & 28.1 & 14.6 & \underline{17.0} & \underline{26.4} \\
    \hspace*{0.8em} Const.-Lin.~{\tiny ($\nu_0=0.4, \nu_1=0.75, \ell=0.8$)} & 6.2 & 15.1 & 22.5 & 13.4 & 12.9 & 23.1 & 9.4 & 11.6 & 18.2 \\
    \hspace*{0.8em} Constant~{\tiny ($\nu = 0.5$)} & \underline{13.6} & 16.3 & 25.5 & \textbf{\underline{18.1}} & 13.6 & \textbf{\underline{29.9}} & \textbf{\underline{15.8}} & \underline{17.0} & 25.6 \\
    \hspace*{0.8em} Constant~{\tiny ($\nu = 0.25$)} & 13.5 & 15.5 & 22.0 & 14.8 & 16.4 & 24.7 & 13.5 & 16.0 & 22.1 \\
    \bottomrule
    \end{tabular}
\end{table}


\begin{table}[!htbp]
    \centering
    \scriptsize
    \setlength{\tabcolsep}{3.0pt}
    \renewcommand{\arraystretch}{1.02}
    \caption{\textbf{TinyGSM Simplex Diffusion (SDM) Accuracy (\%)} ($T = 0.1$, 512 steps) across noise concentration schedules, sampling schedules (Linear, Cosine, Adaptive), and churn (\mbox{$\kappa \in \{0.0, 0.2, 1.0\}$}). We compare Expectation ($P_t$), Argmax, and Argmax with Self-Conditioning (+ SC). All SDMs are trained with the adaptive time sampler. We \textbf{bold} the best result in each column and \underline{underline} the best result within each section (both if achieving both).}
    \label{tab:tinygsm_sdm_sdm_t01_s512}
    \begin{tabular}{l ccc ccc ccc}
    \toprule
    \multirow{2}{*}{Concentration Schedule} & \multicolumn{3}{c}{Linear} & \multicolumn{3}{c}{Cosine} & \multicolumn{3}{c}{Adaptive} \\
    \cmidrule(lr){2-4} \cmidrule(lr){5-7} \cmidrule(lr){8-10}
     & $\kappa=0$ & $\kappa=0.2$ & $\kappa=1.0$ & $\kappa=0$ & $\kappa=0.2$ & $\kappa=1.0$ & $\kappa=0$ & $\kappa=0.2$ & $\kappa=1.0$ \\
    \midrule
    \multicolumn{10}{@{}l@{}}{\textit{Expectation Input Processing ($P_t$)}} \\
    \hspace*{0.8em} Const.-Lin.~{\tiny ($\nu_0=0.2, \nu_1=0.5, \ell=0.2$)} & 27.3 & 36.0 & 40.4 & \underline{30.8} & 38.5 & 42.4 & 19.3 & \underline{43.9} & \underline{49.0} \\
    \hspace*{0.8em} Const.-Lin.~{\tiny ($\nu_0=0.2, \nu_1=0.75, \ell=0.2$)} & \underline{27.7} & \underline{36.4} & 38.2 & \underline{30.8} & \underline{39.3} & 42.5 & 19.8 & 40.8 & 44.3 \\
    \hspace*{0.8em} Constant~{\tiny ($\nu = 0.5$)} & 27.0 & 30.5 & \underline{41.7} & 29.5 & 33.3 & \underline{43.3} & \underline{22.2} & 35.0 & 44.8 \\
    \hspace*{0.8em} Constant~{\tiny ($\nu = 0.25$)} & 18.3 & 25.0 & 30.1 & 22.5 & 31.2 & 38.5 & 11.3 & 38.3 & 42.5 \\
    \hspace*{0.8em} Constant~{\tiny ($\nu = 0.1$)} & 7.7 & 10.2 & 13.8 & 9.3 & 17.0 & 22.1 & 2.9 & 17.5 & 21.0 \\
    \midrule
    \multicolumn{10}{@{}l@{}}{\textit{Argmax Input Processing}} \\
    \hspace*{0.8em} Const.-Lin.~{\tiny ($\nu_0=0.2, \nu_1=0.5, \ell=0.2$)} & 33.0 & 40.8 & 43.8 & \textbf{\underline{34.7}} & 39.8 & 45.5 & 31.5 & \underline{39.7} & 42.9 \\
    \hspace*{0.8em} Const.-Lin.~{\tiny ($\nu_0=0.2, \nu_1=0.75, \ell=0.2$)} & 31.6 & 42.2 & \underline{46.4} & 33.6 & 40.8 & 45.5 & 30.7 & 38.8 & \underline{45.4} \\
    \hspace*{0.8em} Constant~{\tiny ($\nu = 0.5$)} & 33.6 & 37.8 & 44.5 & 34.4 & 33.6 & \underline{46.1} & \textbf{\underline{34.5}} & 32.4 & 43.4 \\
    \hspace*{0.8em} Constant~{\tiny ($\nu = 0.25$)} & \textbf{\underline{34.2}} & \underline{42.7} & 44.9 & 33.2 & \underline{41.1} & 45.7 & 33.4 & 38.2 & 43.7 \\
    \midrule
    \multicolumn{10}{@{}l@{}}{\textit{Argmax + Self-Conditioning (SC)}} \\
    \hspace*{0.8em} Const.-Lin.~{\tiny ($\nu_0=0.2, \nu_1=0.5, \ell=0.2$)} & 33.3 & 45.5 & 50.0 & 33.9 & 43.7 & 48.9 & 31.4 & 43.8 & 50.2 \\
    \hspace*{0.8em} Const.-Lin.~{\tiny ($\nu_0=0.2, \nu_1=0.75, \ell=0.2$)} & 31.6 & \textbf{\underline{46.1}} & 49.6 & 33.4 & \textbf{\underline{45.1}} & 50.5 & 31.0 & \textbf{\underline{46.2}} & 50.3 \\
    \hspace*{0.8em} Const.-Lin.~{\tiny ($\nu_0=0.4, \nu_1=0.75, \ell=0.8$)} & 27.5 & 38.8 & 46.1 & 31.8 & 36.6 & 46.4 & 19.9 & 35.9 & 43.2 \\
    \hspace*{0.8em} Constant~{\tiny ($\nu = 0.5$)} & \underline{34.0} & 42.2 & \textbf{\underline{50.6}} & \textbf{\underline{34.7}} & 40.8 & \textbf{\underline{50.9}} & 33.1 & 39.2 & \textbf{\underline{50.6}} \\
    \hspace*{0.8em} Constant~{\tiny ($\nu = 0.25$)} & 31.7 & 41.8 & 48.7 & 32.6 & 43.6 & 49.4 & \underline{33.7} & 43.4 & \textbf{\underline{50.6}} \\
    \bottomrule
    \end{tabular}
\end{table}


\begin{table}[!htbp]
    \centering
    \scriptsize
    \setlength{\tabcolsep}{3.0pt}
    \renewcommand{\arraystretch}{1.02}
    \caption{\textbf{TinyGSM Simplex Diffusion (SDM) Accuracy (\%)} ($T = 0.1$, 64 steps) across noise concentration schedules, sampling schedules (Linear, Cosine, Adaptive), and churn (\mbox{$\kappa \in \{0.0, 0.2, 1.0\}$}). We compare Expectation ($P_t$), Argmax, and Argmax with Self-Conditioning (+ SC). All SDMs are trained with the adaptive time sampler. We \textbf{bold} the best result in each column and \underline{underline} the best result within each section (both if achieving both).}
    \label{tab:tinygsm_sdm_sdm_t01_s64}
    \begin{tabular}{l ccc ccc ccc}
    \toprule
    \multirow{2}{*}{Concentration Schedule} & \multicolumn{3}{c}{Linear} & \multicolumn{3}{c}{Cosine} & \multicolumn{3}{c}{Adaptive} \\
    \cmidrule(lr){2-4} \cmidrule(lr){5-7} \cmidrule(lr){8-10}
     & $\kappa=0$ & $\kappa=0.2$ & $\kappa=1.0$ & $\kappa=0$ & $\kappa=0.2$ & $\kappa=1.0$ & $\kappa=0$ & $\kappa=0.2$ & $\kappa=1.0$ \\
    \midrule
    \multicolumn{10}{@{}l@{}}{\textit{Expectation Input Processing ($P_t$)}} \\
    \hspace*{0.8em} Const.-Lin.~{\tiny ($\nu_0=0.2, \nu_1=0.5, \ell=0.2$)} & 11.3 & 16.7 & 20.5 & 25.7 & 28.7 & 32.0 & 18.9 & \underline{33.0} & 39.5 \\
    \hspace*{0.8em} Const.-Lin.~{\tiny ($\nu_0=0.2, \nu_1=0.75, \ell=0.2$)} & 11.7 & \underline{22.5} & \underline{28.8} & 25.8 & \underline{29.2} & \underline{33.6} & 19.0 & 32.1 & 38.8 \\
    \hspace*{0.8em} Constant~{\tiny ($\nu = 0.5$)} & \underline{15.3} & 16.2 & 20.0 & \underline{26.8} & 21.1 & 33.3 & \underline{21.8} & 31.9 & \underline{40.1} \\
    \hspace*{0.8em} Constant~{\tiny ($\nu = 0.25$)} & 6.2 & 7.1 & 9.5 & 18.0 & 19.9 & 25.1 & 12.1 & 24.0 & 31.0 \\
    \hspace*{0.8em} Constant~{\tiny ($\nu = 0.1$)} & 0.8 & 0.6 & 1.5 & 7.7 & 9.4 & 12.4 & 2.6 & 7.9 & 12.4 \\
    \midrule
    \multicolumn{10}{@{}l@{}}{\textit{Argmax Input Processing}} \\
    \hspace*{0.8em} Const.-Lin.~{\tiny ($\nu_0=0.2, \nu_1=0.5, \ell=0.2$)} & 27.2 & 27.9 & \underline{38.8} & 30.7 & 29.6 & 39.1 & 31.1 & 27.9 & 37.4 \\
    \hspace*{0.8em} Const.-Lin.~{\tiny ($\nu_0=0.2, \nu_1=0.75, \ell=0.2$)} & 23.4 & 30.1 & 38.5 & \underline{32.4} & \underline{30.9} & \underline{39.3} & 27.9 & \underline{29.2} & 37.4 \\
    \hspace*{0.8em} Constant~{\tiny ($\nu = 0.5$)} & \textbf{\underline{30.7}} & 28.6 & 37.4 & 32.1 & 22.1 & 36.7 & \textbf{\underline{32.4}} & 25.7 & \underline{37.8} \\
    \hspace*{0.8em} Constant~{\tiny ($\nu = 0.25$)} & 29.7 & \underline{30.8} & 35.4 & 31.9 & 28.6 & 37.8 & 32.1 & 27.5 & 37.4 \\
    \midrule
    \multicolumn{10}{@{}l@{}}{\textit{Argmax + Self-Conditioning (SC)}} \\
    \hspace*{0.8em} Const.-Lin.~{\tiny ($\nu_0=0.2, \nu_1=0.5, \ell=0.2$)} & 26.6 & 30.9 & 40.6 & 31.2 & 33.9 & 42.0 & 31.1 & 33.6 & 42.0 \\
    \hspace*{0.8em} Const.-Lin.~{\tiny ($\nu_0=0.2, \nu_1=0.75, \ell=0.2$)} & 20.8 & 31.4 & \textbf{\underline{42.0}} & 30.4 & \textbf{\underline{34.3}} & 43.4 & 28.5 & 36.1 & \textbf{\underline{46.0}} \\
    \hspace*{0.8em} Const.-Lin.~{\tiny ($\nu_0=0.4, \nu_1=0.75, \ell=0.8$)} & 13.3 & 29.2 & 36.9 & 26.7 & 25.4 & 38.8 & 18.6 & 28.7 & 37.9 \\
    \hspace*{0.8em} Constant~{\tiny ($\nu = 0.5$)} & \underline{29.6} & \textbf{\underline{33.0}} & 41.0 & \textbf{\underline{32.7}} & 27.3 & \textbf{\underline{44.2}} & \underline{32.2} & \textbf{\underline{36.6}} & 44.5 \\
    \hspace*{0.8em} Constant~{\tiny ($\nu = 0.25$)} & 27.5 & 29.9 & 39.4 & 30.1 & 32.5 & 40.8 & 30.7 & 33.0 & 42.5 \\
    \bottomrule
    \end{tabular}
\end{table}

\begin{table*}[!htbp]
    \centering
    \scriptsize
    \setlength{\tabcolsep}{6.0pt}
    \renewcommand{\arraystretch}{1.12}
    \caption{\textbf{Multi-Sample Accuracy and AST Structural Diversity ($K=5$ samples per problem) on TinyGSM} across sampling steps $\text{NFE} \in \{8, 64, 512\}$ and temperatures $T \in \{0.1, 1.0\}$. $^\dagger$trained with uniform time sampler (evaluated with linear sampling schedule), $^\ddagger$trained with adaptive time sampler (evaluated with adaptive sampling schedule). ``ELBO'' denotes the evidence lower bound objective, ``SC'' denotes Self-Conditioning, and ``PC'' denotes the Predictor-Corrector sampler. Arrows ($\uparrow$ / $\downarrow$) indicate whether higher or lower values are better. For each column, we \textbf{bold} the overall best result and \protect\underline{underline} the best result within each section. For 64 and 512 steps, pass@1 is the single-sample accuracy from \Cref{tab:tinygsm_baselines_t10_s512,tab:tinygsm_baselines_t10_s64,tab:tinygsm_baselines_t01_s512,tab:tinygsm_baselines_t01_s64} (baselines), \Cref{tab:tinygsm_sdm_sdm_t10_s512,tab:tinygsm_sdm_sdm_t10_s64,tab:tinygsm_sdm_sdm_t01_s512,tab:tinygsm_sdm_sdm_t01_s64} (SDMs, $\nu_0=0.2$, $\nu_1=0.5$, $\ell=0.2$) and \Cref{fig:tinygsm_distilled_vs_sdm_nfe} (distilled SDM); for 8 steps it is the accuracy of the first of the $K$ samples. For AR, this is greedy decoding in the $T=0.1$ row and sampling in the $T=1.0$ row.}
    \label{tab:tinygsm_baselines_ast_diversity}
    \begin{tabular}{l c c ccccc}
        \toprule
        Model & Temp. ($T$) & Steps (NFE) & pass@1 (\%) $\uparrow$ & pass@2 (\%) $\uparrow$ & pass@5 (\%) $\uparrow$ & AST Div. $\uparrow$ & AST Div. (Correct) $\uparrow$ \\
        \midrule
        \multicolumn{8}{@{}l@{}}{\textit{Autoregressive}} \\
        & 0.1 & 512 & \textbf{\underline{62.6}} & \textbf{\underline{66.3}} & 69.6 & 9.6 & 6.2 \\
        & 1.0 & 512 & 52.6 & 66.2 & \textbf{\underline{77.6}} & \underline{36.7} & \textbf{\underline{29.5}} \\
        \midrule
        \multicolumn{8}{@{}l@{}}{\textit{Absorbing Diffusion (MDM$^\dagger$; ELBO; Linear Schedule)}} \\
        & 0.1 & 8 & 4.5 & 7.7 & 13.3 & 40.3 & 8.0 \\
        & 0.1 & 64 & 27.4 & 37.7 & 52.3 & 33.4 & 19.2 \\
        & 0.1 & 512 & \underline{32.0} & \underline{43.2} & \underline{56.9} & 31.6 & 19.3 \\
        & 1.0 & 8 & 0.5 & 0.8 & 1.4 & \textbf{\underline{51.1}} & 14.3 \\
        & 1.0 & 64 & 10.0 & 15.6 & 28.6 & 47.7 & 22.6 \\
        & 1.0 & 512 & 15.8 & 24.1 & 38.6 & 44.8 & \underline{23.4} \\
        \midrule
        \multicolumn{8}{@{}l@{}}{\textit{Absorbing Diffusion (MDM$^\ddagger$; ELBO; SC; Adaptive Schedule)}} \\
        & 0.1 & 8 & 19.6 & 29.7 & 43.8 & 30.4 & 14.6 \\
        & 0.1 & 64 & 41.1 & 51.6 & 62.0 & 29.1 & 19.1 \\
        & 0.1 & 512 & \underline{42.6} & \underline{54.9} & \underline{65.0} & 28.3 & 18.9 \\
        & 1.0 & 8 & 4.5 & 8.3 & 17.7 & 38.9 & 14.6 \\
        & 1.0 & 64 & 18.7 & 28.9 & 43.6 & \underline{41.7} & 21.7 \\
        & 1.0 & 512 & 21.1 & 36.2 & 49.7 & 41.3 & \underline{24.1} \\
        \midrule
        \multicolumn{8}{@{}l@{}}{\textit{Uniform Diffusion (UDM$^\dagger$; ELBO; Linear Schedule)}} \\
        & 0.1 & 8 & 11.2 & 17.3 & 28.7 & 35.9 & 15.4 \\
        & 0.1 & 64 & 30.7 & 42.5 & 56.7 & 32.0 & 19.6 \\
        & 0.1 & 512 & \underline{33.0} & \underline{46.1} & \underline{59.6} & 31.9 & 19.7 \\
        & 1.0 & 8 & 3.2 & 5.3 & 10.3 & \underline{46.5} & 15.4 \\
        & 1.0 & 64 & 14.4 & 22.3 & 36.1 & 44.1 & \underline{23.6} \\
        & 1.0 & 512 & 17.7 & 25.6 & 40.5 & 42.9 & 23.1 \\
        \midrule
        \multicolumn{8}{@{}l@{}}{\textit{Uniform Diffusion (UDM$^\dagger$; PC; Linear Schedule)}} \\
        & 0.1 & 8 & 11.5 & 18.1 & 31.0 & 39.0 & 17.9 \\
        & 0.1 & 64 & 38.3 & 49.0 & 61.5 & 33.4 & 22.2 \\
        & 0.1 & 512 & \underline{45.5} & \underline{56.8} & \underline{67.1} & 30.6 & 21.3 \\
        & 1.0 & 8 & 4.4 & 7.6 & 13.8 & \underline{49.9} & 19.0 \\
        & 1.0 & 64 & 23.3 & 34.9 & 51.1 & 44.3 & 26.8 \\
        & 1.0 & 512 & 36.8 & 49.3 & 62.2 & 40.6 & \underline{27.6} \\
        \midrule
        \multicolumn{8}{@{}l@{}}{\textit{Simplex Diffusion (SDM$^\ddagger$; Expectation; $\kappa=1$; Adaptive Schedule)}} \\
        & 0.1 & 8 & 17.1 & 27.7 & 42.8 & 36.5 & 21.2 \\
        & 0.1 & 64 & 39.5 & 52.4 & 64.8 & 32.3 & 22.5 \\
        & 0.1 & 512 & \underline{49.0} & \underline{58.0} & \underline{68.2} & 30.4 & 21.6 \\
        & 1.0 & 8 & 5.4 & 9.7 & 17.9 & \underline{40.0} & 17.3 \\
        & 1.0 & 64 & 32.7 & 44.0 & 57.7 & 38.7 & \underline{25.9} \\
        & 1.0 & 512 & 45.8 & 55.1 & 66.6 & 35.3 & 24.8 \\
        \midrule
        \multicolumn{8}{@{}l@{}}{\textit{Simplex Diffusion (SDM$^\ddagger$; Argmax + SC; $\kappa=1$; Adaptive Schedule)}} \\
        & 0.1 & 8 & 19.9 & 29.3 & 41.4 & 29.2 & 13.9 \\
        & 0.1 & 64 & 42.0 & 55.7 & 64.5 & 26.8 & 17.5 \\
        & 0.1 & 512 & \underline{50.2} & \underline{60.0} & \underline{67.5} & 23.2 & 16.2 \\
        & 1.0 & 8 & 3.9 & 6.2 & 12.2 & 25.0 & 7.8 \\
        & 1.0 & 64 & 22.4 & 32.6 & 48.0 & \underline{38.3} & 19.6 \\
        & 1.0 & 512 & 36.9 & 50.1 & 62.6 & 35.3 & \underline{22.3} \\
        \midrule
        \multicolumn{8}{@{}l@{}}{\textit{Distilled Simplex Diffusion (SDM$^\ddagger$; Expectation; $\kappa=1$; Adaptive Schedule)}} \\
        & 0.1 & 8 & 31.9 & 39.2 & 48.1 & 17.4 & 7.9 \\
        & 0.1 & 64 & 37.5 & 45.7 & 53.1 & 18.8 & 9.9 \\
        & 0.1 & 512 & \underline{40.6} & 46.3 & 53.7 & 18.0 & \underline{10.8} \\
        & 1.0 & 8 & 31.9 & 40.0 & 48.7 & 17.5 & 7.5 \\
        & 1.0 & 64 & 37.8 & 45.8 & 53.6 & \underline{19.0} & 10.7 \\
        & 1.0 & 512 & 39.6 & \underline{46.5} & \underline{53.9} & 18.1 & 10.4 \\
        \bottomrule
    \end{tabular}
\end{table*}

\end{document}